\theoremstyle{plain}
\newtheorem{theorem}{Theorem}[section]
\newtheorem{proposition}[theorem]{Proposition}
\newtheorem{lemma}[theorem]{Lemma}
\newtheorem{corollary}[theorem]{Corollary}
\theoremstyle{definition}
\newtheorem{assumption}[theorem]{Assumption}
\theoremstyle{remark}
\DeclareMathOperator{\E}{\mathbb{E}}
\DeclareMathOperator{\p}{\mathbb{P}}
\DeclareMathOperator{\D}{\mathcal{D}}
\definecolor{Green}{RGB}{119,221,119}
\newcommand\numberthis{\addtocounter{equation}{1}\tag{\theequation}}
\icmltitlerunning{Robust Conformal Outlier Detection under Contaminated Reference Data}
\begin{document}

\twocolumn[
\icmltitle{Robust Conformal Outlier Detection under Contaminated Reference Data}



\icmlsetsymbol{equal}{*}

\begin{icmlauthorlist}
\icmlauthor{Meshi Bashari}{ee}
\icmlauthor{Matteo Sesia}{usc,usc-cs}
\icmlauthor{Yaniv Romano}{ee,cs}
\end{icmlauthorlist}

\icmlaffiliation{ee}{Department of Electrical and Computer Engineering, Technion IIT, Haifa, Israel}
\icmlaffiliation{cs}{Department of Computer Science, Technion IIT, Haifa, Israel}
\icmlaffiliation{usc}{Department of Data Sciences and Operations, University of Southern California, Los Angeles, California, USA}
\icmlaffiliation{usc-cs}{Department of Computer Science, University of Southern California, Los Angeles, California, USA}

\icmlcorrespondingauthor{Meshi Bashari}{meshi.b@campus.technion.ac.il}

\icmlkeywords{Machine Learning, ICML}

\vskip 0.3in
]



\printAffiliationsAndNotice{}  

\begin{abstract}
Conformal prediction is a flexible framework for calibrating machine learning predictions, providing distribution-free statistical guarantees. In outlier detection, this calibration relies on a reference set of labeled inlier data to control the type-I error rate. However, obtaining a perfectly labeled inlier reference set is often unrealistic, and a more practical scenario involves access to a contaminated reference set containing a small fraction of outliers. This paper analyzes the impact of such contamination on the validity of conformal methods. We prove that under realistic, non-adversarial settings, calibration on contaminated data yields conservative type-I error control, shedding light on the inherent robustness of conformal methods. This conservativeness, however, typically results in a loss of power. To alleviate this limitation, we propose a novel, active data-cleaning framework that leverages a limited labeling budget and an outlier detection model to selectively annotate data points in the contaminated reference set that are suspected as outliers. By removing only the annotated outliers in this ``suspicious'' subset, we can effectively enhance power while mitigating the risk of inflating the type-I error rate, as supported by our theoretical analysis. Experiments on real datasets validate the conservative behavior of conformal methods under contamination and show that the proposed data-cleaning strategy improves power without sacrificing validity.

\end{abstract}

\section{Introduction}
\label{sec:intro}
\subsection{Background and Motivation}
\label{sec:background}

This paper studies the problem of outlier detection: given a reference dataset (e.g., a collection of legitimate financial transactions) and an unlabeled test point (a new transaction), our goal is to determine whether the test point is an outlier (a fraudulent transaction) by assessing its deviation from the reference data distribution. Naturally, we aim to maximize the detection of outliers by harnessing the capabilities of complex machine learning (ML) models. However, these models typically lack type-I error rate control, potentially resulting in unreliable detections. In our running example, the type-I error is the probability of falsely flagging a legitimate transaction as fraudulent. As such, uncontrolled error rates can lead to costly unnecessary investigations of legitimate transactions and negatively impact customer experience. 

The broad need for reliable ML systems has sparked a surge of interest in conformal prediction---a versatile framework that can provide statistical guarantees for any ``black-box'' predictive model \cite{vovk2005algorithmic}. This framework formulates the outlier detection task as a statistical test, where the null hypothesis is that the new data point is not an outlier \cite{laxhammar2015inductive,conformal-p-values}. To derive a decision rule guaranteeing type-I error control, conformal inference relies on a reference (calibration) set of inlier data points. These points are assumed to be sampled i.i.d.~from an unknown distribution, independent of the data used to train the outlier detection model.

In practice, however, it is often difficult to obtain a perfectly clean reference dataset that contains no outliers \cite{park2021wrong,zhao2019robust,chalapathy2019deep,jiang2022softpatch}. 
In our example, a more realistic scenario would assume instead access to a slightly \emph{contaminated} reference set, mostly legitimate transactions with a few unnoticed outliers \cite{zhao2019robust}.
But this setting poses new challenges for conformal prediction methods, potentially invalidating the error control guarantees or, as we shall see, often reducing the power to detect true outliers at test time.

\subsection{Outline and Contributions}

While type-I error control in conformal inference theoretically requires perfectly clean reference data, in practice contaminated data often makes these methods overly conservative, reducing the power to detect true outliers rather than inflating the type-I error rate. This empirical observation motivates the first question explored in this paper:

\textbf{Q1:} \emph{When does conformal outlier detection with contaminated reference data yield valid type-I error control?}

In Section \ref{sec:conservativeness}, we present the first contribution of this paper: a novel theoretical analysis that identifies common conditions under which this conservative behavior arises. Unfortunately, this conservativeness often comes at the cost of reduced detection power, particularly when targeting low type-I error rates. To address this issue, we investigate data-driven cleaning strategies aimed at mitigating the contamination in the reference dataset.

A straightforward approach to cleaning the contaminated set is to remove all data points flagged as likely outliers by the detection model. However, this method is unsatisfactory, as it risks inadvertently removing inliers along with outliers, resulting in an "overly clean" reference set. This, in turn, distorts the inlier distribution and inflates the type-I error rate above the desired nominal level.

This challenge motivates our second and main contribution. In Section \ref{sec:label-trim}, we introduce an approach to clean the contaminated reference set by leveraging a limited labeling budget (e.g., 50 annotations). The outlier detection model is first used to identify suspected outliers within the contaminated reference set. The limited budget is then strategically allocated to annotate these points, thereby avoiding the unintended removal of inliers. While this is a practical and intuitive approach, it naturally prompts a critical question:

\textbf{Q2:} \emph{How does the selective annotation and partial removal of outliers from a contaminated reference set affect the validity of conformal inferences?}

We analyze the validity of this active labeling approach for trimming outliers in the contaminated set. Our theoretical results identify the conditions required to achieve approximate type-I error control, even when the data are selectively annotated and not all outliers are removed. This analysis also highlights key factors that can inflate the error rate, offering practitioners guiding principles to enhance the power of conformal methods in the presence of contaminated data.

Finally, in Section \ref{sec:experiments}, we empirically validate our theory and proposed data-cleaning approach through comprehensive experiments on real-world datasets. The experiments confirm that conformal inference with contaminated data tends to be conservative. Furthermore, they demonstrate that our method significantly boosts power, particularly when the target type-I error rate is low and the number of outliers in the contaminated set is small.

A software that implements the proposed method is available at
\hyperlink{https://github.com/Meshiba/robust-conformal-od}{https://github.com/Meshiba/robust-conformal-od}.

\subsection{Related Work}
Recently, there has been growing interest in studying the statistical properties of conformal inference methods under more realistic scenarios, moving beyond the idealized assumption of perfectly clean and exchangeable observations to account for various types of {\em distribution shift} \citep{tibshirani2019conformal,einbinder2022conformal,sesia2023adaptive,barber2023conformal,gibbs2021adaptive,zaffran2022adaptive,feldman2022achieving,gibbs2024conformal,podkopaev2021distribution,si2023pac,prinsterconformal}. This paper draws inspiration from several prior works in this area.

\citet{tibshirani2019conformal} introduced a weighted conformal prediction approach to address covariate shift between calibration and test data, later extended by \citet{podkopaev2021distribution} to accommodate label shift. Both settings, however, involve a different form of distribution shift from the one we study here. \citet{barber2023conformal} extend this line of work by analyzing the effects of general distribution shifts on the validity of conformal methods, focusing however on worst-case scenarios; see also \citet{farinhas2024nonexchangeable}.

In contrast, our work moves away from this worst-case perspective. We aim to explain why conformal outlier detection with contaminated reference data often results in a conservative type-I error rate, rather than investigating type-I error inflation, which, while theoretically possible in adversarial settings, appears less common in practice. Furthermore, we focus on developing methods to address this over-conservativeness, boosting detection power.

A more closely related line of work investigates the robustness of conformal prediction to label noise \cite{einbinder2022conformal, sesia2023adaptive, clarkson2024splitconformalpredictiondata,penso2025estimating} or other forms of data contamination \cite{pmlr-v202-zaffran23a, zaffran2024predictive, feldman2024robust}. Specifically, \citet{einbinder2022conformal} and \citet{sesia2023adaptive} show that, under certain assumptions, conformal prediction for classification with noisy labels often results in conservative type-I error rates. Furthermore, \citet{sesia2023adaptive} proposes a method to address this conservativeness by leveraging an explicit ``label noise model'' that captures the relationship between the true and contaminated labels in the calibration dataset.

In contrast, this paper avoids relying on an explicit model for the contaminated data, as such models can be difficult to estimate in practice within our context. Instead, we utilize a pre-trained black-box outlier detection model and a limited annotation budget to selectively and reliably trim outliers from the contaminated set. Furthermore, it is important to emphasize that the method proposed by \citet{sesia2023adaptive} is primarily designed for classification tasks with relatively balanced data, whereas outlier detection naturally involves extreme class imbalance. This distinction underscores the need for solutions specifically tailored to outlier detection.

\section{Setup and Preliminary Results}

\subsection{Inference with Clean Calibration Data} \label{sec:cp}

Conformal inference for outlier detection requires a reference (or \emph{calibration}) set, $\D_{\mathrm{cal}} = [n] := \{1,\ldots,n\}$, containing $n$ data points. The reference set is typically assumed to be \emph{clean}, consisting solely of \emph{inliers}, which are i.i.d.~samples from an unknown distribution $\p_0$ (exchangeability may sometimes suffice, but this work assumes i.i.d.~inliers). Under this assumption, $\D_{\mathrm{cal}}$ may be referred to as $\D_{\mathrm{inlier}}$.

The goal is to determine whether a new observation, $X_{n+1}$, is an inlier—independently sampled from $\p_0$---or an \emph{outlier}, sampled from a different distribution $\p_1 \neq \p_0$. This can be formulated as a hypothesis testing problem, where the null hypothesis $\mathcal{H}_0$ claims that $X_{n+1}$ is an inlier:
\begin{align} \label{eq:setup-clean}
\begin{split}
  & X_i \overset{\text{i.i.d.}}{\sim} \p_0, \; \forall i \in \D_{\mathrm{inlier}}, \quad
  \D_{\mathrm{inlier}} = \D_{\mathrm{cal}} = [n], \\
  & \mathcal{H}_0 : X_{n+1} \overset{\text{ind.}}{\sim} \p_0.
\end{split}
\end{align}

The split-conformal method, a simple and computationally efficient approach, uses a pre-trained outlier detection model—potentially any machine learning model—to compute \emph{nonconformity scores} that quantify how different a data point is from the reference distribution. The model, represented by a score function $s$, is trained on a separate dataset $\D_{\mathrm{train}}$, which is similar to but independent of $\D_{\mathrm{cal}}$. Typically, a larger dataset of inliers, assumed to be i.i.d.~samples from $\p_0$, is randomly split into $\D_{\mathrm{train}}$ and $\D_{\mathrm{cal}}$.

The model tries to learn a score function $s$ such that larger values of $s(X_{n+1})$ indicate stronger evidence that the test point may be an outlier. Conformal inference rigorously quantifies this evidence, providing a principled decision rule for rejecting $\mathcal{H}_0$ when the evidence is strong enough, while controlling the type-I error rate—the probability of incorrectly rejecting $\mathcal{H}_0$ when $X_{n+1}$ is actually an inlier.

This statistical evidence is quantified by computing a \emph{conformal p-value}, defined as: 
\begin{align} \label{eq:conformal-p-value}
  \hat{p}_{n+1} = \frac{1 + \sum_{i=1}^{n} \mathbb{I}[s(X_i) \geq s(X_{n+1})]}{1 + n}. 
\end{align} 
Thus, larger values of $s(X_{n+1})$ correspond to smaller values of $\hat{p}_{n+1}$, and the test point $X_{n+1}$ can be confidently classified as an outlier (rejecting $\mathcal{H}_0$) when $\hat{p}_{n+1}$ is smaller than a given significance level $\alpha \in (0,1)$.

\begin{proposition}[from \citet{vovk2005algorithmic}] \label{prop:standard-conformal}
Under~\eqref{eq:setup-clean}, if the null hypothesis $\mathcal{H}_0$ is true, then for any $\alpha \in (0,1)$:
$\p \left( \hat{p}_{n+1} \leq \alpha \right) \leq \alpha$.
Further, if $s(X)$ has a continuous distribution under $\p_0$, then
$\p \left( \hat{p}_{n+1} \leq \alpha \right) \geq \alpha - 1/(n+1)$.
\end{proposition}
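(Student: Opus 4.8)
The plan is to exploit the exchangeability of the nonconformity scores under $\mathcal{H}_0$ and reduce the entire statement to the distribution of a single rank statistic. First I would note that, because the score function $s$ is trained on $\D_{\mathrm{train}}$, which is independent of $\D_{\mathrm{cal}}$ and of $X_{n+1}$, conditioning on $s$ leaves the scores $S_i := s(X_i)$, $i \in [n+1]$, as i.i.d.~(hence exchangeable) draws under $\mathcal{H}_0$, since all of $X_1,\dots,X_{n+1}$ are then i.i.d.~from $\p_0$. I would therefore argue conditionally on $s$ and treat $S_1,\dots,S_{n+1}$ as exchangeable throughout.

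The key reformulation is to rewrite the p-value as a normalized rank. Observe that
\[
\hat p_{n+1} = \frac{1 + \sum_{i=1}^n \I{S_i \ge S_{n+1}}}{n+1} = \frac{R_{n+1}}{n+1}, \quad R_{n+1} := \sum_{i=1}^{n+1}\I{S_i \ge S_{n+1}},
\]
since the ``$+1$'' in the numerator accounts for the term $i=n+1$, for which $S_{n+1}\ge S_{n+1}$ always holds. For $\alpha \in (0,1)$, setting $k := \lfloor \alpha(n+1)\rfloor$ and using that $R_{n+1}$ is integer-valued, the event $\{\hat p_{n+1}\le \alpha\}$ coincides exactly with $\{R_{n+1}\le k\}$, so it suffices to control $\p(R_{n+1}\le k)$.

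The heart of the argument is the claim that, for exchangeable scores, $\p(R_{n+1}\le k)\le k/(n+1)$, with equality when ties occur with probability zero. I would prove this by conditioning on the order statistics (the unordered multiset $\{S_1,\dots,S_{n+1}\}$): exchangeability makes $S_{n+1}$ equally likely to occupy any of the $n+1$ positions. When all values are distinct, $R_{n+1}$ is then uniform on $\{1,\dots,n+1\}$, so $\p(R_{n+1}\le k \mid \text{order statistics}) = k/(n+1)$; averaging over the order statistics gives the unconditional equality. Taking $k=\lfloor\alpha(n+1)\rfloor$ yields $\p(\hat p_{n+1}\le\alpha)=k/(n+1) > \alpha - 1/(n+1)$ (using $\lfloor x\rfloor > x-1$), which is the lower bound under continuity. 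For the upper bound in the general case, the $\ge$ convention only \emph{inflates} $R_{n+1}$ when ties are present—each value tied with $S_{n+1}$ is counted toward the rank—so $\{R_{n+1}\le k\}$ can only become less likely, giving $\p(R_{n+1}\le k \mid \text{order statistics})\le k/(n+1)$ and hence $\p(\hat p_{n+1}\le\alpha)\le k/(n+1)\le\alpha$.

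I expect the main obstacle to be the careful bookkeeping of ties in the general upper bound: one must verify that replacing the distinct-value count by the $\ge$ count can only increase $R_{n+1}$ (never decrease it) for every realization in which $S_{n+1}$ lands on a repeated value, which is exactly the mechanism that renders conformal inference conservative rather than exact. Everything else—the conditioning on $s$ that secures exchangeability, and the rank reformulation—is routine once that step is in place.
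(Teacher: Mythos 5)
Your proof is correct, but note there is no in-paper proof to compare it to: the paper states \Cref{prop:standard-conformal} as a known result imported from \citet{vovk2005algorithmic} and never proves it. The closest internal analogues are the tools the paper develops for its new results, and your argument runs parallel to them. Your rank reformulation $\hat{p}_{n+1} = R_{n+1}/(n+1)$ together with the identity $\{\hat{p}_{n+1}\le\alpha\} = \{R_{n+1}\le \lfloor \alpha(n+1)\rfloor\}$ is essentially the content of \Cref{app-prop:p-value-to-quantile}, phrased there through the empirical quantile $\hat{Q}_{1-\alpha}$ rather than through the rank (the intermediate inequality in that proof, $\sum_{i=1}^{n}\mathbb{I}[s(X_i) < s(X_{n+1})] \geq \lceil(1-\alpha)(n+1)\rceil$, is your event $\{R_{n+1}\le \lfloor \alpha(n+1)\rfloor\}$ in disguise). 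Likewise, your step of conditioning on the order statistics so that exchangeability makes $S_{n+1}$ uniformly distributed over the $n+1$ positions is the same device used in the paper's proof of \Cref{lem:conservativeness}, which conditions on the unordered score sets $E_{\mathrm{in}}, E_{\mathrm{out}}$ and obtains $\p\left( S_{n+1}=s_i \mid E_{\mathrm{in}}, E_{\mathrm{out}}\right) = 1/(n_0+1)$; the paper, however, sidesteps ties entirely by assuming almost-surely distinct scores, whereas you must handle them to get the unconditional upper bound. Your tie-handling is sound: conditional on the multiset of values, $R_{n+1}$ equals a cumulative multiplicity (the total count of values $\ge S_{n+1}$), so it dominates every tie-broken rank, and $\p\left( R_{n+1}\le k \mid \text{multiset} \right)$ equals the largest cumulative multiplicity not exceeding $k$, hence is at most $k/(n+1)$. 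Under continuity, ties vanish almost surely, the rank is exactly uniform on $\{1,\dots,n+1\}$, and $\lfloor\alpha(n+1)\rfloor/(n+1) > \alpha - 1/(n+1)$ gives a strict version of the stated lower bound. In short, your route buys a self-contained elementary proof of the classical guarantee that the paper outsources, using the same exchangeability-plus-conditioning mechanism the paper reserves for its harder contaminated-data results.
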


Proposition~\ref{prop:standard-conformal} intuitively states that the conformal p-value defined in~\eqref{eq:conformal-p-value} provides a well-calibrated rule for flagging a new data point as a likely outlier. Rejecting $\mathcal{H}_0$ when $\hat{p}_{n+1} \leq \alpha$ ensures type-I error control at level $\alpha$ while avoiding excessive conservatism. Specifically, the type-I error rate closely matches $\alpha$ when the sample size $n$ is large and the nonconformity scores have a continuous distribution with no ties—a mild condition that can be achieved in practice by adding small random noise to the scores.

What remains unclear, and serves as the starting point of this paper, is how conformal p-values behave when the calibration dataset is contaminated, containing not only inliers but also a fraction of misplaced outliers.

\subsection{Inference with Contaminated Calibration Data} \label{sec:notations}

In this paper, we consider a more general setting where the calibration dataset, indexed by $\D_{\mathrm{cal}} = [n]$, may contain both inliers ($\D_{\mathrm{inlier}}$), sampled i.i.d.~from a distribution $\p_0$, and outliers ($\D_{\mathrm{outlier}}$), sampled i.i.d.~from a different distribution $\p_1 \neq \p_0$. Thus, $\D_{\mathrm{cal}} = \D_{\mathrm{inlier}} \cup \D_{\mathrm{outlier}}$.
The numbers of inliers and outliers, respectively $n_0 = |\D_{\mathrm{inlier}}|$ and $n_1 = |\D_{\mathrm{outlier}}|$, are treated as fixed, with $n = n_0 + n_1$. 
The goal remains to test the null hypothesis $\mathcal{H}_0$ that a new data point $X_{n+1}$ is an inlier, independently sampled from $\p_0$. Formally, this setup can be written as:
\begin{align} \label{eq:setup-contaminated}
\begin{split}
  & X_i \overset{\text{i.i.d.}}{\sim} \p_0, \; \forall i \in \D_{\mathrm{inlier}}, \quad X_i \overset{\text{i.i.d.}}{\sim} \p_1, \; \forall i \in \D_{\mathrm{outlier}}, \\
  & \D_{\mathrm{inlier}} \cup \D_{\mathrm{outlier}} = \D_{\mathrm{cal}} = [n], \\
  & \mathcal{H}_0 : X_{n+1} \overset{\text{ind.}}{\sim} \p_0.
\end{split}
\end{align}

In the following, we first analyze the behavior of standard conformal p-values, computed as in~\eqref{eq:conformal-p-value}, when applied to contaminated data scenarios described by~\eqref{eq:setup-contaminated}. Subsequently, we will propose a novel method for computing more refined conformal p-values by approximately cleaning the calibration set to remove undesired outliers.

\subsection{Explaining the Conservativeness}\label{sec:conservativeness}

Empirical results suggest that contamination by outliers in the calibration data often makes standard conformal p-values overly conservative, resulting in a type-I error rate significantly lower than the desired nominal level $\alpha$.

We begin by examining~\Cref{fig:scores-shuttle}, which provides some insight into this behavior based on the analysis of the ``shuttle'' dataset \cite{shuttle}, as detailed in Section~\ref{sec:experiments}. In this example, the nonconformity scores of outlier data points in the contaminated calibration set are typically larger than those of the inliers.
This pattern reflects the goal of a well-designed outlier detection model: to differentiate outliers from inliers by assigning higher scores to the former. Consequently, conformal p-values computed using~\eqref{eq:conformal-p-value} will be inflated relative to the ideal scenario in which all calibration points are inliers, reducing our power to detect true outliers at test time.

\begin{figure}[!htb]
\centering
    \includegraphics[width=\linewidth]{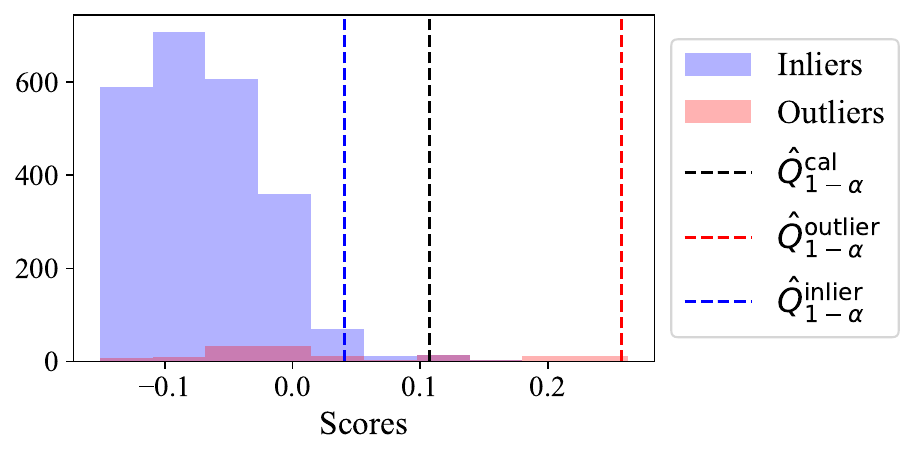}
    \caption{Histogram of nonconformity scores for inliers and outliers in a contaminated calibration subset of the ``shuttle" data, with a contamination rate of 5\%. The vertical lines indicate the $(1-\alpha)$ empirical quantile of all calibration scores (black), as well as separately for inliers (blue) and outliers (red), with $\alpha = 0.02$.}
    \label{fig:scores-shuttle}
\end{figure}

This phenomenon is further corroborated by extensive numerical experiments presented in Section~\ref{sec:experiments} and \Cref{app-sec:exp}, which consistently demonstrate this conservative behavior across nine different datasets.

While the conservativeness of conformal prediction methods in the presence of contaminated calibration data has already been observed and studied theoretically in different contexts \citep{einbinder2022conformal,sesia2023adaptive,clarkson2024splitconformalpredictiondata}, prior works did not focus on outlier detection.
Therefore, it is helpful to introduce a new theoretical result that precisely quantifies the inflation of standard conformal p-values that we often observe in practice. 
This will serve as a foundation for the novel method of computing adaptive conformal p-values presented in the next section.

Let $\hat{F}_{1}$ denote the empirical cumulative distribution function (CDF) of the scores in $\D_{\mathrm{outlier}}$ and $\hat{Q}_{1-\alpha}^{\mathrm{cal}}$ represent the $\lceil (1-\alpha)(1+n)\rceil$-th smallest score in the calibration set.

\begin{lemma}
\label{lem:conservativeness}
Under the setup defined in~\eqref{eq:setup-contaminated}, if $\mathcal{H}_0$ is true, then for any  $\alpha\in(0,1)$,
    \begin{align*}
    \p &\left( \hat{p}_{n+1}\leq \alpha \right) \leq \alpha -\frac{n_1}{n_0 + 1} \left( 1-\alpha -   \E\left[ \hat{F}_{1} \left( \hat{Q}^{\mathrm{cal}}_{1-\alpha} \right) \right]\right).
    \end{align*}
\end{lemma}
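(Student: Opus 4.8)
The plan is to reduce the rejection event to a statement about a single order statistic of the calibration scores, and then to control the gap between the population and empirical inlier CDF evaluated at that (data-dependent) threshold via an exchangeability argument. Write $S_i = s(X_i)$ and let $F_0$ denote the CDF of $s(X)$ under $\p_0$, with $k := \lceil (1-\alpha)(1+n)\rceil$. Assuming continuous scores, I would first establish the standard equivalence $\hat{p}_{n+1}\le\alpha \iff S_{n+1} > \hat{Q}^{\mathrm{cal}}_{1-\alpha}$, by counting how many calibration scores can exceed $S_{n+1}$ when $S_{n+1}$ sits strictly above the $k$-th smallest calibration score. Since $X_{n+1}$ is independent of $\D_{\mathrm{cal}}$ and $\hat{Q}^{\mathrm{cal}}_{1-\alpha}$ is a function of the calibration scores only, conditioning on $\D_{\mathrm{cal}}$ gives $\p(S_{n+1}>\hat{Q}^{\mathrm{cal}}_{1-\alpha}\mid \D_{\mathrm{cal}}) = 1-F_0(\hat{Q}^{\mathrm{cal}}_{1-\alpha})$, so that $\p(\hat{p}_{n+1}\le\alpha) = 1-\E[F_0(\hat{Q}^{\mathrm{cal}}_{1-\alpha})]$.

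Next, a purely combinatorial identity links the two empirical CDFs at the threshold. Writing $\hat{F}_0$ for the empirical CDF of the $n_0$ inlier scores, the fact that exactly $k$ of the $n$ calibration scores fall at or below their $k$-th smallest gives $n_0\hat{F}_0(\hat{Q}^{\mathrm{cal}}_{1-\alpha}) + n_1\hat{F}_1(\hat{Q}^{\mathrm{cal}}_{1-\alpha}) = k \ge (1-\alpha)(n+1)$, and hence $n_0\hat{F}_0(\hat{Q}^{\mathrm{cal}}_{1-\alpha}) \ge (1-\alpha)(n+1) - n_1\hat{F}_1(\hat{Q}^{\mathrm{cal}}_{1-\alpha})$.

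The crux, and what I expect to be the main obstacle, is to lower bound the expected population CDF at the data-dependent threshold by the expected empirical one, namely $\E[F_0(\hat{Q}^{\mathrm{cal}}_{1-\alpha})] \ge \tfrac{n_0}{n_0+1}\,\E[\hat{F}_0(\hat{Q}^{\mathrm{cal}}_{1-\alpha})]$. The subtlety is that $\hat{Q}^{\mathrm{cal}}_{1-\alpha}$ itself depends on the inlier scores, so it cannot be treated as a fixed level. I would handle this through a leave-one-out exchangeability argument: consider the $n_0+1$ i.i.d.\ inlier-type scores $\{S_i : i\in\D_{\mathrm{inlier}}\}\cup\{S_{n+1}\}$ together with the outlier scores, and for each inlier-type index $j$ let $\hat{Q}^{(j)}$ be the $k$-th smallest of all scores except $S_j$. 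A rank computation shows $\I{S_j\le \hat{Q}^{(j)}}$ equals the indicator that $S_j$ lies among the $k$ smallest of all $n+1$ scores, so summing over $j$ counts the inlier-type scores among the bottom $k$; by exchangeability each term has expectation $\E[F_0(\hat{Q}^{\mathrm{cal}}_{1-\alpha})]$, giving $(n_0+1)\E[F_0(\hat{Q}^{\mathrm{cal}}_{1-\alpha})] = \E[\#\{\text{inlier-type scores in the bottom } k\}]$. Finally, a pointwise comparison, splitting on whether $S_{n+1}$ is itself among the bottom $k$, shows this count is always at least the number of non-test inliers among the bottom $k$ of the calibration set, which equals $n_0\hat{F}_0(\hat{Q}^{\mathrm{cal}}_{1-\alpha})$; this yields the desired inequality.

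Combining the three ingredients gives $\E[F_0(\hat{Q}^{\mathrm{cal}}_{1-\alpha})] \ge \tfrac{1}{n_0+1}[(1-\alpha)(n+1) - n_1\E[\hat{F}_1(\hat{Q}^{\mathrm{cal}}_{1-\alpha})]]$, and substituting into $\p(\hat{p}_{n+1}\le\alpha)=1-\E[F_0(\hat{Q}^{\mathrm{cal}}_{1-\alpha})]$, together with $n+1=(n_0+1)+n_1$, simplifies exactly to the claimed bound. The only loose end is the continuity assumption used both for the quantile characterization and for the identity $n_0\hat{F}_0+n_1\hat{F}_1=k$; in the presence of ties these relations become one-sided, and I would verify that the resulting inequalities still point in the conservative direction so that the upper bound is preserved.
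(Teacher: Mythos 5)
Your proof is correct and arrives at exactly the paper's bound, but it is organized around a genuinely different decomposition. The paper's proof conditions on the unordered sets of inlier-type and outlier scores, so that under $\mathcal{H}_0$ the test score is uniform over the $n_0+1$ inlier-type values; it then counts how many scores exceed the \emph{augmented} quantile $\hat{Q}^{n+1}_{1-\alpha}$ (the $\lceil(1-\alpha)(n+1)\rceil$-th smallest of all $n+1$ scores), and only at the end transfers the outlier term to the calibration threshold via the monotonicity $\hat{Q}^{\mathrm{cal}}_{1-\alpha}\ge\hat{Q}^{n+1}_{1-\alpha}$. You instead write the type-I error as $1-\E\left[F_0\left(\hat{Q}^{\mathrm{cal}}_{1-\alpha}\right)\right]$, convert this population-CDF quantity into an expected rank count through the leave-one-out identity $(n_0+1)\,\E\left[F_0\left(\hat{Q}^{\mathrm{cal}}_{1-\alpha}\right)\right]=\E\left[\#\{\text{inlier-type scores among the bottom } k\}\right]$, and perform all counting at the calibration threshold itself; your case analysis on whether $S_{n+1}$ falls in the bottom $k$ plays precisely the role that quantile monotonicity (the paper's Proposition~\ref{app-prop:quantiles}) plays in the paper. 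The two arguments are dual renderings of the same exchangeability-plus-order-statistic mechanism (unconditional sum-over-swaps versus conditioning on unordered score sets); yours has the merit of isolating $\E\left[F_0\left(\hat{Q}^{\mathrm{cal}}_{1-\alpha}\right)\right]$, which makes the source of the conservative slack transparent, while the paper's conditional version never needs to invoke the population CDF $F_0$ at all.

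One loose end beyond the ties issue you already flag: when $\alpha<1/(n+1)$ you have $k=\lceil(1-\alpha)(n+1)\rceil=n+1$, so $\hat{Q}^{\mathrm{cal}}_{1-\alpha}=\infty$, your leave-one-out quantiles $\hat{Q}^{(j)}$ (defined as the $k$-th smallest of only $n$ scores) are undefined, and the identity $n_0\hat{F}_0\left(\hat{Q}^{\mathrm{cal}}_{1-\alpha}\right)+n_1\hat{F}_1\left(\hat{Q}^{\mathrm{cal}}_{1-\alpha}\right)=k$ fails (the count is $n$, not $n+1$). This costs nothing: in that regime $\hat{p}_{n+1}\ge 1/(n+1)>\alpha$ deterministically, while the right-hand side of the lemma equals $\alpha\left(1+\tfrac{n_1}{n_0+1}\right)>0$ since $\hat{F}_1(\infty)=1$, so the claim holds vacuously. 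But this case should be split off at the start, since several of your identities silently assume $k\le n$; note that the paper's proof avoids this issue automatically because its counting is done on the augmented set of $n+1$ scores, where the $k$-th order statistic always exists.
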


This result is related to Theorem 1 in \citet{sesia2023adaptive}, which studies the behavior of conformal prediction sets for multi-class classification \citep{lei2013distribution,romano2020classification} calibrated with contaminated data. The key distinction is in our treatment of the calibration set: we assume that $n_0$ and $n_1$ are fixed, whereas \citet{sesia2023adaptive} consider a mixture model where the observed proportions of data points from different classes in the calibration set are random.
While treating $n_0$ and $n_1$ as fixed is convenient for this paper, we also include an additional result (\Cref{cor:conservativeness-p-values}) in Appendix~\ref{app-sec:proofs}, which reaches qualitatively similar conclusions by adopting an approach more closely aligned with Theorem 1 from \citet{sesia2023adaptive}.

A direct corollary of \Cref{lem:conservativeness} is that standard conformal p-values are conservative when outlier scores are typically larger than inlier scores. Formally, this condition is:
\begin{assumption}\label{asm:model-scores} 
$\E [ \hat{F}_{1} ( \hat{Q}_{1-\alpha}^{\mathrm{cal}} ) ] < 1-\alpha$. 
\end{assumption}
If \Cref{asm:model-scores} fails—for instance, when outlier scores are smaller than inlier scores—data contamination may invalidate standard conformal p-values, inflating the type-I error rate.
However, it is more common in practice that \Cref{asm:model-scores} holds, in which case contamination tends to reduce calibration power, and more so if $n_1$ is large.
In particular, \Cref{asm:model-scores} holds if:
(i) the outlier detection model is relatively accurate, and
(ii) the outlier distribution $\p_1$ is not adversarial.
Our experiments will show this power loss can be substantial, motivating the need for new methods that can approximately ``clean up'' the calibration data.

\section{Methods}\label{sec:trim}

\subsection{Key Idea: Boosting Power by Cleaning the Data}

Ideally, we would like to remove all $n_1$ outliers from $\D_{\mathrm{cal}}$, restoring the ideal behavior of conformal p-values calibrated on a clean dataset, as described in \Cref{prop:standard-conformal}, and likely boosting power. However, manually labeling the entire contaminated calibration set $\D_{\mathrm{cal}}$ is often impractical, especially when $n = |\D_{\mathrm{cal}}|$ is large. At the same time, utilizing only a small calibration set is not always desirable.

A large calibration set is often needed because the smallest conformal p-value obtainable through~\eqref{eq:conformal-p-value} scales as $1/n$. Thus, a large $n$ is critical for achieving high confidence in identifying outliers, especially in ``needle-in-a-haystack'' scenarios \citep{conformal-p-values}, where a few outliers must be detected in a large test set dominated by inliers. In such cases, the ability to obtain very small p-values is essential to achieve non-trivial power while controlling the false discovery rate \citep{BH}.

\subsection{A Simple but Unsatisfactory Approach: Naive-Trim} \label{sec:naive-trim}

The above challenge underscores the need for a method to mitigate the impact of outliers in the calibration dataset without requiring exhaustive annotation. An intuitive approach is to forgo annotations and simply remove all ``suspicious'' data points with large nonconformity scores. For instance, one could remove the top $m$ scores from $\D_{\mathrm{cal}}$, where $m$ is a fixed guess of the true number of outliers $n_1$ in the calibration set. We refer to this approach as \texttt{Naive-Trim}.

While \texttt{Naive-Trim} can reduce conservativeness by removing some outliers, it is not a satisfactory solution as it risks ``over-compensating''. By potentially removing also true inliers with large nonconformity scores, it can significantly skew the inlier score distribution to the left. This side effect is problematic, as it tends to invalidate conformal p-values and inflate the type-I error rate, over-correcting the conservativeness of standard conformal p-values.

This issue is particularly pronounced when $m > n_1$ or in noisy settings where the outlier detection model cannot perfectly distinguish between inliers and outliers. For example, as shown in Section~\ref{sec:experiments}, applying \texttt{Naive-Trim} to the dataset illustrated in \Cref{fig:scores-shuttle} results in uncontrolled inflation of the type-I error rate.

To address this challenge, we will now present a more sophisticated method, which we refer to as  \texttt{Label-Trim}. This approach utilizes a limited labeling budget to remove outliers from $\D_{\mathrm{cal}}$ in a more reliable manner, mitigating the risk of over-correcting the conformal p-value.

\subsection{The \texttt{Label-Trim} Method}\label{sec:label-trim}

Consider having a limited budget to label $m < n$ calibration samples, where $m$ is much smaller than $n$. We aim to utilize this budget to remove as many outliers as possible from the calibration set without altering the inlier score distribution.

A practical approach is to annotate the $m$ largest scores in $\D_{\mathrm{cal}}$, as these are most likely outliers based on the model. Denote these annotated samples as $\D_{\mathrm{labeled}} \subseteq \D_{\mathrm{cal}}$, and let $\D_{\mathrm{labeled}}^{\mathrm{outlier}}$ denote the subset of annotated data points that are true outliers. Removing these outliers from $\D_{\mathrm{cal}}$ yields a smaller, cleaner calibration set, which we call $\D_{\mathrm{cal}}^{\mathrm{LT}} \subseteq \D_{\mathrm{cal}}$.

The \texttt{Label-Trim} method then calculates a refined conformal p-value, now denoted as $\hat{p}^{\mathrm{LT}}_{n+1}$, following the same procedure as in~\eqref{eq:conformal-p-value} with $\D_{\mathrm{cal}}$ replaced by the (partially) cleaned calibration set $\D_{\mathrm{cal}}^{\mathrm{LT}}$:
\begin{align} \label{eq:LT-p-value}
  \hat{p}^{\mathrm{LT}}_{n+1} = \frac{1 + \sum_{i \in \D_{\mathrm{cal}}^{\mathrm{LT}}} \mathbb{I}[s(X_i) \geq s(X_{n+1})]}{1 + |\D_{\mathrm{cal}}^{\mathrm{LT}}|}. 
\end{align} 
Algorithms~\ref{algo:label-trim-construction} and~\ref{algo:label-trim-testing} summarize this procedure, which intuitively offers advantages over both the standard method for computing $\hat{p}_{n+1}$ in~\eqref{eq:conformal-p-value}, by potentially increasing power, and the \texttt{Naive-Trim} approach, by mitigating the risk of over-correcting $\hat{p}_{n+1}$.
A schematic illustration of the construction of $\D_{\mathrm{cal}}^{\mathrm{LT}}$ by Algorithm~\ref{algo:label-trim-construction} is shown in~\Cref{fig:illustration}.

\begin{figure}[!h]
    \centering
    \includegraphics[width=\linewidth]{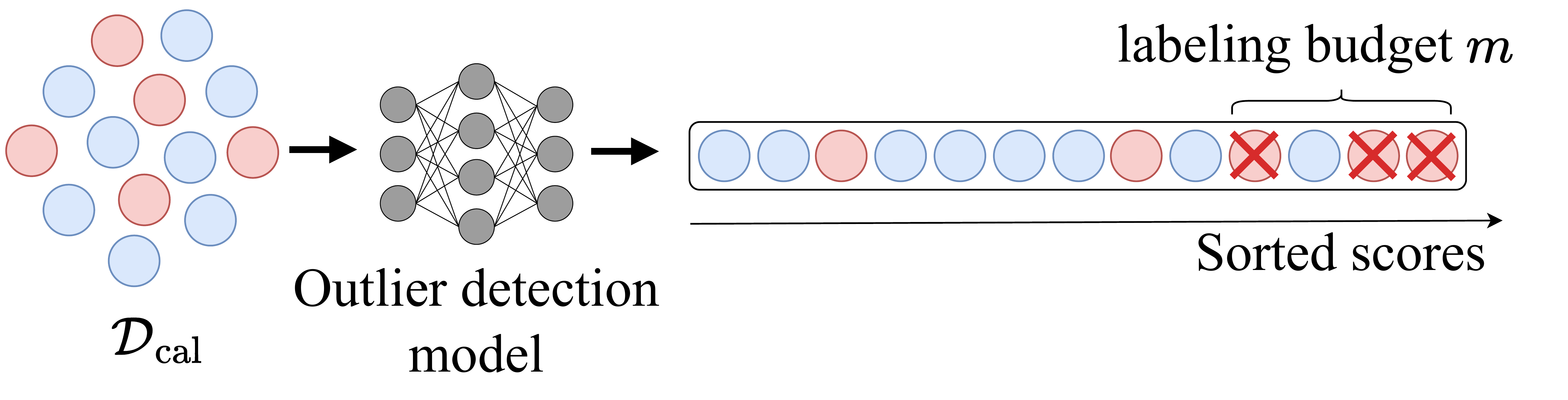}
    \caption{\textbf{A schematic illustration of the proposed active data-cleaning of the contaminated reference set (Algorithm~\ref{algo:label-trim-construction})}. The approach begins by computing nonconformity scores of the contaminated reference data $\mathcal{D}_{\mathrm{cal}}$ using a pretrained outlier detection model, where blue circles denote inliers and red circles denote outliers. The scores are then sorted in increasing order, and the top $m$ samples---those most likely to be outliers---are selected for annotation. 
     After removing the annotated outliers, the resulting (partially) cleaned set $\mathcal{D}_{\mathrm{cal}}^{\mathrm{LT}}$ is used for calibration.}
    \label{fig:illustration}
\end{figure}

\begin{algorithm}[!htb]
\caption{Label-trim calibration (construction phase)}
\label{algo:label-trim-construction}
\begin{algorithmic}[1]
\STATE \textbf{Input:} labeling budget $m$; contaminate calibration-set $\D_{\mathrm{cal}} = \left\{ X_i \right\}_{i=1}^{n}$; score function $s(\cdot)$, obtained by a pre-trained outlier detection model;

\STATE Compute the calibration scores $S_i = s(X_i)$, $\forall i\in \D_{\mathrm{cal}}$.
\STATE Sort the calibration scores, such that $S_{\pi(1)} \leq \dots \leq S_{\pi(n)}$ where $\pi : [n] \rightarrow [n]$ is the corresponding permutation of the indices.
\STATE Annotate the $m$ largest scores  $\D_{\mathrm{labeled}}$:= $\{ (S_{\pi(i)}, Y_{\pi(i)}) : i > n-m\}$, with $Y_{\pi(i)}=0$ if $X_{\pi(i)}$ is an inlier and $Y_{\pi(i)}=1$ otherwise. 
\STATE Construct the trimmed calibration set $\D_{\mathrm{cal}}^{\mathrm{LT}} = \left\{\pi (i) : i \leq n - m\right\}\cup \left\{j:  j\in \D_{\mathrm{labeled}}\text{ and }Y_j=0\right\}$.

\STATE \textbf{Output:} trimmed calibration set $\D_{\mathrm{cal}}^{\mathrm{LT}}$.
\end{algorithmic}
\end{algorithm}

\begin{algorithm}[!htb]
\caption{Label-trim calibration (testing phase)}
\label{algo:label-trim-testing}
\begin{algorithmic}[1]
\STATE \textbf{Input:} test point $X_{n+1}$; score function $s$; trimmed calibration set $\D_{\mathrm{cal}}^{\mathrm{LT}}$; type-I error level $\alpha$;
\STATE Compute the conformal p-value $\hat{p}^{\mathrm{LT}}_{n+1}$ according to \eqref{eq:LT-p-value}.
\STATE \textbf{Output:} reject the null hypothesis $\mathcal{H}_0$ if $\hat{p}_{n+1}^{\mathrm{LT}} \leq \alpha$, classifying $X_{n+1}$ as an outlier.
\end{algorithmic}
\end{algorithm}

The following theorem provides justification for \texttt{Label-Trim}, demonstrating that $\hat{p}^{\mathrm{LT}}_{n+1}$ is an approximately valid p-value under relatively mild conditions. While our method is intuitive, this result is nontrivial for two reasons.
First, \texttt{Label-Trim} cannot guarantee the removal of all outliers from the calibration set, as it may be that $m < n_1$ or some outliers are not among the $m$ largest scores. Second, it involves annotating the $m$ largest scores, revealing the true labels of some calibration points but not others, which could disrupt the exchangeability typically assumed among inlier data points in conformal inference.
Therefore, this justification requires novel proof techniques and does not follow directly from existing results.

Following a notation similar to that of Lemma~\ref{lem:conservativeness}, let $\hat{F}^{\mathrm{LT}}_{1}$ denote the empirical CDF of the scores in $\D^{\mathrm{LT}}_{\mathrm{outlier}}$. Define also $\hat{Q}^{\mathrm{LT}}_{1-\alpha}$ as the $\hat{i}_{\mathrm{LT}}$-th smallest element in $\{S_i\}_{i\in \D_{\mathrm{cal}}^{\mathrm{LT}}}\cup\{\infty\}$, with $\hat{i}_{\mathrm{LT}} :=\lceil (1-\alpha)(n^{\mathrm{LT}}+1)\rceil$ and $n^{\mathrm{LT}}:=\left|\D_{\mathrm{cal}}^{\mathrm{LT}}\right|$.

\begin{theorem}
    \label{thm:labeled-trim}
    Consider the setup in~\eqref{eq:setup-contaminated}, with $\mathcal{H}_0$ being true.
    For any fixed $\alpha \in (0,1)$, assume that $m \leq \alpha (n+1)$.
    Then,
    \begin{align*}
    & \p \left( \hat{p}_{n+1}^{\mathrm{LT}} \leq \alpha \right) \leq \alpha + \frac{1}{n_0+1} \\
        & \qquad - \E\left[ \frac{\hat{n}^{\mathrm{LT}}_{1}}{n_0+1} \left( (1-\alpha) - \hat{F}_{1}^{\mathrm{LT}} \left( \hat{Q}_{1-\alpha}^{\mathrm{LT}} \right) \right)\right].
    \end{align*}
\end{theorem}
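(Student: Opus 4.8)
The plan is to reduce the probabilistic statement to a deterministic, combinatorial inequality by exploiting the exchangeability of the $n_0+1$ i.i.d.\ inlier-distributed scores --- the $n_0$ calibration inliers together with the test score $s(X_{n+1})$, which are exchangeable under $\mathcal{H}_0$. The crucial structural feature of \texttt{Label-Trim} is that it retains \emph{every} inlier and discards only annotated outliers, so the inlier contribution to the rejection count is simply the rank of $s(X_{n+1})$ among the inlier scores, with no trimming-induced distortion. I would first rewrite $\hat{p}^{\mathrm{LT}}_{n+1}\le\alpha$ in order-statistic form, i.e.\ $s(X_{n+1})>\hat{Q}^{\mathrm{LT}}_{1-\alpha}$, and then, via the standard conformal leave-one-out symmetrization, express $\p(\hat{p}^{\mathrm{LT}}_{n+1}\le\alpha)$ as the expectation of the average, over the $n_0+1$ ways of designating one pooled inlier-distributed score as the ``test'' and the remaining $n_0$ as calibration inliers, of the corresponding rejection indicator. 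Because the quantities $\hat{n}^{\mathrm{LT}}_1$, $\hat{F}^{\mathrm{LT}}_1$, and $\hat{Q}^{\mathrm{LT}}_{1-\alpha}$ are functions of the calibration set alone, the same symmetrization rewrites the correction term $\E[\frac{\hat{n}^{\mathrm{LT}}_1}{n_0+1}((1-\alpha)-\hat{F}^{\mathrm{LT}}_1(\hat{Q}^{\mathrm{LT}}_{1-\alpha}))]$ as an analogous leave-one-out average, reducing the theorem to a single deterministic inequality after conditioning on the unordered multiset of inlier-pool scores and the outlier scores.

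Conditional on these scores, designating the $r$-th smallest pooled score as the test gives inlier count $n_0+1-r$, so rejection becomes $1+(n_0+1-r)+B^{(r)}\le\alpha(1+n_0+\hat{n}^{(r)}_1)$, where $B^{(r)}$ counts retained outliers above the test score and $\hat{n}^{(r)}_1$ is the number of retained outliers for the calibration set obtained by removing the $r$-th inlier; write $\hat{Q}^{(r)}_{1-\alpha}$ and $\hat{F}^{(r)}_1$ for the corresponding leave-one-out analogues of the hatted quantities. If $B^{(r)}$ and $\hat{n}^{(r)}_1$ were constant in $r$ --- as in the no-trimming \Cref{lem:conservativeness}, where $B^{(r)}$ is monotone and the rejection set is a clean up-interval --- summing the indicator over $r$ and using $B=\hat{n}_1(1-\hat{F}^{\mathrm{LT}}_1(\hat{Q}^{\mathrm{LT}}_{1-\alpha}))$ reproduces exactly $\alpha-\frac{\hat{n}^{\mathrm{LT}}_1}{n_0+1}((1-\alpha)-\hat{F}^{\mathrm{LT}}_1(\hat{Q}^{\mathrm{LT}}_{1-\alpha}))$ per configuration, i.e.\ the theorem's bound \emph{without} the $\frac{1}{n_0+1}$ term. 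I would carry out this computation first to fix the algebra, identifying the effective rejection boundary $r^{\ast}$ with the index $\hat{i}_{\mathrm{LT}}$ and matching the boundary order statistic to $\hat{Q}^{\mathrm{LT}}_{1-\alpha}$ using the $+\infty$ padding and a tie-breaking convention (injecting infinitesimal noise if the scores are not continuous).

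The hard part is that trimming makes the rejection threshold depend on \emph{which} inlier is designated the test: removing a different inlier from the calibration pool can move the top-$m$ cutoff and hence change both the retained-outlier set and the effective quantile, so the rejection set is no longer a clean up-interval and $B^{(r)},\hat{n}^{(r)}_1$ genuinely vary with $r$. The key observation I would use to control this is that deleting a single inlier perturbs the top-$m$ selection minimally: if the deleted inlier lies below the pooled top-$m$, the top-$m$ outlier set is unchanged; if it lies within the pooled top-$m$, its deletion promotes exactly one element (the pooled $(m+1)$-st) into the top-$m$, so the retained-outlier count changes by at most one. Consequently $\hat{n}^{(r)}_1$ takes at most two adjacent values across all $r$, with $\hat{Q}^{(r)}_{1-\alpha}$ varying correspondingly, and I would show that this at-most-one-element instability inflates the rejection count by at most one relative to the idealized count --- exactly the additive $\frac{1}{n_0+1}$ after normalization. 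The assumption $m\le\alpha(n+1)$ enters here to guarantee that the trimmed top-$m$ scores sit above the effective rejection threshold, so that the removed outliers and the single promoted element interact with the rank and quantile bookkeeping in a controlled, one-sided manner rather than corrupting the interior of the count.

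Finally I would assemble the per-configuration inequality $\sum_{r}\mathbb{I}(\text{reject})\le \alpha(n_0+1)+1-\sum_r G^{(r)}$, with $G^{(r)}:=\frac{\hat{n}^{(r)}_1}{n_0+1}((1-\alpha)-\hat{F}^{(r)}_1(\hat{Q}^{(r)}_{1-\alpha}))$, divide by $n_0+1$, and take expectation over the pooled inlier scores and the outlier scores; invoking the symmetrization of the first step to turn the averaged correction term back into $\E[\frac{\hat{n}^{\mathrm{LT}}_1}{n_0+1}((1-\alpha)-\hat{F}^{\mathrm{LT}}_1(\hat{Q}^{\mathrm{LT}}_{1-\alpha}))]$ yields the stated bound. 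I expect the main obstacle to remain the bookkeeping of the third step --- quantifying how the top-$m$ trimming reacts to the leave-one-out swap and showing the net cost is a single unit --- since everything else parallels the clean conformal argument and \Cref{lem:conservativeness}.
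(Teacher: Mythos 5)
Your proposal is correct and follows essentially the same route as the paper's proof: your designation-independent reference trimming of the pooled $n+1$ scores, together with the one-element perturbation analysis and the use of $m \leq \alpha(n+1)$ to keep the quantile index below the trimmed region, is precisely the paper's ``mirror'' procedure (Label-Trim run with budget $m+1$ on the augmented set including the test point), and your leave-one-out symmetrization conditional on the pooled inlier and outlier scores is the same exchangeability argument the paper implements by conditioning on the unordered score sets and the mirror's labeled subsets. The additive $\frac{1}{n_0+1}$ term arises identically in both arguments, from the at-most-one-outlier discrepancy between the reference (symmetric) trimmed set and the actual designation-dependent trimmed set.
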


The upper bound on the type-I error rate provided by Theorem~\ref{thm:labeled-trim} resembles that of Lemma~\ref{lem:conservativeness} and can be interpreted as follows: \texttt{Label-Trim} produces approximately valid conformal p-values if: (i) the labeling budget is small relative to the calibration set size, i.e., $m \leq \alpha(n+1)$; and (ii) the calibration set contains a large number of inliers, $n_0$.

However, the upper bound in Theorem~\ref{thm:labeled-trim} also suggests that \texttt{Label-Trim} may remain overly conservative, similar to standard conformal p-values, if (i) not all outliers are removed from the calibration set ($\hat{n}^{\mathrm{LT}}_{1} > 0$ with high probability), and (ii) the remaining outlier scores are generally larger than the remaining inlier scores, consistent with $\E [ \hat{F}^{\mathrm{LT}}_{1} ( \hat{Q}^{\mathrm{LT}}_{1-\alpha} ) ] < 1-\alpha$, akin to Assumption~\ref{asm:model-scores}. 

This potential conservative behavior arises naturally from the use of a limited labeling budget, especially when the model guiding the construction of the set $\D_{\mathrm{labeled}}$ fails to effectively detect true outliers. Nevertheless, as we will see in the next section, \texttt{Label-Trim} often enhances power.

\begin{figure*}[tb]
    \includegraphics[height=3.5cm, valign=t]{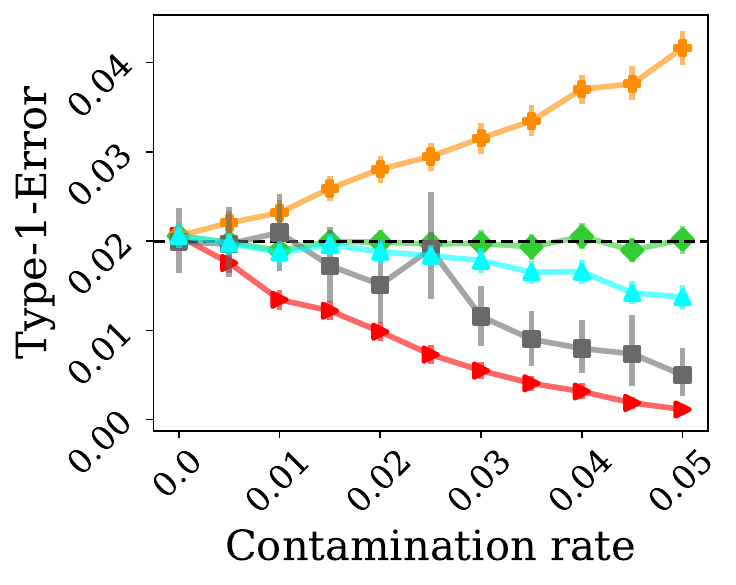}
    \includegraphics[height=3.5cm, valign=t]{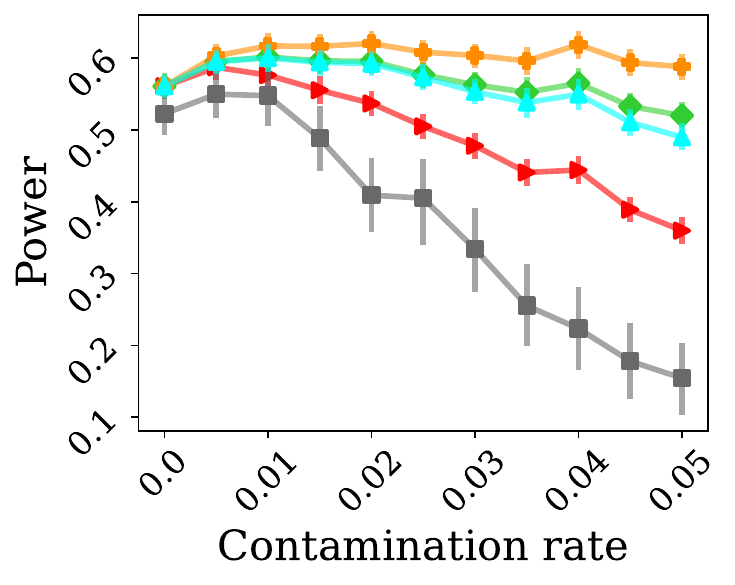}
    \includegraphics[height=3.5cm, valign=t]{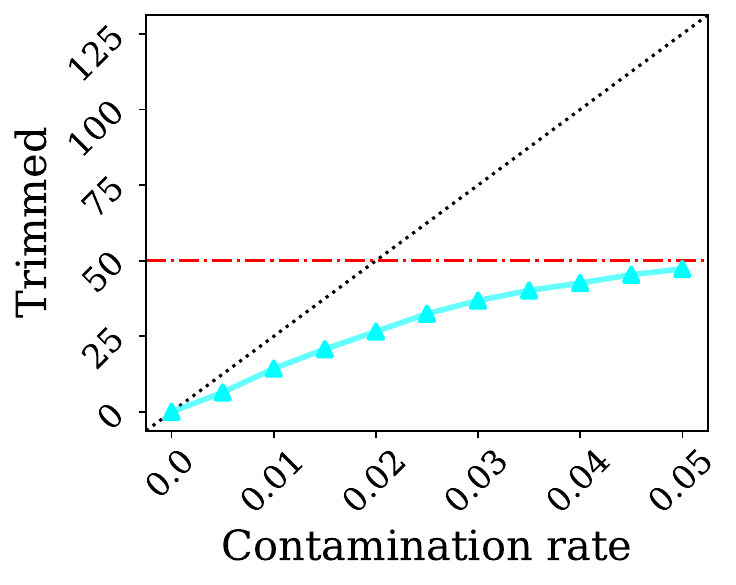}
    \includegraphics[width=3.5cm, valign=t]{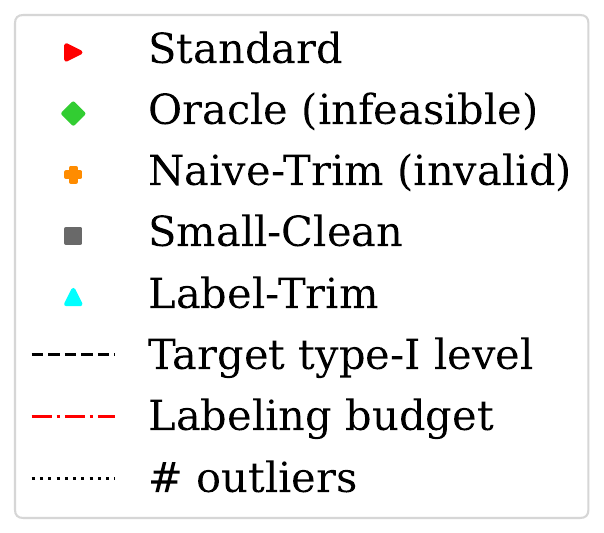}
    \caption{Comparison of conformal outlier detection methods on a tabular dataset (``shuttle'') as a function of the contamination rate  $r$. The target type-I error rate is  $\alpha = 0.02$. Left: Empirical type-I error. Middle: Average detection rate (power), where higher values indicate better performance. Right: Number of outliers trimmed by the \texttt{Label-Trim} method. Results are averaged across 100 random splits of the data. 
}
    \label{fig:shuttle-outlier-prop}
\end{figure*}

\begin{figure*}[tb]
    \includegraphics[height=3.5cm, valign=t]{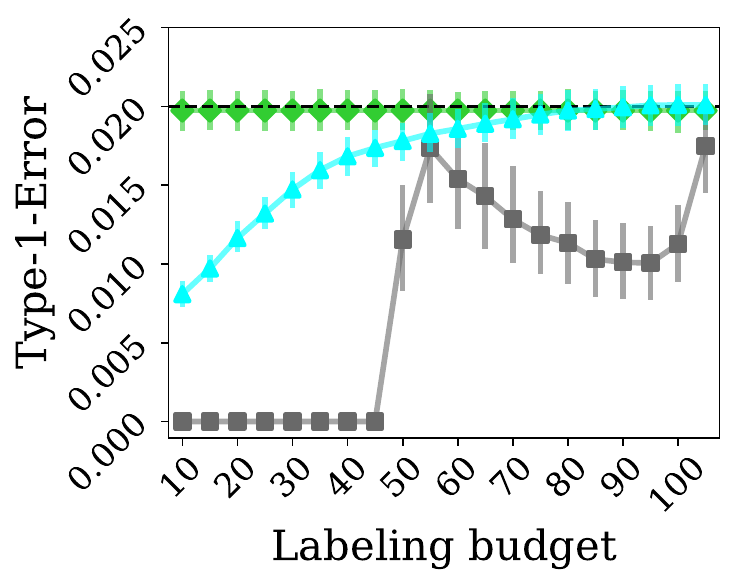}
    \includegraphics[height=3.5cm, valign=t]{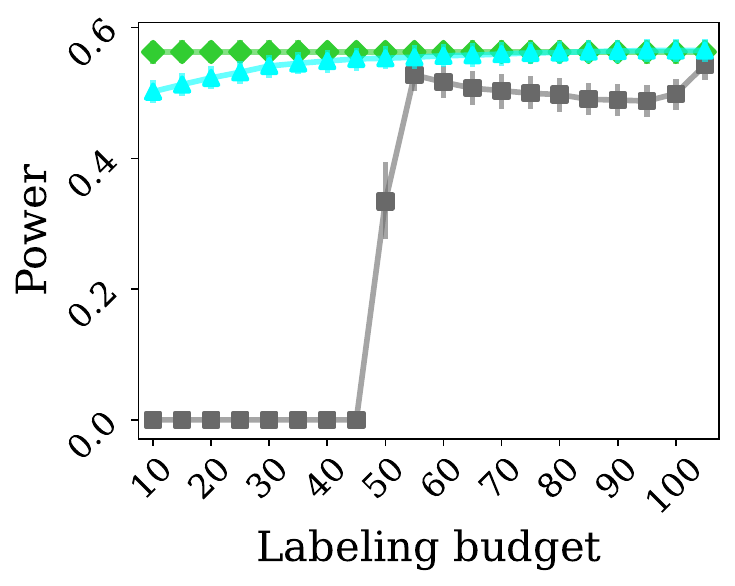}
    \includegraphics[height=3.5cm, valign=t]{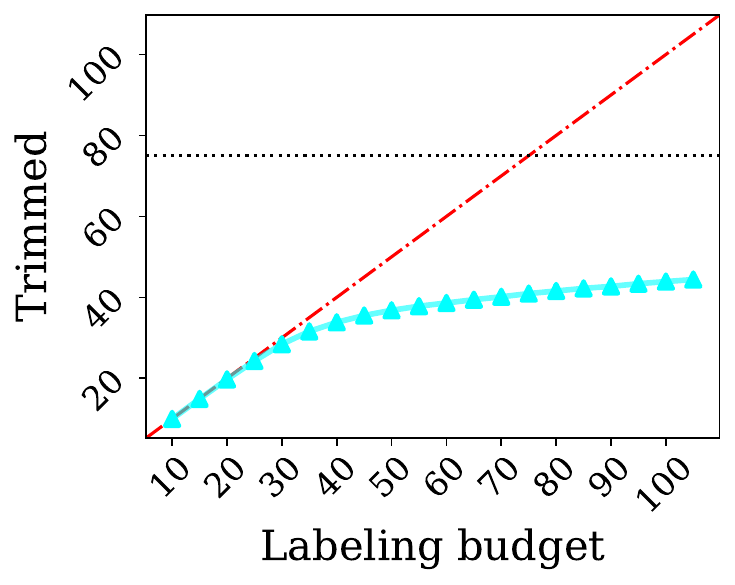}
    \includegraphics[width=3.5cm, valign=t]{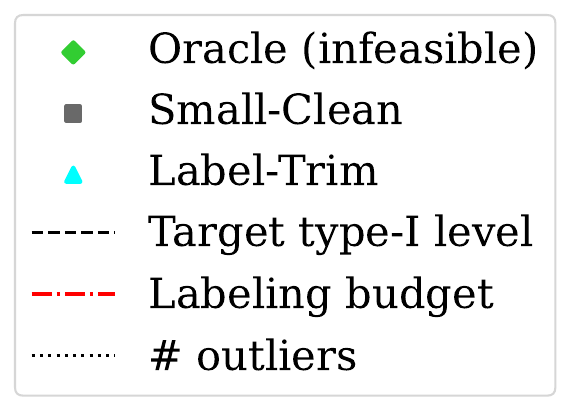}
    \caption{Comparison of conformal outlier detection methods on a tabular dataset (``shuttle'') as a function of the labeling budget $m$. The contamination rate is fixed to $r=0.03$. Other details are as in \Cref{fig:shuttle-outlier-prop}.}
    \label{fig:shuttle-labeled-exp}
\end{figure*}

\begin{figure*}[tb]
    \centering 
    \includegraphics[height=3.5cm, valign=t]{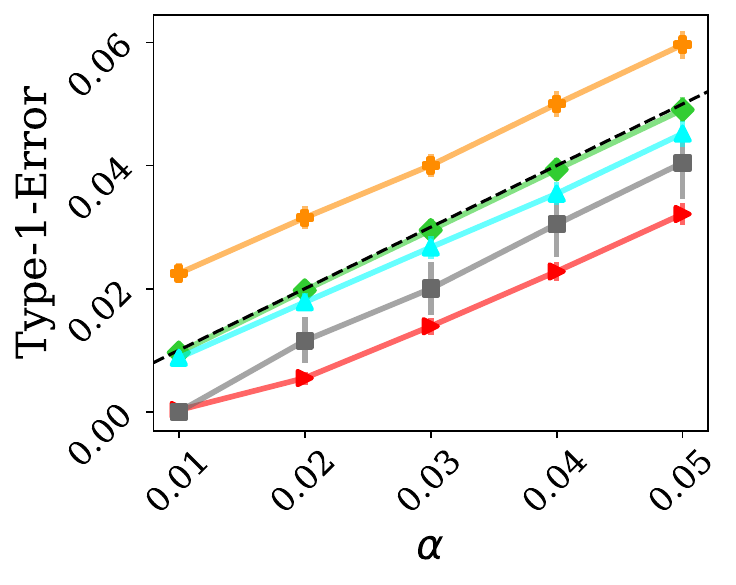}
    \includegraphics[height=3.5cm, valign=t]{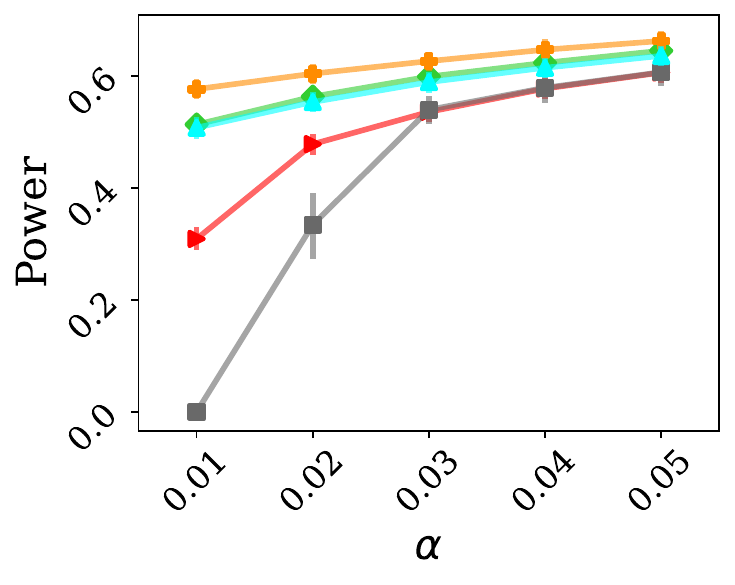}
    \includegraphics[width=3.5cm, valign=t]{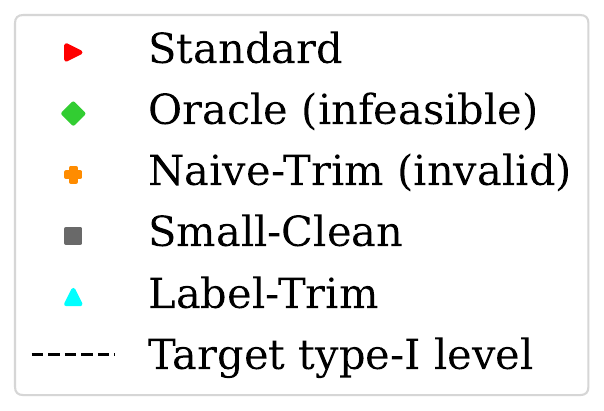}
    \caption{Comparison of conformal outlier detection methods on a tabular dataset (``shuttle'') as a function of the target type-I error rate $\alpha$. The contamination rate $r$ is fixed to 3\%. Other details are as in \Cref{fig:shuttle-outlier-prop}.
}
    \label{fig:shuttle-levels}
\end{figure*}

\section{Experiments} \label{sec:experiments}

We turn to evaluate the performance of conformal outlier detection methods under contaminated data. The experiments presented in this section are conducted on nine benchmark datasets: three tabular datasets, listed in Section~\ref{sec:real-data-exp}, and six visual datasets, listed in \Cref{sec:img-exp}.

\paragraph{Methods} We compare the following methods:
\begin{itemize}[noitemsep, topsep=0pt]
    \item \texttt{Standard}: The basic conformal method that uses the contaminated reference set $\mathcal{D}_\text{cal}=\mathcal{D}_\text{inlier}\cup\mathcal{D}_\text{outlier}$.
    \item \texttt{Oracle}: An infeasible benchmark method where the reference set contains only inliers, i.e., $\mathcal{D}_\text{cal}=\mathcal{D}_\text{inlier}$.
    \item \texttt{Naive-Trim}: The baseline method from Section~\ref{sec:naive-trim}, which removes the top $r\%$ non-conformity scores from $\mathcal{D}_\text{cal}$, where $r = n_1 / (n_0 + n_1)$.
    \item \texttt{Label-Trim}: Our proposed reliable data-cleaning method from Section~\ref{sec:label-trim}, applied with a labeling budget of $m=50$ annotations to label the $m$ data points with the largest non-conformity scores from $\mathcal{D}_\text{cal}$.
    \item \texttt{Small-Clean}: A baseline method that uses the labeling budget to construct a small, clean reference set by (i) randomly selecting $m$ data points from $\mathcal{D}_\text{cal}$ and (ii) extracting the true inliers from this subset.
\end{itemize}

\paragraph{Setup and performance metrics} In all experiments, we randomly split a given dataset into disjoint training $\mathcal{D}_\text{train}$, calibration $\mathcal{D}_\text{cal}$, and test sets of inliers $\mathcal{D}_\text{test}^\text{inlier}$ and outliers $\mathcal{D}_\text{test}^\text{outlier}$. To simulate a realistic setting, we construct the training and contaminated calibration sets with the same contamination rate of $r\%$. The inlier $\mathcal{D}_\text{test}^\text{inlier}$ and outlier $\mathcal{D}_\text{test}^\text{outlier}$ test sets are used to compute the type-I error and power of the outlier detection model, respectively. To ensure fair comparisons, all conformal methods use the same outlier detection model, trained on $\mathcal{D}_\text{train}$. Performance metrics are evaluated across 100 random splits of the data. The size of each dataset, along with the details of how  $\mathcal{D}_\text{train}$, $\mathcal{D}_\text{cal}$, and $\mathcal{D}_\text{test}$ are constructed are provided in~\Cref{app-sec:data}. 

\subsection{Tabular Data} \label{sec:real-data-exp}

We now compare the performance of the different methods on three benchmark tabular datasets for outlier detection, previously used in the conformal literature \citep{conformal-p-values}. 
Since conclusions are similar across datasets, we focus here on results for the {\em shuttle} dataset \citep{shuttle}. Results for the {\em credit card} \citep{creditcard} and {\em KDDCup99} \citep{KDDCup99} datasets are presented in \Cref{app-sec:real-data-exp}. For all conformal methods, we use Isolation Forest~\citep{liu2008isolation} as the base outlier detection model, implemented using \texttt{scikit-learn} with default hyperparameters~\citep{sklearn_api}.

\Cref{fig:shuttle-outlier-prop} presents the performance metrics of each method as a function of the contamination rate $r$. 
Following the left panel in that figure, we can see that the \texttt{Standard} conformal method results in conservative type-I error control, with a decrease in the error rate as the outlier proportion increases---a behavior that is aligned with~\Cref{lem:conservativeness}. Notably, the type-I error of the \texttt{Oracle} method is tightly centered around $\alpha$, as guaranteed by~\Cref{prop:standard-conformal}. The \texttt{Naive-Trim} method does not control the type-I error rate, emphasizing the need for reliable data-cleaning procedures. In striking contrast, our \texttt{Label-Trim} method achieves a valid type-I error rate. At lower outlier proportions, the empirical type-I error is close to  $\alpha$, but the method becomes more conservative as the outlier proportion increases. This observation aligns with the upper bound on the error rate derived in~\Cref{thm:labeled-trim}. Notably, as the contamination rate in the training data increases, the outlier detection model’s ability to distinguish between inliers and outliers weakens. This, in turn, adversely affects the effectiveness of forming a subset of data points for annotation, as demonstrated in the right panel of \Cref{fig:shuttle-outlier-prop}. The \texttt{Small-Clean} method also controls the type-I error but is more conservative than \texttt{Label-Trim} due to its much smaller reference set, which becomes even smaller as the contamination rate increases. Observe how the power of the \texttt{Small-Clean} method is lower than that of the \texttt{Standard} approach, despite the latter using a contaminated reference set. By contrast, our proposed \texttt{Label-Trim} method significantly improves the power of the \texttt{Standard} method and even achieves near-oracle performance when the outlier proportion is low.

Next, we study the effect of the labeling budget on the performance of our \texttt{Label-Trim} method. As shown in \Cref{fig:shuttle-labeled-exp}, increasing the labeling budget brings the \texttt{Label-Trim} method closer to the \texttt{Oracle} in terms of both type-I error and power. Notably, even with a modest budget of $40$–$50$ annotations, the power of \texttt{Label-Trim} is nearly indistinguishable from that of the \texttt{Oracle}. This is attributed to the method’s effective trimming of outliers, as shown in the right panel. Notably, for labeling budgets $m > 50$, the condition in~\Cref{thm:labeled-trim} no longer holds, and yet the \texttt{Label-Trim} method still achieves valid type-I error control at level  $\alpha$ in practice. This highlights the robustness of the proposed method to the choice of $m$ beyond the restrictions specified in \Cref{thm:labeled-trim}, where we attribute this robustness to the non-adversarial nature of the outlier distribution and the underlying detection model. 

\Cref{fig:shuttle-labeled-exp} also illustrates that the \texttt{Small-Clean} method lags behind \texttt{Label-Trim} both in terms of power and conservativeness. For small labeling budgets of $m < 45$, the coarse granularity of conformal p-values \eqref{eq:conformal-p-value} renders the method powerless; the smallest achievable p-value in this case is $1/(m+1) > 0.02 = \alpha$. Even for slightly larger labeling budgets, the conservative nature of the conformal p-value---specifically, the `plus 1’ term in \eqref{eq:conformal-p-value}---continues to have a significant impact. This effect is rigorously quantified by the lower bound on type-I error provided in \Cref{prop:standard-conformal}. For instance, with $m = 80$ and $\alpha = 0.02$, the lower bound is approximately $\alpha - 1/(m+1) \approx 0.0076$, which aligns closely with the empirical error rate shown in the left panel of \Cref{fig:shuttle-labeled-exp}. Overall, these results highlight the benefits of selectively cleaning a relatively large contaminated set compared to relying on a small clean reference set, offering both improved stability and higher power.

Next, we examine how the target error level $\alpha$ affects the performance of different methods. \Cref{fig:shuttle-levels} shows that our \texttt{Label-Trim} method performs particularly well at low type-I error rates, especially when $\alpha$ is smaller than the contamination rate ($r=3\%$). This behavior can be explained as follows. For a relatively accurate model, the outliers primarily distort the tail of the empirical distribution of nonconformity scores---see \Cref{fig:scores-shuttle}. Consequently, the influence of these outliers on the rejection rule $\hat{p}_{n+1} \leq \alpha$ from \eqref{eq:conformal-p-value}, or $\hat{p}^{\text{LT}}_{n+1} \leq \alpha$ from \eqref{eq:LT-p-value}, diminishes as $\alpha$ increases.

\paragraph{Additional experiments}
In~\Cref{app-sec:real-data-exp}, we extend the experiments presented above. These include evaluations under higher contamination levels and additional outlier detection models, such as One-Class SVM~\citep{scholkopf2001estimating} and Local Outlier Factor~\citep{breunig2000lof}. We also examine the effect of different outlier injection strategies and the robustness of the proposed method under test-time distribution shifts in the outlier population.

\subsection{Visual Data}\label{sec:img-exp}
\begin{table*}[!t]
\caption{Comparison of conformal outlier detection methods on six visual datasets for varying contamination rate $r$ and target type-I error level $\alpha$. The empirical type-I error values are averaged across all datasets. The empirical power is presented relative to the \texttt{Standard} method (higher is better), and averaged across all datasets. Results are averaged across 100 random splits of the data, with standard errors presented in parentheses.
}
\label{tab:avg-images}
\centering
\resizebox{\textwidth}{!}{
\begin{tabular}{l|ll|ll|ll}
\hline
& \multicolumn{6}{c}{Contamination rate} \\
\hline
 & \multicolumn{2}{c|}{1\%} & \multicolumn{2}{c|}{3\%} & \multicolumn{2}{c}{5\%} \\ \hline
 Method      & Power & Type-I Error & Power & Type-I Error & Power & Type-I Error \\ \hline
Standard & \bfseries \cellcolor{Green!30} 1.0 ($\pm$ 0.0317) & \cellcolor{white} 0.008 ($\pm$ 0.0003)  & \bfseries \cellcolor{Green!30} 1.0 ($\pm$ 0.0354) & \cellcolor{white} 0.005 ($\pm$ 0.0003)  & \bfseries \cellcolor{Green!30} 1.0 ($\pm$ 0.0408) & \cellcolor{white} 0.004 ($\pm$ 0.0002) \\

Oracle (infeasible) & \bfseries \cellcolor{Green!100} 1.166 ($\pm$ 0.0336) & \cellcolor{white} 0.01 ($\pm$ 0.0003)  & \bfseries \cellcolor{Green!100} 1.549 ($\pm$ 0.0425) & \cellcolor{white} 0.01 ($\pm$ 0.0003)  & \bfseries \cellcolor{Green!100} 1.961 ($\pm$ 0.0531) & \cellcolor{white} 0.009 ($\pm$ 0.0004) \\

Naive-Trim (invalid) & \cellcolor{red!20} 1.659 ($\pm$ 0.0342) & \cellcolor{red!20} 0.017 ($\pm$ 0.0004)  & \cellcolor{red!20} 2.79 ($\pm$ 0.045) & \cellcolor{red!20} 0.027 ($\pm$ 0.0006)  & \cellcolor{red!20} 4.16 ($\pm$ 0.0596) & \cellcolor{red!20} 0.036 ($\pm$ 0.0007) \\

Small-Clean & \cellcolor{white} 0.0 ($\pm$ 0.0) & \cellcolor{white} 0.0 ($\pm$ 0.0)  & \cellcolor{white} 0.0 ($\pm$ 0.0) & \cellcolor{white} 0.0 ($\pm$ 0.0)  & \cellcolor{white} 0.0 ($\pm$ 0.0) & \cellcolor{white} 0.0 ($\pm$ 0.0) \\

Label-Trim & \bfseries \cellcolor{Green!100} 1.166 ($\pm$ 0.0336) & \cellcolor{white} 0.01 ($\pm$ 0.0003)  & \bfseries \cellcolor{Green!60} 1.517 ($\pm$ 0.042) & \cellcolor{white} 0.01 ($\pm$ 0.0003)  & \bfseries \cellcolor{Green!60} 1.786 ($\pm$ 0.0498) & \cellcolor{white} 0.008 ($\pm$ 0.0003) \\
\end{tabular}
}
\subcaption{Target type-I error rate $\alpha=0.01$}

\resizebox{\textwidth}{!}{
\begin{tabular}{l|ll|ll|ll}
\hline
& \multicolumn{6}{c}{Contamination rate} \\
\hline
 & \multicolumn{2}{c|}{1\%} & \multicolumn{2}{c|}{3\%} & \multicolumn{2}{c}{5\%} \\ \hline
 Method      & Power & Type-I Error & Power & Type-I Error & Power & Type-I Error \\ \hline
Standard & \bfseries \cellcolor{Green!30} 1.0 ($\pm$ 0.0174) & \cellcolor{white} 0.027 ($\pm$ 0.0006)  & \bfseries \cellcolor{Green!30} 1.0 ($\pm$ 0.0189) & \cellcolor{white} 0.019 ($\pm$ 0.0005)  & \cellcolor{white} 1.0 ($\pm$ 0.0212) & \cellcolor{white} 0.015 ($\pm$ 0.0005) \\

Oracle (infeasible) & \bfseries \cellcolor{Green!100} 1.062 ($\pm$ 0.0176) & \cellcolor{white} 0.03 ($\pm$ 0.0006)  & \bfseries \cellcolor{Green!100} 1.235 ($\pm$ 0.0192) & \cellcolor{white} 0.029 ($\pm$ 0.0006)  & \bfseries \cellcolor{Green!100} 1.448 ($\pm$ 0.023) & \cellcolor{white} 0.03 ($\pm$ 0.0007) \\

Naive-Trim (invalid) & \cellcolor{red!20} 1.146 ($\pm$ 0.0175) & \cellcolor{red!20} 0.035 ($\pm$ 0.0006)  & \cellcolor{red!20} 1.487 ($\pm$ 0.0186) & \cellcolor{red!20} 0.043 ($\pm$ 0.0007)  & \cellcolor{red!20} 1.882 ($\pm$ 0.0224) & \cellcolor{red!20} 0.052 ($\pm$ 0.0008) \\

Small-Clean & \cellcolor{white} 0.714 ($\pm$ 0.0448) & \cellcolor{white} 0.02 ($\pm$ 0.0021)  & \cellcolor{white} 0.869 ($\pm$ 0.0501) & \cellcolor{white} 0.02 ($\pm$ 0.002)  & \bfseries \cellcolor{Green!30} 1.033 ($\pm$ 0.0613) & \cellcolor{white} 0.021 ($\pm$ 0.0023) \\

Label-Trim & \bfseries \cellcolor{Green!60} 1.041 ($\pm$ 0.0177) & \cellcolor{white} 0.029 ($\pm$ 0.0006)  & \bfseries \cellcolor{Green!60} 1.139 ($\pm$ 0.019) & \cellcolor{white} 0.025 ($\pm$ 0.0006)  & \bfseries \cellcolor{Green!60} 1.215 ($\pm$ 0.0226) & \cellcolor{white} 0.021 ($\pm$ 0.0006) \\
\end{tabular}
}
\subcaption{Target type-I error rate $\alpha=0.03$}
\end{table*}

In what follows, we compare all methods using benchmark visual datasets for outlier detection. Similar to \citet{zhang2023openood}, we construct six datasets, where the inlier samples are always images from CIFAR10~\citep{cifar-10, cifar-data} and the outlier samples vary across datasets. Specifically, the outliers are drawn from (1) MNIST~\citep{deng2012mnist}, (2) SVHN~\citep{svhn}, (3) Texture~\citep{texture}, (4) Places365~\citep{texture}, (5) TinyImageNet~\citep{tinyimages}, and (6) CIFAR100~\citep{cifar-data}. For all datasets, we use the outlier detection model proposed by \citet{react}, ReAct, which operates on feature representations extracted by a pre-trained ResNet-18 model. More details are in~\Cref{app-sec:data}.

\Cref{tab:avg-images} summarizes the results for all six datasets. Overall, we can see a trend similar to that of the tabular data: the \texttt{Standard} and \texttt{Small-Clean} methods are valid but conservative, the \texttt{Naive-Trim} fails to control the type-I error, and our \texttt{Label-Trim} achieves a significant boost in power while practically controlling the type-I error. Notably, our \texttt{Label-Trim} method attains near-oracle performance for low contamination rates. Detailed results for each dataset, along with experiments with additional outlier detection models---specifically, the ReAct method \citep{react} with a pre-trained VGG-19 model and the SCALE method \citep{scale} with a ResNet-18 model---are provided in \Cref{app-sec:images-data-exp}.

\section{Discussion}

In this work, we studied the robustness of conformal prediction under contaminated reference data. Motivated by empirical evidence, we characterized the conditions under which conformal outlier detection methods become too conservative. To improve power, we proposed the \texttt{Label-Trim} method, which leverages an outlier detection model and a limited labeling budget to remove outliers from the contaminated reference set. We also provided a theoretical justification for this approach, employing novel proof techniques. Numerical experiments with real data confirmed that standard conformal outlier detection methods are conservative under contaminated data and demonstrated that our \texttt{Label-Trim} method can significantly enhance power.

However, the experiments also reveal a limitation of our \texttt{Label-Trim} method: while it improves power compared to standard conformal inference, it often remains too conservative, particularly when the labeling budget is very limited, leaving room for further improvement. A promising direction for future research is to enhance \texttt{Label-Trim} with {\em active learning} strategies \cite{makili2012active,fannjiang2022conformal,prinsterconformal}, enabling the removal of more outliers without increasing the labeling budget.

While we provide robustness results for calibration under contaminated reference data, our analysis assumes that the inlier calibration and test points are drawn i.i.d.~from the same distribution. An important direction for future work is to extend this setting beyond the i.i.d.~assumption, possibly by building on ideas from~\citet{tibshirani2019conformal,podkopaev2021distribution,sesia2023adaptive,barber2023conformal}.

Another limitation of the \texttt{Label-Trim} approach is its reliance on actively collecting new annotations. In scenarios where a flexible labeling budget is unavailable but access to a small, clean reference set is feasible, this dependency becomes restrictive. As our experiments demonstrate, the limited sample size imposes a fundamental constraint on the power of conformal outlier detection methods. This raises an intriguing question for future research: given a small clean reference set and a larger, contaminated reference set, how can we effectively and safely clean the contaminated data to enhance detection power at test time?

One potential solution could involve using the small clean reference set to calibrate a base outlier detection model. This calibrated model could then be employed to clean the larger contaminated set by removing detected outliers, while carefully accounting for inliers mistakenly classified as outliers. Exploring such a semi-supervised data-cleaning approach represents a promising direction for future work, though we anticipate that establishing the theoretical validity of such a method may not be straightforward.

Another direction related to the above discussion is how to account for uncertainty in the labeling process. In our current formulation, we assume that all calibration points are labeled as inliers, but some may in fact be outliers. Importantly, we have no indication of which points are mislabeled, nor any signal of uncertainty in the labels. Exploring how to incorporate such uncertainty into our framework could enhance its practical utility. The line of work presented in~\citet{stutz2023conformal, javanmardi2024conformalized, caprio2025conformalized} may offer a valuable starting point for such an extension: it introduces techniques for handling ambiguous labels, though in the context of multi-class classification rather than outlier detection.

Notably, our paper also provides practical guidance on how to annotate data in situations where there is uncertainty about whether a point is an inlier or an outlier. As indicated by \Cref{lem:conservativeness} and supported by our empirical results, when there is uncertainty about a point's label, treating it as an inlier is a conservative strategy that preserves type-I error control. This observation suggests a compelling connection between label ambiguity~\citep{stutz2023conformal, javanmardi2024conformalized, caprio2025conformalized} and contamination in the reference set, which merits further investigation.

\section*{Acknowledgments}
M.~S.~was partly supported by NSF grant DMS 2210637 and by a Capital One CREDIF Research Award.
Y.~R. and M.~B. were funded by the European Union (ERC, SafetyBounds, 101163414). Views and opinions expressed are however those of the authors only and do not necessarily reflect those of the European Union or the European Research Council Executive Agency (ERCEA). Neither the European Union nor the granting authority can be held responsible for them. This research was also partially supported by the Israel Science Foundation (ISF grant 729/21). Y.~R. acknowledges additional support from the Career Advancement Fellowship at the Technion.

\bibliographystyle{icml2025}

\newpage
\appendix
\onecolumn

\section{Mathematical Proofs}
\label{app-sec:proofs}

\subsection{Auxiliary Technical Results}

In this section, we begin by introducing two useful propositions, \Cref{app-prop:p-value-to-quantile} and \Cref{app-prop:quantiles}, which will be used later in the proofs presented here.

\begin{proposition}
\label{app-prop:p-value-to-quantile}
    Let $\D$ be a dataset containing $n$ scores, and define the threshold $\hat{Q}_{1-\alpha}$ as
    \begin{align*}
\hat{Q}_{1-\alpha} &:= \hat{i} \text{-th smallest element in } \D\cup \{\infty\},
\end{align*}
where 
\begin{align*}
    \hat{i} :=\lceil (1-\alpha)(n+1)\rceil.
\end{align*}
For any test point $X_{n+1}$, the following holds:
    \begin{align*}
        s(X_{n+1}) > \hat{Q}_{1-\alpha} \quad\text{ if and only if }\quad \hat{p}_{n+1} \leq \alpha,
    \end{align*}
    where $\hat{p}_{n+1}$ is the conformal p-value \eqref{eq:conformal-p-value}.
\end{proposition}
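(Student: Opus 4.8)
The plan is to collapse both sides of the claimed equivalence to one and the same integer inequality, stated in terms of the count of calibration scores that exceed (weakly) the test score. Concretely, I would introduce the shorthand $N := \sum_{i=1}^{n} \mathbb{I}[s(X_i) \geq s(X_{n+1})]$, so that the conformal p-value reads $\hat{p}_{n+1} = (1+N)/(1+n)$. The whole argument is then to show that each of the two events ``$\hat{p}_{n+1}\le\alpha$'' and ``$s(X_{n+1}) > \hat{Q}_{1-\alpha}$'' is equivalent to $N \le \lfloor \alpha(n+1)\rfloor - 1$.

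For the p-value side this is immediate arithmetic: $\hat{p}_{n+1} \le \alpha$ rewrites as $1 + N \le \alpha(n+1)$, i.e.\ $N \le \alpha(n+1) - 1$, and since $N$ is a nonnegative integer this is equivalent to $N \le \lfloor \alpha(n+1) - 1\rfloor = \lfloor \alpha(n+1)\rfloor - 1$, where the last identity just uses that floor commutes with subtracting an integer. For the quantile side I would first record well-definedness, namely $1 \le \hat{i} \le n+1$ (using $0<\alpha<1$), so that the $\hat{i}$-th smallest element of the $(n+1)$-point set $\D\cup\{\infty\}$ exists. Writing $a_{(1)} \le \dots \le a_{(n+1)}$ for the sorted elements of $\D\cup\{\infty\}$, I would prove the clean order-statistic fact that $s(X_{n+1}) > a_{(\hat{i})}$ if and only if at least $\hat{i}$ of the $a_{(j)}$ are \emph{strictly} below $s(X_{n+1})$. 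The nontrivial direction uses that any element $\ge s(X_{n+1})$ dominates any element $< s(X_{n+1})$, so if exactly $k$ elements are strictly below then they occupy sorted positions $1,\dots,k$; hence $a_{(\hat{i})} < s(X_{n+1})$ precisely when $\hat{i}\le k$. This is what makes the equivalence robust to ties.

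Next I would count the strictly-below elements. Since $s(X_{n+1})$ is finite, the appended $\infty$ is never strictly below it, so the count of strictly-below elements in $\D\cup\{\infty\}$ equals $\sum_{i=1}^n \mathbb{I}[s(X_i) < s(X_{n+1})] = n - N$. Thus $s(X_{n+1}) > \hat{Q}_{1-\alpha}$ is equivalent to $n - N \ge \hat{i}$, i.e.\ $N \le n - \hat{i}$. It then remains to evaluate $n - \hat{i}$ via the ceiling--floor identity $\lceil (1-\alpha)(n+1)\rceil = (n+1) - \lfloor \alpha(n+1)\rfloor$ (which follows from $\lceil x - y\rceil = x - \lfloor y\rfloor$ for integer $x$, applied with $x=n+1$, $y=\alpha(n+1)$). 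This gives $n - \hat{i} = \lfloor \alpha(n+1)\rfloor - 1$, matching exactly the bound obtained on the p-value side, and the proof concludes.

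I expect the main obstacle to be bookkeeping rather than any deep idea: getting the ceiling/floor conversion exactly right (off-by-one errors here would break the claimed \emph{exact} equivalence), and stating the order-statistic lemma so that it handles ties and the boundary correctly. The role of the appended $\{\infty\}$ also needs a careful word, since it guarantees both that $\hat{Q}_{1-\alpha}$ is always defined (even when $\hat{i}=n+1$) and that it contributes nothing to the strictly-below count for a finite test score.
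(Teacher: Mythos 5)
Your proof is correct and follows essentially the same route as the paper's: both arguments reduce the p-value condition and the quantile condition to a common counting inequality on the number of calibration scores strictly below $s(X_{n+1})$, using the integrality of the count together with the identity $\lceil (1-\alpha)(n+1)\rceil + \lfloor \alpha(n+1)\rfloor = n+1$. The only (minor) difference is organizational and in rigor at the margins: you meet in the middle at the single inequality $N \leq \lfloor \alpha(n+1)\rfloor - 1$ and your order-statistic lemma handles ties explicitly, whereas the paper chains the equivalences through the intermediate inequality $\sum_{i}\mathbb{I}[s(X_i) < s(X_{n+1})] \geq \lceil (1-\alpha)(n+1)\rceil$ and implicitly assumes distinct scores.
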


\begin{proof}[Proof of \Cref{app-prop:p-value-to-quantile}]
The proof follows the definition of conformal p-value from \eqref{eq:conformal-p-value}, and its relation to the empirical quantile function:
\begin{alignat*}{2}
    \hat{p}_{n+1} = \frac{1 + \sum_{i=1}^{n} \mathbb{I}[s(X_i) \geq s(X_{n+1})]}{n+1}
    &\leq \alpha &&\overset{(i)}{\Longleftrightarrow} \\
    \hat{p}_{n+1} = \frac{1 + \sum_{i=1}^{n} \mathbb{I}[s(X_i) \geq s(X_{n+1})]}{n+1}
    &\leq \frac{\lfloor \alpha(n+1) \rfloor }{n+1} &&\Longleftrightarrow \\
    1 + \sum_{i=1}^{n} \mathbb{I}[s(X_i) \geq s(X_{n+1})] &\leq \lfloor \alpha (n+1) \rfloor &&\Longleftrightarrow \\
    1 + n - \sum_{i=1}^{n} \mathbb{I}[s(X_i) < s(X_{n+1})] &\leq \lfloor \alpha (n+1) \rfloor &&\Longleftrightarrow \\
     \sum_{i=1}^{n} \mathbb{I}[s(X_i) < s(X_{n+1})] &\geq n + 1 - \lfloor \alpha (n+1) \rfloor &&\overset{(ii)}{\Longleftrightarrow} \\
    \sum_{i=1}^{n} \mathbb{I}[s(X_i) < s(X_{n+1})] &\geq  \lceil (1-\alpha) (n+1) \rceil && \numberthis \label{app-eq:prop-inequality}
\end{alignat*}
The labeled steps above can be explained as follows.
\begin{itemize}
    \item (i) The values of $\hat{p}_{n+1}$ are discrete, taking values from $\{\frac{1}{n+1}, \frac{2}{n+1}, \dots, 1\}$. Therefore, $\hat{p}_{n+1} = \frac{k}{n+1}$ for some $k\in[n+1]$. We explicitly prove that $\hat{p}_{n+1}\leq \alpha$ iff $\hat{p}_{n+1}\leq \frac{\lfloor \alpha(n+1)\rfloor}{n+1}$ as follows:
    \begin{itemize}
        \item[$\Leftarrow$] Assume $\hat{p}_{n+1} \leq \frac{\lfloor \alpha(n+1)\rfloor}{n+1}$. Therefore, $\hat{p}_{n+1} \leq \frac{\lfloor \alpha(n+1)\rfloor}{n+1} \leq \frac{\alpha(n+1)}{n+1} = \alpha$.
        \item[$\Rightarrow$] Assume $\hat{p}_{n+1} \leq \alpha$, then $\frac{k}{n+1}\leq \alpha$. This implies that 
        $k \leq \alpha (n+1)$. Since $k$ is an integer, it follows that $k \leq \lfloor \alpha (n+1) \rfloor$. Therefore, $\hat{p}_{n+1} = \frac{k}{n+1} \leq \frac{\lfloor \alpha (n+1) \rfloor}{n+1}$.
    \end{itemize}
    \item (ii) This step follows directly from the equality $n + 1 = \lceil (1-\alpha)(n+1)\rceil + \lfloor \alpha(n+1)\rfloor$. We explicitly prove this equality as follows:
    \begin{itemize}
        \item The term $\lceil (1-\alpha) (n+1)\rceil$ represents the smallest integer greater than or equal to $(1-\alpha)(n+1)$. Hence, we can write:
        \begin{align*}
            \lceil (1-\alpha)(n+1)\rceil = (1-\alpha)(n+1) + \delta_1, 
        \end{align*}
        where $0\leq\delta_1<1$.
        \item Similarly, the term $\lfloor \alpha (n+1)\rfloor$ represents the largest integer less than or equal to $\alpha(n+1)$. Thus:
        \begin{align*}
            \lfloor \alpha(n+1)\rfloor = \alpha(n+1) - \delta_2, 
        \end{align*}
        where $0\leq\delta_2<1$.
        \item Adding these two terms gives:
    \begin{align*}
        \lfloor \alpha(n+1)\rfloor + \lceil (1-\alpha)(n+1)\rceil = \alpha(n+1) - \delta_2 + (1-\alpha)(n+1) + \delta_1 = n + 1 + (\delta_1 - \delta_2).
    \end{align*}
    Since $\lfloor \alpha(n+1)\rfloor + \lceil (1-\alpha)(n+1)\rceil$ must be an integer and $\delta_1,\delta_2\in [0,1)$, it follows that $\delta_1 - \delta_2 = 0$.
    \end{itemize}
    
    Therefore, $\lfloor \alpha(n+1)\rfloor + \lceil (1-\alpha)(n+1)\rceil = n + 1$.
\end{itemize}

To complete the proof, we now show that \eqref{app-eq:prop-inequality} holds if and only if $s(X_{n+1}) > \hat{Q}_{1-\alpha}$. 
\begin{itemize}
    \item[$\Leftarrow$] Assume $s(X_{n+1}) > \hat{Q}_{1-\alpha}$. 
    By definition, $\sum_{i=1}^{n} \mathbb{I}[s(X_i) \leq \hat{Q}_{1-\alpha}] = \lceil (1-\alpha)(n+1)\rceil$.
    Then, \eqref{app-eq:prop-inequality} holds since
    \begin{align*}
        \sum_{i=1}^{n} \mathbb{I}[s(X_i) < s(X_{n+1})] \geq \sum_{i=1}^{n} \mathbb{I}[s(X_i) \leq \hat{Q}_{1-\alpha}] =  \lceil (1-\alpha)(n+1)\rceil.
    \end{align*}
    \item[$\Rightarrow$] We prove this direction by contradiction, assuming that \eqref{app-eq:prop-inequality} holds. Now, suppose that $s(X_{n+1})\leq \hat{Q}_{1-\alpha}$ also holds, implying that
    \begin{align*}
        \sum_{i=1}^{n} \mathbb{I}[s(X_i) < s(X_{n+1})] \leq \sum_{i=1}^{n} \mathbb{I}[s(X_i) < \hat{Q}_{1-\alpha}] \overset{(i)}{<} \lceil (1-\alpha) (n+1) \rceil,
    \end{align*}
    which contradicts the assumption \eqref{app-eq:prop-inequality}. Therefore, we conclude that $s(X_{n+1}) > \hat{Q}_{1-\alpha}$. The last step above can be explained as follows.
    \begin{itemize}
        \item (i) Recall that by definition, $\hat{Q}_{1-\alpha}$ is a specific value in $\{s(X_i)\}_{i=1}^{n}$ and $\sum_{i=1}^{n} \mathbb{I}[s(X_i) \leq \hat{Q}_{1-\alpha}] = \lceil (1-\alpha)(n+1)\rceil$.
This implies that
\begin{align*}
    \sum_{i=1}^{n} \mathbb{I}[s(X_i) < \hat{Q}_{1-\alpha}] 
 = \lceil (1-\alpha)(n+1)\rceil - \sum_{i=1}^{n} \mathbb{I}[s(X_i) = \hat{Q}_{1-\alpha}] < \lceil (1-\alpha)(n+1)\rceil.
\end{align*}
    \end{itemize}
\end{itemize}
In sum, $\hat{p}_{n+1}\leq \alpha$ holds if and only if \eqref{app-eq:prop-inequality} holds, and the latter holds if and only if $s(X_{n+1}) > \hat{Q}_{1-\alpha}$. This completes the proof.
\end{proof}

\begin{proposition}\label{app-prop:quantiles}
    Let $\D$ be a dataset containing $n$ scores, and let $S_{n+1}$ be a test score. Define the following thresholds:
        \begin{align*}
\hat{Q}_{1-\alpha} &:= \hat{i} \text{-th smallest element in } \D\cup \{\infty\},
\end{align*}
where 
\begin{align*}
    \hat{i} :=\lceil (1-\alpha)(n+1)\rceil.
\end{align*}
Similarly, let
    \begin{align*}
\hat{Q}_{1-\alpha}^{n+1} &:= \hat{i} \text{-th smallest element in } \D\cup \{S_{n+1}\}.
\end{align*}
It follows that
\begin{align*}
    \hat{Q}_{1-\alpha} \geq \hat{Q}_{1-\alpha}^{n+1} \text{ almost surely}.
\end{align*}
\end{proposition}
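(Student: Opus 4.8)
The plan is to reduce the claim to the elementary monotonicity of order statistics: if one multiset dominates another elementwise, then every order statistic of the former is at least the corresponding order statistic of the latter. Both $\hat{Q}_{1-\alpha}$ and $\hat{Q}_{1-\alpha}^{n+1}$ are the $\hat{i}$-th smallest element of a size-$(n+1)$ collection that shares the $n$ scores of $\D$ and differs only in a single extra entry --- namely $\infty$ in the first case and $S_{n+1}$ in the second. So the whole statement is really about what happens to an order statistic when one coordinate is lowered from $+\infty$ to the finite value $S_{n+1}$.

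First I would set up notation, writing $\D = \{s_1,\dots,s_n\}$ and forming the two multisets $A := \D\cup\{\infty\}$ and $B := \D\cup\{S_{n+1}\}$, each of size $n+1$. Since $S_{n+1}$ is a finite score almost surely while $\infty = +\infty$, we have $S_{n+1}\leq \infty$ almost surely. Matching the common scores $s_1,\dots,s_n$ to themselves and pairing the extra entry $S_{n+1}$ of $B$ with the extra entry $\infty$ of $A$ gives a bijection between $A$ and $B$ under which every element of $A$ weakly dominates its partner in $B$.

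Next I would invoke, and briefly justify, the monotonicity lemma in its threshold-counting form, which sidesteps any bookkeeping about ties. For any real $t$, lowering a single coordinate can only increase the number of entries at or below $t$, so $|\{x\in A: x\leq t\}|\leq |\{x\in B: x\leq t\}|$ for every $t$. Since the $k$-th smallest element of a multiset is exactly the smallest $t$ at which this count first reaches $k$, it follows that the $k$-th order statistic of $B$ is no larger than that of $A$, for every $k$. Applying this with $k = \hat{i} = \lceil (1-\alpha)(n+1)\rceil$ yields $\hat{Q}_{1-\alpha}\geq \hat{Q}_{1-\alpha}^{n+1}$ almost surely.

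I do not expect any serious obstacle here, as the argument is essentially immediate; the only points deserving a sentence of care are (i) defining the order statistics consistently when the scores have ties or equal $\infty$, which the threshold-counting formulation handles automatically, and (ii) noting that the single inequality doing all the work is $S_{n+1}\leq\infty$, which is exactly what the ``almost surely'' qualifier records, since any atom of $S_{n+1}$ at $+\infty$ occurs with probability zero.
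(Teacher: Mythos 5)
Your proof is correct and follows essentially the same route as the paper's: both rest on the single observation that replacing the extra entry $\infty$ by the finite score $S_{n+1}$ can only lower elements of the augmented multiset, hence can only lower the $\hat{i}$-th order statistic. The paper states this elementwise-dominance argument in one sentence, while you additionally make it rigorous via the threshold-counting characterization of order statistics, which is a fine (and slightly more careful) elaboration of the same idea.
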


\begin{proof}[Proof of \Cref{app-prop:quantiles}]
    Since the largest possible score is $\infty$, the set $\D \cup \{\infty \}$ almost surely contains scores that are greater or equal to those in $\D \cup \{S_{n+1}\}$. Consequently, $\hat{Q}_{1-\alpha} := \hat{i}$-th smallest element in $\D \cup \{ \infty \}$ is almost surely greater than or equal to $\hat{Q}_{1-\alpha}^{n+1} := \hat{i}$-th smallest element in $\D \cup \{S_{n+1}\}$.
\end{proof}

\subsection{Explaining the Conservativeness of Standard Conformal p-Values}

\subsubsection{Proof of~\Cref{lem:conservativeness}}

\begin{proof}[Proof of~\Cref{lem:conservativeness}]
\label{prf:lem}
To simplify the notation define the random score $S_i := s(X_i)$ for all $i\in \D_{\mathrm{cal}} \cup \{n+1\}$. Throughout the proof, we refer to the calibration set as the set of nonconformity scores corresponding to the calibration points.
Without loss of generality, assume that the inliers in $\D_{\mathrm{cal}}$ are located at the first $n_0$ indices. Let $\D_{\mathrm{inlier}}=[n_0]$ denote the set of indices corresponding to the inlier scores. Consequently, define $\D_{\mathrm{outlier}} = \{n_0+1, n_0+2, \ldots, n\}$ as the set of indices corresponding to the outlier scores in $\D_{\mathrm{cal}}$.
We assume the scores have no ties (which can always be achieved by adding a negligible random noise to the scores output by any model).

Given a fixed realization of the score vector $(s_1,\ldots,s_n,s_{n+1}) \in \mathbb{R}^{n+1}$, define the following two events: 
    \begin{itemize}
        \item $E_{\mathrm{in}}$: the unordered set of inlier scores, including the test score, is $\{S_1,\dots,S_{n_0},S_{n+1}\} = \{s_1,\dots,s_{n_0},s_{n+1}\}$;
        \item $E_{\mathrm{out}}$: the unordered set of outlier scores is $\{S_{n_0+1}, \dots, S_n\} = \{s_{n_0+1}, \dots, s_n\}$.
    \end{itemize}

Under the setup defined in~\eqref{eq:setup-contaminated}, when $\mathcal{H}_0$ is true, the test score $S_{n+1}$ and the inlier scores in the calibration set are i.i.d.~from $\p_0$. 
    Therefore, by exchangeability, the following holds for each inlier index $i\in \D_{\mathrm{inlier}}\cup\{n+1\}$:
    \begin{align}\label{eq:exch-inlier}
        \p \left( S_{n+1} = s_i \mid E_{\mathrm{in}}, E_{\mathrm{out}}\right) =         \frac{1}{n_0 +1}.
    \end{align}
    Since the calibration scores are almost-surely distinct, the probability of a null test point obtaining any outlier score is zero. Therefore, for each outlier index $j\in \D_{\mathrm{outlier}}$:
    \begin{align}\label{eq:exch-outlier}
        \p \left( S_{n+1} = s_j \mid E_{\mathrm{in}}, E_{\mathrm{out}}\right) = 0.
    \end{align}

To obtain an upper bound on the type-I error rate, $\p \left( \hat{p}_{n+1} \leq \alpha \right)$, we use the equivalence established in~\Cref{app-prop:p-value-to-quantile}. According to this result, the following holds:
\begin{align*}
    \p \left( \hat{p}_{n+1} \leq \alpha \right) = \p \left( S_{n+1} > \hat{Q}_{1-\alpha}^{\mathrm{cal}}\right),
\end{align*}
where $\hat{Q}^{\mathrm{cal}}_{1-\alpha}$ is the $\hat{i}_{\mathrm{cal}}$-th smallest element in $\{S_i\}_{i=1}^{n}\cup\{\infty\}$ and $\hat{i}_{\mathrm{cal}} :=\lceil (1-\alpha)(n+1)\rceil$.

Moreover, define $\hat{Q}_{1-\alpha}^{n+1}$ as the $\hat{i}_{\mathrm{cal}}$-th smallest score in $\{S_i\}_{i=1}^{n+1}$.
By~\Cref{app-prop:quantiles}, $\hat{Q}^{\mathrm{cal}}_{1-\alpha} \geq \hat{Q}_{1-\alpha}^{n+1}$ almost surely.\\

Now, we obtain an upper bound for $\p \left( S_{n+1} > \hat{Q}^{\mathrm{cal}}_{1-\alpha} \mid E_{\mathrm{in}}, E_{\mathrm{out}}\right)$, where the probability is taken over random permutations of the scores conditional on $E_{\mathrm{in}}, E_{\mathrm{out}}$.
    \begin{align*} 
        \p \left( S_{n+1} > \hat{Q}^{\mathrm{cal}}_{1-\alpha} \mid E_{\mathrm{in}}, E_{\mathrm{out}}\right) &\leq \p \left( S_{n+1} > \hat{Q}^{\mathrm{n+1}}_{1-\alpha} \mid E_{\mathrm{in}}, E_{\mathrm{out}}\right)\\
        &= \E \left[ \mathbb{I} \left[ S_{n+1} > \hat{Q}^{\mathrm{n+1}}_{1-\alpha}\right]\mid E_{\mathrm{in}}, E_{\mathrm{out}} \right] \\
     &= \sum\limits_{i\in\D_{\mathrm{inlier}}\cup\{n+1\}} \E \left[ \mathbb{I} \left[S_{n+1}=s_i \right] \mathbb{I}\left[ s_i > \hat{Q}^{\mathrm{n+1}}_{1-\alpha}\right] \mid E_{\mathrm{in}}, E_{\mathrm{out}}\right]\\
          &\overset{(i)}{=} \sum\limits_{i\in\D_{\mathrm{inlier}}\cup\{n+1\}} \mathbb{I} \left[ s_i > \hat{Q}^{\mathrm{n+1}}_{1-\alpha}\right] \p \left(  S_{n+1}=s_i   \mid E_{\mathrm{in}}, E_{\mathrm{out}}\right)\\
        &= \frac{1}{n_0+1} \sum\limits_{i\in  \D_{\mathrm{inlier}} \cup \{ n+1\}} \mathbb{I} \left[ s_i > \hat{Q}^{\mathrm{n+1}}_{1-\alpha}\right] \\
        &\overset{(ii)}{\leq}  \frac{1}{n_0+1} \left(\alpha(n+1) - \sum\limits_{i\in  \D_{\mathrm{outlier}}} \mathbb{I} \left[ s_i > \hat{Q}^{\mathrm{n+1}}_{1-\alpha}\right]\right)\\
        &= \alpha + \frac{1}{n_0+1} \left( \alpha n_1 - \sum\limits_{i\in  \D_{\mathrm{outlier}}} \mathbb{I} \left[ s_i > \hat{Q}^{\mathrm{n+1}}_{1-\alpha}\right]\right)\\
        &= \alpha - \frac{n_1}{n_0+1} \left( 1-\alpha - \frac{1}{n_1} \sum\limits_{i\in  \D_{\mathrm{outlier}}} \mathbb{I} \left[ s_i \leq \hat{Q}^{\mathrm{n+1}}_{1-\alpha}\right]\right) \\
        &\overset{(iii)}{\leq} \alpha - \frac{n_1}{n_0+1} \left( 1-\alpha - \frac{1}{n_1} \sum\limits_{i\in  \D_{\mathrm{outlier}}} \mathbb{I} \left[ s_i \leq \hat{Q}^{\mathrm{cal}}_{1-\alpha}\right]\right) \\
        &= \alpha - \frac{n_1}{n_0 + 1} \left( 1-\alpha -\hat{F}_{1} \left( \hat{Q}^{\mathrm{cal}}_{1-\alpha} \right) \right),
    \end{align*}
where $\hat{F}_{1}$ is the empirical CDF of the outlier scores.
The labeled steps above can be explained as follows.
\begin{itemize}
\item (i) $\mathbb{I} \left[ s_i > \hat{Q}^{\mathrm{n+1}}_{1-\alpha} \right]$ is measurable with respect to the $\sigma$-algebra generated by $E_{\mathrm{in}}, E_{\mathrm{out}}$. This follows because $\hat{Q}_{1-\alpha}^{\mathrm{n+1}}$ is the $\hat{i}_{\mathrm{cal}}$-th smallest element of $\{s_1,\dots,s_{n+1}\}$, which is fully determined by these variables. Thus, we can pull it out of the expectation.\\
\item (ii) $\hat{Q}^{\mathrm{n+1}}_{1-\alpha}$ is the $\hat{i}_{\mathrm{cal}}$-th smallest score in $\{s_1,\dots,s_n,s_{n+1}\}$ 
  and $[n+1] = \D_{\mathrm{outlier}} \cup \D_{\mathrm{inlier}} \cup \{n+1\}$. By definition,\\ 
  \begin{align*}
    &\sum\limits_{i\in  \D_{\mathrm{inlier}} \cup \{ n+1\}} \mathbb{I}\left[ s_i > \hat{Q}^{\mathrm{n+1}}_{1-\alpha} \right] + \sum\limits_{i\in  \D_{\mathrm{outlier}}} \mathbb{I} \left[ s_i > \hat{Q}^{\mathrm{n+1}}_{1-\alpha} \right] = 
      \sum\limits_{i=1}^{n+1} \mathbb{I} \left[ s_i > \hat{Q}^{\mathrm{n+1}}_{1-\alpha} \right] = \lfloor\alpha(n+1) \rfloor
      \leq \alpha(n+1)\\
    \intertext{and therefore,}\\
    &\sum\limits_{i\in  \D_{\mathrm{inlier}} \cup \{ n+1\}} \mathbb{I}\left[ s_i > \hat{Q}^{\mathrm{n+1}}_{1-\alpha} \right]  \leq \alpha(n+1) - \sum\limits_{i\in  \D_{\mathrm{outlier}}} \mathbb{I}\left[ s_i > \hat{Q}^{\mathrm{n+1}}_{1-\alpha} \right]
  \end{align*}
\item (iii) Since $\hat{Q}^{\mathrm{cal}}_{1-\alpha} \geq \hat{Q}_{1-\alpha}^{n+1}$ almost surely, increasing the threshold (i.e., using $\hat{Q}^{\mathrm{cal}}_{1-\alpha}$) results in an equal or larger value of the sum.
\end{itemize}

Now, we can derive an upper bound for $\p \left(\hat{p}_{n+1} \leq \alpha\right)$ as follows:
\begin{align*}
\p \left(\hat{p}_{n+1} \leq \alpha\right) &= 
\p \left( S_{n+1} > \hat{Q}^{\mathrm{cal}}_{1-\alpha} \right)\\
&= \E\left[ \p \left( S_{n+1} > \hat{Q}^{\mathrm{cal}}_{1-\alpha} \mid E_{\mathrm{in}}, E_{\mathrm{out}}\right) \right]\\
    &\leq \E\left[\alpha - \frac{n_1}{n_0 + 1} \left( 1-\alpha - \hat{F}_{1} \left( \hat{Q}^{\mathrm{cal}}_{1-\alpha} \right)\ \right)\right]\\
    &= \alpha -\frac{n_1}{n_0 + 1} \left( 1-\alpha -   \E\left[ \hat{F}_{1} \left( \hat{Q}^{\mathrm{cal}}_{1-\alpha} \right) \right]\right),
\end{align*}
with this expectation being taken over different realizations of the inlier and outlier nonconformity scores.
\end{proof}

\subsubsection{An Alternative View Based on Mixture Distributions}

Next, we provide an additional theoretical result concerning the conservativeness of conformal outlier detection methods, to supplement the result presented in Section~\ref{sec:conservativeness} from a point of view closer to that of \citet{sesia2023adaptive}. 

Specifically, we consider a contaminated calibration set, $\D_{\mathrm{cal}}$, which may include both inliers (samples i.i.d. from $\p_0$) and outliers (samples i.i.d. from $\p_1 \neq \p_0$). The goal remains to test the null hypothesis $\mathcal{H}_0$ that a new data point $X_{n+1}$ is an inlier, independently sampled from $\p_0$.

This setup differs from \eqref{eq:setup-contaminated} in that the calibration set is drawn from a mixed distribution, where the proportion of outliers in the population is denoted by $\delta\in [0,1)$. Hence, the numbers of inliers and outliers in the calibration set are random, rather than fixed. Formally, this setup is expressed as:

\begin{align} \label{eq:setup-contaminated-random} 
\begin{split}
& X_i \overset{\text{i.i.d.}}{\sim} \p_{\mathrm{mixed}} = (1-\delta)\cdot \p_0 + \delta \cdot P_1, \quad \forall i \in \D_{\mathrm{cal}},\\ 
& \mathcal{H}_0 : X_{n+1} \overset{\text{ind.}}{\sim} \p_0.  
\end{split}\end{align}

Let $F_0$ and $F_1$ denote the CDFs of $\p_0$ and $\p_1$, respectively, and $\hat{Q}_{1-\alpha}^{\mathrm{cal}}$ represent the $\lceil (1-\alpha)(n+1)\rceil$-th smallest score in the calibration set.

\begin{corollary}[Conservativeness]
\label{cor:conservativeness-p-values}
    Under the setup defined in~\eqref{eq:setup-contaminated-random}, if $\mathcal{H}_0$ is true, then, for any $\alpha\in(0,1)$,
    \begin{align*}
    \p &\left( \hat{p}_{n+1}\leq \alpha \right) \leq \alpha - \delta \E \left[ F_0(\hat{Q}_{1-\alpha}^{\mathrm{cal}}) - F_1(\hat{Q}_{1-\alpha}^{\mathrm{cal}} )\right].
    \end{align*}
\end{corollary}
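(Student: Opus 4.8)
The plan is to argue directly in the mixture model \eqref{eq:setup-contaminated-random} rather than conditioning on the (now random) number of outliers and invoking \Cref{lem:conservativeness}. The advantage is that in \eqref{eq:setup-contaminated-random} the entire calibration sample is i.i.d.\ from a single score distribution, so $\hat{Q}_{1-\alpha}^{\mathrm{cal}}$ is simply an order statistic of i.i.d.\ draws from the mixed CDF $F_{\mathrm{mixed}} = (1-\delta)F_0 + \delta F_1$, where throughout I read $F_0, F_1$ as the CDFs of the score $s(X)$ under $\p_0, \p_1$. Writing $S_i := s(X_i)$ and assuming continuous scores (ties broken by negligible noise, as in the proof of \Cref{lem:conservativeness}), the whole argument reduces to a classical order-statistic identity.

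First I would use \Cref{app-prop:p-value-to-quantile} to rewrite the rejection event as $\{S_{n+1} > \hat{Q}_{1-\alpha}^{\mathrm{cal}}\}$, where $\hat{Q}_{1-\alpha}^{\mathrm{cal}}$ is the $\hat{i}_{\mathrm{cal}}$-th smallest element of $\{S_i\}_{i=1}^{n}\cup\{\infty\}$ with $\hat{i}_{\mathrm{cal}} = \lceil(1-\alpha)(n+1)\rceil$. Since $\hat{Q}_{1-\alpha}^{\mathrm{cal}}$ is a function of the calibration scores alone and $S_{n+1}\sim F_0$ independently under the null, conditioning on the calibration scores gives
\[
\p\left(\hat{p}_{n+1}\le\alpha\right) = \E\left[\,1-F_0(\hat{Q}_{1-\alpha}^{\mathrm{cal}})\,\right] = 1-\E\left[F_0(\hat{Q}_{1-\alpha}^{\mathrm{cal}})\right].
\]
Next, because the $n$ calibration scores are i.i.d.\ continuous with CDF $F_{\mathrm{mixed}}$, the transformed values $F_{\mathrm{mixed}}(S_i)$ are i.i.d.\ Uniform$(0,1)$, so $F_{\mathrm{mixed}}(\hat{Q}_{1-\alpha}^{\mathrm{cal}})$ is the $\hat{i}_{\mathrm{cal}}$-th uniform order statistic and therefore has mean $\hat{i}_{\mathrm{cal}}/(n+1)\ge 1-\alpha$. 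Substituting $F_{\mathrm{mixed}} = F_0 - \delta(F_0-F_1)$ yields
\[
1-\alpha \le \E\left[F_{\mathrm{mixed}}(\hat{Q}_{1-\alpha}^{\mathrm{cal}})\right] = \E\left[F_0(\hat{Q}_{1-\alpha}^{\mathrm{cal}})\right] - \delta\,\E\left[F_0(\hat{Q}_{1-\alpha}^{\mathrm{cal}}) - F_1(\hat{Q}_{1-\alpha}^{\mathrm{cal}})\right],
\]
and solving for $\E[F_0(\hat{Q}_{1-\alpha}^{\mathrm{cal}})]$ and inserting it into the previous display gives exactly the bound in \Cref{cor:conservativeness-p-values}.

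The steps are individually routine, so the points deserving care are bookkeeping rather than genuine obstacles. The first is the augmenting value $\infty$ and the boundary case $\hat{i}_{\mathrm{cal}} = n+1$: there $\hat{Q}_{1-\alpha}^{\mathrm{cal}} = \infty$, both $F_0$ and $F_{\mathrm{mixed}}$ evaluate to $1$, the type-I error is $0$, and the order-statistic identity $\E[F_{\mathrm{mixed}}(\hat{Q}_{1-\alpha}^{\mathrm{cal}})] = \hat{i}_{\mathrm{cal}}/(n+1) = 1$ still holds under the convention that the $(n+1)$-th order statistic equals $\infty$, so the argument is unaffected. The second is continuity, which is what makes the probability-integral transform exact and justifies $\p(S_{n+1} > q \mid \text{calibration}) = 1-F_0(q)$; this is secured by the same noise-perturbation convention used for \Cref{lem:conservativeness}. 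I would also note explicitly that the clean final form $\delta\,\E[F_0 - F_1]$ is precisely what makes the direct route preferable to a literal corollary-style derivation from \Cref{lem:conservativeness}, whose empirical-CDF and $n_1/(n_0+1)$ terms do not readily collapse into the population quantities appearing here.
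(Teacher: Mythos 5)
Your proof is correct. It shares the same two load-bearing ingredients as the paper's proof---the quantile reformulation via \Cref{app-prop:p-value-to-quantile} and the mixture identity $F_{\mathrm{mixed}} = F_0 - \delta(F_0 - F_1)$---but it establishes the crucial inequality by a different mechanism. The paper adds and subtracts $\p_{\mathrm{mixed}}\left(\hat{p}_{n+1}\leq\alpha\right)$ and bounds that term by $\alpha$ by invoking the generic conformal guarantee (the test point drawn from the mixture is exchangeable with the calibration points), exactly as in Theorem 1 of \citet{sesia2023adaptive}; you instead prove the equivalent statement $\E\left[F_{\mathrm{mixed}}\left(\hat{Q}_{1-\alpha}^{\mathrm{cal}}\right)\right] \geq 1-\alpha$ from scratch, via the probability integral transform and the fact that the $\hat{i}_{\mathrm{cal}}$-th uniform order statistic has mean $\hat{i}_{\mathrm{cal}}/(n+1)$. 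The trade-offs are real but minor: your route is self-contained and actually exact, since $\E\left[F_{\mathrm{mixed}}\left(\hat{Q}_{1-\alpha}^{\mathrm{cal}}\right)\right] = \hat{i}_{\mathrm{cal}}/(n+1)$, which would also yield a matching anti-conservative lower bound essentially for free; in exchange it requires continuity of the score distribution (your noise-perturbation convention), whereas the paper's argument needs no continuity because the one-sided conformal guarantee holds with ties. Your boundary-case handling ($\hat{i}_{\mathrm{cal}} = n+1$, $\hat{Q}_{1-\alpha}^{\mathrm{cal}} = \infty$) and the identification of $F_0, F_1$ as score CDFs both match the paper's usage, so there is no gap.
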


\Cref{cor:conservativeness-p-values} reformulates Theorem 1 in \citet{sesia2023adaptive} under the setup in \eqref{eq:setup-contaminated-random}. This result quantifies the behavior of conformal outlier detection methods in the presence of contaminated data and establishes guarantees on the type-I error rate. 
This result complements our analysis of the conservativeness of these methods.

\begin{proof}[Proof of~\Cref{cor:conservativeness-p-values}]
The proof adapts the argument of Theorem 1 in \citet{sesia2023adaptive} to the outlier detection setting considered here. Specifically, we follow the structure of the original proof, making adjustments to account for the presence of inliers and outliers in the calibration set.

By~\Cref{app-prop:p-value-to-quantile}, we have
\begin{align*}
    \hat{p}_{n+1} \leq \alpha \Longleftrightarrow S_{n+1} > \hat{Q}_{1-\alpha}^{\mathrm{cal}}.
\end{align*}

Under the null, we upper bound $\p \left( \hat{p}_{n+1}\leq \alpha\right)$ as follows:
    \begin{align*}
        \p_0 \left( \hat{p}_{n+1} \leq \alpha \right) &= \p_0 \left( \hat{p}_{n+1} \leq \alpha\right) + \p_{\mathrm{mixed}} \left( \hat{p}_{n+1} \leq \alpha\right) - \p_{\mathrm{mixed}} \left( \hat{p}_{n+1} \leq \alpha\right) \\
        &= \p_{\mathrm{mixed}} \left( \hat{p}_{n+1} \leq \alpha\right) - \left[ \p_{\mathrm{mixed}} \left( \hat{p}_{n+1} \leq \alpha\right) - \p_0 \left( \hat{p}_{n+1} \leq \alpha\right)\right]\\
        &\leq \alpha - \left[ \p_{\mathrm{mixed}} \left( \hat{p}_{n+1} \leq \alpha\right) - \p_0 \left( \hat{p}_{n+1} \leq \alpha\right)\right]\\
        &= \alpha - \left[ \left( (1-\delta)\p_0 \left( \hat{p}_{n+1} \leq \alpha\right) + \delta\p_1 \left( \hat{p}_{n+1} \leq \alpha\right)\right) - \p_0 \left( \hat{p}_{n+1} \leq \alpha\right) \right]\\
        &= \alpha - \delta\left[ \p_1 \left( \hat{p}_{n+1} \leq \alpha\right) - \p_0 \left( \hat{p}_{n+1} \leq \alpha\right)\right]\\
        &= \alpha - \delta\left[ \p_1 \left( S_{n+1} > \hat{Q}_{1-\alpha}^{\mathrm{cal}}\right) - \p_0 \left( S_{n+1} > \hat{Q}_{1-\alpha}^{\mathrm{cal}}\right) \right]\\
        &= \alpha - \delta\left[ \p_0 \left( S_{n+1} \leq \hat{Q}_{1-\alpha}^{\mathrm{cal}}\right) - \p_1 \left( S_{n+1} \leq \hat{Q}_{1-\alpha}^{\mathrm{cal}}\right) \right]\\
        &= \alpha - \delta\E \left[ \p_0 \left( S_{n+1} \leq \hat{Q}_{1-\alpha}^{\mathrm{cal}} \mid \D_{\mathrm{cal}}\right) - \p_1 \left( S_{n+1} \leq \hat{Q}_{1-\alpha}^{\mathrm{cal}} \mid \D_{\mathrm{cal}}\right) \right]\\
        &= \alpha - \delta\E \left[ F_0 \left( \hat{Q}_{1-\alpha}^{\mathrm{cal}} \right) - F_1 \left( \hat{Q}_{1-\alpha}^{\mathrm{cal}} \right) \right].
    \end{align*}
\end{proof}

\subsection{Validity of the \texttt{Label-Trim} Method}

\subsubsection{Proof of~\Cref{thm:labeled-trim} | Main Steps}

\begin{proof}[Proof of~\Cref{thm:labeled-trim}]
\label{prf:labeled-trim}
As in the proof of~\Cref{lem:conservativeness}, define the random score $S_i := s(X_i)$ for all $i\in \D_{\mathrm{cal}} \cup \{n+1\}$. 
By~\Cref{app-prop:p-value-to-quantile}, for any fixed $\alpha \in (0,1)$, the probability of a type-I error, $\p \left( \hat{p}_{n+1}^{\mathrm{LT}} \leq \alpha \right)$, can be expressed as
\begin{align*}
    \p \left( \hat{p}_{n+1}^{\mathrm{LT}} \leq \alpha \right) = \p \left( S_{n+1} > \hat{Q}_{1-\alpha}^{\mathrm{LT}}\right).
\end{align*}

Consider the augmented set $\{S_i\}_{i\in \D_{\mathrm{cal}}^{\mathrm{LT}}}\cup \{ S_{n+1}\}$, which includes the test score $S_{n+1}$. Define $\hat{Q}_{1-\alpha}^{\mathrm{LT,n+1}}$ as follows:
\begin{align*}
\hat{Q}_{1-\alpha}^{\mathrm{LT,n+1}} &:= \hat{i}_{\mathrm{LT}} \text{-th smallest element in } \{S_i\}_{i\in \D_{\mathrm{cal}}^{\mathrm{LT}}}\cup \{S_{n+1}\}.
\end{align*}
By~\Cref{app-prop:quantiles}, it holds that $\hat{Q}_{1-\alpha}^{\mathrm{LT}} \geq \hat{Q}_{1-\alpha}^{\mathrm{LT, n+1}}$ almost surely. 

Now, consider an imaginary ``mirror" version of this method that applies the label-trim algorithm with two key differences:
\begin{itemize}
\item it uses a larger labeling budget, $\tilde{m}=m+1$;
\item it treats $\{S_i\}_{i=1}^{n+1}$ as the calibration set instead of $\{S_i\}_{i=1}^{n}$---that is, it includes the test point in the annotation process, preserving the exchangeability with the calibration inliers.
\end{itemize}
Let $\tilde{\D}_{\mathrm{cal}}^{\mathrm{LT}}\cup \{ n+1\}$ denote the indices of the trimmed augmented calibration set produced by the mirror procedure, let $\tilde{\D}_{\mathrm{labeled}}$ denote the indices of the corresponding labeled data points, and define $\tilde{\D}_{\mathrm{labeled}}^{\mathrm{inlier}}, \tilde{\D}_{\mathrm{labeled}}^{\mathrm{outlier}}$ as the corresponding subsets of inliers and outliers, respectively.
Under this mirror procedure, the empirical quantile $\hat{Q}_{1-\alpha}^{\mathrm{LT,n+1}}$ corresponds to
\begin{align*}
\tilde{Q}_{1-\alpha}^{\mathrm{LT,n+1}} &:= \tilde{i}_{\mathrm{LT}} \text{-th smallest element in } \{S_i\}_{i \in \tilde{\D}_{\mathrm{cal}}^{\mathrm{LT}}} \cup \{S_{n+1}\},
\end{align*}
where $\tilde{i}_{\mathrm{LT}} := \lceil (1- \alpha ) (\tilde{n}^{\mathrm{LT}}+1) \rceil$ and $\tilde{n}_{\mathrm{LT}} := |\tilde{\D}_{\mathrm{cal}}^{\mathrm{LT}}|$.

By construction of this mirror procedure, $\tilde{i}_{\mathrm{LT}} \leq \hat{i}_{\mathrm{LT}}$ almost surely, because $\tilde{n}^{\mathrm{LT}} \leq n^{\mathrm{LT}}$ almost surely and thus
\begin{align*}
    \tilde{i}_{\mathrm{LT}} 
    & = \left \lceil ( 1-\alpha ) (\tilde{n}^{\mathrm{LT}}+1) \right \rceil
     \leq \lceil (1- \alpha )  (n^{\mathrm{LT}}+1) \rceil
     = \hat{i}_{\mathrm{LT}}.
\end{align*}

Using the fact that $\tilde{i}_{\mathrm{LT}}  \leq \hat{i}_{\mathrm{LT}}$ almost surely, we prove in Appendix~\ref{app:proof-label-trim-a1} that, almost surely,
\begin{align}\label{app-eq:tilde-hat-relation}
\tilde{Q}_{1-\alpha}^{\mathrm{LT,n+1}} \leq \hat{Q}_{1-\alpha}^{\mathrm{LT,n+1}}.
\end{align}
Since we already knew that $\hat{Q}_{1-\alpha}^{\mathrm{LT}} \geq \hat{Q}_{1-\alpha}^{\mathrm{LT,n+1}}$, this implies:
\begin{align} \label{app-eq:tilde-hat-relation-b}
    \hat{Q}_{1-\alpha}^{\mathrm{LT}} \geq \hat{Q}_{1-\alpha}^{\mathrm{LT,n+1}} \geq \tilde{Q}_{1-\alpha}^{\mathrm{LT,n+1}}.
\end{align}
Therefore, the type-I error rate of the Label-Trim approach can be bounded from above by the type-I error rate of the mirror procedure, which can be studied with an approach similar to that of the proof of~\Cref{lem:conservativeness}.

Let $\tilde{\D}_{\mathrm{outlier}}^{\mathrm{LT}}$ denote the outlier indices remaining in $\tilde{\D}_{\mathrm{cal}}^{\mathrm{LT}}$, with $\tilde{n}_{1}^{\mathrm{LT}}=|\tilde{\D}_{\mathrm{outlier}}^{\mathrm{LT}}|$.
 As in the proof of~\Cref{lem:conservativeness}, define $E_{\mathrm{in}}$ and $E_{\mathrm{out}}$ as two unordered realizations of the inlier and outlier scores in $\{S_i\}_{i=1}^{n+1}$, respectively. Appendix~\ref{app:proof-label-trim-a0} proves that
    \begin{align} \label{app-eq:type-I-error-cond}
    \p \left( S_{n+1} > \hat{Q}_{1-\alpha}^{\mathrm{LT}} \mid E_{\mathrm{in}}, E_{\mathrm{out}}, \tilde{\D}_{\mathrm{labeled}}, \tilde{\D}_{\mathrm{labeled}}^{\mathrm{inlier}}\right)
        \leq \alpha + \frac{1}{n_0+1} - \frac{\hat{n}_1^{\mathrm{LT}}}{n_0+1} \left( (1-\alpha) - \hat{F}_{1}^{\mathrm{LT}}(\hat{Q}_{1-\alpha}^{\mathrm{LT}}) \right),
    \end{align}
from which it follows immediately, by marginalizing over $E_{\mathrm{in}}, E_{\mathrm{out}}, \tilde{\D}_{\mathrm{labeled}},$ and $\tilde{\D}_{\mathrm{labeled}}^{\mathrm{inlier}}$, that
 \begin{align*}
    \p \left( S_{n+1} > \hat{Q}_{1-\alpha}^{\mathrm{LT}} \right)
   & \leq \E\left[ \alpha + \frac{1}{n_0+1} - \frac{\hat{n}^{\mathrm{LT}}_{1}}{n_0+1} \left( (1-\alpha) - \hat{F}_{1}^{\mathrm{LT}} \left( \hat{Q}_{1-\alpha}^{\mathrm{LT}} \right) \right)\right].
    \end{align*}
\end{proof}

\subsubsection{Proof of~\Cref{thm:labeled-trim} | Proof of Equation~\eqref{app-eq:tilde-hat-relation}} \label{app:proof-label-trim-a1}

\begin{proof}
We prove~\eqref{app-eq:tilde-hat-relation} by analyzing two distinct cases, depending on whether $S_{n+1}$ is among the $m + 1$ largest scores or not.

Recall that $m \leq \alpha (n+1)$, by assumption, and $n^{\mathrm{LT}} \leq n$.
It is easy to see that this implies that, almost surely,
    \begin{align} \label{eq:app-C1}
        \hat{i}_{\mathrm{LT}} = \lceil (1-\alpha)(n^{\mathrm{LT}}+1)\rceil \leq n+1 - (m+1).
    \end{align}

The mirror procedure labels and potentially removes the largest $m+1$ scores out of the $n+1$ total scores. Thus, $\hat{Q}_{1-\alpha}^{\mathrm{LT,n+1}}$ is smaller than all the outliers removed during trimming, i.e., $\hat{Q}^{\mathrm{LT,n+1}}_{1-\alpha} < S_j$ for all $S_j \in \tilde{\D}_{\mathrm{labeled}}^{\mathrm{outlier}}$.

\begin{itemize}
\item Suppose $S_{n+1}$ is among the $m + 1$ largest scores in $\{S_i\}_{i=1}^{n+1}$. This case is illustrated below:
\begin{minipage}{.4\textwidth}
In this scenario, the mirror trimming approach additionally labels the test score $S_{n+1}$, which, under the null hypothesis, is an inlier and is thus not removed. As a result, both trimming procedures yield the same set; i.e.,
$\tilde{\D}_{\mathrm{cal}}^{\mathrm{LT}} = \D_{\mathrm{cal}}^{\mathrm{LT}}$. Consequently, $\hat{Q}_{1-\alpha}^{\mathrm{LT,n+1}} = \tilde{Q}_{1-\alpha}^{\mathrm{LT,n+1}}$.
\end{minipage}
\begin{minipage}{.55\textwidth}
\centering
\includegraphics[width=\linewidth]{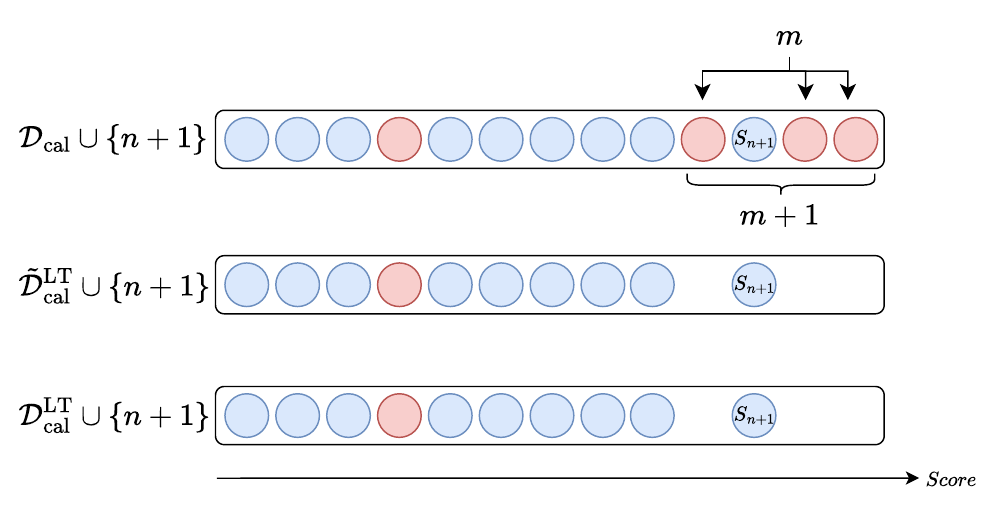}
\end{minipage}

\item Suppose $S_{n+1}$ is not among the $m + 1$ largest scores in $\{S_i\}_{i=1}^{n+1}$. Within this case, there are two sub-cases to consider.
\begin{itemize}
    \item  The $(m+1)$-th largest score in an inlier:\\
    \begin{minipage}{.4\textwidth}
In this case, the mirror trimming approach additionally labels an inlier, which is not removed. As a result, both trimming procedures yield the same set; i.e.,
$\tilde{\D}_{\mathrm{cal}}^{\mathrm{LT}} = \D_{\mathrm{cal}}^{\mathrm{LT}}$. Consequently, $\hat{Q}_{1-\alpha}^{\mathrm{LT,n+1}} = \tilde{Q}_{1-\alpha}^{\mathrm{LT,n+1}}$.
\end{minipage}
\begin{minipage}{.55\textwidth}
\centering
\includegraphics[width=\linewidth]{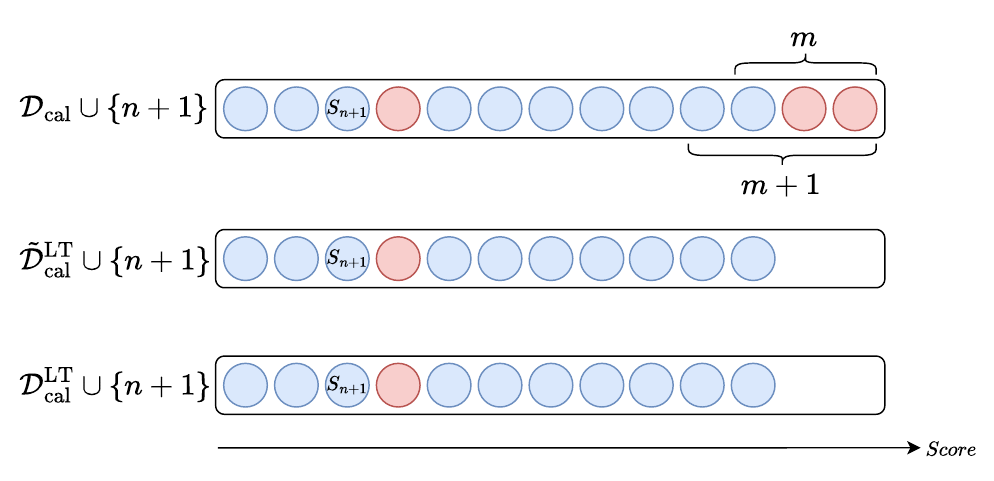}
\end{minipage}
    \item The $(m+1)$-th largest score in an outlier:\\
    \begin{minipage}{.4\textwidth}
This is the interesting case where $\tilde{n}^{\mathrm{LT}} = n^{\mathrm{LT}}-1$ and the set $\tilde{\D}_{\mathrm{cal}}^{\mathrm{LT}}$ contains one fewer outlier score than $\D_{\mathrm{cal}}^{\mathrm{LT}}$. 
It follows from~\eqref{eq:app-C1} that the threshold $\hat{Q}_{1-\alpha}^{\mathrm{LT,n+1}}$ is smaller than the $m+1$ largest scores. Then, in this region,  $\D_{\mathrm{cal}}^{\mathrm{LT}}$ and $\tilde{\D}_{\mathrm{cal}}^{\mathrm{LT}}$ contain the same scores. Therefore, the $\hat{i}_{\mathrm{LT}}$-th smallest score corresponds to the same score in both $\D_{\mathrm{cal}}^{\mathrm{LT}}\cup\{n+1\}$ and $\tilde{\D}_{\mathrm{cal}}^{\mathrm{LT}} \cup \{n+1\}$.
Since $\tilde{i}_{\mathrm{LT}} \leq \hat{i}_{\mathrm{LT}}$, it follows that $\tilde{Q}_{1-\alpha}^{\mathrm{LT,n+1}} \leq \hat{Q}_{1-\alpha}^{\mathrm{LT,n+1}}$.
\end{minipage}
\begin{minipage}{.55\textwidth}
\centering
\includegraphics[width=\linewidth]{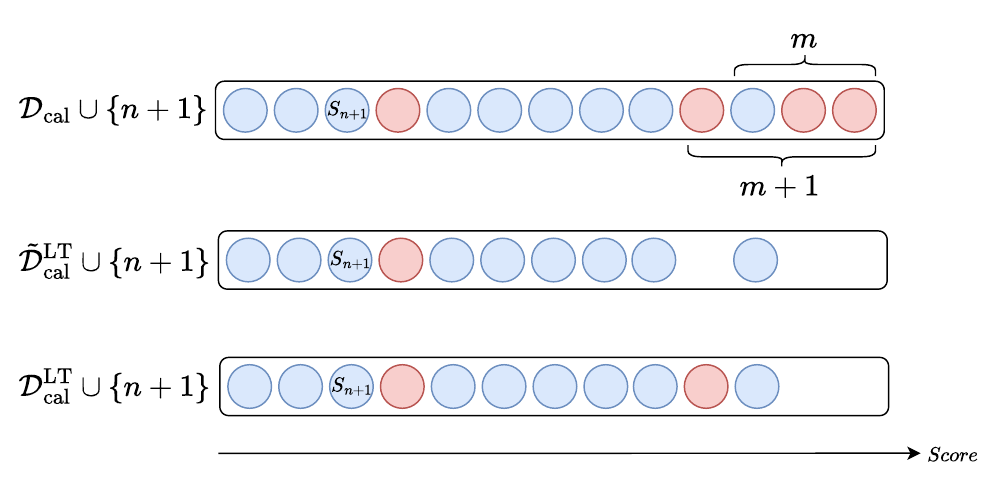}
\end{minipage}
\end{itemize}
\end{itemize}

\end{proof}

\subsubsection{Proof of~\Cref{thm:labeled-trim} | Proof of Equation~\eqref{app-eq:type-I-error-cond}} \label{app:proof-label-trim-a0}

\begin{proof}
    \begin{align*}
    & \p \left( S_{n+1} > \hat{Q}_{1-\alpha}^{\mathrm{LT}} \mid E_{\mathrm{in}}, E_{\mathrm{out}}, \tilde{\D}_{\mathrm{labeled}}, \tilde{\D}_{\mathrm{labeled}}^{\mathrm{inlier}}\right) \\
        & \qquad \leq \p \left( S_{n+1} > \tilde{Q}_{1-\alpha}^{\mathrm{LT,n+1}} \mid E_{\mathrm{in}}, E_{\mathrm{out}}, \tilde{\D}_{\mathrm{labeled}}, \tilde{\D}_{\mathrm{labeled}}^{\mathrm{inlier}}\right) \\
        & \qquad = \sum\limits_{i\in \D_{\mathrm{inlier}} \cup \{ n+1\} } \E \left[ \mathbb{I} \left[ s_i > \tilde{Q}_{1-\alpha}^{\mathrm{LT,n+1}} \right] \cdot \mathbb{I} \left[ S_{n+1} = s_i \right]\mid E_{\mathrm{in}}, E_{\mathrm{out}}, \tilde{\D}_{\mathrm{labeled}}, \tilde{\D}_{\mathrm{labeled}}^{\mathrm{inlier}}\right] \\
        & \qquad \overset{(i)}{=} \sum\limits_{i\in \D_{\mathrm{inlier}} \cup \{ n+1\} }         \mathbb{I} \left[ s_i > \tilde{Q}_{1-\alpha}^{\mathrm{LT,n+1}} \right] \cdot \p \left( S_{n+1} = s_i \mid E_{\mathrm{in}}, E_{\mathrm{out}}, \tilde{\D}_{\mathrm{labeled}}, \tilde{\D}_{\mathrm{labeled}}^{\mathrm{inlier}}\right) \\
        & \qquad \overset{(ii)}{=} \sum\limits_{i\in \D_{\mathrm{inlier}} \cup \{ n+1\} }  \mathbb{I} \left[ s_i > \tilde{Q}_{1-\alpha}^{\mathrm{LT,n+1}}\right] \cdot \p \left( S_{n+1} = s_i \mid E_{\mathrm{in}}, E_{\mathrm{out}} \right) 
        \\
        & \qquad = \frac{1}{n_0+1}  \sum\limits_{i\in  \D_{\mathrm{inlier}} \cup \{ n+1\}} \mathbb{I} \left[ s_i > \tilde{Q}_{1-\alpha}^{\mathrm{LT,n+1}}\right] \\
        & \qquad \overset{(iii)}{\leq}  \frac{1}{n_0+1} \left( \alpha(\tilde{n}^{\mathrm{LT}}+1) - \sum\limits_{i\in  \tilde{\D}_{\mathrm{outlier}}^{\mathrm{LT}}} \mathbb{I} \left[ s_i > \tilde{Q}_{1-\alpha}^{\mathrm{LT,n+1}}\right] \right)\\
        & \qquad \overset{(iv)}{\leq}  \frac{1}{n_0+1} \left( \alpha(n^{\mathrm{LT}}+1) - \sum\limits_{i\in \D_{\mathrm{outlier}}^{\mathrm{LT}}} \mathbb{I} \left[ s_i > \tilde{Q}_{1-\alpha}^{\mathrm{LT,n+1}}\right] + 1\right)\\
        & \qquad \overset{(v)}{\leq}  \frac{1}{n_0+1} \left( \alpha(n^{\mathrm{LT}}+1) - \sum\limits_{i\in \D_{\mathrm{outlier}}^{\mathrm{LT}}} \mathbb{I} \left[ s_i > \hat{Q}_{1-\alpha}^{\mathrm{LT}}\right] + 1\right)\\
        & \qquad = \frac{1}{n_0+1} \left( \alpha(n^{\mathrm{LT}}+1) - \hat{n}_1^{\mathrm{LT}} + \sum\limits_{i\in \D_{\mathrm{outlier}}^{\mathrm{LT}}} \mathbb{I} \left[ s_i \leq \hat{Q}_{1-\alpha}^{\mathrm{LT}}\right] + 1\right)\\
        & \qquad = \alpha + \frac{1}{n_0+1} - \frac{\hat{n}_1^{\mathrm{LT}}}{n_0+1} \left( (1-\alpha) - \hat{F}_{1}^{\mathrm{LT}}(\hat{Q}_{1-\alpha}^{\mathrm{LT}}) \right).
    \end{align*}

The labeled steps above can be explained as follows.
\begin{itemize}
\item     (i) $\mathbb{I}\left[ s_i > \tilde{Q}_{1-\alpha}^{\mathrm{LT,n+1}}\right]$ is measurable with respect to the $\sigma$-algebra generated by $E_{\mathrm{in}}, E_{\mathrm{out}}, \tilde{\D}_{\mathrm{labeled}},$ and $\tilde{\D}_{\mathrm{labeled}}^{\mathrm{inlier}}$ since $\tilde{Q}_{1-\alpha}^{\mathrm{LT,n+1}}$ is the $\tilde{i}_{\mathrm{LT}}$-th smallest element of $\{s_1,\dots,s_{n+1}\} \setminus \left(\tilde{\D}_{\mathrm{labeled}} \setminus \tilde{\D}_{\mathrm{labeled}}^{\mathrm{inlier}}\right)$.
\item    (ii) The mirror procedure is applied on $\{S_i\}_{i=1}^{n+1}$, preserving the exchangeability of the test score $S_{n+1}$ with the calibration inliers. Hence, the resulting labeled sets $\tilde{\D}_{\mathrm{labeled}}$ and $\tilde{\D}_{\mathrm{labeled}}^{\mathrm{inlier}}$ contain no additional information about $S_{n+1}$ beyond $E_{\mathrm{in}}, E_{\mathrm{out}}$.
\item    (iii) By definition, $\tilde{Q}_{1-\alpha}^{\mathrm{LT,n+1}}$ is the $\tilde{i}_{\mathrm{LT}}$-th smallest element of $\{S_i\}_{i\in \tilde{\D}_{\mathrm{cal}}^{\mathrm{LT}}} \cup \{S_{n+1}\}$, where $\tilde{i}_{\mathrm{LT}}=\lceil(1-\alpha)(\tilde{n}^{\mathrm{LT}}+1)\rceil$. Consequently, $\lfloor\alpha(\tilde{n}^{\mathrm{LT}}+1)\rfloor$ scores in $\{S_i\}_{i\in \tilde{\D}_{\mathrm{cal}}^{\mathrm{LT}}} \cup \{S_{n+1}\}$ are larger than $\tilde{Q}_{1-\alpha}^{\mathrm{LT,n+1}}$.
\item (iv) The set $\tilde{\D}_{\mathrm{outlier}}^{\mathrm{LT}}$ is either equal to $\D_{\mathrm{outlier}}^{\mathrm{LT}}$ or contains one fewer outlier, and $\tilde{n}^{\mathrm{LT}} \in \{n^{\mathrm{LT}}, n^{\mathrm{LT}}-1\}$ almost surely.
\item  (v) Recall from~\eqref{app-eq:tilde-hat-relation-b} that $\hat{Q}_{1-\alpha}^{\mathrm{LT}} \geq \tilde{Q}_{1-\alpha}^{\mathrm{LT,n+1}}$ almost surely.
\end{itemize}

\end{proof}

\clearpage
\section{Supplementary Experiments and Implementation Details}
\subsection{Datasets}\label{app-sec:data}
\Cref{app-tab:tabular-info} summarizes details of the three tabular benchmark datasets. For all tabular datasets, we perform random subsampling to construct contaminated train, calibration, and test sets. Specifically, for the shuttle and KDDCup99 datasets, the train set contains 5,000 samples, and the calibration set contains 2,500 samples, both with a contamination rate of $r=3\%$, unless stated otherwise. The inlier and outlier test sets consist of 950 and 50 samples, respectively. For the credit-card dataset, the train set contains 2,000 samples, while the calibration and test sets follow the same setup as the Shuttle and KDDCup99 datasets.
\begin{table}[ht]
\centering
\caption{Summary of tabular datasets}
\label{app-tab:tabular-info}
\begin{tabular}{lccc}
\toprule
Dataset              & Shuttle \citep{shuttle} & Credit-card \citep{creditcard} & KDDCup99 \citep{KDDCup99} \\ 
\midrule
Total Samples        & 58,000           & 284,807              & 494,020           \\ 
Number of Outliers   & 12,414           & 492                  & 396,743           \\ 
Number of Features   & 9                & 29                   & 41                \\ 
\bottomrule
\end{tabular}
\end{table}

For visual datasets, we use the OpenOOD benchmark \citep{zhang2023openood, yang2022openood}. 
Specifically, for each dataset and contamination rate, we perform a one-time training of an outlier detection model, which operates on feature representations extracted from a pre-trained backbone. 
We include results with the following model configurations:
\begin{itemize}
    \item \textbf{ReAct with ResNet-18}: ReAct~\citep{react} operates on feature representations extracted from a pre-trained ResNet-18 model~\citep{zhang2023openood,he2016deep}. The model applies a percentile-based threshold (set to 90\%) to truncate activations, where the threshold is computed on the contaminated train set. These truncated activations then pass through the fully connected layer of the pre-trained ResNet-18. The outlier score is computed using an energy-based log-sum-exp function applied to these truncated activations. 
    \item \textbf{ReAct with VGG-19}: Same as above, but with a pre-trained VGG-19 backbone~\citep{chenyaofo_pytorch_cifar_models} instead of ResNet-18.
    \item \textbf{SCALE with ResNet-18}: SCALE~\citep{scale} operates on feature representations extracted from a pre-trained ResNet-18 model~\citep{zhang2023openood,he2016deep}. The model rescales the activations using a sample-specific factor, defined as the sum of all activations divided by the sum of activations below a certain percentile (set to 65\%). Similar to ReAct, the outlier score is computed using an energy-based log-sum-exp function applied to the rescaled activations.
\end{itemize}
After training, we save the outlier scores for the remaining outlier samples and the CIFAR-10 test set. We randomly subsample this pool of scores to construct the calibration and test sets, ensuring that all sets are disjoint.

The sizes of the train and calibration sets are 2,000 and 3,000, respectively, with the same contamination rate.
The inlier and outlier test sets consist of 950 and 50 samples, respectively.
\FloatBarrier
\subsection{Additional Experiments}\label{app-sec:exp}

\subsubsection{Tabular Datasets}
\label{app-sec:real-data-exp}
In this section, we provide additional real-data experiments conducted on the {\em credit card} \citep{creditcard} and {\em KDDCup99} datasets,
complementing the analysis provided for the shuttle dataset in the main manuscript. \Cref{app-fig:creditcard-outlier-prop,app-fig:KDDCup99-outlier-prop,app-fig:creditcard-labeled-exp,app-fig:KDDCup99-labeled-exp,app-fig:KDDCup99-labeled-exp-55-105,app-fig:creditcard-levels,app-fig:KDDCup99-levels} corresponds to and extends the figures presented in the main text. 

We further report supplementary results across all three tabular datasets considered in this work, including: additional outlier detection models (\Cref{app-fig:outlier-prop-lof,app-fig:outlier-prop-oc-svm}); higher contamination levels (\Cref{app-fig:contamination}); strategic outlier injection (\Cref{app-fig:outlier-injection-hist,app-fig:outlier-injection,app-fig:creditcard-outlier-injection,app-fig:KDDCup99-outlier-injection}); and test-time distributional shifts in the outlier population (\Cref{app-fig:drift}).

\paragraph{Results as a function of the contamination rate} In~\Cref{fig:shuttle-outlier-prop} of the main manuscript, we analyze the performance of conformal outlier methods as a function of the contamination rate $r$. Here, we repeat the same experiment on the credit-card and KDDCup99 datasets (Figures \ref{app-fig:creditcard-outlier-prop} and \ref{app-fig:KDDCup99-outlier-prop}). The performance trends are similar to the one presented in the main manuscript: both the \texttt{Standard} and \texttt{Small-Clean} methods achieve valid type-I error rate but exhibit conservative behavior. In contrast, the \texttt{Naive-Trim} method fails to control the type-I error rate. The \texttt{Label-Trim} method attains improved power while practically controlling the type-I error at level $\alpha$.

\begin{figure*}[htb]
    \includegraphics[height=3.5cm, valign=t]{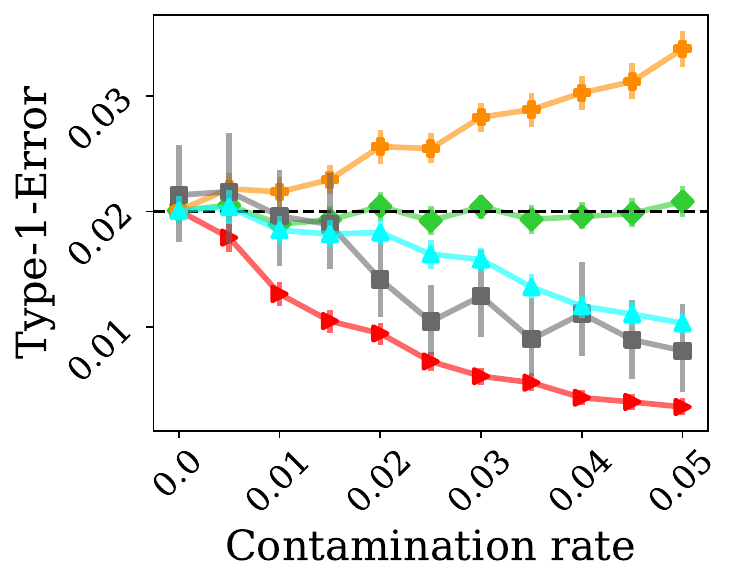}
    \includegraphics[height=3.5cm, valign=t]{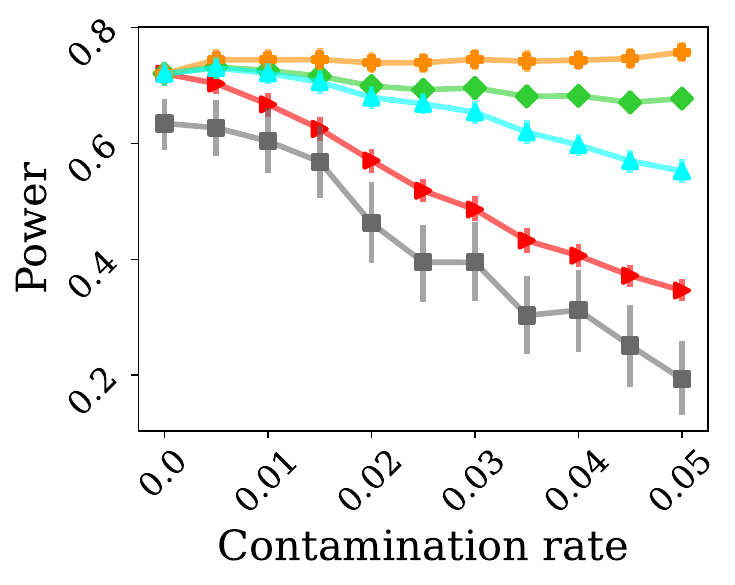}
    \includegraphics[height=3.5cm, valign=t]{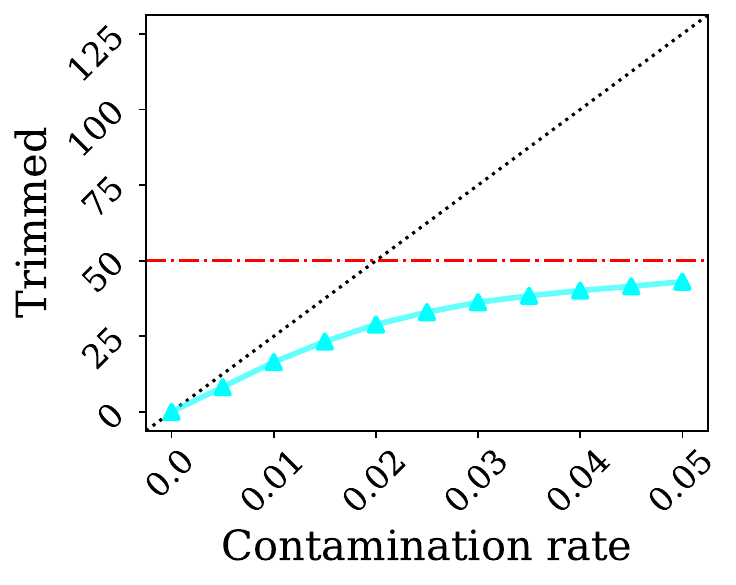}
    \includegraphics[width=3.5cm, valign=t]{figures/exp/legend.pdf}
    \caption{Comparison of conformal outlier detection methods on real dataset ``credit-card'' as a function of the contamination rate  $r$. Other details are as in \Cref{fig:shuttle-outlier-prop}. 
}
    \label{app-fig:creditcard-outlier-prop}
\end{figure*}

\begin{figure*}[htb]
    \includegraphics[height=3.5cm, valign=t]{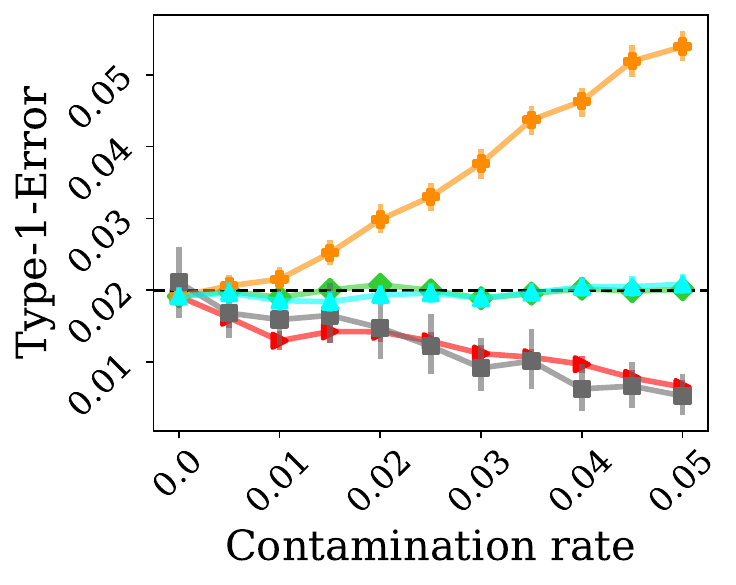}
    \includegraphics[height=3.5cm, valign=t]{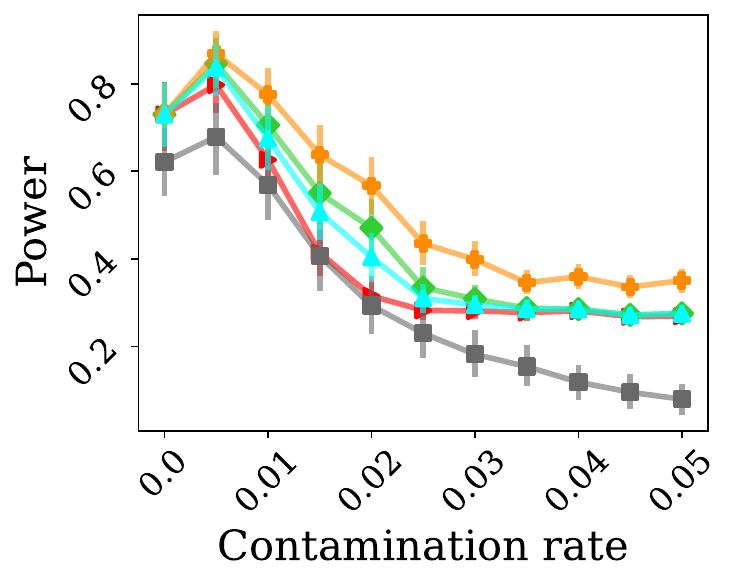}
    \includegraphics[height=3.5cm, valign=t]{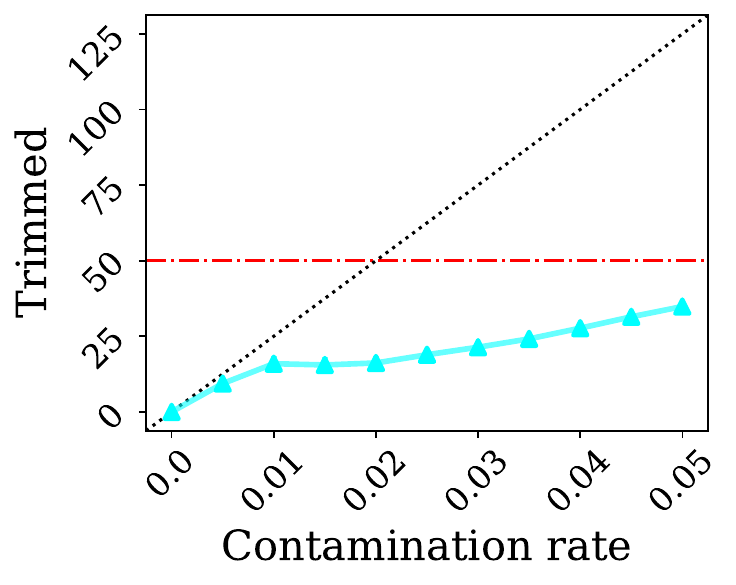}
    \includegraphics[width=3.5cm, valign=t]{figures/exp/legend.pdf}
    \caption{Comparison of conformal outlier detection methods on real dataset ``KDDCup99'' as a function of the contamination rate  $r$. Other details are as in \Cref{fig:shuttle-outlier-prop}. 
}
    \label{app-fig:KDDCup99-outlier-prop}
\end{figure*}

\paragraph{Results as a function of the labeling budget} In~\Cref{fig:shuttle-labeled-exp} of the main manuscript, we evaluate the performance of the \texttt{Label-Trim}, \texttt{Small-Clean} and, \texttt{Oracle} methods as a function of the labeling budget $m$. \Cref{app-fig:creditcard-labeled-exp,app-fig:KDDCup99-labeled-exp} extend this analysis to the credit-card and KDDCup99 datasets, respectively. Consistent with the trends observed in~\Cref{fig:shuttle-labeled-exp}, increasing the labeling budget improves the performance of the \texttt{Label-Trim} method in terms of both type-I error and power. Notably, although the condition in \Cref{thm:labeled-trim} is no longer satisfied for $m > 50$, the \texttt{Label-Trim} method still maintains valid type-I error control at the desired level $\alpha$. 

For the KDDCup99 dataset, however, we observe that the \texttt{Small-Clean} method shows higher power than the \texttt{Oracle} across several labeling budgets ($m \geq 55$). This is due to the high variability in the dataset and the small sample size used by this method, resulting in significant variance in type-I error. To illustrate this, \Cref{app-fig:KDDCup99-labeled-exp-55-105} presents a box plot showing the variability and instability of the \texttt{Small-Clean} method in this regime. While this leads to a higher average power, this variability is undesirable in practice.

\begin{figure*}[!htb]
    \includegraphics[height=3.5cm, valign=t]{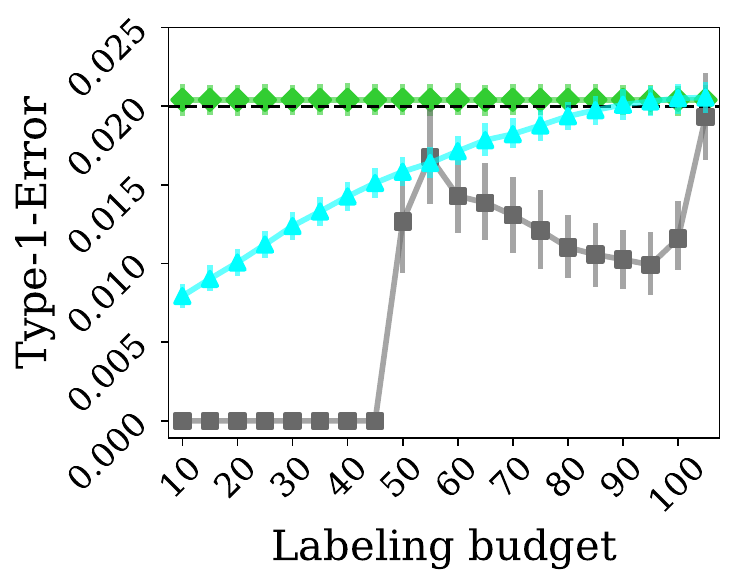}
    \includegraphics[height=3.5cm, valign=t]{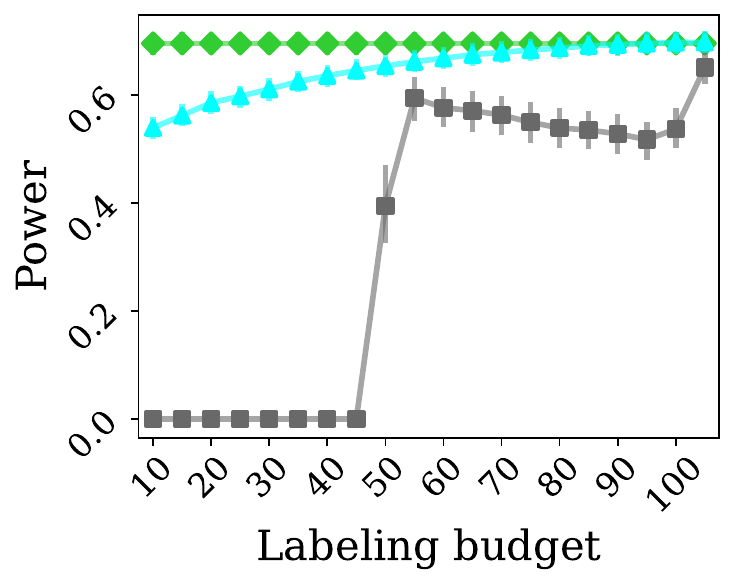}
    \includegraphics[height=3.5cm, valign=t]{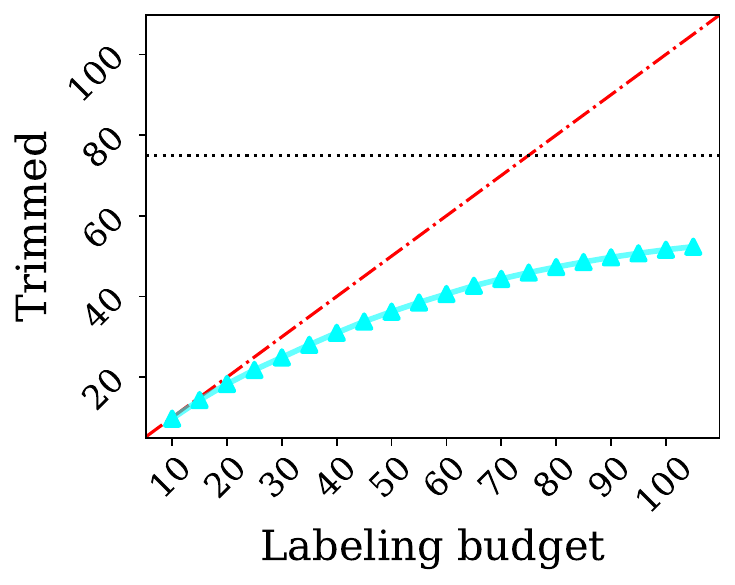}
    \includegraphics[width=3.5cm, valign=t]{figures/exp/legend_wo_n.pdf}
    \caption{Performance on real dataset ``credit-card'' as a function of the labeling budget $m$. The contamination rate is $r=0.03$. Other details are as in \Cref{fig:shuttle-outlier-prop}.
}
    \label{app-fig:creditcard-labeled-exp}
\end{figure*}

\begin{figure*}[!htb]
    \includegraphics[height=3.5cm, valign=t]{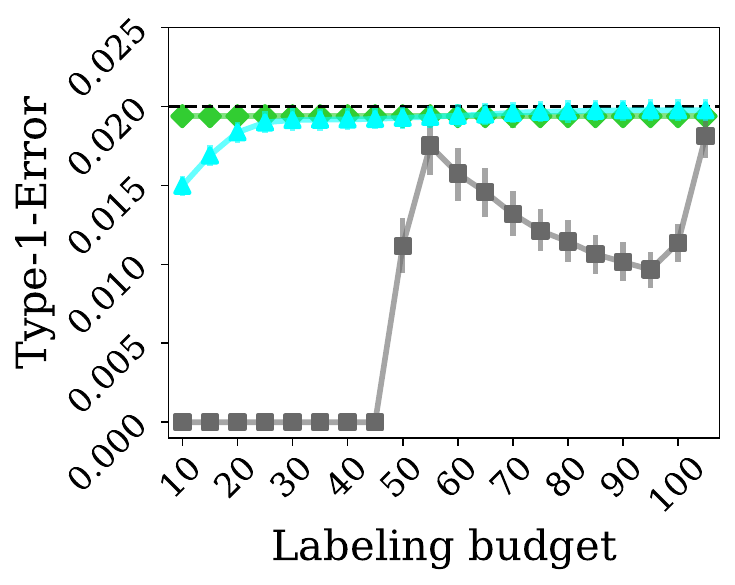}
    \includegraphics[height=3.5cm, valign=t]{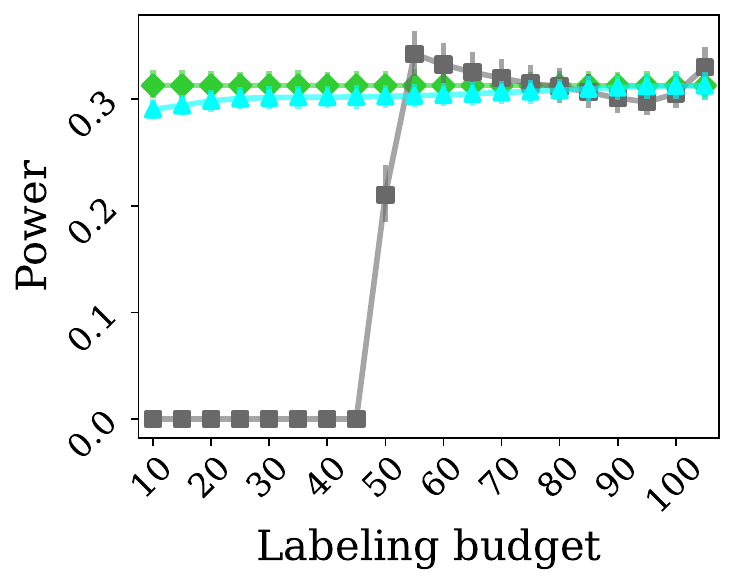}
    \includegraphics[height=3.5cm, valign=t]{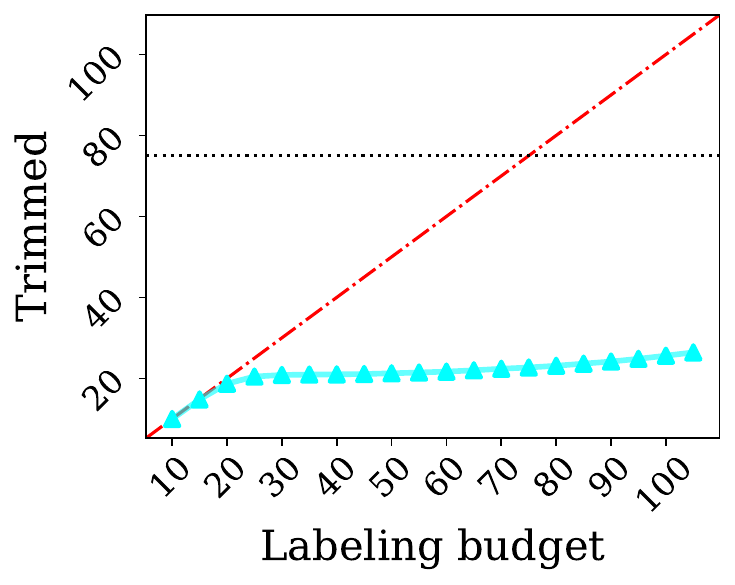}
    \includegraphics[width=3.5cm, valign=t]{figures/exp/legend_wo_n.pdf}
    \caption{Performance on real dataset ``KDDCup99'' as a function of the labeling budget $m$. The contamination rate is $r=0.03$. Results are averages across 400 random splits of the data. Other details are as in \Cref{fig:shuttle-outlier-prop}.
}
    \label{app-fig:KDDCup99-labeled-exp}
\end{figure*}

\begin{figure*}[!htb]
    \centering 
    \includegraphics[width=0.33\textwidth, valign=t]{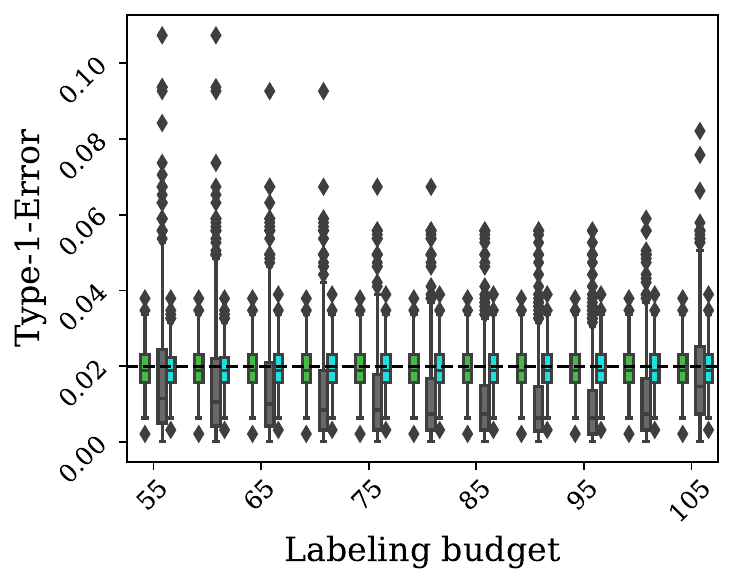}
    \includegraphics[width=0.33\textwidth, valign=t]{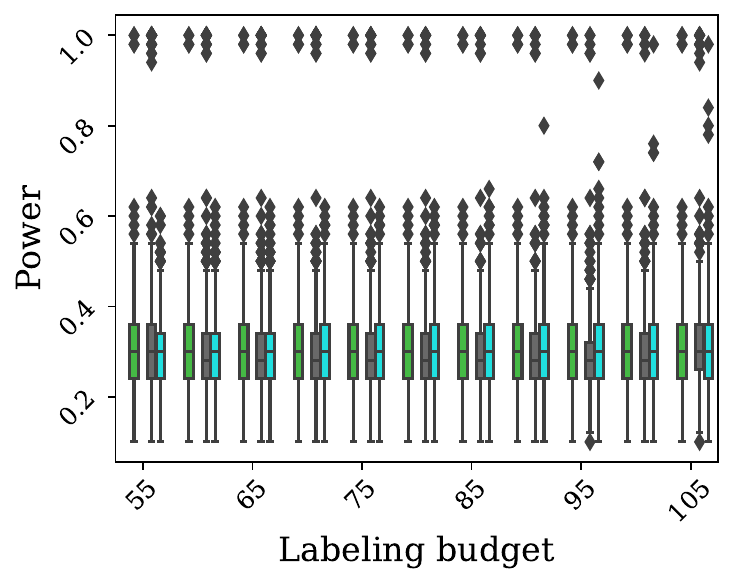}
    \includegraphics[width=3.5cm, valign=t]{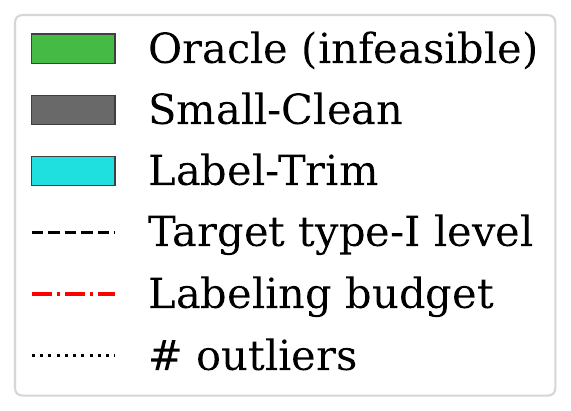}
    \caption{Performance on real dataset ``KDDCup99'' as a function of the labeling budget $m$. The contamination rate is $r=0.03$. Results are averages across 400 random splits of the data. Other details are as in \Cref{fig:shuttle-outlier-prop}.
}
    \label{app-fig:KDDCup99-labeled-exp-55-105}
\end{figure*}

\paragraph{Results as a function of the target type-I error level} In~\Cref{fig:shuttle-levels} we examine the performance of conformal outlier detection methods as a function of the target type-I error level $\alpha$. Here, we replicate the experiments on the credit-card and KDDCup99 datasets (\Cref{app-fig:creditcard-levels,app-fig:KDDCup99-levels}). In line with the trends observed in~\Cref{fig:shuttle-levels}, the \texttt{Label-Trim} method performs well at low type-I error rates. Notably, for $\alpha=0.01$, the \texttt{Label-Trim} method outperforms the baselines, while practically controlling the type-I error at level $\alpha$, even though the condition of~\Cref{thm:labeled-trim} is not satisfied in this case. This highlights the robustness of our approach.

\begin{figure*}[!htb]
    \centering 
    \includegraphics[height=3.5cm, valign=t]{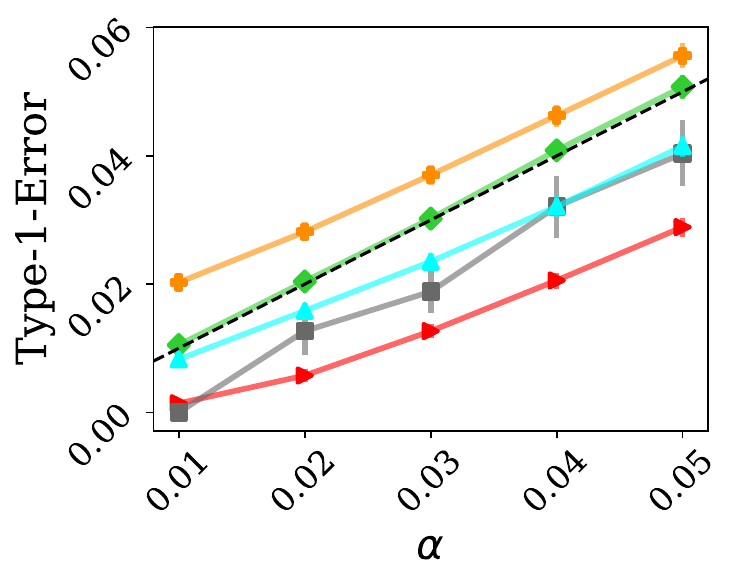}
    \includegraphics[height=3.5cm, valign=t]{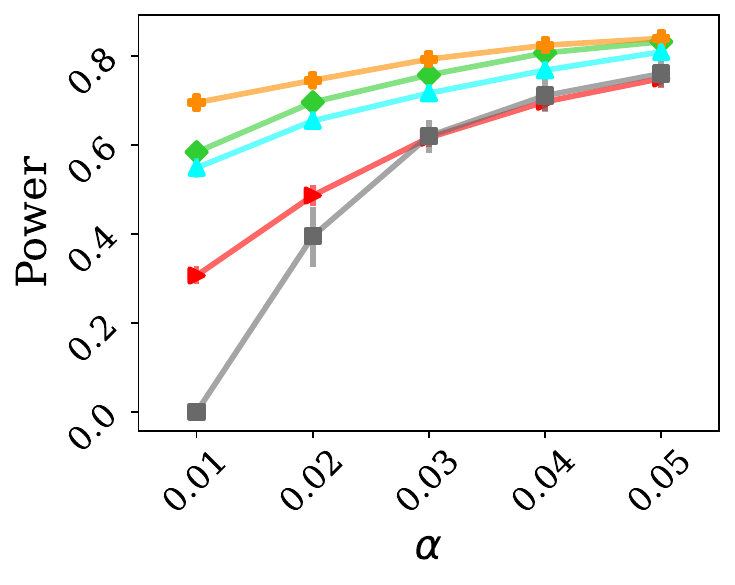}
    \includegraphics[width=3.5cm, valign=t]{figures/exp/legend_wo_trm.pdf}
    \caption{Comparison of conformal outlier detection methods on real dataset ``credit-card'' as a function of the target type-I error rate $\alpha$. The contamination rate $r$ is fixed to 3\%; other details are as in \Cref{fig:shuttle-outlier-prop}.
}
    \label{app-fig:creditcard-levels}
\end{figure*}
\clearpage
\begin{figure*}[!htb]
    \centering 
    \includegraphics[height=3.5cm, valign=t]{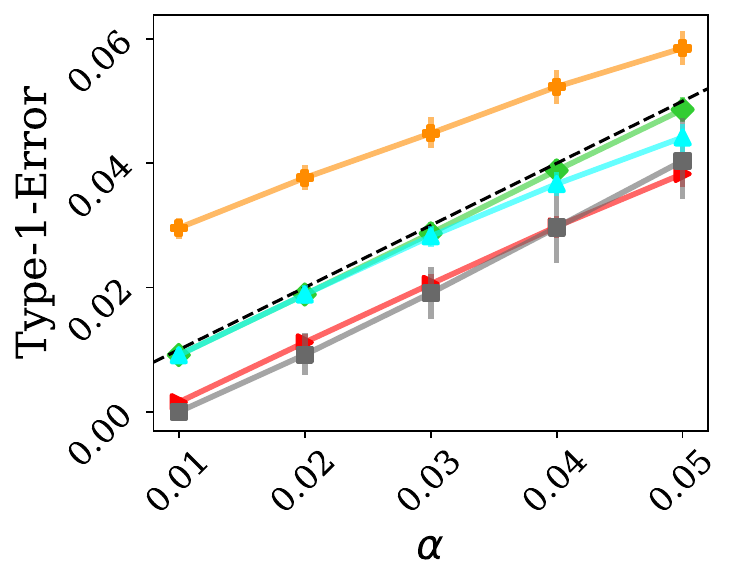}
    \includegraphics[height=3.5cm, valign=t]{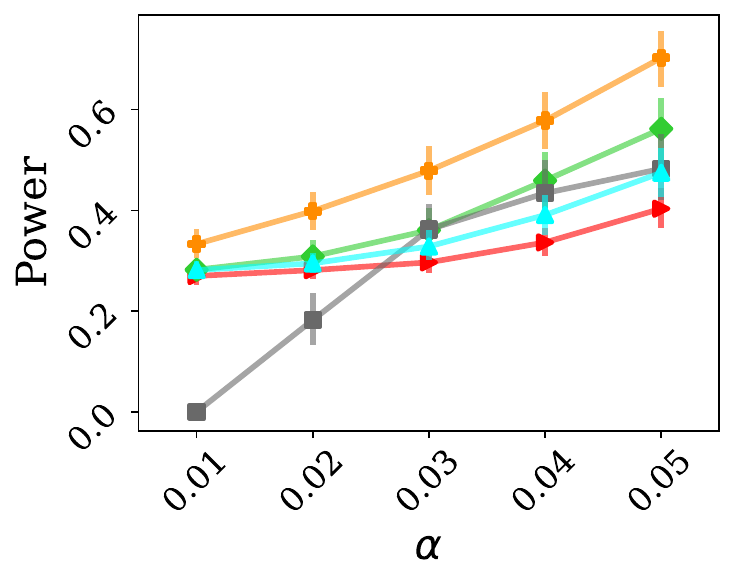}
    \includegraphics[width=3.5cm, valign=t]{figures/exp/legend_wo_trm.pdf}
    \caption{Comparison of conformal outlier detection methods on real dataset ``KDDCup99'' as a function of the target type-I error rate $\alpha$. The contamination rate $r$ is fixed to 3\%; other details are as in \Cref{fig:shuttle-outlier-prop}.
}
    \label{app-fig:KDDCup99-levels}
\end{figure*}
\paragraph{Results with additional outlier detection models} Here, we consider two additional outlier detection models: Local Outlier Factor (LOF) with 100 estimators and One-Class Support Vector Machine (OC-SVM) with an RBF kernel, both implemented via \texttt{scikit-learn}. \Cref{app-fig:outlier-prop-lof,app-fig:outlier-prop-oc-svm} present the performance of all methods when applied with LOF and OC-SVM, respectively. As can be seen, the proposed \texttt{Label-Trim} controls the type-I error across both models. When the outlier detection model is less discriminative (e.g., LOF or OC-SVM on the KDDCup99 dataset), the candidate set becomes less effective and power decreases. Nevertheless, our method improves over the baselines, including the \texttt{Small-Clean} method.

\begin{figure}[!h]
    \begin{subfigure}[b]{\textwidth}
    \includegraphics[height=3.3cm, valign=t]{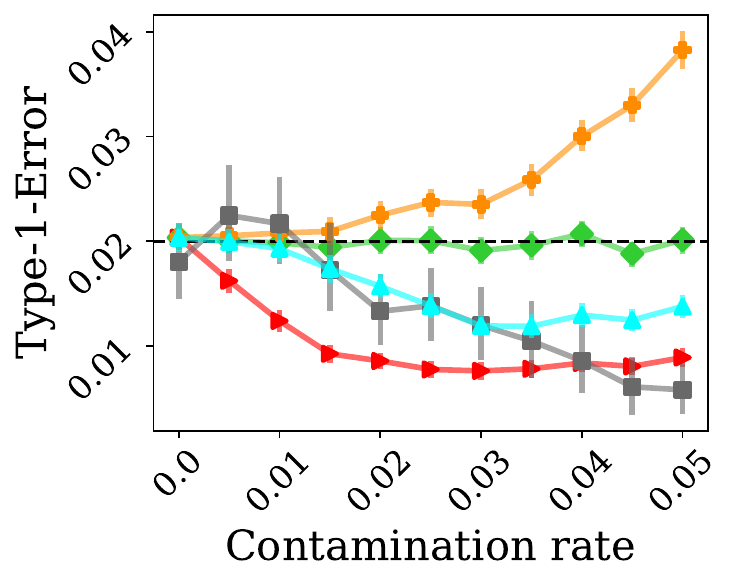}
    \includegraphics[height=3.3cm, valign=t]{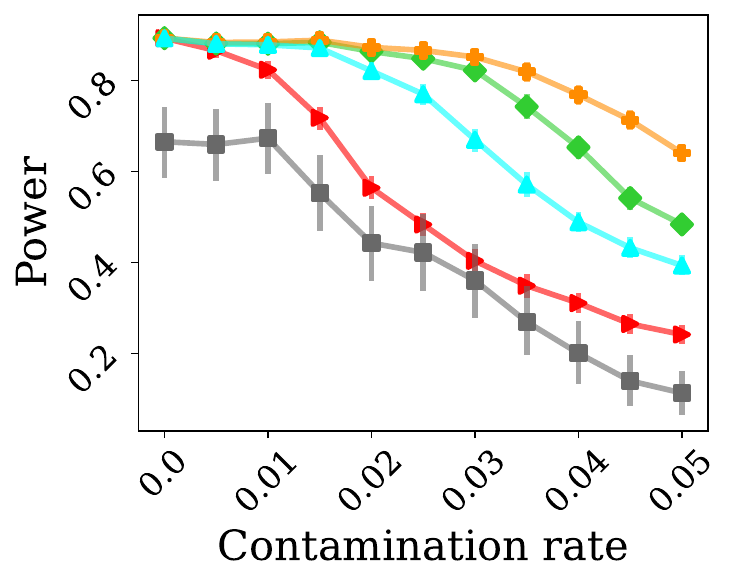}
    \includegraphics[height=3.3cm, valign=t]{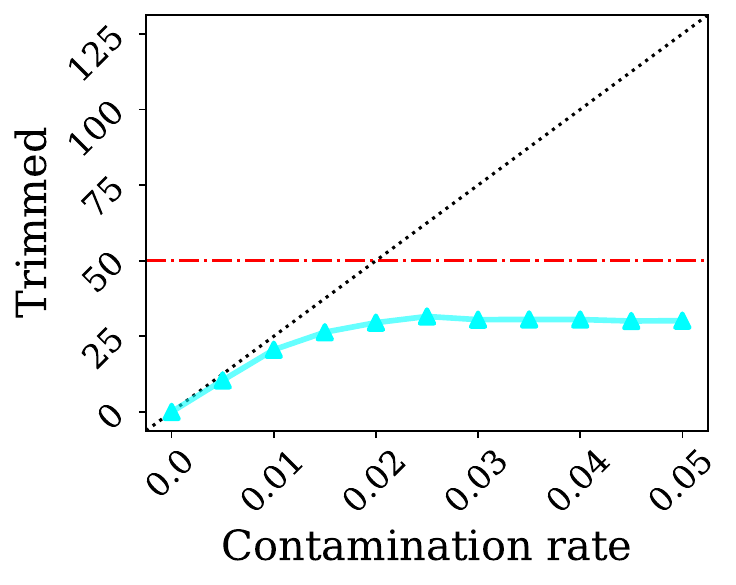}
    
    \caption{shuttle}
    \end{subfigure}

    \begin{subfigure}[b]{\textwidth}

    \includegraphics[height=3.3cm, valign=t]{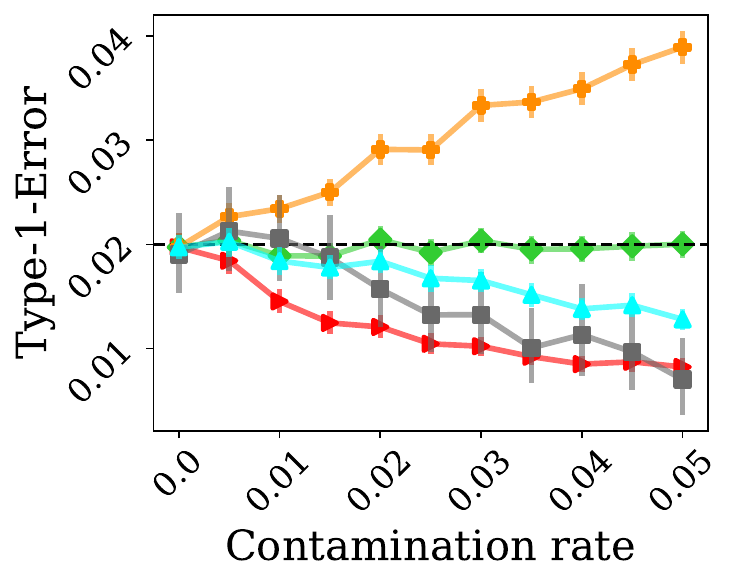}
    \includegraphics[height=3.3cm, valign=t]{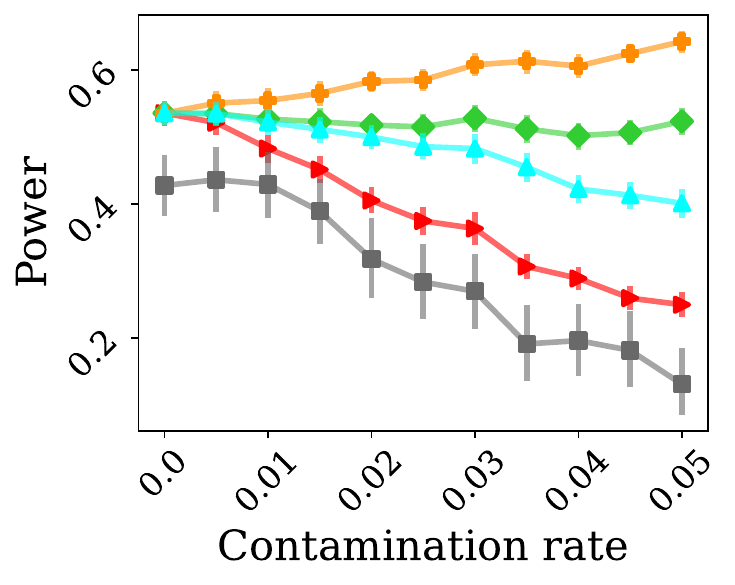}
    \includegraphics[height=3.3cm, valign=t]{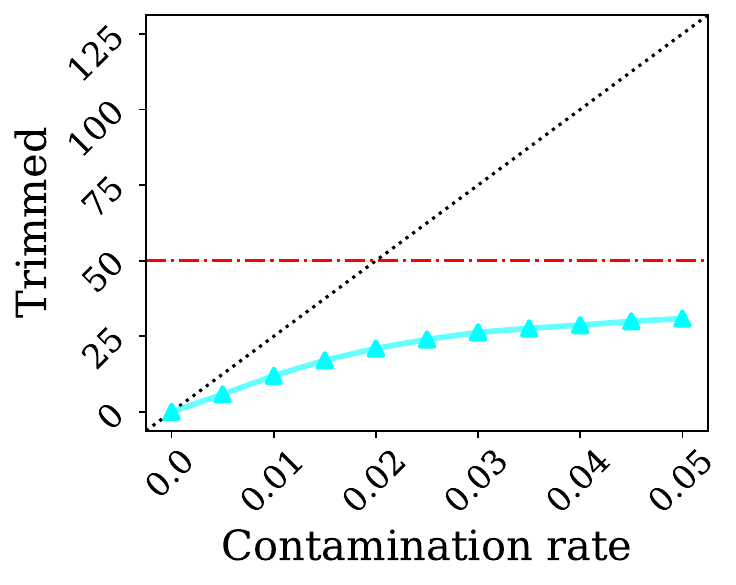}
    \caption{credit-card}
    \end{subfigure}

    \begin{subfigure}[b]{\textwidth}

    \includegraphics[height=3.3cm, valign=t]{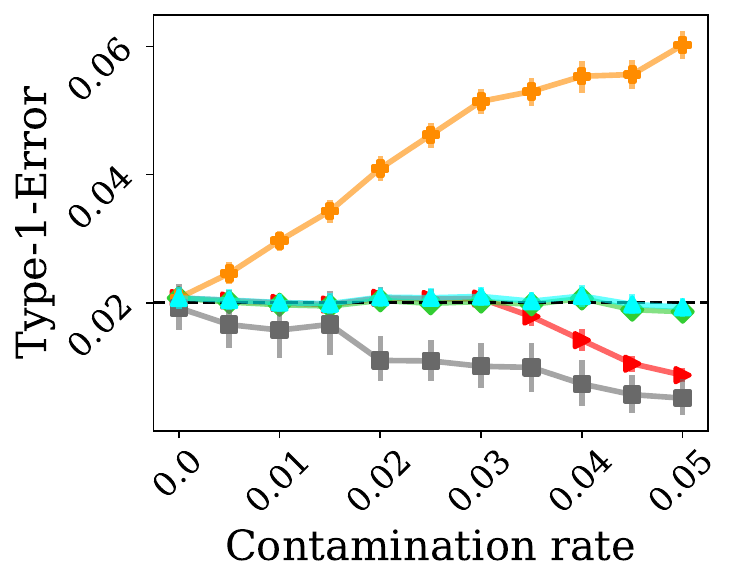}
    \includegraphics[height=3.3cm, valign=t]{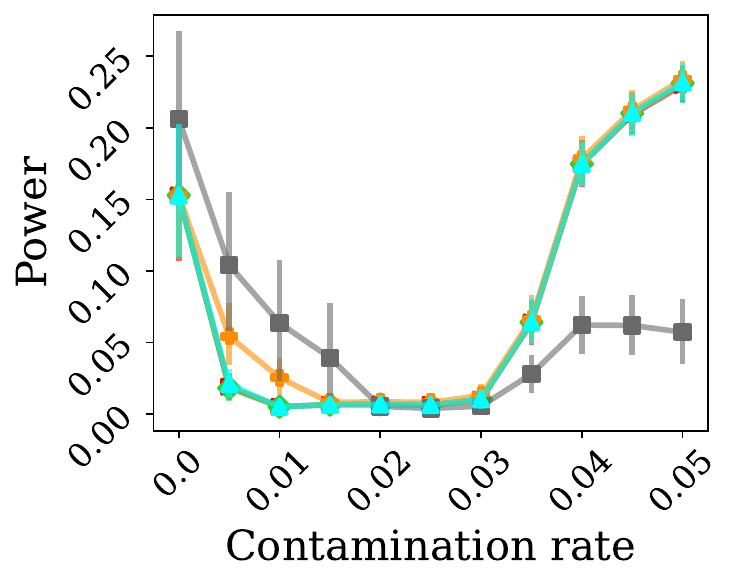}
    \includegraphics[height=3.3cm, valign=t]{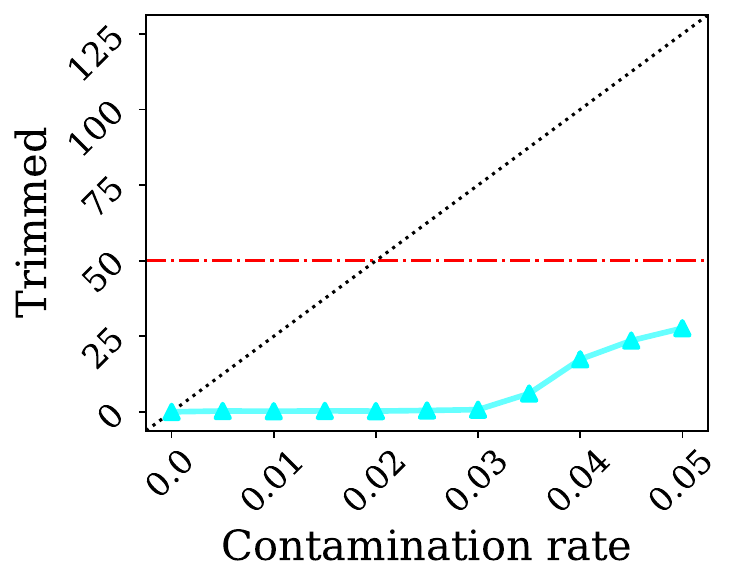}
    \includegraphics[width=3.3cm, valign=t]{figures/exp/legend.pdf}
    \caption{KDDCup99}
    \end{subfigure}
    \caption{Comparison of conformal outlier detection methods on real datasets as a function of the contamination rate $r$. All methods utilize a Local Outlier Factor model. Other details are as in \Cref{fig:shuttle-outlier-prop} in the main manuscript.
}
    \label{app-fig:outlier-prop-lof}
\end{figure}
\clearpage

\begin{figure}[!h]
    \begin{subfigure}[b]{\textwidth}
        \includegraphics[height=3.3cm, valign=t]{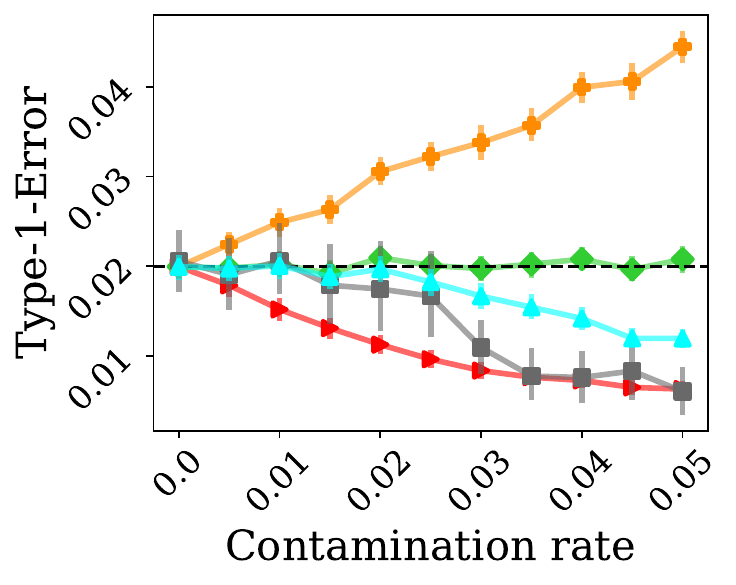}
    \includegraphics[height=3.3cm, valign=t]{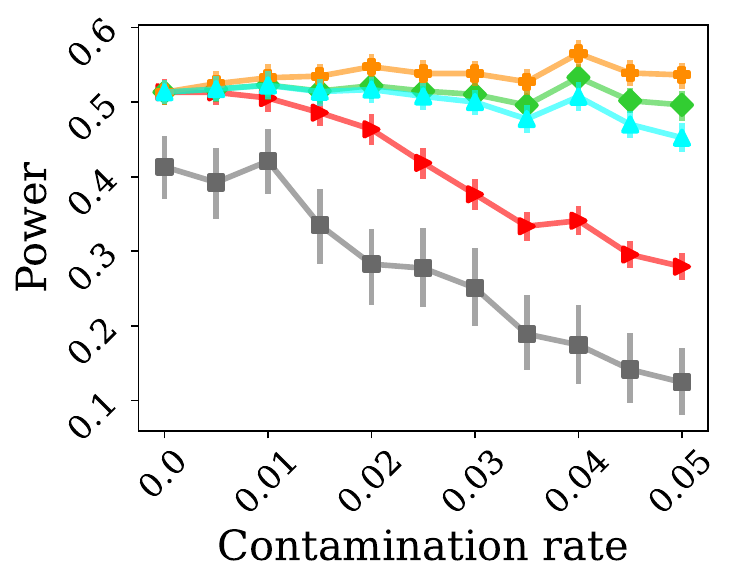}
    \includegraphics[height=3.3cm, valign=t]{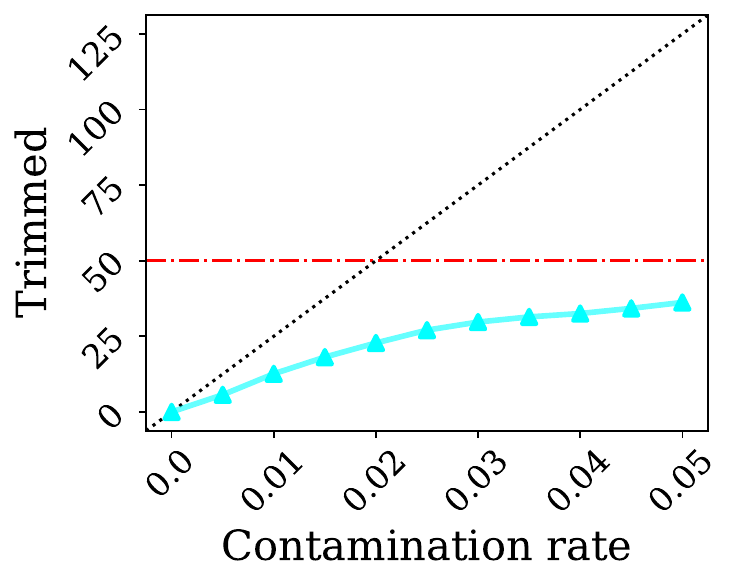}
    \caption{shuttle}
    \end{subfigure}
    \begin{subfigure}[b]{\textwidth}
    \includegraphics[height=3.3cm, valign=t]{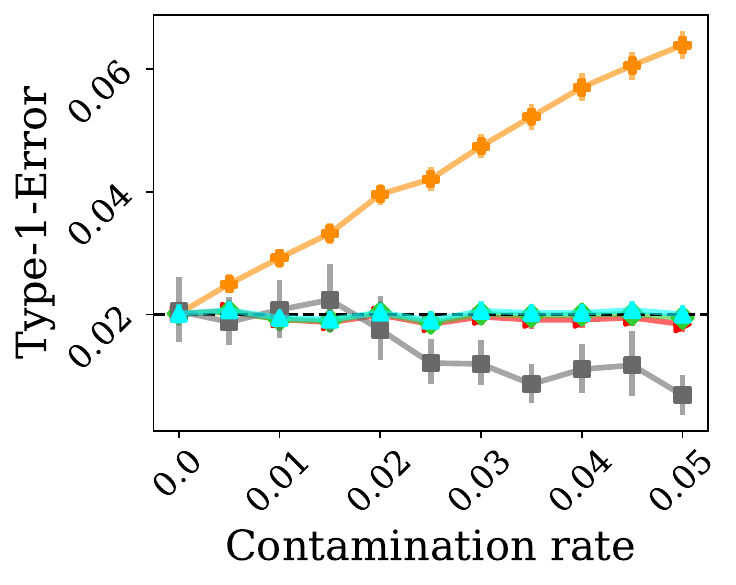}
    \includegraphics[height=3.3cm, valign=t]{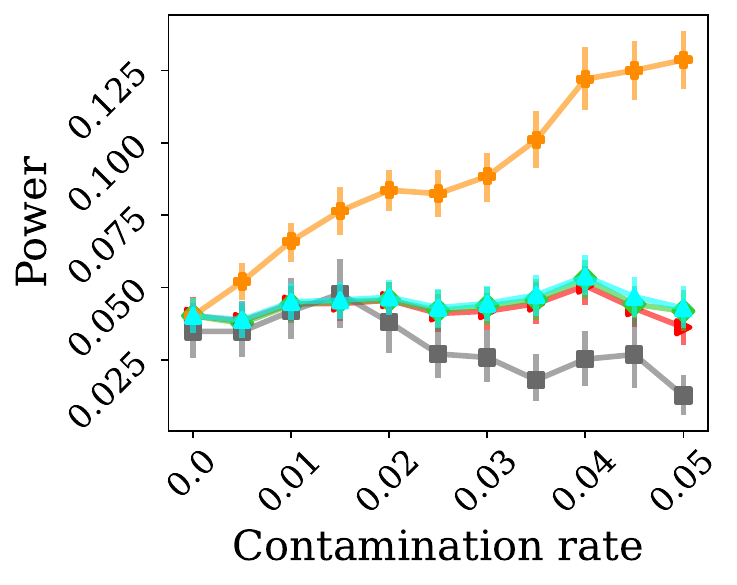}
    \includegraphics[height=3.3cm, valign=t]{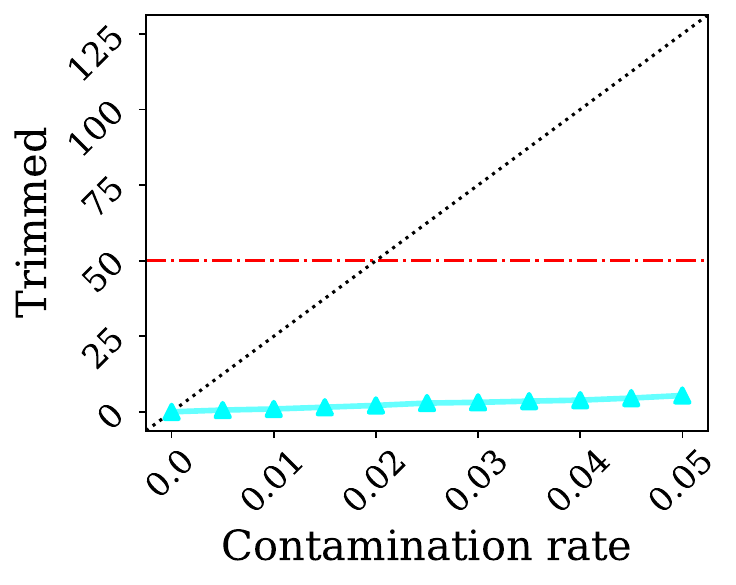}
    \caption{credit-card}
    \end{subfigure}
    \begin{subfigure}[b]{\textwidth}
    \includegraphics[height=3.3cm, valign=t]{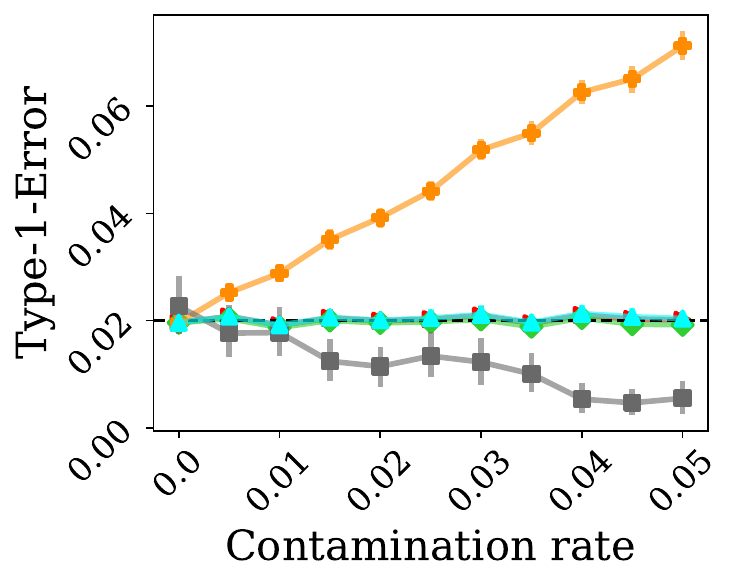}
    \includegraphics[height=3.3cm, valign=t]{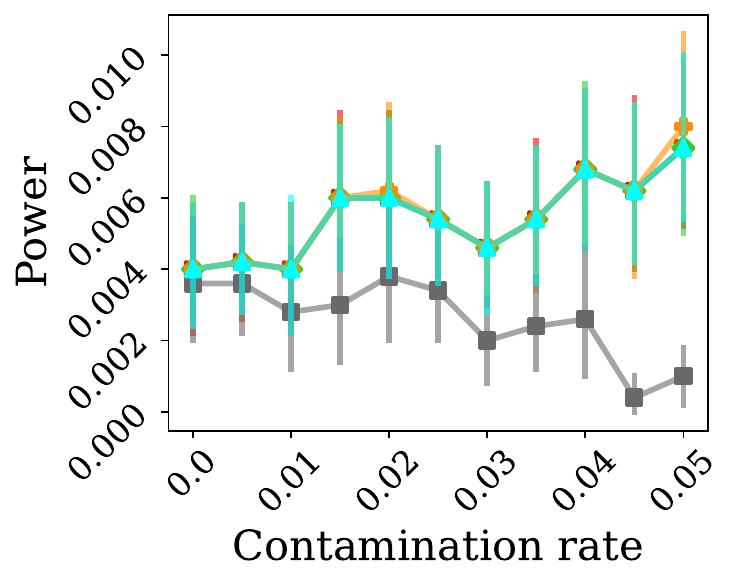}
    \includegraphics[height=3.3cm, valign=t]{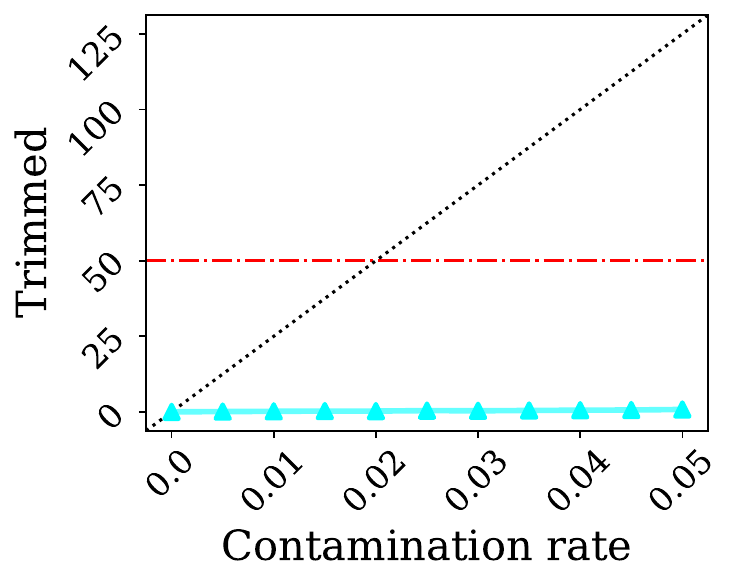}
    \includegraphics[width=3.3cm, valign=t]{figures/exp/legend.pdf}
    \caption{KDDCup99}
    \end{subfigure}
    \caption{ Comparison of conformal outlier detection methods on real datasets as a function of the contamination rate $r$. All methods utilize a One-Class SVM model. Other details are as in \Cref{fig:shuttle-outlier-prop} in the main manuscript.
}
    \label{app-fig:outlier-prop-oc-svm}
\end{figure}
\clearpage
\FloatBarrier
\paragraph{Results with higher contamination rate} \Cref{app-fig:contamination} shows the performance of conformal outlier detection methods as a function of the contamination rate $r$, evaluated at higher levels than those considered in the main manuscript. The performance trend aligns with that observed at lower contamination rates, across all datasets. Both the \texttt{Standard} and \texttt{Small-Clean} methods control the type-I error at the nominal level but remain overly conservative. In contrast, \texttt{Naive-Trim} fails to control the type-I error. The proposed \texttt{Label-Trim} method achieves improved power while controlling the type-I error at level~$\alpha$.

\begin{figure}[!h]
        \begin{subfigure}[b]{\textwidth}

    \includegraphics[height=3.3cm, valign=t]{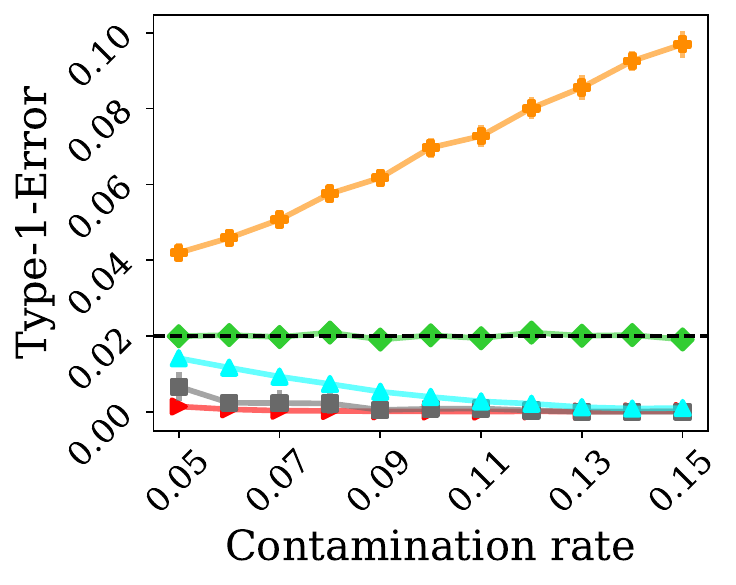}
    \includegraphics[height=3.3cm, valign=t]{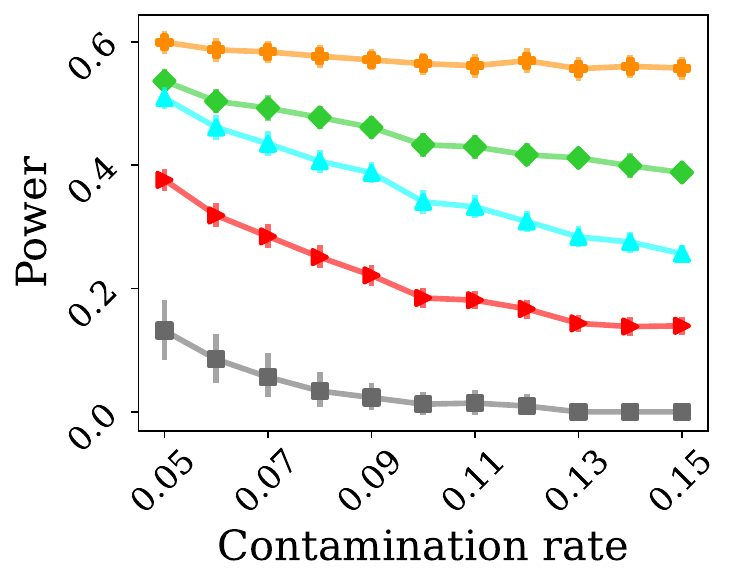}
    \includegraphics[height=3.3cm, valign=t]{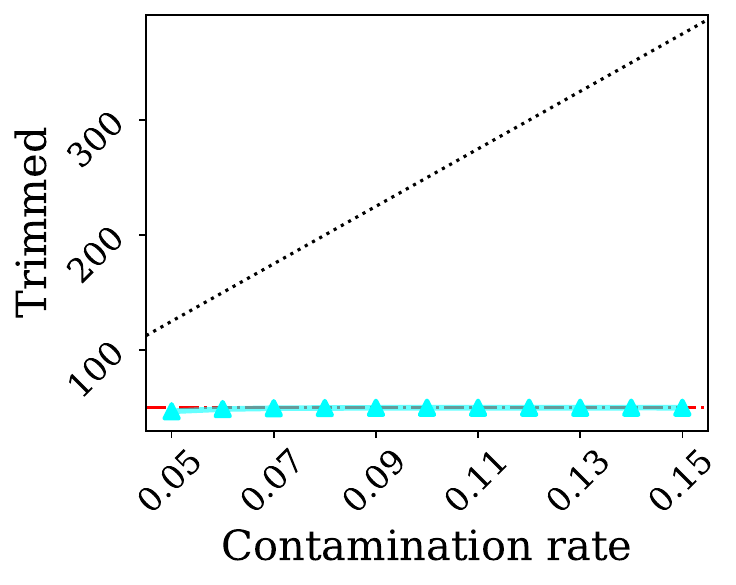}
    \caption{shuttle}
    \end{subfigure}
    \begin{subfigure}[b]{\textwidth}
    \includegraphics[height=3.3cm, valign=t]{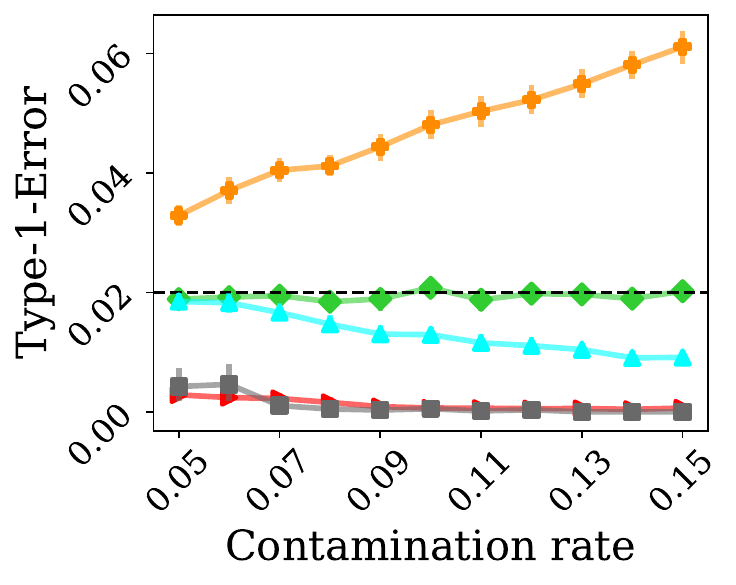}
    \includegraphics[height=3.3cm, valign=t]{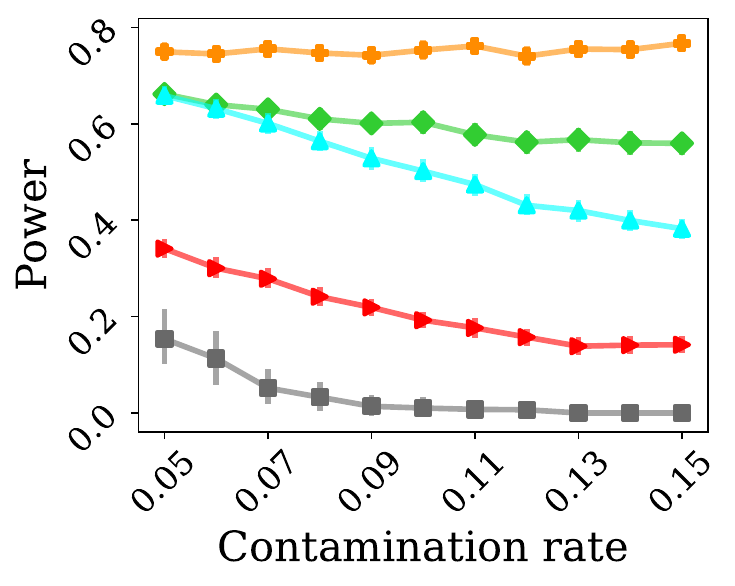}
    \includegraphics[height=3.3cm, valign=t]{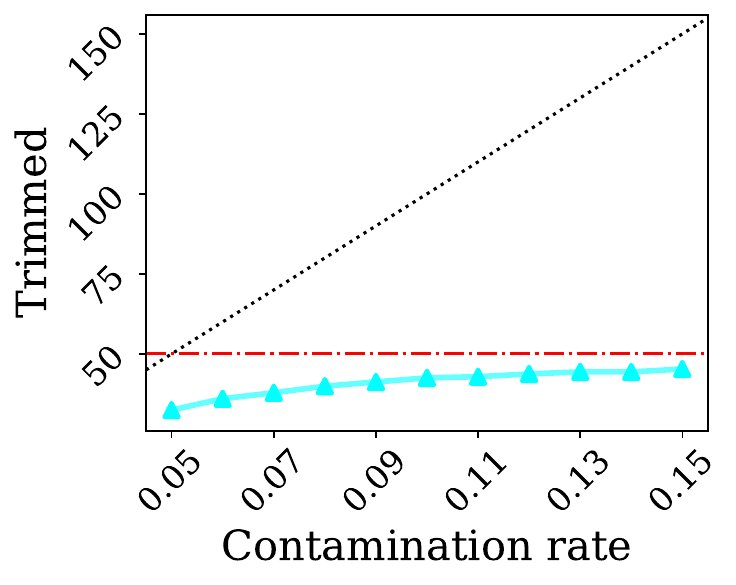}
    \caption{credit-card}
    \end{subfigure}
    \begin{subfigure}[b]{\textwidth}
    \includegraphics[height=3.3cm, valign=t]{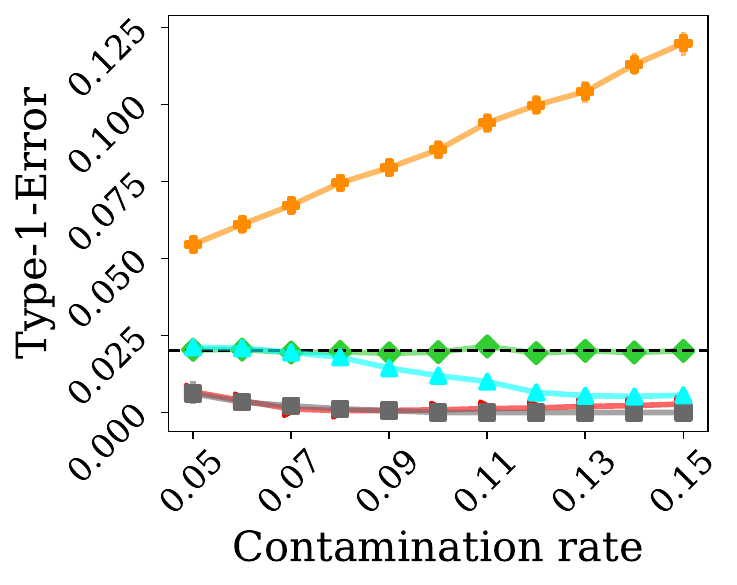}
    \includegraphics[height=3.3cm, valign=t]{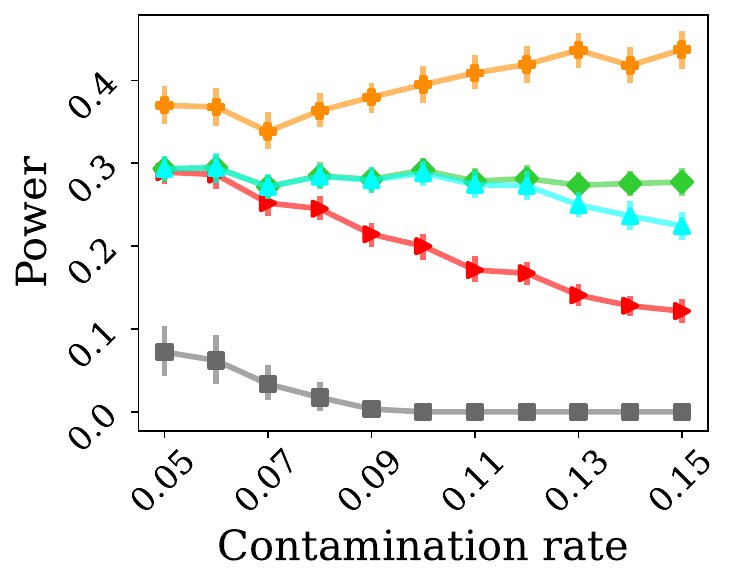}
    \includegraphics[height=3.3cm, valign=t]{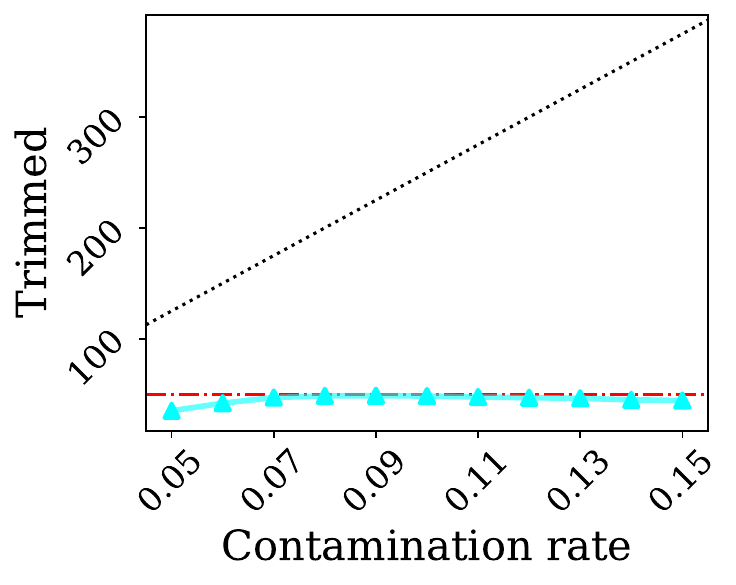}
    \includegraphics[width=3.3cm, valign=t]{figures/exp/legend.pdf}
    \caption{KDDCup99}
    \end{subfigure}
    \caption{Comparison of conformal outlier detection methods on real tabular datasets as a function of the contamination rate $r$. Other details are as in \Cref{fig:shuttle-outlier-prop} in the main manuscript.
}
    \label{app-fig:contamination}
\end{figure}

\FloatBarrier
\paragraph{Results with strategic outlier injection} We extend the experimental setting from the main manuscript by evaluating different outlier injection strategies. Instead of injecting outliers at random, we selected outliers that more closely resemble inliers---specifically, outliers whose nonconformity scores fall below a given score percentile. 
 \Cref{app-fig:outlier-injection-hist} illustrates the nonconformity scores for both outliers and inliers under different outlier injection strategies for the ``shuttle'' dataset, highlighting that lower-percentile outliers increasingly resemble inliers. This sets the stage to evaluate the performance of our proposed method under these challenging settings. \Cref{app-fig:outlier-injection} presents the performance of all conformal methods on the same dataset and outlier injection strategies, as a function of the target type-I error level.
Following that figure, we see that \texttt{Label-Trim} controls the type-I error while improving power. Similar trends hold on the credit-card and KDDCup99 datasets (\Cref{app-fig:creditcard-outlier-injection,app-fig:KDDCup99-outlier-injection}).

\begin{figure}[!h]
    \begin{subfigure}[t]{0.28\textwidth}
        \includegraphics[width=\textwidth, valign=t]{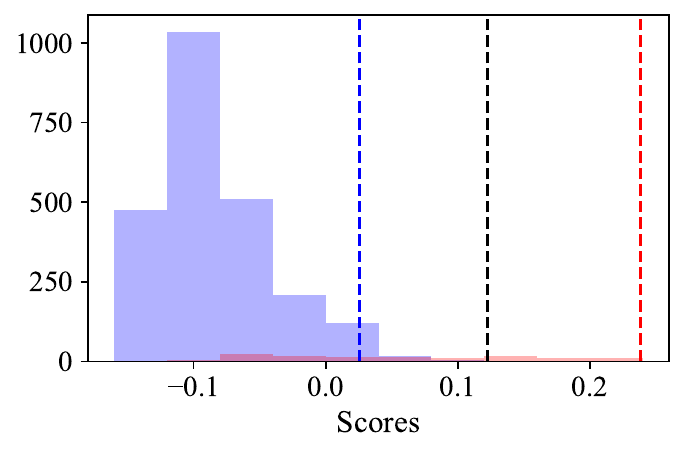}
        \caption{Random outlier injection}
    \end{subfigure}
    \begin{subfigure}[t]{0.28\textwidth}
        \includegraphics[width=\textwidth, valign=t]{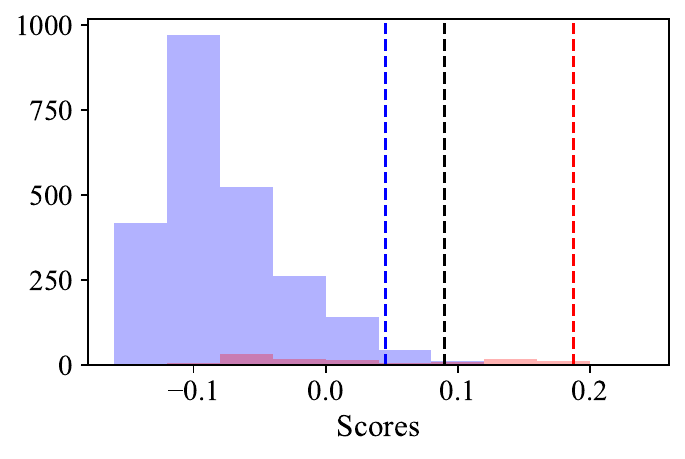}
        \caption{$70$th percentile}
    \end{subfigure}
    \begin{subfigure}[t]{0.28\textwidth}
        \includegraphics[width=\textwidth, valign=t]{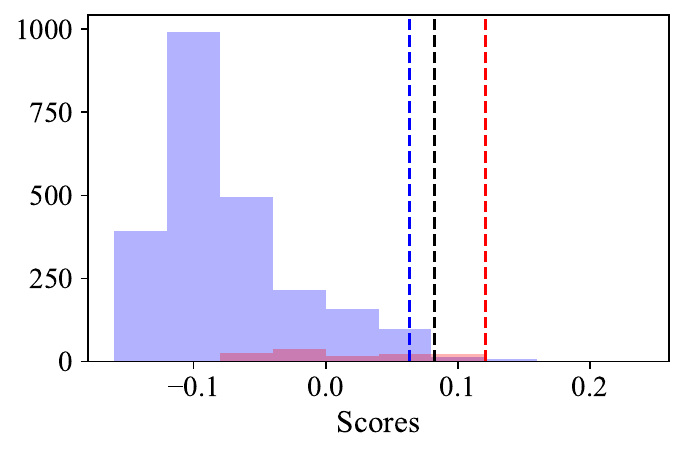}
        \caption{$50$th percentile}
    \end{subfigure}
    \includegraphics[width=0.12\textwidth, valign=t]{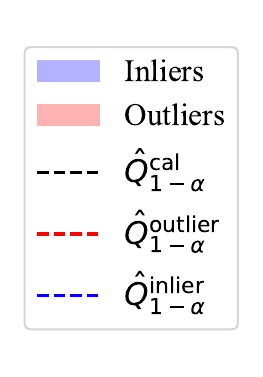}
    \caption{Histogram of nonconformity scores for inliers and outliers in a contaminated calibration subset of the ``shuttle'' data, with a contamination rate of 5\% for different outlier injection strategies.}
    \label{app-fig:outlier-injection-hist}
\end{figure}

\begin{figure}[!h]
    \centering
    \begin{subfigure}[b]{0.8\textwidth}
\includegraphics[height=3.3cm, valign=t]{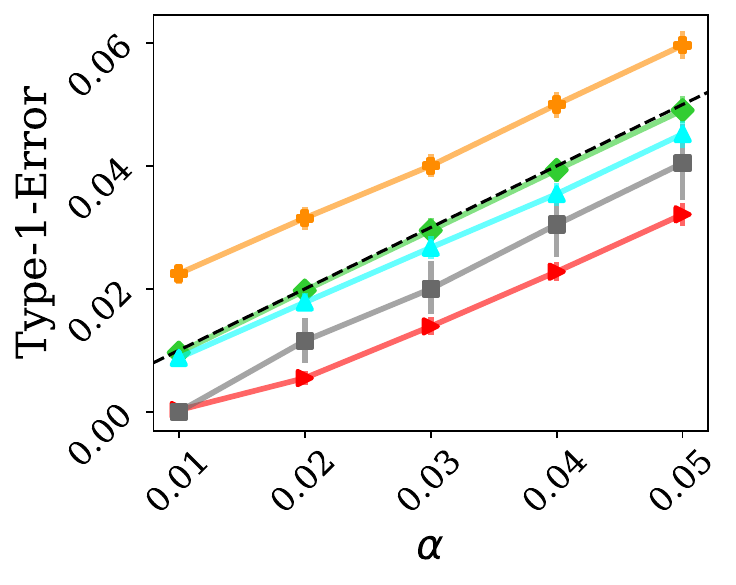}
    \includegraphics[height=3.3cm, valign=t]{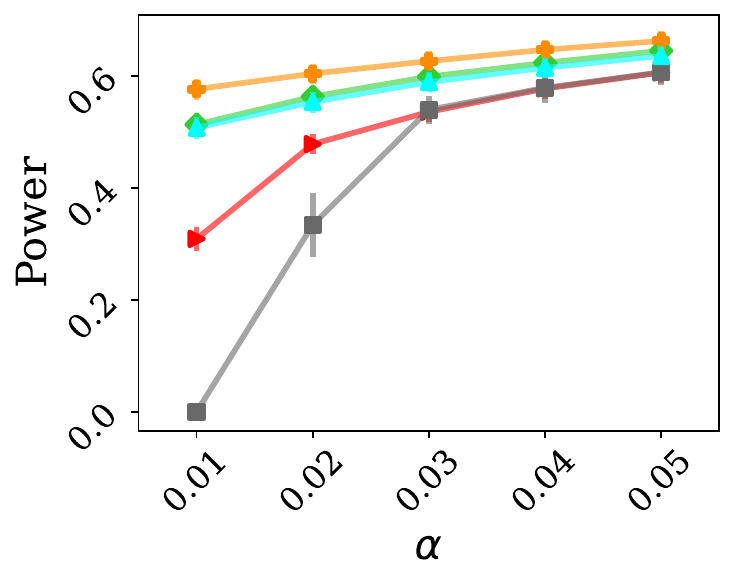}  \caption{Random outlier injection}
    \end{subfigure}
    
    \begin{subfigure}[b]{0.8\textwidth}
\includegraphics[height=3.3cm, valign=t]{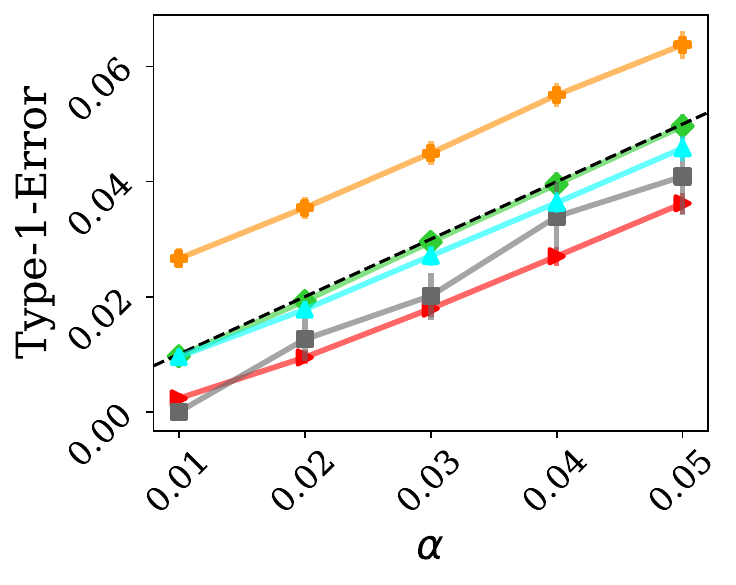}
    \includegraphics[height=3.3cm, valign=t]{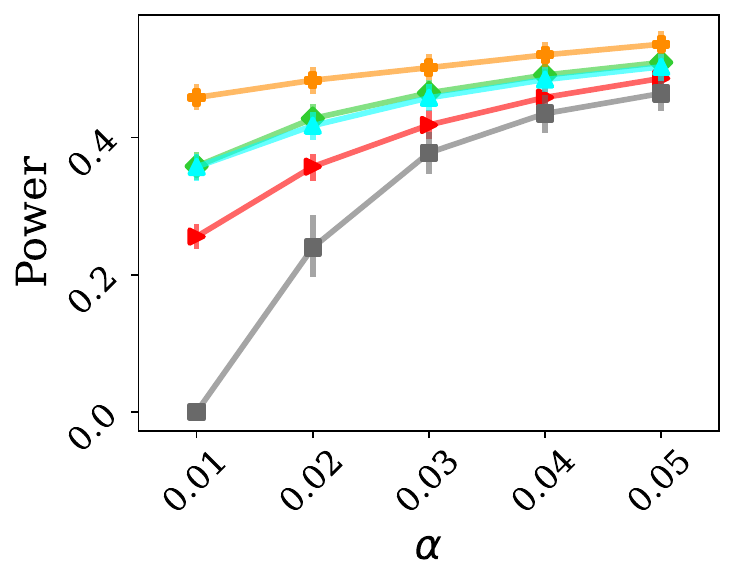}  
    \caption{$70$th percentile}
    \end{subfigure}
    
    \begin{subfigure}[b]{0.8\textwidth}
\includegraphics[height=3.3cm, valign=t]{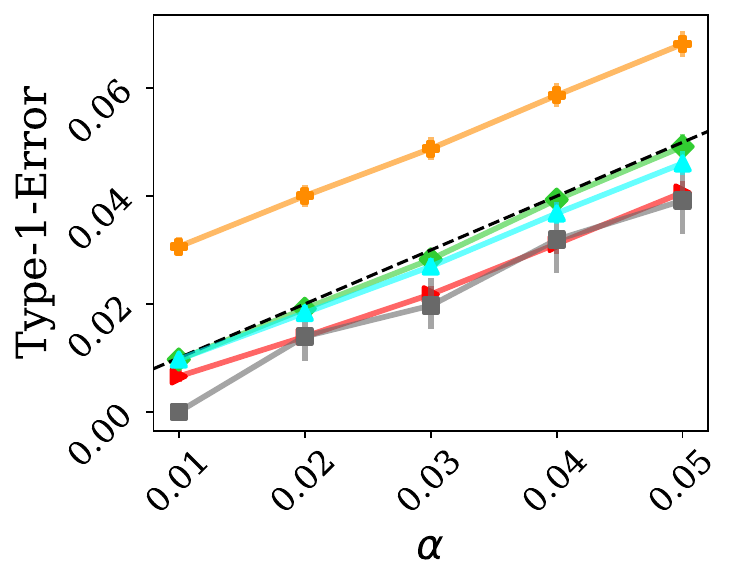}
    \includegraphics[height=3.3cm, valign=t]{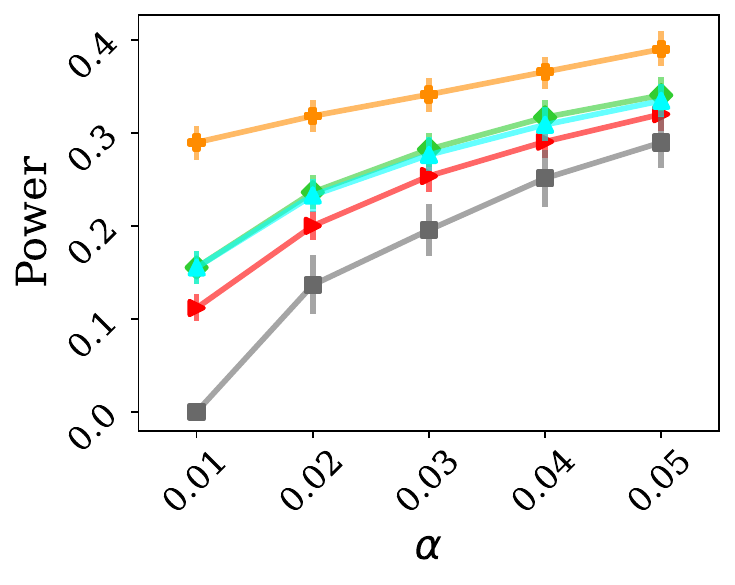}  
    \includegraphics[width=3.3cm, valign=t]{figures/exp/legend_wo_trm.pdf}
    \caption{$50$th percentile}
    \end{subfigure}
    \caption{Performance on a real dataset ``shuttle'' as a function of the target type-I error rate $\alpha$ for different outlier injection strategies. Other details are as in \Cref{fig:shuttle-levels} in the main manuscript.}
    \label{app-fig:outlier-injection}
\end{figure}

\begin{figure}[!h]
    \centering
    \begin{subfigure}[b]{0.8\textwidth}
\includegraphics[height=3.3cm, valign=t]{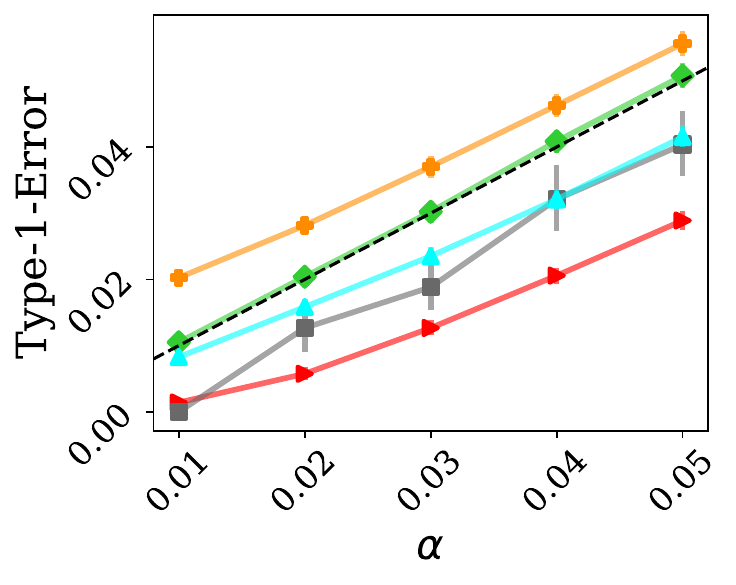}
    \includegraphics[height=3.3cm, valign=t]{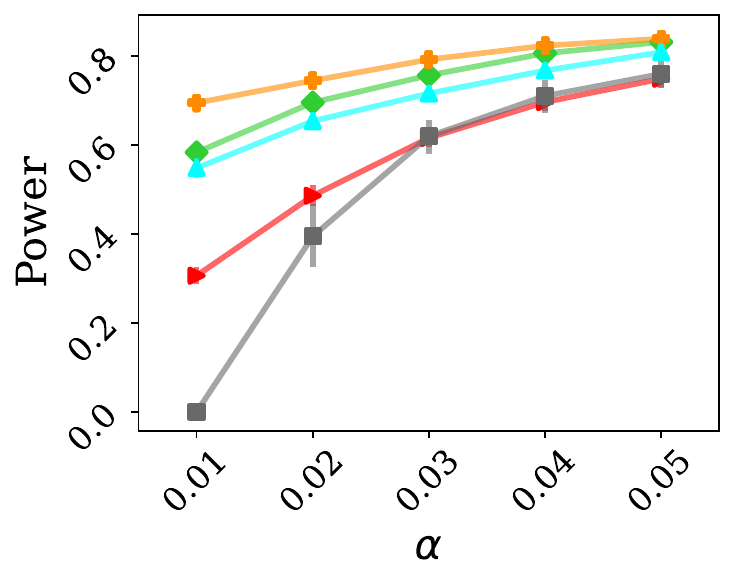}  \caption{Random outlier injection}
    \end{subfigure}
    
    \begin{subfigure}[b]{0.8\textwidth}
\includegraphics[height=3.3cm, valign=t]{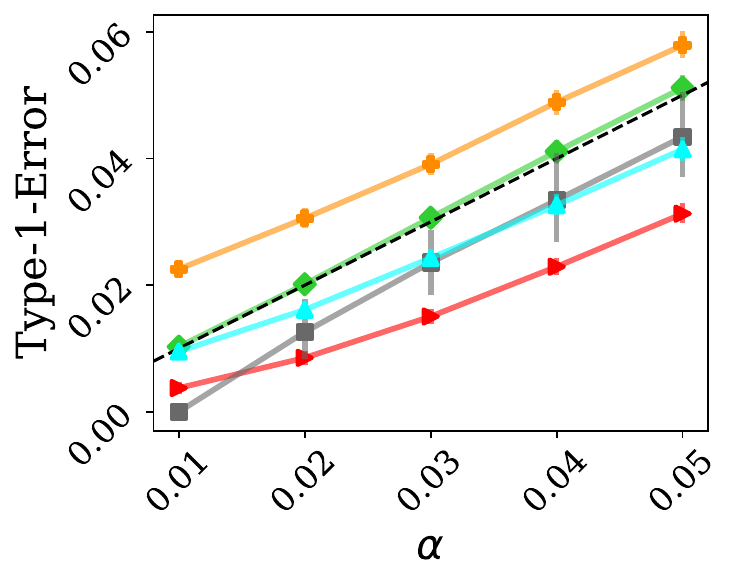}
    \includegraphics[height=3.3cm, valign=t]{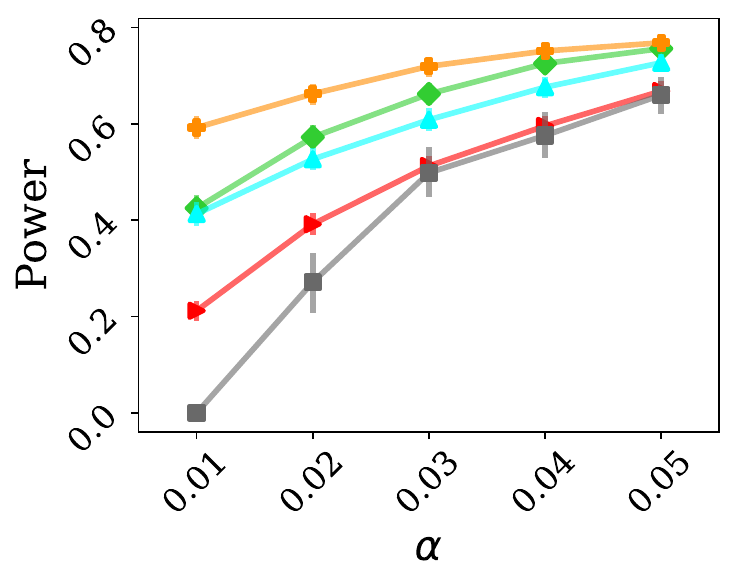}  
    \caption{$70$th percentile}
    \end{subfigure}
    
    \begin{subfigure}[b]{0.8\textwidth}
\includegraphics[height=3.3cm, valign=t]{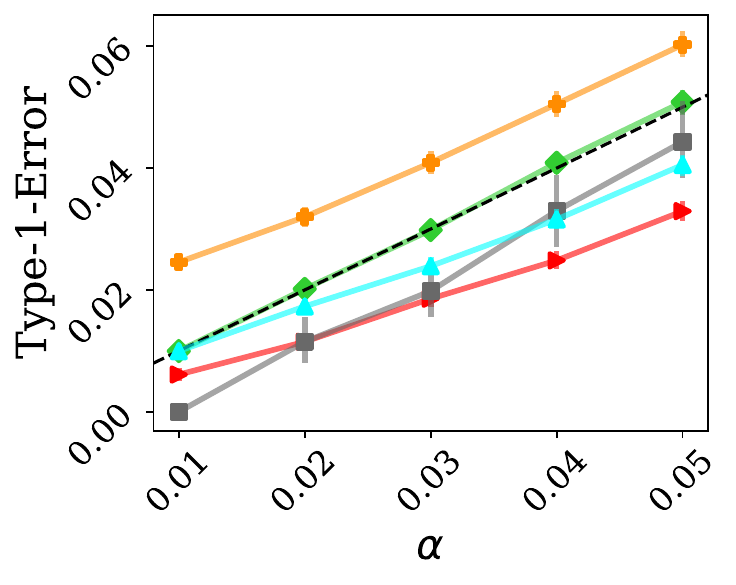}
    \includegraphics[height=3.3cm, valign=t]{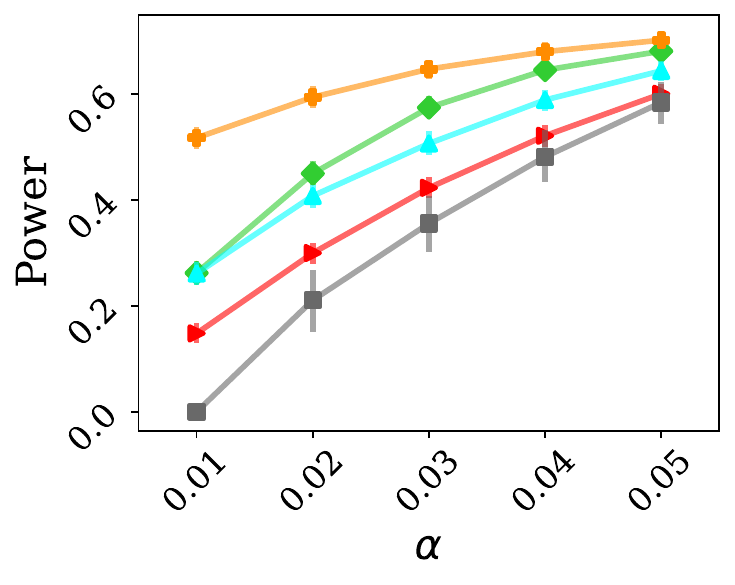}  
    \includegraphics[width=3.3cm, valign=t]{figures/exp/legend_wo_trm.pdf}
    \caption{$50$th percentile}
    \end{subfigure}
    \caption{Performance on a real dataset ``credit-card'' as a function of the target type-I error rate $\alpha$ for different outlier injection strategies. Other details are as in \Cref{fig:shuttle-levels} in the main manuscript.}
    \label{app-fig:creditcard-outlier-injection}
\end{figure}

\begin{figure}[!h]
    \centering
    \begin{subfigure}[b]{0.8\textwidth}
\includegraphics[height=3.3cm, valign=t]{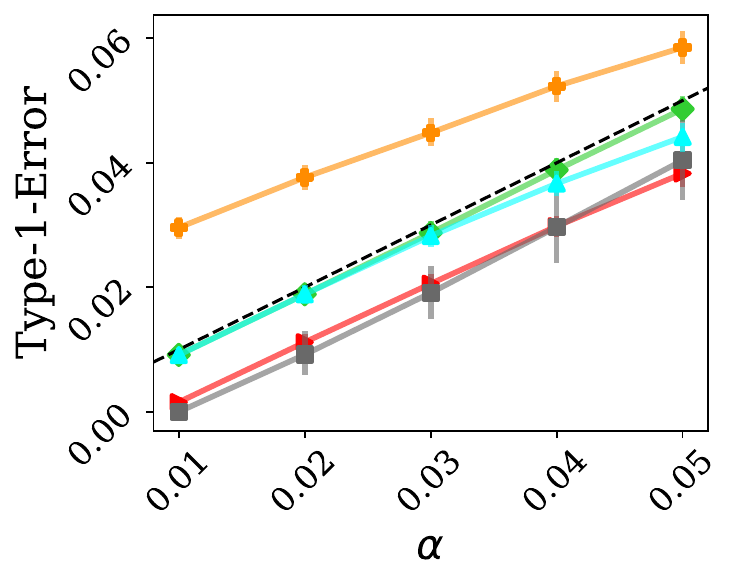}
    \includegraphics[height=3.3cm, valign=t]{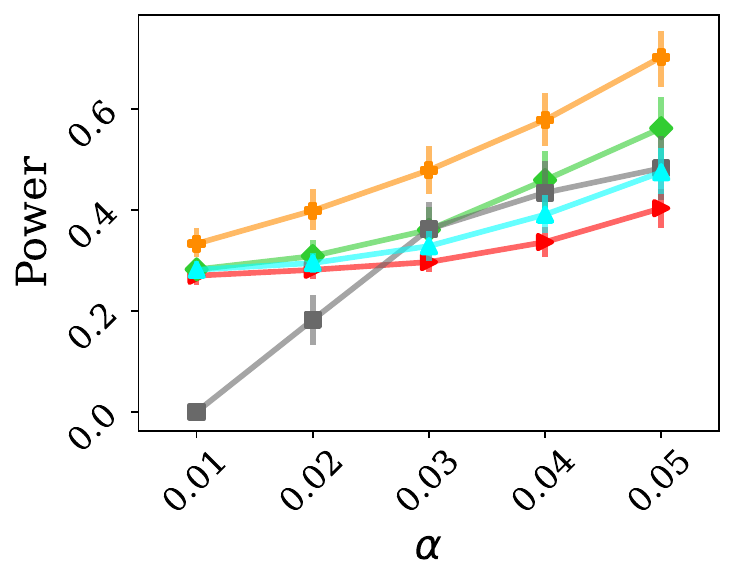}  \caption{Random outlier injection}
    \end{subfigure}
    
    \begin{subfigure}[b]{0.8\textwidth}
\includegraphics[height=3.3cm, valign=t]{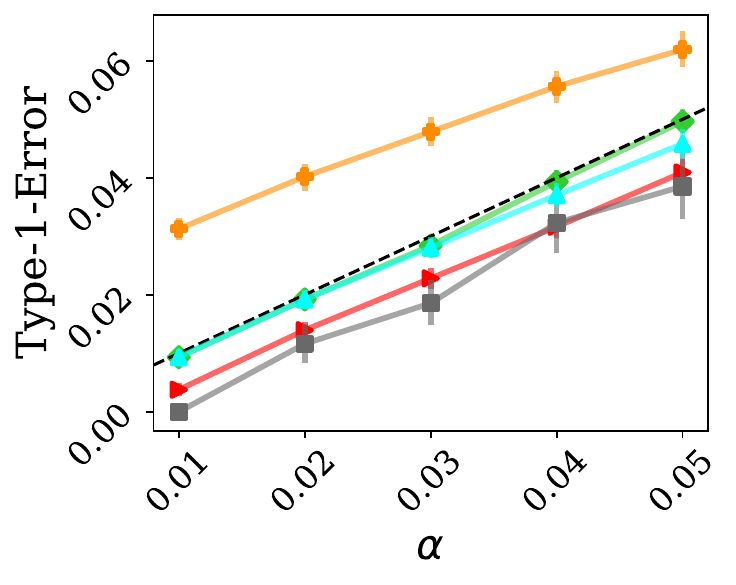}
    \includegraphics[height=3.3cm, valign=t]{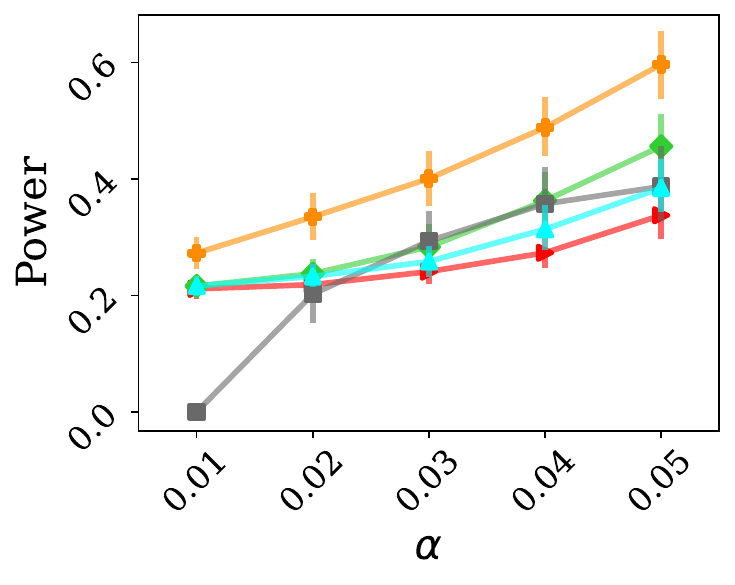}  
    \caption{$90$th percentile}
    \end{subfigure}
    
    \begin{subfigure}[b]{0.8\textwidth}
\includegraphics[height=3.3cm, valign=t]{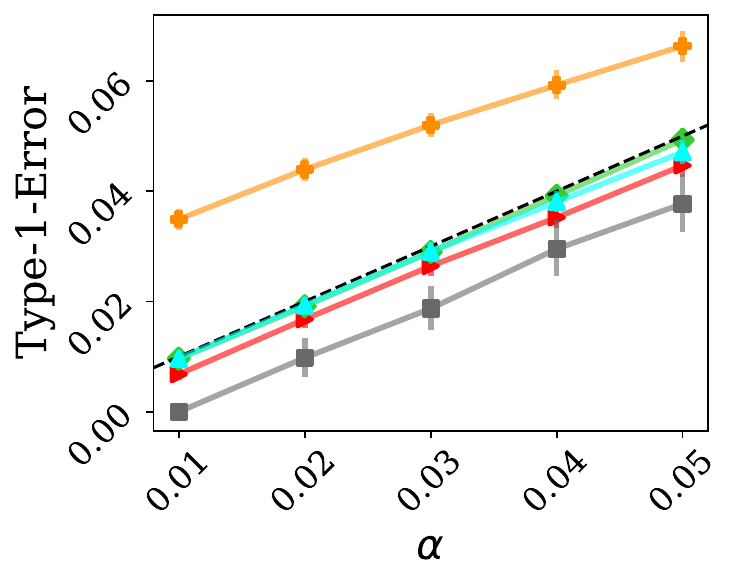}
    \includegraphics[height=3.3cm, valign=t]{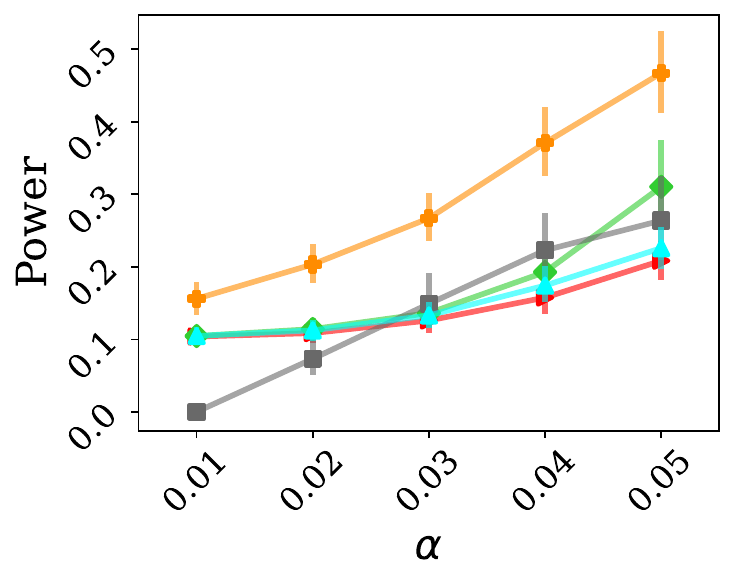} 
    \caption{$80$th percentile}
    \end{subfigure}

    \begin{subfigure}[b]{0.8\textwidth}
\includegraphics[height=3.3cm, valign=t]{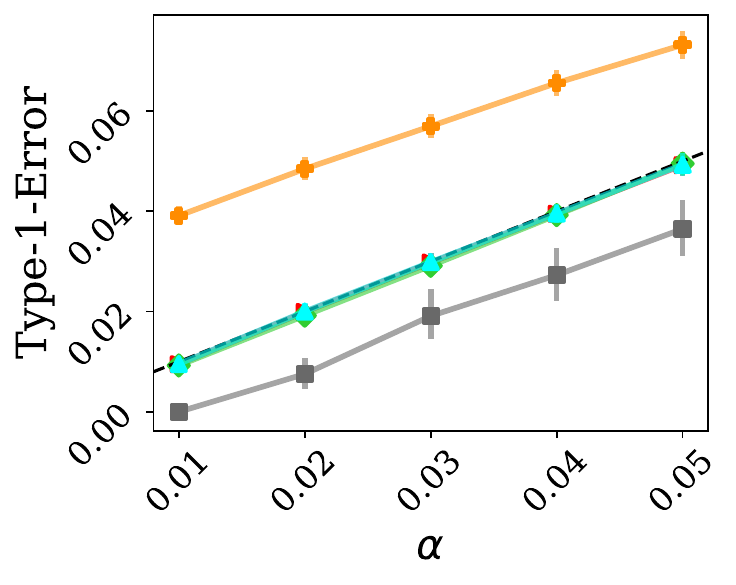}
    \includegraphics[height=3.3cm, valign=t]{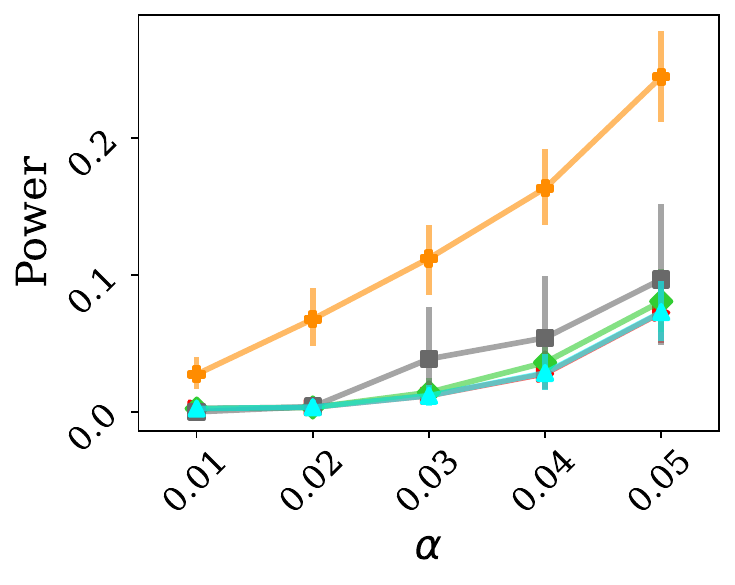}  
    \includegraphics[width=3.3cm, valign=t]{figures/exp/legend_wo_trm.pdf}
    \caption{$70$th percentile}
    \end{subfigure}
    \caption{Performance on a real dataset ``KDDCup99'' as a function of the target type-I error rate $\alpha$ for different outlier injection strategies. Other details are as in \Cref{fig:shuttle-levels} in the main manuscript.}
    \label{app-fig:KDDCup99-outlier-injection}
\end{figure}

\FloatBarrier
\paragraph{Results with test-time drifting outliers} We further investigate the robustness of the proposed method under test-time distribution shift in the outlier population. Using the injection strategies described earlier, we simulate a shift on the ``shuttle’’ dataset where the outlier distribution gradually changes over time. Specifically, while the training and calibration sets contain high-percentile outliers, the test set progressively includes more challenging, low-percentile outliers. As shown in \Cref{app-fig:drift}, \texttt{Label-Trim} maintains type-I error control throughout the distribution shift, outperforming all baselines and approaching oracle performance.

\begin{figure}[!h]
\includegraphics[width=0.7\linewidth, valign=t]{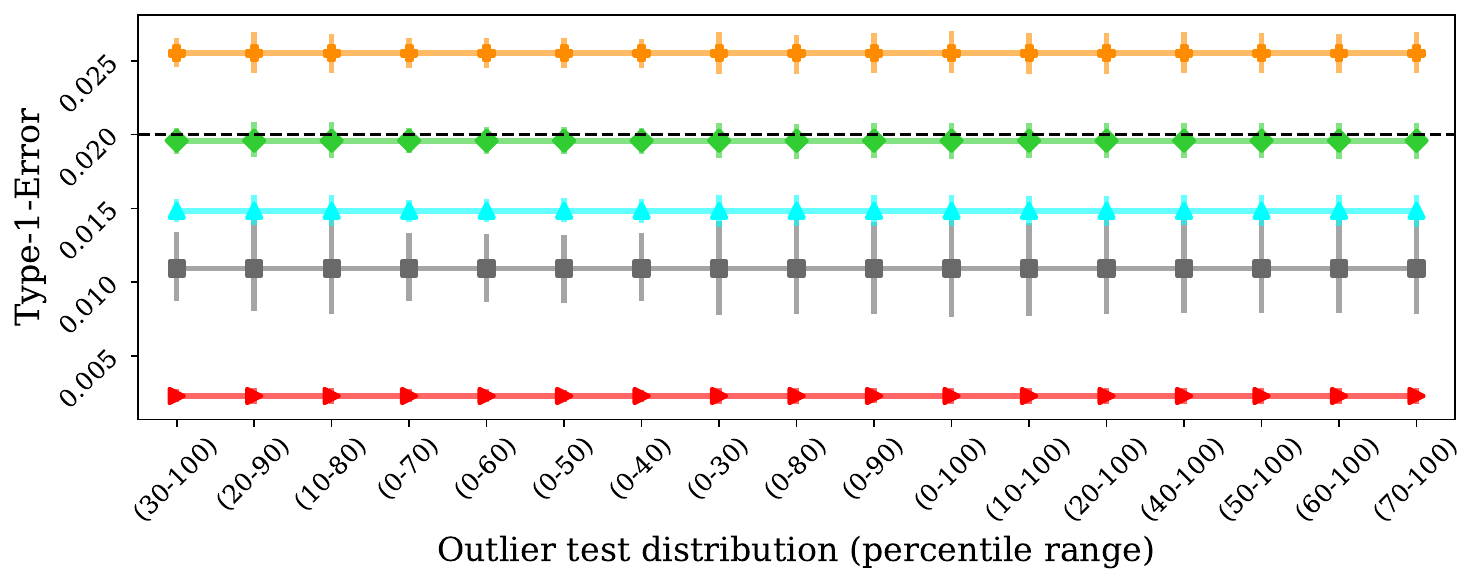}\\
\includegraphics[width=0.7\linewidth, valign=t]{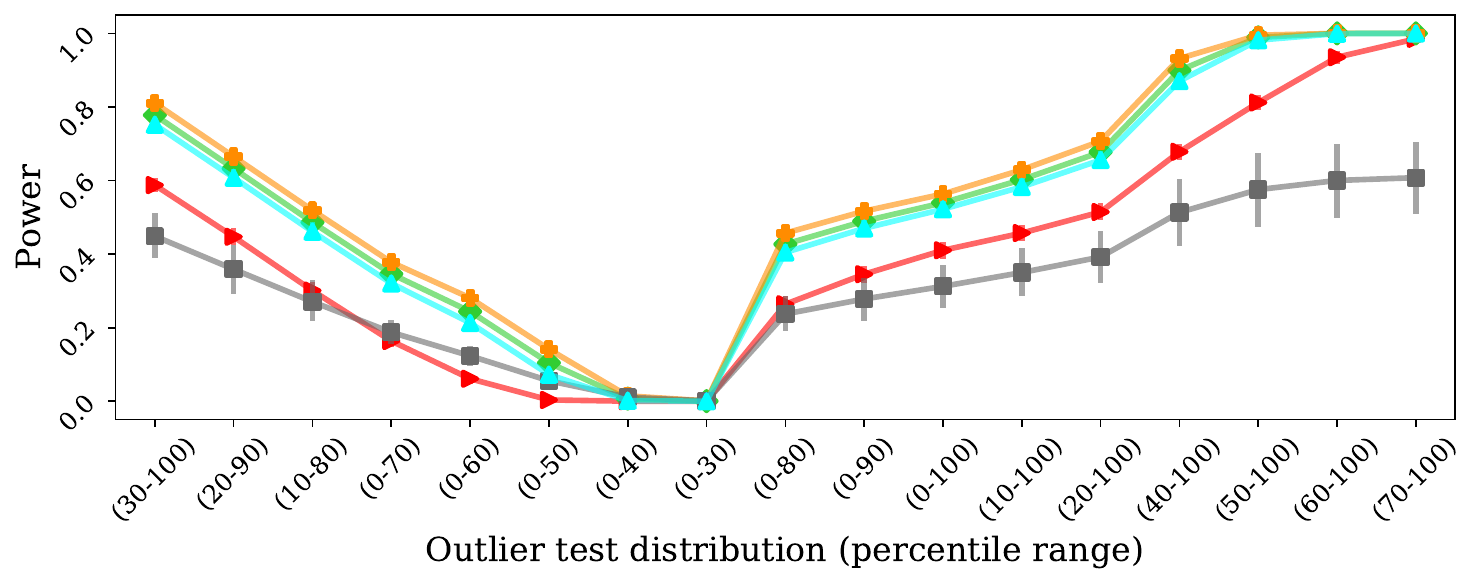}
\includegraphics[width=0.2\textwidth, valign=t]{figures/exp/legend_wo_trm.pdf}
    \caption{Performance on a real dataset ``shuttle'' as a function of drift in the outlier test distribution. Outliers in the training and calibration sets are drawn from the $30$th to $100$th percentile of the outlier score distribution, while the test set varies across different percentile ranges.
    Other details are as in \Cref{fig:shuttle-outlier-prop} in the main manuscript.}
    \label{app-fig:drift}
\end{figure}

\FloatBarrier
\clearpage
\subsubsection{Visual Datasets}\label{app-sec:images-data-exp}
\Cref{app-tab:all} summarizes the results across all six datasets for a target type-I error rate of $\alpha = 0.02$, showing trends consistent with those observed in \Cref{tab:avg-images} of the main manuscript.

\begin{table}[!htb]
\caption{Comparison of conformal outlier detection methods on six visual datasets for varying contamination rate $r$ and target type-I error level $\alpha = 0.02$. The empirical type-I error values are averaged across all datasets. The empirical power is presented relative to the \texttt{Standard} method (higher is better), and averaged across all datasets. Results are averaged across 100 random splits of the data, with standard errors presented in parentheses.}
\centering
\label{app-tab:all}
\resizebox{\textwidth}{!}{
\begin{tabular}{l|ll|ll|ll}
\hline
& \multicolumn{6}{c}{Contamination rate} \\
\hline
 & \multicolumn{2}{c|}{1\%} & \multicolumn{2}{c|}{3\%} & \multicolumn{2}{c}{5\%} \\ \hline
 Method      & Power & Type-I Error & Power & Type-I Error & Power & Type-I Error \\ \hline
Standard & \bfseries \cellcolor{Green!30} 1.0 ($\pm$ 0.0204) & \cellcolor{white} 0.017 ($\pm$ 0.0004)  & \bfseries \cellcolor{Green!30} 1.0 ($\pm$ 0.0238) & \cellcolor{white} 0.012 ($\pm$ 0.0004)  & \bfseries \cellcolor{Green!30} 1.0 ($\pm$ 0.0272) & \cellcolor{white} 0.009 ($\pm$ 0.0003) \\

Oracle (infeasible) & \bfseries \cellcolor{Green!100} 1.096 ($\pm$ 0.0214) & 0.021 ($\pm$ 0.0005)  & \bfseries \cellcolor{Green!100} 1.33 ($\pm$ 0.025) & \cellcolor{white} 0.02 ($\pm$ 0.0005)  & \bfseries \cellcolor{Green!100} 1.674 ($\pm$ 0.0311) & \cellcolor{white} 0.02 ($\pm$ 0.0006) \\

Naive-Trim (invalid) & \cellcolor{red!20} 1.249 ($\pm$ 0.0219) & \cellcolor{red!20} 0.026 ($\pm$ 0.0005)  & \cellcolor{red!20} 1.777 ($\pm$ 0.0254) & \cellcolor{red!20} 0.035 ($\pm$ 0.0006)  & \cellcolor{red!20} 2.452 ($\pm$ 0.0324) & \cellcolor{red!20} 0.044 ($\pm$ 0.0007) \\

Small-Clean & \cellcolor{white} 0.819 ($\pm$ 0.0592) & \cellcolor{white} 0.018 ($\pm$ 0.002)  & \cellcolor{white} 0.613 ($\pm$ 0.0753) & \cellcolor{white} 0.011 ($\pm$ 0.0018)  & \cellcolor{white} 0.406 ($\pm$ 0.0769) & \cellcolor{white} 0.006 ($\pm$ 0.0013) \\

Label-Trim & \bfseries \cellcolor{Green!60} 1.079 ($\pm$ 0.0212) & \cellcolor{white} 0.02 ($\pm$ 0.0005)  & \bfseries \cellcolor{Green!60} 1.23 ($\pm$ 0.0247) & \cellcolor{white} 0.017 ($\pm$ 0.0004)  & \bfseries \cellcolor{Green!60} 1.381 ($\pm$ 0.0298) & \cellcolor{white} 0.014 ($\pm$ 0.0005) \\
\end{tabular}
}
\subcaption{Target type-I error rate $\alpha=0.02$}
\end{table}

We also report performance averaged across the six datasets for two additional models: ReAct~\citep{react} with a VGG-19 backbone and SCALE~\citep{scale} with a ResNet-18 backbone, shown in \Cref{app-tab:react-vgg,app-tab:scale-resnet}, respectively. These results align with our main findings and further demonstrate the effectiveness of the proposed \texttt{Label-Trim} method.

Following this, we provide detailed results for each dataset, reporting type-I error rates and the power of each method, with the power reported relative to the \texttt{Standard} method (normalized to 1). The actual power of the \texttt{Standard} method is included in the last row of each table.
Specifically, \Cref{app-tab:texture,app-tab:svhn,app-tab:places365,app-tab:mnist,app-tab:cifar100,app-tab:tin} present the detailed per-dataset results corresponding to  \Cref{tab:avg-images,app-tab:all}. Additionally, detailed per-dataset results for the additional models are provided in \Cref{app-tab:vgg-texture,app-tab:vgg-svhn,app-tab:vgg-places,app-tab:vgg-mnist,app-tab:vgg-cifar100,app-tab:vgg-tin} and \Cref{app-tab:scale-texture,app-tab:scale-svhn,app-tab:scale-places,app-tab:scale-mnist,app-tab:scale-cifar100,app-tab:scale-tin}, corresponding to \Cref{app-tab:react-vgg} and \Cref{app-tab:scale-resnet}, respectively.
\begin{table}[!h]
\caption{Comparison of conformal outlier detection methods on six visual datasets for varying contamination rate $r$ and target type-I error level $\alpha$. All methods utilize the ReAct \citep{react} method with a pretrained VGG-19. Other details are as in \Cref{tab:avg-images} in the main manuscript.}
\label{app-tab:react-vgg}
\centering
\resizebox{\textwidth}{!}{
\begin{tabular}{l|ll|ll|ll}
\hline
& \multicolumn{6}{c}{Contamination rate} \\
\hline
 & \multicolumn{2}{c|}{1\%} & \multicolumn{2}{c|}{3\%} & \multicolumn{2}{c}{5\%} \\ \hline
 Method      & Power & Type-I Error & Power & Type-I Error & Power & Type-I Error \\ \hline
Standard & \bfseries \cellcolor{Green!30} 1.0 ($\pm$ 0.0317) & \cellcolor{white} 0.009 ($\pm$ 0.0004)  & \bfseries \cellcolor{Green!30} 1.0 ($\pm$ 0.0377) & \cellcolor{white} 0.006 ($\pm$ 0.0003)  & \bfseries \cellcolor{Green!30} 1.0 ($\pm$ 0.0422) & \cellcolor{white} 0.004 ($\pm$ 0.0002) \\

Oracle (infeasible) & \bfseries \cellcolor{Green!100} 1.095 ($\pm$ 0.033) & \cellcolor{white} 0.01 ($\pm$ 0.0004)  & \bfseries \cellcolor{Green!100} 1.368 ($\pm$ 0.0418) & \cellcolor{white} 0.01 ($\pm$ 0.0004)  & \bfseries \cellcolor{Green!100} 1.633 ($\pm$ 0.0488) & \cellcolor{white} 0.009 ($\pm$ 0.0003) \\

Naive-Trim (invalid) & \cellcolor{red!20} 1.464 ($\pm$ 0.0373) & \cellcolor{red!20} 0.018 ($\pm$ 0.0005)  & \cellcolor{red!20} 2.5 ($\pm$ 0.0531) & \cellcolor{red!20} 0.031 ($\pm$ 0.0006)  & \cellcolor{red!20} 3.447 ($\pm$ 0.0592) & \cellcolor{red!20} 0.04 ($\pm$ 0.0008) \\

Small-Clean & \cellcolor{white} 0.0 ($\pm$ 0.0) & \cellcolor{white} 0.0 ($\pm$ 0.0)  & \cellcolor{white} 0.0 ($\pm$ 0.0) & \cellcolor{white} 0.0 ($\pm$ 0.0)  & \cellcolor{white} 0.0 ($\pm$ 0.0) & \cellcolor{white} 0.0 ($\pm$ 0.0) \\

Label-Trim & \bfseries \cellcolor{Green!100} 1.095 ($\pm$ 0.033) & \cellcolor{white} 0.01 ($\pm$ 0.0004)  & \bfseries \cellcolor{Green!60} 1.364 ($\pm$ 0.0418) & \cellcolor{white} 0.01 ($\pm$ 0.0004)  & \bfseries \cellcolor{Green!60} 1.606 ($\pm$ 0.0487) & \cellcolor{white} 0.009 ($\pm$ 0.0003) \\
\end{tabular}
}
\subcaption{Target type-I error rate $\alpha=0.01$}

\resizebox{\textwidth}{!}{
\begin{tabular}{l|ll|ll|ll}
\hline
& \multicolumn{6}{c}{Contamination rate} \\
\hline
 & \multicolumn{2}{c|}{1\%} & \multicolumn{2}{c|}{3\%} & \multicolumn{2}{c}{5\%} \\ \hline
 Method      & Power & Type-I Error & Power & Type-I Error & Power & Type-I Error \\ \hline
Standard & \bfseries \cellcolor{Green!30} 1.0 ($\pm$ 0.0251) & \cellcolor{white} 0.018 ($\pm$ 0.0005)  & \bfseries \cellcolor{Green!30} 1.0 ($\pm$ 0.027) & \cellcolor{white} 0.014 ($\pm$ 0.0004)  & \bfseries \cellcolor{Green!30} 1.0 ($\pm$ 0.0291) & \cellcolor{white} 0.01 ($\pm$ 0.0004) \\

Oracle (infeasible) & \bfseries \cellcolor{Green!100} 1.068 ($\pm$ 0.0247) & 0.021 ($\pm$ 0.0005)  & \bfseries \cellcolor{Green!100} 1.221 ($\pm$ 0.0297) & \cellcolor{white} 0.02 ($\pm$ 0.0005)  & \bfseries \cellcolor{Green!100} 1.37 ($\pm$ 0.0323) & \cellcolor{white} 0.019 ($\pm$ 0.0006) \\

Naive-Trim (invalid) & \cellcolor{red!20} 1.252 ($\pm$ 0.0256) & \cellcolor{red!20} 0.028 ($\pm$ 0.0006)  & \cellcolor{red!20} 1.759 ($\pm$ 0.0326) & \cellcolor{red!20} 0.039 ($\pm$ 0.0007)  & \cellcolor{red!20} 2.212 ($\pm$ 0.0345) & \cellcolor{red!20} 0.048 ($\pm$ 0.0008) \\

Small-Clean & \cellcolor{white} 0.813 ($\pm$ 0.0593) & \cellcolor{white} 0.017 ($\pm$ 0.0019)  & \cellcolor{white} 0.593 ($\pm$ 0.0737) & \cellcolor{white} 0.011 ($\pm$ 0.0018)  & \cellcolor{white} 0.352 ($\pm$ 0.0663) & \cellcolor{white} 0.005 ($\pm$ 0.0013) \\

Label-Trim & \bfseries \cellcolor{Green!60} 1.056 ($\pm$ 0.0244) & \cellcolor{white} 0.02 ($\pm$ 0.0005)  & \bfseries \cellcolor{Green!60} 1.177 ($\pm$ 0.0292) & \cellcolor{white} 0.019 ($\pm$ 0.0005)  & \bfseries \cellcolor{Green!60} 1.253 ($\pm$ 0.0312) & \cellcolor{white} 0.016 ($\pm$ 0.0005) \\
\end{tabular}
}
\subcaption{Target type-I error rate $\alpha=0.02$}

\resizebox{\textwidth}{!}{
\begin{tabular}{l|ll|ll|ll}
\hline
& \multicolumn{6}{c}{Contamination rate} \\
\hline
 & \multicolumn{2}{c|}{1\%} & \multicolumn{2}{c|}{3\%} & \multicolumn{2}{c}{5\%} \\ \hline
 Method      & Power & Type-I Error & Power & Type-I Error & Power & Type-I Error \\ \hline
Standard & \bfseries \cellcolor{Green!30} 1.0 ($\pm$ 0.0202) & \cellcolor{white} 0.028 ($\pm$ 0.0006)  & \bfseries \cellcolor{Green!30} 1.0 ($\pm$ 0.0238) & \cellcolor{white} 0.022 ($\pm$ 0.0005)  & \bfseries \cellcolor{Green!30} 1.0 ($\pm$ 0.0247) & \cellcolor{white} 0.017 ($\pm$ 0.0005) \\

Oracle (infeasible) & \bfseries \cellcolor{Green!100} 1.074 ($\pm$ 0.02) & 0.031 ($\pm$ 0.0006)  & \bfseries \cellcolor{Green!100} 1.194 ($\pm$ 0.0256) & \cellcolor{white} 0.03 ($\pm$ 0.0007)  & \bfseries \cellcolor{Green!100} 1.318 ($\pm$ 0.0271) & \cellcolor{white} 0.029 ($\pm$ 0.0007) \\

Naive-Trim (invalid) & \cellcolor{red!20} 1.186 ($\pm$ 0.0202) & \cellcolor{red!20} 0.037 ($\pm$ 0.0007)  & \cellcolor{red!20} 1.51 ($\pm$ 0.0248) & \cellcolor{red!20} 0.048 ($\pm$ 0.0008)  & \cellcolor{red!20} 1.848 ($\pm$ 0.0272) & \cellcolor{red!20} 0.056 ($\pm$ 0.0009) \\

Small-Clean & \cellcolor{white} 0.709 ($\pm$ 0.045) & \cellcolor{white} 0.019 ($\pm$ 0.0018)  & \cellcolor{white} 0.85 ($\pm$ 0.0541) & \cellcolor{white} 0.021 ($\pm$ 0.002)  & \cellcolor{white} 0.932 ($\pm$ 0.0568) & \cellcolor{white} 0.019 ($\pm$ 0.002) \\

Label-Trim & \bfseries \cellcolor{Green!60} 1.048 ($\pm$ 0.0202) & \cellcolor{white} 0.03 ($\pm$ 0.0006)  & \bfseries \cellcolor{Green!60} 1.118 ($\pm$ 0.0248) & \cellcolor{white} 0.027 ($\pm$ 0.0006)  & \bfseries \cellcolor{Green!60} 1.167 ($\pm$ 0.026) & \cellcolor{white} 0.023 ($\pm$ 0.0006) \\
\end{tabular}
}
\subcaption{Target type-I error rate $\alpha=0.03$}

\end{table}

\begin{table}[!h]
\caption{Comparison of conformal outlier detection methods on six visual datasets for varying contamination rate $r$ and target type-I error level $\alpha$. All methods utilize the SCALE \citep{scale} method with a pretrained ResNet-18. Other details are as in \Cref{tab:avg-images} in the main manuscript.}
\label{app-tab:scale-resnet}
\centering
\resizebox{\textwidth}{!}{
\begin{tabular}{l|ll|ll|ll}
\hline
& \multicolumn{6}{c}{Contamination rate} \\
\hline
 & \multicolumn{2}{c|}{1\%} & \multicolumn{2}{c|}{3\%} & \multicolumn{2}{c}{5\%} \\ \hline
 Method      & Power & Type-I Error & Power & Type-I Error & Power & Type-I Error \\ \hline
Standard & \bfseries \cellcolor{Green!30} 1.0 ($\pm$ 0.0358) & \cellcolor{white} 0.009 ($\pm$ 0.0003)  & \bfseries \cellcolor{Green!30} 1.0 ($\pm$ 0.0375) & \cellcolor{white} 0.006 ($\pm$ 0.0003)  & \bfseries \cellcolor{Green!30} 1.0 ($\pm$ 0.0407) & \cellcolor{white} 0.004 ($\pm$ 0.0002) \\

Oracle (infeasible) & \bfseries \cellcolor{Green!100} 1.144 ($\pm$ 0.037) & \cellcolor{white} 0.01 ($\pm$ 0.0003)  & \bfseries \cellcolor{Green!100} 1.472 ($\pm$ 0.0474) & \cellcolor{white} 0.01 ($\pm$ 0.0003)  & \bfseries \cellcolor{Green!100} 1.708 ($\pm$ 0.0522) & \cellcolor{white} 0.01 ($\pm$ 0.0004) \\

Naive-Trim (invalid) & \cellcolor{red!20} 1.736 ($\pm$ 0.0414) & \cellcolor{red!20} 0.017 ($\pm$ 0.0004)  & \cellcolor{red!20} 2.93 ($\pm$ 0.0553) & \cellcolor{red!20} 0.028 ($\pm$ 0.0006)  & \cellcolor{red!20} 4.144 ($\pm$ 0.0614) & \cellcolor{red!20} 0.038 ($\pm$ 0.0007) \\

Small-Clean & \cellcolor{white} 0.0 ($\pm$ 0.0) & \cellcolor{white} 0.0 ($\pm$ 0.0)  & \cellcolor{white} 0.0 ($\pm$ 0.0) & \cellcolor{white} 0.0 ($\pm$ 0.0)  & \cellcolor{white} 0.0 ($\pm$ 0.0) & \cellcolor{white} 0.0 ($\pm$ 0.0) \\

Label-Trim & \bfseries \cellcolor{Green!100} 1.144 ($\pm$ 0.0371) & \cellcolor{white} 0.01 ($\pm$ 0.0003)  & \bfseries \cellcolor{Green!60} 1.456 ($\pm$ 0.0469) & \cellcolor{white} 0.01 ($\pm$ 0.0003)  & \bfseries \cellcolor{Green!60} 1.627 ($\pm$ 0.0501) & \cellcolor{white} 0.009 ($\pm$ 0.0003) \\
\end{tabular}
}
\subcaption{Target type-I error rate $\alpha=0.01$}

\resizebox{\textwidth}{!}{
\begin{tabular}{l|ll|ll|ll}
\hline
& \multicolumn{6}{c}{Contamination rate} \\
\hline
 & \multicolumn{2}{c|}{1\%} & \multicolumn{2}{c|}{3\%} & \multicolumn{2}{c}{5\%} \\ \hline
 Method      & Power & Type-I Error & Power & Type-I Error & Power & Type-I Error \\ \hline
Standard & \bfseries \cellcolor{Green!30} 1.0 ($\pm$ 0.0236) & \cellcolor{white} 0.018 ($\pm$ 0.0004)  & \bfseries \cellcolor{Green!30} 1.0 ($\pm$ 0.0267) & \cellcolor{white} 0.013 ($\pm$ 0.0004)  & \bfseries \cellcolor{Green!30} 1.0 ($\pm$ 0.0288) & \cellcolor{white} 0.01 ($\pm$ 0.0003) \\

Oracle (infeasible) & \bfseries \cellcolor{Green!100} 1.076 ($\pm$ 0.0244) & 0.021 ($\pm$ 0.0005)  & \bfseries \cellcolor{Green!100} 1.306 ($\pm$ 0.0279) & \cellcolor{white} 0.02 ($\pm$ 0.0005)  & \bfseries \cellcolor{Green!100} 1.583 ($\pm$ 0.0316) & \cellcolor{white} 0.02 ($\pm$ 0.0005) \\

Naive-Trim (invalid) & \cellcolor{red!20} 1.247 ($\pm$ 0.0253) & \cellcolor{red!20} 0.027 ($\pm$ 0.0006)  & \cellcolor{red!20} 1.86 ($\pm$ 0.0294) & \cellcolor{red!20} 0.036 ($\pm$ 0.0006)  & \cellcolor{red!20} 2.532 ($\pm$ 0.0335) & \cellcolor{red!20} 0.047 ($\pm$ 0.0008) \\

Small-Clean & \cellcolor{white} 0.831 ($\pm$ 0.0598) & \cellcolor{white} 0.018 ($\pm$ 0.0019)  & \cellcolor{white} 0.609 ($\pm$ 0.0765) & \cellcolor{white} 0.011 ($\pm$ 0.0017)  & \cellcolor{white} 0.348 ($\pm$ 0.0744) & \cellcolor{white} 0.005 ($\pm$ 0.0014) \\

Label-Trim & \bfseries \cellcolor{Green!60} 1.062 ($\pm$ 0.0243) & \cellcolor{white} 0.02 ($\pm$ 0.0005)  & \bfseries \cellcolor{Green!60} 1.226 ($\pm$ 0.0284) & \cellcolor{white} 0.018 ($\pm$ 0.0005)  & \bfseries \cellcolor{Green!60} 1.368 ($\pm$ 0.0315) & \cellcolor{white} 0.015 ($\pm$ 0.0004) \\
\end{tabular}
}
\subcaption{Target type-I error rate $\alpha=0.02$}

\resizebox{\textwidth}{!}{
\begin{tabular}{l|ll|ll|ll}
\hline
& \multicolumn{6}{c}{Contamination rate} \\
\hline
 & \multicolumn{2}{c|}{1\%} & \multicolumn{2}{c|}{3\%} & \multicolumn{2}{c}{5\%} \\ \hline
 Method      & Power & Type-I Error & Power & Type-I Error & Power & Type-I Error \\ \hline
Standard & \bfseries \cellcolor{Green!30} 1.0 ($\pm$ 0.0201) & \cellcolor{white} 0.027 ($\pm$ 0.0006)  & \bfseries \cellcolor{Green!30} 1.0 ($\pm$ 0.0212) & \cellcolor{white} 0.021 ($\pm$ 0.0005)  & \bfseries \cellcolor{Green!30} 1.0 ($\pm$ 0.022) & \cellcolor{white} 0.016 ($\pm$ 0.0004) \\

Oracle (infeasible) & \bfseries \cellcolor{Green!100} 1.082 ($\pm$ 0.0204) & 0.031 ($\pm$ 0.0006)  & \bfseries \cellcolor{Green!100} 1.233 ($\pm$ 0.0227) & \cellcolor{white} 0.029 ($\pm$ 0.0006)  & \bfseries \cellcolor{Green!100} 1.398 ($\pm$ 0.0237) & \cellcolor{white} 0.03 ($\pm$ 0.0006) \\

Naive-Trim (invalid) & \cellcolor{red!20} 1.19 ($\pm$ 0.0197) & \cellcolor{red!20} 0.036 ($\pm$ 0.0006)  & \cellcolor{red!20} 1.547 ($\pm$ 0.0229) & \cellcolor{red!20} 0.044 ($\pm$ 0.0007)  & \cellcolor{red!20} 1.9 ($\pm$ 0.0239) & \cellcolor{red!20} 0.055 ($\pm$ 0.0009) \\

Small-Clean & \cellcolor{white} 0.712 ($\pm$ 0.045) & \cellcolor{white} 0.019 ($\pm$ 0.0019)  & \cellcolor{white} 0.862 ($\pm$ 0.055) & \cellcolor{white} 0.02 ($\pm$ 0.002)  & \cellcolor{white} 0.971 ($\pm$ 0.0606) & \cellcolor{white} 0.021 ($\pm$ 0.0021) \\

Label-Trim & \bfseries \cellcolor{Green!60} 1.05 ($\pm$ 0.0204) & \cellcolor{white} 0.029 ($\pm$ 0.0006)  & \bfseries \cellcolor{Green!60} 1.116 ($\pm$ 0.022) & \cellcolor{white} 0.025 ($\pm$ 0.0006)  & \bfseries \cellcolor{Green!60} 1.178 ($\pm$ 0.0224) & \cellcolor{white} 0.022 ($\pm$ 0.0005) \\
\end{tabular}
}
\subcaption{Target type-I error rate $\alpha=0.03$}

\end{table}

\begin{table}[!htb]
\caption{Comparison of conformal outlier detection methods on Texture dataset (outliers) and CIFAR-10 dataset (inliers) for varying contamination rate $r$ and target type-I error level $\alpha$. All methods utilize the ReAct \citep{react} method with a pretrained ResNet-18. The empirical power is presented relative to the \texttt{Standard} method (higher is better). Results are averaged across 100 random splits of the data, with standard errors presented in parentheses.}
\label{app-tab:texture}
\centering
\resizebox{\textwidth}{!}{
\begin{tabular}{l|ll|ll|ll}
\hline
& \multicolumn{6}{c}{Contamination rate} \\
\hline
 & \multicolumn{2}{c|}{1\%} & \multicolumn{2}{c|}{3\%} & \multicolumn{2}{c}{5\%} \\ \hline
 Method      & Power & Type-I Error & Power & Type-I Error & Power & Type-I Error \\ \hline
Standard & \bfseries \cellcolor{Green!30} 1.0 ($\pm$ 0.0295) & \cellcolor{white} 0.008 ($\pm$ 0.0003)  & \bfseries \cellcolor{Green!30} 1.0 ($\pm$ 0.0369) & \cellcolor{white} 0.006 ($\pm$ 0.0003)  & \bfseries \cellcolor{Green!30} 1.0 ($\pm$ 0.0405) & \cellcolor{white} 0.004 ($\pm$ 0.0002) \\

Oracle (infeasible) & \bfseries \cellcolor{Green!100} 1.173 ($\pm$ 0.0295) & \cellcolor{white} 0.01 ($\pm$ 0.0003)  & \bfseries \cellcolor{Green!100} 1.455 ($\pm$ 0.0432) & \cellcolor{white} 0.01 ($\pm$ 0.0003)  & \bfseries \cellcolor{Green!100} 1.824 ($\pm$ 0.051) & \cellcolor{white} 0.009 ($\pm$ 0.0004) \\

Naive-Trim (invalid) & \cellcolor{red!20} 1.722 ($\pm$ 0.035) & \cellcolor{red!20} 0.017 ($\pm$ 0.0004)  & \cellcolor{red!20} 2.668 ($\pm$ 0.0428) & \cellcolor{red!20} 0.028 ($\pm$ 0.0006)  & \cellcolor{red!20} 4.008 ($\pm$ 0.0599) & \cellcolor{red!20} 0.037 ($\pm$ 0.0007) \\

Small-Clean & \cellcolor{white} 0.0 ($\pm$ 0.0) & \cellcolor{white} 0.0 ($\pm$ 0.0)  & \cellcolor{white} 0.0 ($\pm$ 0.0) & \cellcolor{white} 0.0 ($\pm$ 0.0)  & \cellcolor{white} 0.0 ($\pm$ 0.0) & \cellcolor{white} 0.0 ($\pm$ 0.0) \\

Label-Trim & \bfseries \cellcolor{Green!100} 1.173 ($\pm$ 0.0295) & \cellcolor{white} 0.01 ($\pm$ 0.0003)  & \bfseries \cellcolor{Green!60} 1.448 ($\pm$ 0.0428) & \cellcolor{white} 0.01 ($\pm$ 0.0003)  & \bfseries \cellcolor{Green!60} 1.73 ($\pm$ 0.0477) & \cellcolor{white} 0.009 ($\pm$ 0.0003) \\
\hline
Standard Power & \cellcolor{white} 0.194 ($\pm$ 0.0057) &  & \cellcolor{white} 0.159 ($\pm$ 0.0059) &  & \cellcolor{white} 0.123 ($\pm$ 0.005) & \\
\end{tabular}
}
\subcaption{Target type-I error rate $\alpha=0.01$}

\resizebox{\textwidth}{!}{
\begin{tabular}{l|ll|ll|ll}
\hline
& \multicolumn{6}{c}{Contamination rate} \\
\hline
 & \multicolumn{2}{c|}{1\%} & \multicolumn{2}{c|}{3\%} & \multicolumn{2}{c}{5\%} \\ \hline
 Method      & Power & Type-I Error & Power & Type-I Error & Power & Type-I Error \\ \hline
Standard & \bfseries \cellcolor{Green!30} 1.0 ($\pm$ 0.0202) & \cellcolor{white} 0.017 ($\pm$ 0.0005)  & \bfseries \cellcolor{Green!30} 1.0 ($\pm$ 0.0239) & \cellcolor{white} 0.012 ($\pm$ 0.0004)  & \bfseries \cellcolor{Green!30} 1.0 ($\pm$ 0.0264) & \cellcolor{white} 0.009 ($\pm$ 0.0003) \\

Oracle (infeasible) & \bfseries \cellcolor{Green!100} 1.109 ($\pm$ 0.0199) & 0.021 ($\pm$ 0.0005)  & \bfseries \cellcolor{Green!100} 1.315 ($\pm$ 0.0259) & \cellcolor{white} 0.02 ($\pm$ 0.0005)  & \bfseries \cellcolor{Green!100} 1.623 ($\pm$ 0.0346) & \cellcolor{white} 0.02 ($\pm$ 0.0006) \\

Naive-Trim (invalid) & \cellcolor{red!20} 1.24 ($\pm$ 0.0193) & \cellcolor{red!20} 0.026 ($\pm$ 0.0006)  & \cellcolor{red!20} 1.781 ($\pm$ 0.026) & \cellcolor{red!20} 0.035 ($\pm$ 0.0007)  & \cellcolor{red!20} 2.431 ($\pm$ 0.0334) & \cellcolor{red!20} 0.045 ($\pm$ 0.0007) \\

Small-Clean & \cellcolor{white} 0.819 ($\pm$ 0.0599) & \cellcolor{white} 0.018 ($\pm$ 0.002)  & \cellcolor{white} 0.702 ($\pm$ 0.0824) & \cellcolor{white} 0.014 ($\pm$ 0.0025)  & \cellcolor{white} 0.478 ($\pm$ 0.0842) & \cellcolor{white} 0.007 ($\pm$ 0.0016) \\

Label-Trim & \bfseries \cellcolor{Green!60} 1.089 ($\pm$ 0.0199) & \cellcolor{white} 0.02 ($\pm$ 0.0005)  & \bfseries \cellcolor{Green!60} 1.219 ($\pm$ 0.0253) & \cellcolor{white} 0.017 ($\pm$ 0.0004)  & \bfseries \cellcolor{Green!60} 1.353 ($\pm$ 0.0303) & \cellcolor{white} 0.014 ($\pm$ 0.0005) \\
\hline
Standard Power & \cellcolor{white} 0.337 ($\pm$ 0.0068) &  & \cellcolor{white} 0.272 ($\pm$ 0.0065) &   & \cellcolor{white} 0.224 ($\pm$ 0.0059) &  \\
\end{tabular}
}
\subcaption{Target type-I error rate $\alpha=0.02$}

\resizebox{\textwidth}{!}{
\begin{tabular}{l|ll|ll|ll}
\hline
& \multicolumn{6}{c}{Contamination rate} \\
\hline
 & \multicolumn{2}{c|}{1\%} & \multicolumn{2}{c|}{3\%} & \multicolumn{2}{c}{5\%} \\ \hline
 Method      & Power & Type-I Error & Power & Type-I Error & Power & Type-I Error \\ \hline
Standard & \bfseries \cellcolor{Green!30} 1.0 ($\pm$ 0.0155) & \cellcolor{white} 0.027 ($\pm$ 0.0006)  & \bfseries \cellcolor{Green!30} 1.0 ($\pm$ 0.0197) & \cellcolor{white} 0.02 ($\pm$ 0.0005)  & \cellcolor{white} 1.0 ($\pm$ 0.0215) & \cellcolor{white} 0.015 ($\pm$ 0.0005) \\

Oracle (infeasible) & \bfseries \cellcolor{Green!100} 1.068 ($\pm$ 0.0152) & \cellcolor{white} 0.03 ($\pm$ 0.0006)  & \bfseries \cellcolor{Green!100} 1.224 ($\pm$ 0.0191) & \cellcolor{white} 0.03 ($\pm$ 0.0006)  & \bfseries \cellcolor{Green!100} 1.427 ($\pm$ 0.0249) & \cellcolor{white} 0.03 ($\pm$ 0.0007) \\

Naive-Trim (invalid) & \cellcolor{red!20} 1.15 ($\pm$ 0.0152) & \cellcolor{red!20} 0.036 ($\pm$ 0.0006)  & \cellcolor{red!20} 1.514 ($\pm$ 0.0178) & \cellcolor{red!20} 0.043 ($\pm$ 0.0007)  & \cellcolor{red!20} 1.919 ($\pm$ 0.0229) & \cellcolor{red!20} 0.052 ($\pm$ 0.0008) \\

Small-Clean & \cellcolor{white} 0.743 ($\pm$ 0.0441) & \cellcolor{white} 0.02 ($\pm$ 0.002)  & \cellcolor{white} 0.883 ($\pm$ 0.0549) & \cellcolor{white} 0.021 ($\pm$ 0.0025)  & \bfseries \cellcolor{Green!30} 1.062 ($\pm$ 0.0665) & \cellcolor{white} 0.024 ($\pm$ 0.0027) \\

Label-Trim & \bfseries \cellcolor{Green!60} 1.046 ($\pm$ 0.0153) & \cellcolor{white} 0.029 ($\pm$ 0.0006)  & \bfseries \cellcolor{Green!60} 1.13 ($\pm$ 0.019) & \cellcolor{white} 0.025 ($\pm$ 0.0005)  & \bfseries \cellcolor{Green!60} 1.215 ($\pm$ 0.0245) & \cellcolor{white} 0.021 ($\pm$ 0.0005) \\
\hline
Standard Power & \cellcolor{white} 0.421 ($\pm$ 0.0065) &   & \cellcolor{white} 0.357 ($\pm$ 0.007) &   & \cellcolor{white} 0.309 ($\pm$ 0.0067) &  \\
\end{tabular}
}
\subcaption{Target type-I error rate $\alpha=0.03$}

\end{table}

\begin{table}[!htb]
\caption{Comparison of conformal outlier detection methods on SVHN dataset (outliers) and CIFAR-10 dataset (inliers) for varying contamination rate $r$ and target type-I error level $\alpha$. All methods utilize the ReAct \citep{react} method with a pretrained ResNet-18. The empirical power is presented relative to the \texttt{Standard} method (higher is better). Results are averaged across 100 random splits of the data, with standard errors presented in parentheses.}
\label{app-tab:svhn}
\centering
\resizebox{\textwidth}{!}{
\begin{tabular}{l|ll|ll|ll}
\hline
& \multicolumn{6}{c}{Contamination rate} \\
\hline
 & \multicolumn{2}{c|}{1\%} & \multicolumn{2}{c|}{3\%} & \multicolumn{2}{c}{5\%} \\ \hline
 Method      & Power & Type-I Error & Power & Type-I Error & Power & Type-I Error \\ \hline
Standard & \bfseries \cellcolor{Green!30} 1.0 ($\pm$ 0.0255) & \cellcolor{white} 0.008 ($\pm$ 0.0003)  & \bfseries \cellcolor{Green!30} 1.0 ($\pm$ 0.0316) & \cellcolor{white} 0.004 ($\pm$ 0.0002)  & \bfseries \cellcolor{Green!30} 1.0 ($\pm$ 0.0388) & \cellcolor{white} 0.002 ($\pm$ 0.0002) \\

Oracle (infeasible) & \bfseries \cellcolor{Green!100} 1.192 ($\pm$ 0.0262) & \cellcolor{white} 0.01 ($\pm$ 0.0003)  & \bfseries \cellcolor{Green!100} 1.654 ($\pm$ 0.0368) & \cellcolor{white} 0.01 ($\pm$ 0.0003)  & \bfseries \cellcolor{Green!100} 2.208 ($\pm$ 0.052) & \cellcolor{white} 0.009 ($\pm$ 0.0004) \\

Naive-Trim (invalid) & \cellcolor{red!20} 1.573 ($\pm$ 0.0271) & \cellcolor{red!20} 0.016 ($\pm$ 0.0004)  & \cellcolor{red!20} 2.689 ($\pm$ 0.0353) & \cellcolor{red!20} 0.025 ($\pm$ 0.0006)  & \cellcolor{red!20} 4.073 ($\pm$ 0.0533) & \cellcolor{red!20} 0.033 ($\pm$ 0.0007) \\

Small-Clean & \cellcolor{white} 0.0 ($\pm$ 0.0) & \cellcolor{white} 0.0 ($\pm$ 0.0)  & \cellcolor{white} 0.0 ($\pm$ 0.0) & \cellcolor{white} 0.0 ($\pm$ 0.0)  & \cellcolor{white} 0.0 ($\pm$ 0.0) & \cellcolor{white} 0.0 ($\pm$ 0.0) \\

Label-Trim & \bfseries \cellcolor{Green!100} 1.192 ($\pm$ 0.0262) & \cellcolor{white} 0.01 ($\pm$ 0.0003)  & \bfseries \cellcolor{Green!60} 1.601 ($\pm$ 0.0367) & \cellcolor{white} 0.009 ($\pm$ 0.0003)  & \bfseries \cellcolor{Green!60} 1.895 ($\pm$ 0.0448) & \cellcolor{white} 0.007 ($\pm$ 0.0003) \\
\hline
Standard Power & \cellcolor{white} 0.271 ($\pm$ 0.0069) &  & \cellcolor{white} 0.191 ($\pm$ 0.006) &  & \cellcolor{white} 0.137 ($\pm$ 0.0053) & \\
\end{tabular}
}
\subcaption{Target type-I error rate $\alpha=0.01$}

\resizebox{\textwidth}{!}{
\begin{tabular}{l|ll|ll|ll}
\hline
& \multicolumn{6}{c}{Contamination rate} \\
\hline
 & \multicolumn{2}{c|}{1\%} & \multicolumn{2}{c|}{3\%} & \multicolumn{2}{c}{5\%} \\ \hline
 Method      & Power & Type-I Error & Power & Type-I Error & Power & Type-I Error \\ \hline
Standard & \bfseries \cellcolor{Green!30} 1.0 ($\pm$ 0.0168) & \cellcolor{white} 0.016 ($\pm$ 0.0004)  & \bfseries \cellcolor{Green!30} 1.0 ($\pm$ 0.0218) & \cellcolor{white} 0.01 ($\pm$ 0.0003)  & \bfseries \cellcolor{Green!30} 1.0 ($\pm$ 0.0245) & \cellcolor{white} 0.007 ($\pm$ 0.0003) \\

Oracle (infeasible) & \bfseries \cellcolor{Green!100} 1.096 ($\pm$ 0.0173) & 0.021 ($\pm$ 0.0005)  & \bfseries \cellcolor{Green!100} 1.404 ($\pm$ 0.0211) & \cellcolor{white} 0.02 ($\pm$ 0.0005)  & \bfseries \cellcolor{Green!100} 1.828 ($\pm$ 0.0304) & \cellcolor{white} 0.02 ($\pm$ 0.0006) \\

Naive-Trim (invalid) & \cellcolor{red!20} 1.2 ($\pm$ 0.0175) & \cellcolor{red!20} 0.026 ($\pm$ 0.0005)  & \cellcolor{red!20} 1.721 ($\pm$ 0.0216) & \cellcolor{red!20} 0.032 ($\pm$ 0.0006)  & \cellcolor{red!20} 2.42 ($\pm$ 0.0281) & \cellcolor{red!20} 0.041 ($\pm$ 0.0007) \\

Small-Clean & \cellcolor{white} 0.833 ($\pm$ 0.053) & \cellcolor{white} 0.02 ($\pm$ 0.0026)  & \cellcolor{white} 0.598 ($\pm$ 0.0754) & \cellcolor{white} 0.01 ($\pm$ 0.0016)  & \cellcolor{white} 0.509 ($\pm$ 0.0929) & \cellcolor{white} 0.008 ($\pm$ 0.0019) \\

Label-Trim & \bfseries \cellcolor{Green!60} 1.08 ($\pm$ 0.0176) & \cellcolor{white} 0.02 ($\pm$ 0.0005)  & \bfseries \cellcolor{Green!60} 1.277 ($\pm$ 0.0228) & \cellcolor{white} 0.016 ($\pm$ 0.0004)  & \bfseries \cellcolor{Green!60} 1.469 ($\pm$ 0.0288) & \cellcolor{white} 0.013 ($\pm$ 0.0004) \\
\hline
Standard Power & \cellcolor{white} 0.429 ($\pm$ 0.0072) &  & \cellcolor{white} 0.332 ($\pm$ 0.0072) &  & \cellcolor{white} 0.25 ($\pm$ 0.0061) &  \\
\end{tabular}
}
\subcaption{Target type-I error rate $\alpha=0.02$}

\resizebox{\textwidth}{!}{
\begin{tabular}{l|ll|ll|ll}
\hline
& \multicolumn{6}{c}{Contamination rate} \\
\hline
 & \multicolumn{2}{c|}{1\%} & \multicolumn{2}{c|}{3\%} & \multicolumn{2}{c}{5\%} \\ \hline
 Method      & Power & Type-I Error & Power & Type-I Error & Power & Type-I Error \\ \hline
Standard & \bfseries \cellcolor{Green!30} 1.0 ($\pm$ 0.0145) & \cellcolor{white} 0.026 ($\pm$ 0.0005)  & \bfseries \cellcolor{Green!30} 1.0 ($\pm$ 0.0169) & \cellcolor{white} 0.017 ($\pm$ 0.0004)  & \cellcolor{white} 1.0 ($\pm$ 0.0204) & \cellcolor{white} 0.012 ($\pm$ 0.0004) \\

Oracle (infeasible) & \bfseries \cellcolor{Green!100} 1.063 ($\pm$ 0.014) & \cellcolor{white} 0.03 ($\pm$ 0.0006)  & \bfseries \cellcolor{Green!100} 1.246 ($\pm$ 0.0158) & \cellcolor{white} 0.029 ($\pm$ 0.0006)  & \bfseries \cellcolor{Green!100} 1.513 ($\pm$ 0.0212) & \cellcolor{white} 0.03 ($\pm$ 0.0007) \\

Naive-Trim (invalid) & \cellcolor{red!20} 1.115 ($\pm$ 0.0137) & \cellcolor{red!20} 0.035 ($\pm$ 0.0006)  & \cellcolor{red!20} 1.395 ($\pm$ 0.0148) & \cellcolor{red!20} 0.041 ($\pm$ 0.0007)  & \cellcolor{red!20} 1.825 ($\pm$ 0.0199) & \cellcolor{red!20} 0.049 ($\pm$ 0.0008) \\

Small-Clean & \cellcolor{white} 0.73 ($\pm$ 0.0419) & \cellcolor{white} 0.021 ($\pm$ 0.0027)  & \cellcolor{white} 0.948 ($\pm$ 0.0447) & \cellcolor{white} 0.022 ($\pm$ 0.0019)  & \bfseries \cellcolor{Green!30} 1.092 ($\pm$ 0.0625) & \cellcolor{white} 0.022 ($\pm$ 0.0023) \\

Label-Trim & \bfseries \cellcolor{Green!60} 1.042 ($\pm$ 0.0143) & \cellcolor{white} 0.029 ($\pm$ 0.0006)  & \bfseries \cellcolor{Green!60} 1.144 ($\pm$ 0.0151) & \cellcolor{white} 0.024 ($\pm$ 0.0005)  & \bfseries \cellcolor{Green!60} 1.266 ($\pm$ 0.0209) & \cellcolor{white} 0.019 ($\pm$ 0.0006) \\
\hline
Standard Power & \cellcolor{white} 0.517 ($\pm$ 0.0075) &  & \cellcolor{white} 0.44 ($\pm$ 0.0074) &  & \cellcolor{white} 0.355 ($\pm$ 0.0072) &  \\
\end{tabular}
}
\subcaption{Target type-I error rate $\alpha=0.03$}

\end{table}

\begin{table}[!htb]
\caption{Comparison of conformal outlier detection methods on Places365 dataset (outliers) and CIFAR-10 dataset (inliers) for varying contamination rate $r$ and target type-I error level $\alpha$. All methods utilize the ReAct \citep{react} method with a pretrained ResNet-18. The empirical power is presented relative to the \texttt{Standard} method (higher is better). Results are averaged across 100 random splits of the data, with standard errors presented in parentheses.}
\label{app-tab:places365}
\centering
\resizebox{\textwidth}{!}{
\begin{tabular}{l|ll|ll|ll}
\hline
& \multicolumn{6}{c}{Contamination rate} \\
\hline
 & \multicolumn{2}{c|}{1\%} & \multicolumn{2}{c|}{3\%} & \multicolumn{2}{c}{5\%} \\ \hline
 Method      & Power & Type-I Error & Power & Type-I Error & Power & Type-I Error \\ \hline
Standard & \bfseries \cellcolor{Green!30} 1.0 ($\pm$ 0.033) & \cellcolor{white} 0.009 ($\pm$ 0.0003)  & \bfseries \cellcolor{Green!30} 1.0 ($\pm$ 0.0364) & \cellcolor{white} 0.006 ($\pm$ 0.0003)  & \bfseries \cellcolor{Green!30} 1.0 ($\pm$ 0.0431) & \cellcolor{white} 0.004 ($\pm$ 0.0002) \\

Oracle (infeasible) & \bfseries \cellcolor{Green!100} 1.139 ($\pm$ 0.0339) & \cellcolor{white} 0.01 ($\pm$ 0.0003)  & \bfseries \cellcolor{Green!100} 1.47 ($\pm$ 0.0423) & \cellcolor{white} 0.01 ($\pm$ 0.0003)  & \bfseries \cellcolor{Green!100} 1.839 ($\pm$ 0.0558) & \cellcolor{white} 0.009 ($\pm$ 0.0004) \\

Naive-Trim (invalid) & \cellcolor{red!20} 1.624 ($\pm$ 0.0339) & \cellcolor{red!20} 0.017 ($\pm$ 0.0004)  & \cellcolor{red!20} 2.73 ($\pm$ 0.0487) & \cellcolor{red!20} 0.028 ($\pm$ 0.0006)  & \cellcolor{red!20} 4.12 ($\pm$ 0.069) & \cellcolor{red!20} 0.039 ($\pm$ 0.0007) \\

Small-Clean & \cellcolor{white} 0.0 ($\pm$ 0.0) & \cellcolor{white} 0.0 ($\pm$ 0.0)  & \cellcolor{white} 0.0 ($\pm$ 0.0) & \cellcolor{white} 0.0 ($\pm$ 0.0)  & \cellcolor{white} 0.0 ($\pm$ 0.0) & \cellcolor{white} 0.0 ($\pm$ 0.0) \\

Label-Trim & \bfseries \cellcolor{Green!100} 1.139 ($\pm$ 0.0339) & \cellcolor{white} 0.01 ($\pm$ 0.0003)  & \bfseries \cellcolor{Green!100} 1.466 ($\pm$ 0.0421) & \cellcolor{white} 0.01 ($\pm$ 0.0003)  & \bfseries \cellcolor{Green!60} 1.707 ($\pm$ 0.0531) & \cellcolor{white} 0.009 ($\pm$ 0.0003) \\
\hline
Standard Power & \cellcolor{white} 0.193 ($\pm$ 0.0064) &  & \cellcolor{white} 0.148 ($\pm$ 0.0054) &  & \cellcolor{white} 0.115 ($\pm$ 0.005) &  \\
\end{tabular}
}
\subcaption{Target type-I error rate $\alpha=0.01$}

\resizebox{\textwidth}{!}{
\begin{tabular}{l|ll|ll|ll}
\hline
& \multicolumn{6}{c}{Contamination rate} \\
\hline
 & \multicolumn{2}{c|}{1\%} & \multicolumn{2}{c|}{3\%} & \multicolumn{2}{c}{5\%} \\ \hline
 Method      & Power & Type-I Error & Power & Type-I Error & Power & Type-I Error \\ \hline
Standard & \bfseries \cellcolor{Green!30} 1.0 ($\pm$ 0.0209) & \cellcolor{white} 0.017 ($\pm$ 0.0005)  & \bfseries \cellcolor{Green!30} 1.0 ($\pm$ 0.0241) & \cellcolor{white} 0.013 ($\pm$ 0.0004)  & \bfseries \cellcolor{Green!30} 1.0 ($\pm$ 0.0298) & \cellcolor{white} 0.009 ($\pm$ 0.0003) \\

Oracle (infeasible) & \bfseries \cellcolor{Green!100} 1.091 ($\pm$ 0.0224) & 0.021 ($\pm$ 0.0005)  & \bfseries \cellcolor{Green!100} 1.271 ($\pm$ 0.0258) & \cellcolor{white} 0.02 ($\pm$ 0.0005)  & \bfseries \cellcolor{Green!100} 1.599 ($\pm$ 0.0308) & \cellcolor{white} 0.02 ($\pm$ 0.0006) \\

Naive-Trim (invalid) & \cellcolor{red!20} 1.246 ($\pm$ 0.0225) & \cellcolor{red!20} 0.027 ($\pm$ 0.0006)  & \cellcolor{red!20} 1.753 ($\pm$ 0.0277) & \cellcolor{red!20} 0.036 ($\pm$ 0.0007)  & \cellcolor{red!20} 2.462 ($\pm$ 0.0376) & \cellcolor{red!20} 0.046 ($\pm$ 0.0007) \\

Small-Clean & \cellcolor{white} 0.799 ($\pm$ 0.0632) & \cellcolor{white} 0.018 ($\pm$ 0.0021)  & \cellcolor{white} 0.549 ($\pm$ 0.0716) & \cellcolor{white} 0.01 ($\pm$ 0.0016)  & \cellcolor{white} 0.255 ($\pm$ 0.0595) & \cellcolor{white} 0.003 ($\pm$ 0.0009) \\

Label-Trim & \bfseries \cellcolor{Green!60} 1.071 ($\pm$ 0.0219) & \cellcolor{white} 0.02 ($\pm$ 0.0005)  & \bfseries \cellcolor{Green!60} 1.187 ($\pm$ 0.0247) & \cellcolor{white} 0.017 ($\pm$ 0.0004)  & \bfseries \cellcolor{Green!60} 1.361 ($\pm$ 0.0306) & \cellcolor{white} 0.015 ($\pm$ 0.0005) \\
\hline
Standard Power & \cellcolor{white} 0.316 ($\pm$ 0.0066) &  & \cellcolor{white} 0.261 ($\pm$ 0.0063) &  & \cellcolor{white} 0.213 ($\pm$ 0.0064) &  \\
\end{tabular}
}
\subcaption{Target type-I error rate $\alpha=0.02$}

\resizebox{\textwidth}{!}{
\begin{tabular}{l|ll|ll|ll}
\hline
& \multicolumn{6}{c}{Contamination rate} \\
\hline
 & \multicolumn{2}{c|}{1\%} & \multicolumn{2}{c|}{3\%} & \multicolumn{2}{c}{5\%} \\ \hline
 Method      & Power & Type-I Error & Power & Type-I Error & Power & Type-I Error \\ \hline
Standard & \bfseries \cellcolor{Green!30} 1.0 ($\pm$ 0.018) & \cellcolor{white} 0.027 ($\pm$ 0.0006)  & \bfseries \cellcolor{Green!30} 1.0 ($\pm$ 0.0204) & \cellcolor{white} 0.02 ($\pm$ 0.0005)  & \cellcolor{white} 1.0 ($\pm$ 0.021) & \cellcolor{white} 0.015 ($\pm$ 0.0005) \\

Oracle (infeasible) & \bfseries \cellcolor{Green!100} 1.055 ($\pm$ 0.0189) & \cellcolor{white} 0.03 ($\pm$ 0.0006)  & \bfseries \cellcolor{Green!100} 1.231 ($\pm$ 0.0213) & \cellcolor{white} 0.029 ($\pm$ 0.0006)  & \bfseries \cellcolor{Green!100} 1.403 ($\pm$ 0.025) & \cellcolor{white} 0.03 ($\pm$ 0.0007) \\

Naive-Trim (invalid) & \cellcolor{red!20} 1.145 ($\pm$ 0.0199) & \cellcolor{red!20} 0.036 ($\pm$ 0.0006)  & \cellcolor{red!20} 1.496 ($\pm$ 0.0217) & \cellcolor{red!20} 0.044 ($\pm$ 0.0007)  & \cellcolor{red!20} 1.883 ($\pm$ 0.0261) & \cellcolor{red!20} 0.054 ($\pm$ 0.0008) \\

Small-Clean & \cellcolor{white} 0.746 ($\pm$ 0.0469) & \cellcolor{white} 0.021 ($\pm$ 0.002)  & \cellcolor{white} 0.833 ($\pm$ 0.0504) & \cellcolor{white} 0.018 ($\pm$ 0.0018)  & \bfseries \cellcolor{Green!30} 1.029 ($\pm$ 0.0593) & \cellcolor{white} 0.022 ($\pm$ 0.0022) \\

Label-Trim & \bfseries \cellcolor{Green!60} 1.038 ($\pm$ 0.0188) & \cellcolor{white} 0.029 ($\pm$ 0.0006)  & \bfseries \cellcolor{Green!60} 1.149 ($\pm$ 0.0211) & \cellcolor{white} 0.025 ($\pm$ 0.0006)  & \bfseries \cellcolor{Green!60} 1.186 ($\pm$ 0.0228) & \cellcolor{white} 0.022 ($\pm$ 0.0006) \\
\hline
Standard Power & \cellcolor{white} 0.395 ($\pm$ 0.0071) &  & \cellcolor{white} 0.335 ($\pm$ 0.0068) &  & \cellcolor{white} 0.302 ($\pm$ 0.0063) &  \\
\end{tabular}
}
\subcaption{Target type-I error rate $\alpha=0.03$}

\end{table}
\begin{table}[!htb]
\caption{Comparison of conformal outlier detection methods on MNIST dataset (outliers) and CIFAR-10 dataset (inliers) for varying contamination rate $r$ and target type-I error level $\alpha$. All methods utilize the ReAct \citep{react} method with a pretrained ResNet-18. The empirical power is presented relative to the \texttt{Standard} method (higher is better). Results are averaged across 100 random splits of the data, with standard errors presented in parentheses.}
\label{app-tab:mnist}
\centering
\resizebox{\textwidth}{!}{
\begin{tabular}{l|ll|ll|ll}
\hline
& \multicolumn{6}{c}{Contamination rate} \\
\hline
 & \multicolumn{2}{c|}{1\%} & \multicolumn{2}{c|}{3\%} & \multicolumn{2}{c}{5\%} \\ \hline
 Method      & Power & Type-I Error & Power & Type-I Error & Power & Type-I Error \\ \hline
Standard & \bfseries \cellcolor{Green!30} 1.0 ($\pm$ 0.0248) & \cellcolor{white} 0.008 ($\pm$ 0.0003)  & \bfseries \cellcolor{Green!30} 1.0 ($\pm$ 0.0283) & \cellcolor{white} 0.004 ($\pm$ 0.0002)  & \bfseries \cellcolor{Green!30} 1.0 ($\pm$ 0.0363) & \cellcolor{white} 0.003 ($\pm$ 0.0002) \\

Oracle (infeasible) & \bfseries \cellcolor{Green!100} 1.241 ($\pm$ 0.0291) & \cellcolor{white} 0.01 ($\pm$ 0.0003)  & \bfseries \cellcolor{Green!100} 1.758 ($\pm$ 0.0352) & \cellcolor{white} 0.01 ($\pm$ 0.0003)  & \bfseries \cellcolor{Green!100} 2.554 ($\pm$ 0.0601) & \cellcolor{white} 0.009 ($\pm$ 0.0004) \\

Naive-Trim (invalid) & \cellcolor{red!20} 1.629 ($\pm$ 0.0292) & \cellcolor{red!20} 0.016 ($\pm$ 0.0004)  & \cellcolor{red!20} 2.722 ($\pm$ 0.0363) & \cellcolor{red!20} 0.023 ($\pm$ 0.0005)  & \cellcolor{red!20} 4.718 ($\pm$ 0.0552) & \cellcolor{red!20} 0.03 ($\pm$ 0.0006) \\

Small-Clean & \cellcolor{white} 0.0 ($\pm$ 0.0) & \cellcolor{white} 0.0 ($\pm$ 0.0)  & \cellcolor{white} 0.0 ($\pm$ 0.0) & \cellcolor{white} 0.0 ($\pm$ 0.0)  & \cellcolor{white} 0.0 ($\pm$ 0.0) & \cellcolor{white} 0.0 ($\pm$ 0.0) \\

Label-Trim & \bfseries \cellcolor{Green!100} 1.241 ($\pm$ 0.0291) & \cellcolor{white} 0.01 ($\pm$ 0.0003)  & \bfseries \cellcolor{Green!60} 1.636 ($\pm$ 0.0331) & \cellcolor{white} 0.009 ($\pm$ 0.0003)  & \bfseries \cellcolor{Green!60} 2.113 ($\pm$ 0.0555) & \cellcolor{white} 0.007 ($\pm$ 0.0003) \\
\hline
Standard Power & \cellcolor{white} 0.282 ($\pm$ 0.007) &  & \cellcolor{white} 0.208 ($\pm$ 0.0059) &  & \cellcolor{white} 0.131 ($\pm$ 0.0048) & \\
\end{tabular}
}
\subcaption{Target type-I error rate $\alpha=0.01$}

\resizebox{\textwidth}{!}{
\begin{tabular}{l|ll|ll|ll}
\hline
& \multicolumn{6}{c}{Contamination rate} \\
\hline
 & \multicolumn{2}{c|}{1\%} & \multicolumn{2}{c|}{3\%} & \multicolumn{2}{c}{5\%} \\ \hline
 Method      & Power & Type-I Error & Power & Type-I Error & Power & Type-I Error \\ \hline
Standard & \bfseries \cellcolor{Green!30} 1.0 ($\pm$ 0.0173) & \cellcolor{white} 0.016 ($\pm$ 0.0004)  & \bfseries \cellcolor{Green!30} 1.0 ($\pm$ 0.0205) & \cellcolor{white} 0.01 ($\pm$ 0.0003)  & \bfseries \cellcolor{Green!30} 1.0 ($\pm$ 0.027) & \cellcolor{white} 0.007 ($\pm$ 0.0003) \\

Oracle (infeasible) & \bfseries \cellcolor{Green!100} 1.117 ($\pm$ 0.0179) & 0.021 ($\pm$ 0.0005)  & \bfseries \cellcolor{Green!100} 1.452 ($\pm$ 0.0209) & \cellcolor{white} 0.02 ($\pm$ 0.0005)  & \bfseries \cellcolor{Green!100} 1.971 ($\pm$ 0.0297) & \cellcolor{white} 0.02 ($\pm$ 0.0006) \\

Naive-Trim (invalid) & \cellcolor{red!20} 1.231 ($\pm$ 0.0179) & \cellcolor{red!20} 0.025 ($\pm$ 0.0005)  & \cellcolor{red!20} 1.759 ($\pm$ 0.0207) & \cellcolor{red!20} 0.031 ($\pm$ 0.0006)  & \cellcolor{red!20} 2.531 ($\pm$ 0.0273) & \cellcolor{red!20} 0.038 ($\pm$ 0.0007) \\

Small-Clean & \cellcolor{white} 0.844 ($\pm$ 0.0514) & \cellcolor{white} 0.019 ($\pm$ 0.0019)  & \cellcolor{white} 0.617 ($\pm$ 0.0738) & \cellcolor{white} 0.01 ($\pm$ 0.0017)  & \cellcolor{white} 0.438 ($\pm$ 0.0817) & \cellcolor{white} 0.005 ($\pm$ 0.0011) \\

Label-Trim & \bfseries \cellcolor{Green!60} 1.096 ($\pm$ 0.0181) & \cellcolor{white} 0.02 ($\pm$ 0.0005)  & \bfseries \cellcolor{Green!60} 1.308 ($\pm$ 0.0206) & \cellcolor{white} 0.015 ($\pm$ 0.0004)  & \bfseries \cellcolor{Green!60} 1.493 ($\pm$ 0.0295) & \cellcolor{white} 0.012 ($\pm$ 0.0004) \\
\hline
Standard Power & \cellcolor{white} 0.465 ($\pm$ 0.008) &  & \cellcolor{white} 0.36 ($\pm$ 0.0074) & & \cellcolor{white} 0.264 ($\pm$ 0.0071) &  \\
\end{tabular}
}
\subcaption{Target type-I error rate $\alpha=0.02$}

\resizebox{\textwidth}{!}{
\begin{tabular}{l|ll|ll|ll}
\hline
& \multicolumn{6}{c}{Contamination rate} \\
\hline
 & \multicolumn{2}{c|}{1\%} & \multicolumn{2}{c|}{3\%} & \multicolumn{2}{c}{5\%} \\ \hline
 Method      & Power & Type-I Error & Power & Type-I Error & Power & Type-I Error \\ \hline
Standard & \bfseries \cellcolor{Green!30} 1.0 ($\pm$ 0.0147) & \cellcolor{white} 0.025 ($\pm$ 0.0005)  & \bfseries \cellcolor{Green!30} 1.0 ($\pm$ 0.0151) & \cellcolor{white} 0.016 ($\pm$ 0.0004)  & \cellcolor{white} 1.0 ($\pm$ 0.0201) & \cellcolor{white} 0.011 ($\pm$ 0.0004) \\

Oracle (infeasible) & \bfseries \cellcolor{Green!100} 1.072 ($\pm$ 0.0149) & \cellcolor{white} 0.03 ($\pm$ 0.0006)  & \bfseries \cellcolor{Green!100} 1.295 ($\pm$ 0.0155) & \cellcolor{white} 0.029 ($\pm$ 0.0006)  & \bfseries \cellcolor{Green!100} 1.617 ($\pm$ 0.0186) & \cellcolor{white} 0.03 ($\pm$ 0.0007) \\

Naive-Trim (invalid) & \cellcolor{red!20} 1.116 ($\pm$ 0.0144) & \cellcolor{red!20} 0.034 ($\pm$ 0.0006)  & \cellcolor{red!20} 1.424 ($\pm$ 0.0142) & \cellcolor{red!20} 0.039 ($\pm$ 0.0007)  & \cellcolor{red!20} 1.851 ($\pm$ 0.0178) & \cellcolor{red!20} 0.046 ($\pm$ 0.0007) \\

Small-Clean & \cellcolor{white} 0.699 ($\pm$ 0.0394) & \cellcolor{white} 0.019 ($\pm$ 0.0019)  & \cellcolor{white} 0.884 ($\pm$ 0.0464) & \cellcolor{white} 0.019 ($\pm$ 0.0018)  & \bfseries \cellcolor{Green!30} 1.164 ($\pm$ 0.0576) & \cellcolor{white} 0.02 ($\pm$ 0.002) \\

Label-Trim & \bfseries \cellcolor{Green!60} 1.049 ($\pm$ 0.0148) & \cellcolor{white} 0.029 ($\pm$ 0.0006)  & \bfseries \cellcolor{Green!60} 1.158 ($\pm$ 0.016) & \cellcolor{white} 0.023 ($\pm$ 0.0005)  & \bfseries \cellcolor{Green!60} 1.273 ($\pm$ 0.0218) & \cellcolor{white} 0.017 ($\pm$ 0.0005) \\
\hline
Standard Power & \cellcolor{white} 0.576 ($\pm$ 0.0085) &  & \cellcolor{white} 0.483 ($\pm$ 0.0073) &  & \cellcolor{white} 0.382 ($\pm$ 0.0077) &  \\
\end{tabular}
}
\subcaption{Target type-I error rate $\alpha=0.03$}

\end{table}

\begin{table}[!htb]
\caption{Comparison of conformal outlier detection methods on CIFAR-100 dataset (outliers) and CIFAR-10 dataset (inliers) for varying contamination rate $r$ and target type-I error level $\alpha$. All methods utilize the ReAct \citep{react} method with a pretrained ResNet-18. The empirical power is presented relative to the \texttt{Standard} method (higher is better). Results are averaged across 100 random splits of the data, with standard errors presented in parentheses.}
\label{app-tab:cifar100}
\centering
\resizebox{\textwidth}{!}{
\begin{tabular}{l|ll|ll|ll}
\hline
& \multicolumn{6}{c}{Contamination rate} \\
\hline
 & \multicolumn{2}{c|}{1\%} & \multicolumn{2}{c|}{3\%} & \multicolumn{2}{c}{5\%} \\ \hline
 Method      & Power & Type-I Error & Power & Type-I Error & Power & Type-I Error \\ \hline
Standard & \bfseries \cellcolor{Green!30} 1.0 ($\pm$ 0.0393) & \cellcolor{white} 0.009 ($\pm$ 0.0003)  & \bfseries \cellcolor{Green!30} 1.0 ($\pm$ 0.0391) & \cellcolor{white} 0.007 ($\pm$ 0.0003)  & \bfseries \cellcolor{Green!30} 1.0 ($\pm$ 0.042) & \cellcolor{white} 0.005 ($\pm$ 0.0003) \\

Oracle (infeasible) & \bfseries \cellcolor{Green!100} 1.116 ($\pm$ 0.0417) & \cellcolor{white} 0.01 ($\pm$ 0.0003)  & \bfseries \cellcolor{Green!100} 1.463 ($\pm$ 0.05) & \cellcolor{white} 0.01 ($\pm$ 0.0003)  & \bfseries \cellcolor{Green!100} 1.588 ($\pm$ 0.0477) & \cellcolor{white} 0.009 ($\pm$ 0.0004) \\

Naive-Trim (invalid) & \cellcolor{red!20} 1.733 ($\pm$ 0.0397) & \cellcolor{red!20} 0.018 ($\pm$ 0.0005)  & \cellcolor{red!20} 3.06 ($\pm$ 0.0585) & \cellcolor{red!20} 0.03 ($\pm$ 0.0006)  & \cellcolor{red!20} 4.054 ($\pm$ 0.0631) & \cellcolor{red!20} 0.041 ($\pm$ 0.0007) \\

Small-Clean & \cellcolor{white} 0.0 ($\pm$ 0.0) & \cellcolor{white} 0.0 ($\pm$ 0.0)  & \cellcolor{white} 0.0 ($\pm$ 0.0) & \cellcolor{white} 0.0 ($\pm$ 0.0)  & \cellcolor{white} 0.0 ($\pm$ 0.0) & \cellcolor{white} 0.0 ($\pm$ 0.0) \\

Label-Trim & \bfseries \cellcolor{Green!100} 1.116 ($\pm$ 0.0417) & \cellcolor{white} 0.01 ($\pm$ 0.0003)  & \bfseries \cellcolor{Green!100} 1.463 ($\pm$ 0.05) & \cellcolor{white} 0.01 ($\pm$ 0.0003)  & \bfseries \cellcolor{Green!100} 1.587 ($\pm$ 0.0468) & \cellcolor{white} 0.009 ($\pm$ 0.0004) \\
\hline
Standard Power & \cellcolor{white} 0.145 ($\pm$ 0.0057) &  & \cellcolor{white} 0.118 ($\pm$ 0.0046) &  & \cellcolor{white} 0.104 ($\pm$ 0.0044) & \\
\end{tabular}
}
\subcaption{Target type-I error rate $\alpha=0.01$}

\resizebox{\textwidth}{!}{
\begin{tabular}{l|ll|ll|ll}
\hline
& \multicolumn{6}{c}{Contamination rate} \\
\hline
 & \multicolumn{2}{c|}{1\%} & \multicolumn{2}{c|}{3\%} & \multicolumn{2}{c}{5\%} \\ \hline
 Method      & Power & Type-I Error & Power & Type-I Error & Power & Type-I Error \\ \hline
Standard & \bfseries \cellcolor{Green!30} 1.0 ($\pm$ 0.023) & \cellcolor{white} 0.018 ($\pm$ 0.0005)  & \bfseries \cellcolor{Green!30} 1.0 ($\pm$ 0.0265) & \cellcolor{white} 0.014 ($\pm$ 0.0004)  & \bfseries \cellcolor{Green!30} 1.0 ($\pm$ 0.0264) & \cellcolor{white} 0.011 ($\pm$ 0.0004) \\

Oracle (infeasible) & \bfseries \cellcolor{Green!100} 1.082 ($\pm$ 0.0256) & 0.021 ($\pm$ 0.0005)  & \bfseries \cellcolor{Green!100} 1.265 ($\pm$ 0.0299) & \cellcolor{white} 0.02 ($\pm$ 0.0005)  & \bfseries \cellcolor{Green!100} 1.476 ($\pm$ 0.0317) & \cellcolor{white} 0.02 ($\pm$ 0.0006) \\

Naive-Trim (invalid) & \cellcolor{red!20} 1.294 ($\pm$ 0.0266) & \cellcolor{red!20} 0.027 ($\pm$ 0.0006)  & \cellcolor{red!20} 1.831 ($\pm$ 0.0288) & \cellcolor{red!20} 0.038 ($\pm$ 0.0006)  & \cellcolor{red!20} 2.464 ($\pm$ 0.0372) & \cellcolor{red!20} 0.049 ($\pm$ 0.0008) \\

Small-Clean & \cellcolor{white} 0.844 ($\pm$ 0.0709) & \cellcolor{white} 0.019 ($\pm$ 0.0022)  & \cellcolor{white} 0.686 ($\pm$ 0.0797) & \cellcolor{white} 0.012 ($\pm$ 0.0017)  & \cellcolor{white} 0.392 ($\pm$ 0.078) & \cellcolor{white} 0.006 ($\pm$ 0.0014) \\

Label-Trim & \bfseries \cellcolor{Green!60} 1.064 ($\pm$ 0.0247) & \cellcolor{white} 0.02 ($\pm$ 0.0005)  & \bfseries \cellcolor{Green!60} 1.193 ($\pm$ 0.0293) & \cellcolor{white} 0.018 ($\pm$ 0.0004)  & \bfseries \cellcolor{Green!60} 1.288 ($\pm$ 0.0296) & \cellcolor{white} 0.016 ($\pm$ 0.0005) \\
\hline
Standard Power & \cellcolor{white} 0.255 ($\pm$ 0.0059) &   & \cellcolor{white} 0.225 ($\pm$ 0.006) &  & \cellcolor{white} 0.19 ($\pm$ 0.005) & \\
\end{tabular}
}
\subcaption{Target type-I error rate $\alpha=0.02$}

\resizebox{\textwidth}{!}{
\begin{tabular}{l|ll|ll|ll}
\hline
& \multicolumn{6}{c}{Contamination rate} \\
\hline
 & \multicolumn{2}{c|}{1\%} & \multicolumn{2}{c|}{3\%} & \multicolumn{2}{c}{5\%} \\ \hline
 Method      & Power & Type-I Error & Power & Type-I Error & Power & Type-I Error \\ \hline
Standard & \bfseries \cellcolor{Green!30} 1.0 ($\pm$ 0.0204) & \cellcolor{white} 0.027 ($\pm$ 0.0006)  & \bfseries \cellcolor{Green!30} 1.0 ($\pm$ 0.0214) & \cellcolor{white} 0.022 ($\pm$ 0.0005)  & \bfseries \cellcolor{Green!30} 1.0 ($\pm$ 0.0228) & \cellcolor{white} 0.018 ($\pm$ 0.0005) \\

Oracle (infeasible) & \bfseries \cellcolor{Green!100} 1.054 ($\pm$ 0.0212) & 0.031 ($\pm$ 0.0006)  & \bfseries \cellcolor{Green!100} 1.184 ($\pm$ 0.0228) & \cellcolor{white} 0.03 ($\pm$ 0.0006)  & \bfseries \cellcolor{Green!100} 1.33 ($\pm$ 0.0246) & \cellcolor{white} 0.03 ($\pm$ 0.0007) \\

Naive-Trim (invalid) & \cellcolor{red!20} 1.184 ($\pm$ 0.0213) & \cellcolor{red!20} 0.036 ($\pm$ 0.0006)  & \cellcolor{red!20} 1.523 ($\pm$ 0.0222) & \cellcolor{red!20} 0.046 ($\pm$ 0.0007)  & \cellcolor{red!20} 1.94 ($\pm$ 0.026) & \cellcolor{red!20} 0.056 ($\pm$ 0.0009) \\

Small-Clean & \cellcolor{white} 0.709 ($\pm$ 0.0527) & \cellcolor{white} 0.021 ($\pm$ 0.0023)  & \cellcolor{white} 0.842 ($\pm$ 0.0545) & \cellcolor{white} 0.021 ($\pm$ 0.0019)  & \cellcolor{white} 0.947 ($\pm$ 0.0628) & \cellcolor{white} 0.02 ($\pm$ 0.002) \\

Label-Trim & \bfseries \cellcolor{Green!60} 1.029 ($\pm$ 0.0212) & \cellcolor{white} 0.029 ($\pm$ 0.0006)  & \bfseries \cellcolor{Green!60} 1.114 ($\pm$ 0.0222) & \cellcolor{white} 0.026 ($\pm$ 0.0006)  & \bfseries \cellcolor{Green!60} 1.16 ($\pm$ 0.0229) & \cellcolor{white} 0.023 ($\pm$ 0.0006) \\
\hline
Standard Power & \cellcolor{white} 0.332 ($\pm$ 0.0068) &  & \cellcolor{white} 0.302 ($\pm$ 0.0064) &  & \cellcolor{white} 0.263 ($\pm$ 0.006) &  \\
\end{tabular}
}
\subcaption{Target type-I error rate $\alpha=0.03$}

\end{table}

\begin{table}[!htb]
\caption{Comparison of conformal outlier detection methods on TinyImageNet dataset (outliers) and CIFAR-10 dataset (inliers) for varying contamination rate $r$ and target type-I error level $\alpha$. All methods utilize the ReAct \citep{react} method with a pretrained ResNet-18. The empirical power is presented relative to the \texttt{Standard} method (higher is better). Results are averaged across 100 random splits of the data, with standard errors presented in parentheses.}
\label{app-tab:tin}
\centering
\resizebox{\textwidth}{!}{
\begin{tabular}{l|ll|ll|ll}
\hline
& \multicolumn{6}{c}{Contamination rate} \\
\hline
 & \multicolumn{2}{c|}{1\%} & \multicolumn{2}{c|}{3\%} & \multicolumn{2}{c}{5\%} \\ \hline
 Method      & Power & Type-I Error & Power & Type-I Error & Power & Type-I Error \\ \hline
Standard & \bfseries \cellcolor{Green!30} 1.0 ($\pm$ 0.0381) & \cellcolor{white} 0.009 ($\pm$ 0.0003)  & \bfseries \cellcolor{Green!30} 1.0 ($\pm$ 0.0398) & \cellcolor{white} 0.006 ($\pm$ 0.0003)  & \bfseries \cellcolor{Green!30} 1.0 ($\pm$ 0.0439) & \cellcolor{white} 0.004 ($\pm$ 0.0002) \\

Oracle (infeasible) & \bfseries \cellcolor{Green!100} 1.137 ($\pm$ 0.0409) & \cellcolor{white} 0.01 ($\pm$ 0.0003)  & \bfseries \cellcolor{Green!100} 1.492 ($\pm$ 0.0474) & \cellcolor{white} 0.01 ($\pm$ 0.0003)  & \bfseries \cellcolor{Green!100} 1.754 ($\pm$ 0.0518) & \cellcolor{white} 0.009 ($\pm$ 0.0004) \\

Naive-Trim (invalid) & \cellcolor{red!20} 1.675 ($\pm$ 0.0401) & \cellcolor{red!20} 0.018 ($\pm$ 0.0004)  & \cellcolor{red!20} 2.872 ($\pm$ 0.0485) & \cellcolor{red!20} 0.028 ($\pm$ 0.0006)  & \cellcolor{red!20} 3.986 ($\pm$ 0.0568) & \cellcolor{red!20} 0.039 ($\pm$ 0.0007) \\

Small-Clean & \cellcolor{white} 0.0 ($\pm$ 0.0) & \cellcolor{white} 0.0 ($\pm$ 0.0)  & \cellcolor{white} 0.0 ($\pm$ 0.0) & \cellcolor{white} 0.0 ($\pm$ 0.0)  & \cellcolor{white} 0.0 ($\pm$ 0.0) & \cellcolor{white} 0.0 ($\pm$ 0.0) \\

Label-Trim & \bfseries \cellcolor{Green!100} 1.137 ($\pm$ 0.0409) & \cellcolor{white} 0.01 ($\pm$ 0.0003)  & \bfseries \cellcolor{Green!100} 1.488 ($\pm$ 0.0472) & \cellcolor{white} 0.01 ($\pm$ 0.0003)  & \bfseries \cellcolor{Green!60} 1.681 ($\pm$ 0.0511) & \cellcolor{white} 0.009 ($\pm$ 0.0003) \\
\hline
Standard Power & \cellcolor{white} 0.168 ($\pm$ 0.0064) &   & \cellcolor{white} 0.133 ($\pm$ 0.0053) &  & \cellcolor{white} 0.115 ($\pm$ 0.005) &  \\
\end{tabular}
}
\subcaption{Target type-I error rate $\alpha=0.01$}

\resizebox{\textwidth}{!}{
\begin{tabular}{l|ll|ll|ll}
\hline
& \multicolumn{6}{c}{Contamination rate} \\
\hline
 & \multicolumn{2}{c|}{1\%} & \multicolumn{2}{c|}{3\%} & \multicolumn{2}{c}{5\%} \\ \hline
 Method      & Power & Type-I Error & Power & Type-I Error & Power & Type-I Error \\ \hline
Standard & \bfseries \cellcolor{Green!30} 1.0 ($\pm$ 0.0239) & \cellcolor{white} 0.018 ($\pm$ 0.0004)  & \bfseries \cellcolor{Green!30} 1.0 ($\pm$ 0.0262) & \cellcolor{white} 0.013 ($\pm$ 0.0004)  & \bfseries \cellcolor{Green!30} 1.0 ($\pm$ 0.0293) & \cellcolor{white} 0.01 ($\pm$ 0.0004) \\

Oracle (infeasible) & \bfseries \cellcolor{Green!100} 1.084 ($\pm$ 0.0253) & 0.021 ($\pm$ 0.0005)  & \bfseries \cellcolor{Green!100} 1.275 ($\pm$ 0.0263) & \cellcolor{white} 0.02 ($\pm$ 0.0005)  & \bfseries \cellcolor{Green!100} 1.544 ($\pm$ 0.0293) & \cellcolor{white} 0.02 ($\pm$ 0.0006) \\

Naive-Trim (invalid) & \cellcolor{red!20} 1.284 ($\pm$ 0.0274) & \cellcolor{red!20} 0.027 ($\pm$ 0.0006)  & \cellcolor{red!20} 1.819 ($\pm$ 0.0277) & \cellcolor{red!20} 0.036 ($\pm$ 0.0006)  & \cellcolor{red!20} 2.406 ($\pm$ 0.0308) & \cellcolor{red!20} 0.046 ($\pm$ 0.0008) \\

Small-Clean & \cellcolor{white} 0.777 ($\pm$ 0.0566) & \cellcolor{white} 0.015 ($\pm$ 0.0015)  & \cellcolor{white} 0.525 ($\pm$ 0.0687) & \cellcolor{white} 0.009 ($\pm$ 0.0015)  & \cellcolor{white} 0.366 ($\pm$ 0.0653) & \cellcolor{white} 0.005 ($\pm$ 0.0011) \\

Label-Trim & \bfseries \cellcolor{Green!60} 1.074 ($\pm$ 0.0249) & \cellcolor{white} 0.02 ($\pm$ 0.0005)  & \bfseries \cellcolor{Green!60} 1.193 ($\pm$ 0.0254) & \cellcolor{white} 0.017 ($\pm$ 0.0004)  & \bfseries \cellcolor{Green!60} 1.322 ($\pm$ 0.03) & \cellcolor{white} 0.015 ($\pm$ 0.0005) \\
\hline
Standard Power & \cellcolor{white} 0.285 ($\pm$ 0.0068) &  & \cellcolor{white} 0.243 ($\pm$ 0.0064) & & \cellcolor{white} 0.209 ($\pm$ 0.0061) &  \\
\end{tabular}
}
\subcaption{Target type-I error rate $\alpha=0.02$}

\resizebox{\textwidth}{!}{
\begin{tabular}{l|ll|ll|ll}
\hline
& \multicolumn{6}{c}{Contamination rate} \\
\hline
 & \multicolumn{2}{c|}{1\%} & \multicolumn{2}{c|}{3\%} & \multicolumn{2}{c}{5\%} \\ \hline
 Method      & Power & Type-I Error & Power & Type-I Error & Power & Type-I Error \\ \hline
Standard & \bfseries \cellcolor{Green!30} 1.0 ($\pm$ 0.0213) & \cellcolor{white} 0.027 ($\pm$ 0.0006)  & \bfseries \cellcolor{Green!30} 1.0 ($\pm$ 0.0202) & \cellcolor{white} 0.02 ($\pm$ 0.0005)  & \bfseries \cellcolor{Green!30} 1.0 ($\pm$ 0.0216) & \cellcolor{white} 0.016 ($\pm$ 0.0005) \\

Oracle (infeasible) & \bfseries \cellcolor{Green!100} 1.062 ($\pm$ 0.0216) & \cellcolor{white} 0.03 ($\pm$ 0.0006)  & \bfseries \cellcolor{Green!100} 1.23 ($\pm$ 0.0205) & \cellcolor{white} 0.03 ($\pm$ 0.0006)  & \bfseries \cellcolor{Green!100} 1.397 ($\pm$ 0.0236) & \cellcolor{white} 0.03 ($\pm$ 0.0007) \\

Naive-Trim (invalid) & \cellcolor{red!20} 1.168 ($\pm$ 0.0203) & \cellcolor{red!20} 0.036 ($\pm$ 0.0006)  & \cellcolor{red!20} 1.569 ($\pm$ 0.0207) & \cellcolor{red!20} 0.045 ($\pm$ 0.0007)  & \cellcolor{red!20} 1.877 ($\pm$ 0.0217) & \cellcolor{red!20} 0.054 ($\pm$ 0.0009) \\

Small-Clean & \cellcolor{white} 0.656 ($\pm$ 0.0438) & \cellcolor{white} 0.017 ($\pm$ 0.0017)  & \cellcolor{white} 0.825 ($\pm$ 0.0496) & \cellcolor{white} 0.019 ($\pm$ 0.0019)  & \cellcolor{white} 0.904 ($\pm$ 0.0589) & \cellcolor{white} 0.02 ($\pm$ 0.0024) \\

Label-Trim & \bfseries \cellcolor{Green!60} 1.04 ($\pm$ 0.0219) & \cellcolor{white} 0.029 ($\pm$ 0.0006)  & \bfseries \cellcolor{Green!60} 1.139 ($\pm$ 0.0208) & \cellcolor{white} 0.026 ($\pm$ 0.0006)  & \bfseries \cellcolor{Green!60} 1.19 ($\pm$ 0.0225) & \cellcolor{white} 0.022 ($\pm$ 0.0006) \\
\hline
Standard Power & \cellcolor{white} 0.368 ($\pm$ 0.0078) &  & \cellcolor{white} 0.318 ($\pm$ 0.0064) &  & \cellcolor{white} 0.29 ($\pm$ 0.0063) &  \\
\end{tabular}
}
\subcaption{Target type-I error rate $\alpha=0.03$}

\end{table}

\begin{table}[!tb]
\caption{Comparison of conformal outlier detection methods on Texture dataset (outliers) and CIFAR-10 dataset (inliers) for varying contamination rate $r$ and target type-I error level $\alpha$. All methods utilize the ReAct \citep{react} method with a pretrained VGG-19. The empirical power is presented relative to the \texttt{Standard} method (higher is better). Results are averaged across 100 random splits of the data, with standard errors presented in parentheses.}
\label{app-tab:vgg-texture}
\centering
\resizebox{\textwidth}{!}{
\begin{tabular}{l|ll|ll|ll}
\hline
& \multicolumn{6}{c}{Contamination rate} \\
\hline
 & \multicolumn{2}{c|}{1\%} & \multicolumn{2}{c|}{3\%} & \multicolumn{2}{c}{5\%} \\ \hline
 Method      & Power & Type-I Error & Power & Type-I Error & Power & Type-I Error \\ \hline
Standard & \bfseries \cellcolor{Green!30} 1.0 ($\pm$ 0.026) & \cellcolor{white} 0.008 ($\pm$ 0.0003)  & \bfseries \cellcolor{Green!30} 1.0 ($\pm$ 0.0312) & \cellcolor{white} 0.005 ($\pm$ 0.0002)  & \bfseries \cellcolor{Green!30} 1.0 ($\pm$ 0.0342) & \cellcolor{white} 0.003 ($\pm$ 0.0002) \\

Oracle (infeasible) & \bfseries \cellcolor{Green!100} 1.119 ($\pm$ 0.0263) & \cellcolor{white} 0.01 ($\pm$ 0.0004)  & \bfseries \cellcolor{Green!100} 1.421 ($\pm$ 0.032) & \cellcolor{white} 0.01 ($\pm$ 0.0004)  & \bfseries \cellcolor{Green!100} 1.833 ($\pm$ 0.0437) & \cellcolor{white} 0.009 ($\pm$ 0.0003) \\

Naive-Trim (invalid) & \cellcolor{red!20} 1.368 ($\pm$ 0.0278) & \cellcolor{red!20} 0.017 ($\pm$ 0.0005)  & \cellcolor{red!20} 2.168 ($\pm$ 0.0379) & \cellcolor{red!20} 0.029 ($\pm$ 0.0006)  & \cellcolor{red!20} 3.383 ($\pm$ 0.0481) & \cellcolor{red!20} 0.037 ($\pm$ 0.0007) \\

Small-Clean & \cellcolor{white} 0.0 ($\pm$ 0.0) & \cellcolor{white} 0.0 ($\pm$ 0.0)  & \cellcolor{white} 0.0 ($\pm$ 0.0) & \cellcolor{white} 0.0 ($\pm$ 0.0)  & \cellcolor{white} 0.0 ($\pm$ 0.0) & \cellcolor{white} 0.0 ($\pm$ 0.0) \\

Label-Trim & \bfseries \cellcolor{Green!100} 1.119 ($\pm$ 0.0263) & \cellcolor{white} 0.01 ($\pm$ 0.0004)  & \bfseries \cellcolor{Green!60} 1.401 ($\pm$ 0.0321) & \cellcolor{white} 0.01 ($\pm$ 0.0003)  & \bfseries \cellcolor{Green!60} 1.712 ($\pm$ 0.0429) & \cellcolor{white} 0.008 ($\pm$ 0.0003) \\
\hline
Standard Power & \cellcolor{white} 0.239 ($\pm$ 0.0062) & \cellcolor{white} 0.008 ($\pm$ 0.0003)  & \cellcolor{white} 0.188 ($\pm$ 0.0059) & \cellcolor{white} 0.005 ($\pm$ 0.0002)  & \cellcolor{white} 0.139 ($\pm$ 0.0048) & \cellcolor{white} 0.003 ($\pm$ 0.0002) \\
\end{tabular}
}
\subcaption{Target type-I error rate $\alpha=0.01$}

\resizebox{\textwidth}{!}{
\begin{tabular}{l|ll|ll|ll}
\hline
& \multicolumn{6}{c}{Contamination rate} \\
\hline
 & \multicolumn{2}{c|}{1\%} & \multicolumn{2}{c|}{3\%} & \multicolumn{2}{c}{5\%} \\ \hline
 Method      & Power & Type-I Error & Power & Type-I Error & Power & Type-I Error \\ \hline
Standard & \bfseries \cellcolor{Green!30} 1.0 ($\pm$ 0.0202) & \cellcolor{white} 0.017 ($\pm$ 0.0005)  & \bfseries \cellcolor{Green!30} 1.0 ($\pm$ 0.0223) & \cellcolor{white} 0.012 ($\pm$ 0.0004)  & \bfseries \cellcolor{Green!30} 1.0 ($\pm$ 0.0249) & \cellcolor{white} 0.008 ($\pm$ 0.0003) \\

Oracle (infeasible) & \bfseries \cellcolor{Green!100} 1.083 ($\pm$ 0.0186) & 0.021 ($\pm$ 0.0005)  & \bfseries \cellcolor{Green!100} 1.253 ($\pm$ 0.0246) & \cellcolor{white} 0.02 ($\pm$ 0.0005)  & \bfseries \cellcolor{Green!100} 1.502 ($\pm$ 0.0292) & \cellcolor{white} 0.019 ($\pm$ 0.0006) \\

Naive-Trim (invalid) & \cellcolor{red!20} 1.223 ($\pm$ 0.0201) & \cellcolor{red!20} 0.027 ($\pm$ 0.0006)  & \cellcolor{red!20} 1.605 ($\pm$ 0.0241) & \cellcolor{red!20} 0.037 ($\pm$ 0.0007)  & \cellcolor{red!20} 2.15 ($\pm$ 0.0295) & \cellcolor{red!20} 0.045 ($\pm$ 0.0008) \\

Small-Clean & \cellcolor{white} 0.841 ($\pm$ 0.052) & \cellcolor{white} 0.018 ($\pm$ 0.0018)  & \cellcolor{white} 0.752 ($\pm$ 0.0805) & \cellcolor{white} 0.016 ($\pm$ 0.0025)  & \cellcolor{white} 0.437 ($\pm$ 0.071) & \cellcolor{white} 0.006 ($\pm$ 0.0012) \\

Label-Trim & \bfseries \cellcolor{Green!60} 1.072 ($\pm$ 0.0185) & \cellcolor{white} 0.02 ($\pm$ 0.0005)  & \bfseries \cellcolor{Green!60} 1.2 ($\pm$ 0.0234) & \cellcolor{white} 0.018 ($\pm$ 0.0005)  & \bfseries \cellcolor{Green!60} 1.323 ($\pm$ 0.0276) & \cellcolor{white} 0.015 ($\pm$ 0.0005) \\
\hline
Standard Power & \cellcolor{white} 0.331 ($\pm$ 0.0067) & \cellcolor{white} 0.017 ($\pm$ 0.0005)  & \cellcolor{white} 0.283 ($\pm$ 0.0063) & \cellcolor{white} 0.012 ($\pm$ 0.0004)  & \cellcolor{white} 0.237 ($\pm$ 0.0059) & \cellcolor{white} 0.008 ($\pm$ 0.0003) \\
\end{tabular}
}
\subcaption{Target type-I error rate $\alpha=0.02$}

\resizebox{\textwidth}{!}{
\begin{tabular}{l|ll|ll|ll}
\hline
& \multicolumn{6}{c}{Contamination rate} \\
\hline
 & \multicolumn{2}{c|}{1\%} & \multicolumn{2}{c|}{3\%} & \multicolumn{2}{c}{5\%} \\ \hline
 Method      & Power & Type-I Error & Power & Type-I Error & Power & Type-I Error \\ \hline
Standard & \bfseries \cellcolor{Green!30} 1.0 ($\pm$ 0.0164) & \cellcolor{white} 0.027 ($\pm$ 0.0006)  & \bfseries \cellcolor{Green!30} 1.0 ($\pm$ 0.02) & \cellcolor{white} 0.02 ($\pm$ 0.0005)  & \cellcolor{white} 1.0 ($\pm$ 0.0202) & \cellcolor{white} 0.015 ($\pm$ 0.0005) \\

Oracle (infeasible) & \bfseries \cellcolor{Green!100} 1.075 ($\pm$ 0.0153) & 0.031 ($\pm$ 0.0006)  & \bfseries \cellcolor{Green!100} 1.179 ($\pm$ 0.0201) & \cellcolor{white} 0.03 ($\pm$ 0.0007)  & \bfseries \cellcolor{Green!100} 1.356 ($\pm$ 0.022) & \cellcolor{white} 0.029 ($\pm$ 0.0007) \\

Naive-Trim (invalid) & \cellcolor{red!20} 1.146 ($\pm$ 0.0155) & \cellcolor{red!20} 0.036 ($\pm$ 0.0007)  & \cellcolor{red!20} 1.408 ($\pm$ 0.0187) & \cellcolor{red!20} 0.045 ($\pm$ 0.0008)  & \cellcolor{red!20} 1.75 ($\pm$ 0.0228) & \cellcolor{red!20} 0.053 ($\pm$ 0.0009) \\

Small-Clean & \cellcolor{white} 0.801 ($\pm$ 0.0382) & \cellcolor{white} 0.022 ($\pm$ 0.0018)  & \cellcolor{white} 0.937 ($\pm$ 0.0569) & \cellcolor{white} 0.025 ($\pm$ 0.0028)  & \bfseries \cellcolor{Green!30} 1.016 ($\pm$ 0.0548) & \cellcolor{white} 0.02 ($\pm$ 0.0021) \\

Label-Trim & \bfseries \cellcolor{Green!60} 1.053 ($\pm$ 0.016) & \cellcolor{white} 0.03 ($\pm$ 0.0006)  & \bfseries \cellcolor{Green!60} 1.117 ($\pm$ 0.02) & \cellcolor{white} 0.027 ($\pm$ 0.0006)  & \bfseries \cellcolor{Green!60} 1.208 ($\pm$ 0.0221) & \cellcolor{white} 0.022 ($\pm$ 0.0006) \\
\hline
Standard Power & \cellcolor{white} 0.408 ($\pm$ 0.0067) & \cellcolor{white} 0.027 ($\pm$ 0.0006)  & \cellcolor{white} 0.353 ($\pm$ 0.0071) & \cellcolor{white} 0.02 ($\pm$ 0.0005)  & \cellcolor{white} 0.312 ($\pm$ 0.0063) & \cellcolor{white} 0.015 ($\pm$ 0.0005) \\
\end{tabular}
}
\subcaption{Target type-I error rate $\alpha=0.03$}

\end{table}

\begin{table}[!tb]
\caption{Comparison of conformal outlier detection methods on SVHN dataset (outliers) and CIFAR-10 dataset (inliers) for varying contamination rate $r$ and target type-I error level $\alpha$. All methods utilize the ReAct \citep{react} method with a pretrained VGG-19. The empirical power is presented relative to the \texttt{Standard} method (higher is better). Results are averaged across 100 random splits of the data, with standard errors presented in parentheses.}
\label{app-tab:vgg-svhn}
\centering
\resizebox{\textwidth}{!}{
\begin{tabular}{l|ll|ll|ll}
\hline
& \multicolumn{6}{c}{Contamination rate} \\
\hline
 & \multicolumn{2}{c|}{1\%} & \multicolumn{2}{c|}{3\%} & \multicolumn{2}{c}{5\%} \\ \hline
 Method      & Power & Type-I Error & Power & Type-I Error & Power & Type-I Error \\ \hline
Standard & \bfseries \cellcolor{Green!30} 1.0 ($\pm$ 0.0297) & \cellcolor{white} 0.009 ($\pm$ 0.0003)  & \bfseries \cellcolor{Green!30} 1.0 ($\pm$ 0.0355) & \cellcolor{white} 0.006 ($\pm$ 0.0003)  & \bfseries \cellcolor{Green!30} 1.0 ($\pm$ 0.0463) & \cellcolor{white} 0.004 ($\pm$ 0.0002) \\

Oracle (infeasible) & \bfseries \cellcolor{Green!100} 1.085 ($\pm$ 0.0305) & \cellcolor{white} 0.01 ($\pm$ 0.0004)  & \bfseries \cellcolor{Green!100} 1.287 ($\pm$ 0.0384) & \cellcolor{white} 0.01 ($\pm$ 0.0004)  & \bfseries \cellcolor{Green!100} 1.607 ($\pm$ 0.0452) & \cellcolor{white} 0.009 ($\pm$ 0.0003) \\

Naive-Trim (invalid) & \cellcolor{red!20} 1.355 ($\pm$ 0.0356) & \cellcolor{red!20} 0.018 ($\pm$ 0.0005)  & \cellcolor{red!20} 2.261 ($\pm$ 0.0511) & \cellcolor{red!20} 0.031 ($\pm$ 0.0006)  & \cellcolor{red!20} 3.202 ($\pm$ 0.053) & \cellcolor{red!20} 0.04 ($\pm$ 0.0007) \\

Small-Clean & \cellcolor{white} 0.0 ($\pm$ 0.0) & \cellcolor{white} 0.0 ($\pm$ 0.0)  & \cellcolor{white} 0.0 ($\pm$ 0.0) & \cellcolor{white} 0.0 ($\pm$ 0.0)  & \cellcolor{white} 0.0 ($\pm$ 0.0) & \cellcolor{white} 0.0 ($\pm$ 0.0) \\

Label-Trim & \bfseries \cellcolor{Green!100} 1.085 ($\pm$ 0.0305) & \cellcolor{white} 0.01 ($\pm$ 0.0004)  & \bfseries \cellcolor{Green!100} 1.287 ($\pm$ 0.0384) & \cellcolor{white} 0.01 ($\pm$ 0.0004)  & \bfseries \cellcolor{Green!60} 1.573 ($\pm$ 0.0467) & \cellcolor{white} 0.009 ($\pm$ 0.0003) \\
\hline
Standard Power & \cellcolor{white} 0.182 ($\pm$ 0.0054) & \cellcolor{white} 0.009 ($\pm$ 0.0003)  & \cellcolor{white} 0.145 ($\pm$ 0.0052) & \cellcolor{white} 0.006 ($\pm$ 0.0003)  & \cellcolor{white} 0.12 ($\pm$ 0.0056) & \cellcolor{white} 0.004 ($\pm$ 0.0002) \\
\end{tabular}
}
\subcaption{Target type-I error rate $\alpha=0.01$}

\resizebox{\textwidth}{!}{
\begin{tabular}{l|ll|ll|ll}
\hline
& \multicolumn{6}{c}{Contamination rate} \\
\hline
 & \multicolumn{2}{c|}{1\%} & \multicolumn{2}{c|}{3\%} & \multicolumn{2}{c}{5\%} \\ \hline
 Method      & Power & Type-I Error & Power & Type-I Error & Power & Type-I Error \\ \hline
Standard & \bfseries \cellcolor{Green!30} 1.0 ($\pm$ 0.026) & \cellcolor{white} 0.018 ($\pm$ 0.0005)  & \bfseries \cellcolor{Green!30} 1.0 ($\pm$ 0.0274) & \cellcolor{white} 0.014 ($\pm$ 0.0004)  & \bfseries \cellcolor{Green!30} 1.0 ($\pm$ 0.0285) & \cellcolor{white} 0.01 ($\pm$ 0.0003) \\

Oracle (infeasible) & \bfseries \cellcolor{Green!100} 1.038 ($\pm$ 0.026) & 0.021 ($\pm$ 0.0005)  & \bfseries \cellcolor{Green!100} 1.167 ($\pm$ 0.0298) & \cellcolor{white} 0.02 ($\pm$ 0.0005)  & \bfseries \cellcolor{Green!100} 1.289 ($\pm$ 0.0294) & \cellcolor{white} 0.019 ($\pm$ 0.0006) \\

Naive-Trim (invalid) & \cellcolor{red!20} 1.222 ($\pm$ 0.0269) & \cellcolor{red!20} 0.028 ($\pm$ 0.0006)  & \cellcolor{red!20} 1.77 ($\pm$ 0.0354) & \cellcolor{red!20} 0.039 ($\pm$ 0.0007)  & \cellcolor{red!20} 2.198 ($\pm$ 0.0315) & \cellcolor{red!20} 0.048 ($\pm$ 0.0008) \\

Small-Clean & \cellcolor{white} 0.894 ($\pm$ 0.0609) & \cellcolor{white} 0.019 ($\pm$ 0.0019)  & \cellcolor{white} 0.504 ($\pm$ 0.07) & \cellcolor{white} 0.009 ($\pm$ 0.0015)  & \cellcolor{white} 0.389 ($\pm$ 0.0722) & \cellcolor{white} 0.006 ($\pm$ 0.0015) \\

Label-Trim & \bfseries \cellcolor{Green!60} 1.035 ($\pm$ 0.0256) & \cellcolor{white} 0.02 ($\pm$ 0.0005)  & \bfseries \cellcolor{Green!60} 1.146 ($\pm$ 0.0293) & \cellcolor{white} 0.019 ($\pm$ 0.0005)  & \bfseries \cellcolor{Green!60} 1.217 ($\pm$ 0.0301) & \cellcolor{white} 0.017 ($\pm$ 0.0005) \\
\hline
Standard Power & \cellcolor{white} 0.249 ($\pm$ 0.0065) & \cellcolor{white} 0.018 ($\pm$ 0.0005)  & \cellcolor{white} 0.216 ($\pm$ 0.0059) & \cellcolor{white} 0.014 ($\pm$ 0.0004)  & \cellcolor{white} 0.196 ($\pm$ 0.0056) & \cellcolor{white} 0.01 ($\pm$ 0.0003) \\
\end{tabular}
}
\subcaption{Target type-I error rate $\alpha=0.02$}

\resizebox{\textwidth}{!}{
\begin{tabular}{l|ll|ll|ll}
\hline
& \multicolumn{6}{c}{Contamination rate} \\
\hline
 & \multicolumn{2}{c|}{1\%} & \multicolumn{2}{c|}{3\%} & \multicolumn{2}{c}{5\%} \\ \hline
 Method      & Power & Type-I Error & Power & Type-I Error & Power & Type-I Error \\ \hline
Standard & \bfseries \cellcolor{Green!30} 1.0 ($\pm$ 0.0219) & \cellcolor{white} 0.028 ($\pm$ 0.0006)  & \bfseries \cellcolor{Green!30} 1.0 ($\pm$ 0.0251) & \cellcolor{white} 0.023 ($\pm$ 0.0005)  & \bfseries \cellcolor{Green!30} 1.0 ($\pm$ 0.0248) & \cellcolor{white} 0.018 ($\pm$ 0.0005) \\

Oracle (infeasible) & \bfseries \cellcolor{Green!100} 1.074 ($\pm$ 0.0225) & 0.031 ($\pm$ 0.0006)  & \bfseries \cellcolor{Green!100} 1.206 ($\pm$ 0.0281) & \cellcolor{white} 0.03 ($\pm$ 0.0007)  & \bfseries \cellcolor{Green!100} 1.302 ($\pm$ 0.0261) & \cellcolor{white} 0.029 ($\pm$ 0.0007) \\

Naive-Trim (invalid) & \cellcolor{red!20} 1.224 ($\pm$ 0.0216) & \cellcolor{red!20} 0.037 ($\pm$ 0.0007)  & \cellcolor{red!20} 1.626 ($\pm$ 0.0277) & \cellcolor{red!20} 0.048 ($\pm$ 0.0008)  & \cellcolor{red!20} 1.962 ($\pm$ 0.0257) & \cellcolor{red!20} 0.056 ($\pm$ 0.0009) \\

Small-Clean & \cellcolor{white} 0.788 ($\pm$ 0.0485) & \cellcolor{white} 0.02 ($\pm$ 0.0019)  & \cellcolor{white} 0.878 ($\pm$ 0.0536) & \cellcolor{white} 0.019 ($\pm$ 0.0018)  & \cellcolor{white} 0.986 ($\pm$ 0.0625) & \cellcolor{white} 0.02 ($\pm$ 0.0021) \\

Label-Trim & \bfseries \cellcolor{Green!60} 1.05 ($\pm$ 0.0224) & \cellcolor{white} 0.03 ($\pm$ 0.0006)  & \bfseries \cellcolor{Green!60} 1.134 ($\pm$ 0.0265) & \cellcolor{white} 0.028 ($\pm$ 0.0006)  & \bfseries \cellcolor{Green!60} 1.16 ($\pm$ 0.0247) & \cellcolor{white} 0.024 ($\pm$ 0.0006) \\
\hline
Standard Power & \cellcolor{white} 0.307 ($\pm$ 0.0067) & \cellcolor{white} 0.028 ($\pm$ 0.0006)  & \cellcolor{white} 0.266 ($\pm$ 0.0067) & \cellcolor{white} 0.023 ($\pm$ 0.0005)  & \cellcolor{white} 0.241 ($\pm$ 0.006) & \cellcolor{white} 0.018 ($\pm$ 0.0005) \\
\end{tabular}
}
\subcaption{Target type-I error rate $\alpha=0.03$}

\end{table}

\begin{table}[!tb]
\caption{Comparison of conformal outlier detection methods on Places365 dataset (outliers) and CIFAR-10 dataset (inliers) for varying contamination rate $r$ and target type-I error level $\alpha$. All methods utilize the ReAct \citep{react} method with a pretrained VGG-19. The empirical power is presented relative to the \texttt{Standard} method (higher is better). Results are averaged across 100 random splits of the data, with standard errors presented in parentheses.}
\label{app-tab:vgg-places}
\centering
\resizebox{\textwidth}{!}{
\begin{tabular}{l|ll|ll|ll}
\hline
& \multicolumn{6}{c}{Contamination rate} \\
\hline
 & \multicolumn{2}{c|}{1\%} & \multicolumn{2}{c|}{3\%} & \multicolumn{2}{c}{5\%} \\ \hline
 Method      & Power & Type-I Error & Power & Type-I Error & Power & Type-I Error \\ \hline
Standard & \bfseries \cellcolor{Green!30} 1.0 ($\pm$ 0.0289) & \cellcolor{white} 0.009 ($\pm$ 0.0004)  & \bfseries \cellcolor{Green!30} 1.0 ($\pm$ 0.0383) & \cellcolor{white} 0.006 ($\pm$ 0.0003)  & \bfseries \cellcolor{Green!30} 1.0 ($\pm$ 0.0382) & \cellcolor{white} 0.005 ($\pm$ 0.0003) \\

Oracle (infeasible) & \bfseries \cellcolor{Green!100} 1.084 ($\pm$ 0.0295) & \cellcolor{white} 0.01 ($\pm$ 0.0004)  & \bfseries \cellcolor{Green!100} 1.444 ($\pm$ 0.0407) & \cellcolor{white} 0.01 ($\pm$ 0.0004)  & \bfseries \cellcolor{Green!100} 1.602 ($\pm$ 0.0488) & \cellcolor{white} 0.009 ($\pm$ 0.0003) \\

Naive-Trim (invalid) & \cellcolor{red!20} 1.454 ($\pm$ 0.0346) & \cellcolor{red!20} 0.018 ($\pm$ 0.0005)  & \cellcolor{red!20} 2.623 ($\pm$ 0.0521) & \cellcolor{red!20} 0.03 ($\pm$ 0.0006)  & \cellcolor{red!20} 3.405 ($\pm$ 0.0592) & \cellcolor{red!20} 0.04 ($\pm$ 0.0008) \\

Small-Clean & \cellcolor{white} 0.0 ($\pm$ 0.0) & \cellcolor{white} 0.0 ($\pm$ 0.0)  & \cellcolor{white} 0.0 ($\pm$ 0.0) & \cellcolor{white} 0.0 ($\pm$ 0.0)  & \cellcolor{white} 0.0 ($\pm$ 0.0) & \cellcolor{white} 0.0 ($\pm$ 0.0) \\

Label-Trim & \bfseries \cellcolor{Green!100} 1.084 ($\pm$ 0.0295) & \cellcolor{white} 0.01 ($\pm$ 0.0004)  & \bfseries \cellcolor{Green!100} 1.441 ($\pm$ 0.0404) & \cellcolor{white} 0.01 ($\pm$ 0.0003)  & \bfseries \cellcolor{Green!60} 1.578 ($\pm$ 0.0476) & \cellcolor{white} 0.009 ($\pm$ 0.0003) \\
\hline
Standard Power & \cellcolor{white} 0.176 ($\pm$ 0.0051) & \cellcolor{white} 0.009 ($\pm$ 0.0004)  & \cellcolor{white} 0.134 ($\pm$ 0.0051) & \cellcolor{white} 0.006 ($\pm$ 0.0003)  & \cellcolor{white} 0.12 ($\pm$ 0.0046) & \cellcolor{white} 0.005 ($\pm$ 0.0003) \\
\end{tabular}
}
\subcaption{Target type-I error rate $\alpha=0.01$}

\resizebox{\textwidth}{!}{
\begin{tabular}{l|ll|ll|ll}
\hline
& \multicolumn{6}{c}{Contamination rate} \\
\hline
 & \multicolumn{2}{c|}{1\%} & \multicolumn{2}{c|}{3\%} & \multicolumn{2}{c}{5\%} \\ \hline
 Method      & Power & Type-I Error & Power & Type-I Error & Power & Type-I Error \\ \hline
Standard & \bfseries \cellcolor{Green!30} 1.0 ($\pm$ 0.0236) & \cellcolor{white} 0.018 ($\pm$ 0.0005)  & \bfseries \cellcolor{Green!30} 1.0 ($\pm$ 0.0251) & \cellcolor{white} 0.014 ($\pm$ 0.0004)  & \bfseries \cellcolor{Green!30} 1.0 ($\pm$ 0.0296) & \cellcolor{white} 0.01 ($\pm$ 0.0004) \\

Oracle (infeasible) & \bfseries \cellcolor{Green!100} 1.074 ($\pm$ 0.0238) & 0.021 ($\pm$ 0.0005)  & \bfseries \cellcolor{Green!100} 1.247 ($\pm$ 0.0291) & \cellcolor{white} 0.02 ($\pm$ 0.0005)  & \bfseries \cellcolor{Green!100} 1.35 ($\pm$ 0.0332) & \cellcolor{white} 0.019 ($\pm$ 0.0006) \\

Naive-Trim (invalid) & \cellcolor{red!20} 1.236 ($\pm$ 0.0236) & \cellcolor{red!20} 0.028 ($\pm$ 0.0006)  & \cellcolor{red!20} 1.781 ($\pm$ 0.0308) & \cellcolor{red!20} 0.039 ($\pm$ 0.0007)  & \cellcolor{red!20} 2.166 ($\pm$ 0.0346) & \cellcolor{red!20} 0.048 ($\pm$ 0.0009) \\

Small-Clean & \cellcolor{white} 0.71 ($\pm$ 0.0577) & \cellcolor{white} 0.015 ($\pm$ 0.0017)  & \cellcolor{white} 0.569 ($\pm$ 0.0713) & \cellcolor{white} 0.011 ($\pm$ 0.0016)  & \cellcolor{white} 0.268 ($\pm$ 0.0606) & \cellcolor{white} 0.004 ($\pm$ 0.0013) \\

Label-Trim & \bfseries \cellcolor{Green!60} 1.058 ($\pm$ 0.0241) & \cellcolor{white} 0.02 ($\pm$ 0.0005)  & \bfseries \cellcolor{Green!60} 1.198 ($\pm$ 0.0281) & \cellcolor{white} 0.019 ($\pm$ 0.0005)  & \bfseries \cellcolor{Green!60} 1.231 ($\pm$ 0.0315) & \cellcolor{white} 0.016 ($\pm$ 0.0005) \\
\hline
Standard Power & \cellcolor{white} 0.259 ($\pm$ 0.0061) & \cellcolor{white} 0.018 ($\pm$ 0.0005)  & \cellcolor{white} 0.226 ($\pm$ 0.0057) & \cellcolor{white} 0.014 ($\pm$ 0.0004)  & \cellcolor{white} 0.205 ($\pm$ 0.0061) & \cellcolor{white} 0.01 ($\pm$ 0.0004) \\
\end{tabular}
}
\subcaption{Target type-I error rate $\alpha=0.02$}

\resizebox{\textwidth}{!}{
\begin{tabular}{l|ll|ll|ll}
\hline
& \multicolumn{6}{c}{Contamination rate} \\
\hline
 & \multicolumn{2}{c|}{1\%} & \multicolumn{2}{c|}{3\%} & \multicolumn{2}{c}{5\%} \\ \hline
 Method      & Power & Type-I Error & Power & Type-I Error & Power & Type-I Error \\ \hline
Standard & \bfseries \cellcolor{Green!30} 1.0 ($\pm$ 0.0188) & \cellcolor{white} 0.028 ($\pm$ 0.0006)  & \bfseries \cellcolor{Green!30} 1.0 ($\pm$ 0.0221) & \cellcolor{white} 0.022 ($\pm$ 0.0005)  & \bfseries \cellcolor{Green!30} 1.0 ($\pm$ 0.0261) & \cellcolor{white} 0.017 ($\pm$ 0.0005) \\

Oracle (infeasible) & \bfseries \cellcolor{Green!100} 1.088 ($\pm$ 0.0187) & 0.031 ($\pm$ 0.0006)  & \bfseries \cellcolor{Green!100} 1.197 ($\pm$ 0.0233) & \cellcolor{white} 0.03 ($\pm$ 0.0007)  & \bfseries \cellcolor{Green!100} 1.32 ($\pm$ 0.0288) & \cellcolor{white} 0.029 ($\pm$ 0.0007) \\

Naive-Trim (invalid) & \cellcolor{red!20} 1.186 ($\pm$ 0.0185) & \cellcolor{red!20} 0.037 ($\pm$ 0.0007)  & \cellcolor{red!20} 1.499 ($\pm$ 0.0249) & \cellcolor{red!20} 0.048 ($\pm$ 0.0008)  & \cellcolor{red!20} 1.83 ($\pm$ 0.026) & \cellcolor{red!20} 0.056 ($\pm$ 0.0009) \\

Small-Clean & \cellcolor{white} 0.652 ($\pm$ 0.045) & \cellcolor{white} 0.017 ($\pm$ 0.0017)  & \cellcolor{white} 0.85 ($\pm$ 0.0541) & \cellcolor{white} 0.021 ($\pm$ 0.0018)  & \cellcolor{white} 0.921 ($\pm$ 0.054) & \cellcolor{white} 0.02 ($\pm$ 0.002) \\

Label-Trim & \bfseries \cellcolor{Green!60} 1.056 ($\pm$ 0.019) & \cellcolor{white} 0.03 ($\pm$ 0.0006)  & \bfseries \cellcolor{Green!60} 1.12 ($\pm$ 0.0232) & \cellcolor{white} 0.027 ($\pm$ 0.0006)  & \bfseries \cellcolor{Green!60} 1.168 ($\pm$ 0.0276) & \cellcolor{white} 0.023 ($\pm$ 0.0006) \\
\hline
Standard Power & \cellcolor{white} 0.321 ($\pm$ 0.006) & \cellcolor{white} 0.028 ($\pm$ 0.0006)  & \cellcolor{white} 0.294 ($\pm$ 0.0065) & \cellcolor{white} 0.022 ($\pm$ 0.0005)  & \cellcolor{white} 0.263 ($\pm$ 0.0069) & \cellcolor{white} 0.017 ($\pm$ 0.0005) \\
\end{tabular}
}
\subcaption{Target type-I error rate $\alpha=0.03$}

\end{table}

\begin{table}[!tb]
\caption{Comparison of conformal outlier detection methods on MNIST dataset (outliers) and CIFAR-10 dataset (inliers) for varying contamination rate $r$ and target type-I error level $\alpha$. All methods utilize the ReAct \citep{react} method with a pretrained VGG-19. The empirical power is presented relative to the \texttt{Standard} method (higher is better). Results are averaged across 100 random splits of the data, with standard errors presented in parentheses.}
\label{app-tab:vgg-mnist}
\centering
\resizebox{\textwidth}{!}{
\begin{tabular}{l|ll|ll|ll}
\hline
& \multicolumn{6}{c}{Contamination rate} \\
\hline
 & \multicolumn{2}{c|}{1\%} & \multicolumn{2}{c|}{3\%} & \multicolumn{2}{c}{5\%} \\ \hline
 Method      & Power & Type-I Error & Power & Type-I Error & Power & Type-I Error \\ \hline
Standard & \bfseries \cellcolor{Green!30} 1.0 ($\pm$ 0.0315) & \cellcolor{white} 0.009 ($\pm$ 0.0004)  & \bfseries \cellcolor{Green!30} 1.0 ($\pm$ 0.0376) & \cellcolor{white} 0.007 ($\pm$ 0.0003)  & \bfseries \cellcolor{Green!30} 1.0 ($\pm$ 0.0428) & \cellcolor{white} 0.005 ($\pm$ 0.0003) \\

Oracle (infeasible) & \bfseries \cellcolor{Green!100} 1.089 ($\pm$ 0.0325) & \cellcolor{white} 0.01 ($\pm$ 0.0004)  & \bfseries \cellcolor{Green!100} 1.349 ($\pm$ 0.0414) & \cellcolor{white} 0.01 ($\pm$ 0.0004)  & \bfseries \cellcolor{Green!100} 1.547 ($\pm$ 0.0488) & \cellcolor{white} 0.009 ($\pm$ 0.0003) \\

Naive-Trim (invalid) & \cellcolor{red!20} 1.493 ($\pm$ 0.0352) & \cellcolor{red!20} 0.018 ($\pm$ 0.0005)  & \cellcolor{red!20} 2.499 ($\pm$ 0.0571) & \cellcolor{red!20} 0.031 ($\pm$ 0.0006)  & \cellcolor{red!20} 3.315 ($\pm$ 0.0664) & \cellcolor{red!20} 0.041 ($\pm$ 0.0008) \\

Small-Clean & \cellcolor{white} 0.0 ($\pm$ 0.0) & \cellcolor{white} 0.0 ($\pm$ 0.0)  & \cellcolor{white} 0.0 ($\pm$ 0.0) & \cellcolor{white} 0.0 ($\pm$ 0.0)  & \cellcolor{white} 0.0 ($\pm$ 0.0) & \cellcolor{white} 0.0 ($\pm$ 0.0) \\

Label-Trim & \bfseries \cellcolor{Green!100} 1.089 ($\pm$ 0.0325) & \cellcolor{white} 0.01 ($\pm$ 0.0004)  & \bfseries \cellcolor{Green!100} 1.347 ($\pm$ 0.0414) & \cellcolor{white} 0.01 ($\pm$ 0.0004)  & \bfseries \cellcolor{Green!100} 1.549 ($\pm$ 0.0492) & \cellcolor{white} 0.009 ($\pm$ 0.0003) \\
\hline
Standard Power & \cellcolor{white} 0.162 ($\pm$ 0.0051) & \cellcolor{white} 0.009 ($\pm$ 0.0004)  & \cellcolor{white} 0.134 ($\pm$ 0.005) & \cellcolor{white} 0.007 ($\pm$ 0.0003)  & \cellcolor{white} 0.116 ($\pm$ 0.005) & \cellcolor{white} 0.005 ($\pm$ 0.0003) \\
\end{tabular}
}
\subcaption{Target type-I error rate $\alpha=0.01$}

\resizebox{\textwidth}{!}{
\begin{tabular}{l|ll|ll|ll}
\hline
& \multicolumn{6}{c}{Contamination rate} \\
\hline
 & \multicolumn{2}{c|}{1\%} & \multicolumn{2}{c|}{3\%} & \multicolumn{2}{c}{5\%} \\ \hline
 Method      & Power & Type-I Error & Power & Type-I Error & Power & Type-I Error \\ \hline
Standard & \bfseries \cellcolor{Green!30} 1.0 ($\pm$ 0.0234) & \cellcolor{white} 0.018 ($\pm$ 0.0005)  & \bfseries \cellcolor{Green!30} 1.0 ($\pm$ 0.0283) & \cellcolor{white} 0.014 ($\pm$ 0.0004)  & \bfseries \cellcolor{Green!30} 1.0 ($\pm$ 0.0298) & \cellcolor{white} 0.01 ($\pm$ 0.0004) \\

Oracle (infeasible) & \bfseries \cellcolor{Green!100} 1.068 ($\pm$ 0.0225) & 0.021 ($\pm$ 0.0005)  & \bfseries \cellcolor{Green!100} 1.219 ($\pm$ 0.0313) & \cellcolor{white} 0.02 ($\pm$ 0.0005)  & \bfseries \cellcolor{Green!100} 1.36 ($\pm$ 0.0343) & \cellcolor{white} 0.019 ($\pm$ 0.0006) \\

Naive-Trim (invalid) & \cellcolor{red!20} 1.246 ($\pm$ 0.0245) & \cellcolor{red!20} 0.028 ($\pm$ 0.0006)  & \cellcolor{red!20} 1.76 ($\pm$ 0.0341) & \cellcolor{red!20} 0.039 ($\pm$ 0.0007)  & \cellcolor{red!20} 2.167 ($\pm$ 0.0395) & \cellcolor{red!20} 0.049 ($\pm$ 0.0009) \\

Small-Clean & \cellcolor{white} 0.781 ($\pm$ 0.0592) & \cellcolor{white} 0.017 ($\pm$ 0.0019)  & \cellcolor{white} 0.617 ($\pm$ 0.0743) & \cellcolor{white} 0.012 ($\pm$ 0.0018)  & \cellcolor{white} 0.302 ($\pm$ 0.0582) & \cellcolor{white} 0.005 ($\pm$ 0.0011) \\

Label-Trim & \bfseries \cellcolor{Green!60} 1.053 ($\pm$ 0.0224) & \cellcolor{white} 0.02 ($\pm$ 0.0005)  & \bfseries \cellcolor{Green!60} 1.176 ($\pm$ 0.0318) & \cellcolor{white} 0.019 ($\pm$ 0.0005)  & \bfseries \cellcolor{Green!60} 1.242 ($\pm$ 0.0326) & \cellcolor{white} 0.016 ($\pm$ 0.0005) \\
\hline
Standard Power & \cellcolor{white} 0.244 ($\pm$ 0.0057) & \cellcolor{white} 0.018 ($\pm$ 0.0005)  & \cellcolor{white} 0.214 ($\pm$ 0.0061) & \cellcolor{white} 0.014 ($\pm$ 0.0004)  & \cellcolor{white} 0.194 ($\pm$ 0.0058) & \cellcolor{white} 0.01 ($\pm$ 0.0004) \\
\end{tabular}
}
\subcaption{Target type-I error rate $\alpha=0.02$}

\resizebox{\textwidth}{!}{
\begin{tabular}{l|ll|ll|ll}
\hline
& \multicolumn{6}{c}{Contamination rate} \\
\hline
 & \multicolumn{2}{c|}{1\%} & \multicolumn{2}{c|}{3\%} & \multicolumn{2}{c}{5\%} \\ \hline
 Method      & Power & Type-I Error & Power & Type-I Error & Power & Type-I Error \\ \hline
Standard & \bfseries \cellcolor{Green!30} 1.0 ($\pm$ 0.0195) & \cellcolor{white} 0.028 ($\pm$ 0.0006)  & \bfseries \cellcolor{Green!30} 1.0 ($\pm$ 0.0247) & \cellcolor{white} 0.023 ($\pm$ 0.0006)  & \bfseries \cellcolor{Green!30} 1.0 ($\pm$ 0.0262) & \cellcolor{white} 0.018 ($\pm$ 0.0005) \\

Oracle (infeasible) & \bfseries \cellcolor{Green!100} 1.062 ($\pm$ 0.0196) & 0.031 ($\pm$ 0.0006)  & \bfseries \cellcolor{Green!100} 1.189 ($\pm$ 0.0279) & \cellcolor{white} 0.03 ($\pm$ 0.0007)  & \bfseries \cellcolor{Green!100} 1.296 ($\pm$ 0.0305) & \cellcolor{white} 0.029 ($\pm$ 0.0007) \\

Naive-Trim (invalid) & \cellcolor{red!20} 1.178 ($\pm$ 0.0208) & \cellcolor{red!20} 0.037 ($\pm$ 0.0007)  & \cellcolor{red!20} 1.501 ($\pm$ 0.0257) & \cellcolor{red!20} 0.049 ($\pm$ 0.0008)  & \cellcolor{red!20} 1.799 ($\pm$ 0.0319) & \cellcolor{red!20} 0.057 ($\pm$ 0.0009) \\

Small-Clean & \cellcolor{white} 0.648 ($\pm$ 0.0456) & \cellcolor{white} 0.017 ($\pm$ 0.0018)  & \cellcolor{white} 0.888 ($\pm$ 0.0515) & \cellcolor{white} 0.023 ($\pm$ 0.002)  & \cellcolor{white} 0.964 ($\pm$ 0.0563) & \cellcolor{white} 0.021 ($\pm$ 0.0018) \\

Label-Trim & \bfseries \cellcolor{Green!60} 1.039 ($\pm$ 0.0196) & \cellcolor{white} 0.03 ($\pm$ 0.0006)  & \bfseries \cellcolor{Green!60} 1.109 ($\pm$ 0.0258) & \cellcolor{white} 0.027 ($\pm$ 0.0006)  & \bfseries \cellcolor{Green!60} 1.145 ($\pm$ 0.0268) & \cellcolor{white} 0.024 ($\pm$ 0.0006) \\
\hline
Standard Power & \cellcolor{white} 0.305 ($\pm$ 0.0059) & \cellcolor{white} 0.028 ($\pm$ 0.0006)  & \cellcolor{white} 0.276 ($\pm$ 0.0068) & \cellcolor{white} 0.023 ($\pm$ 0.0006)  & \cellcolor{white} 0.25 ($\pm$ 0.0066) & \cellcolor{white} 0.018 ($\pm$ 0.0005) \\
\end{tabular}
}
\subcaption{Target type-I error rate $\alpha=0.03$}

\end{table}

\begin{table}[!tb]
\caption{Comparison of conformal outlier detection methods on CIFAR-100 dataset (outliers) and CIFAR-10 dataset (inliers) for varying contamination rate $r$ and target type-I error level $\alpha$. All methods utilize the ReAct \citep{react} method with a pretrained VGG-19. The empirical power is presented relative to the \texttt{Standard} method (higher is better). Results are averaged across 100 random splits of the data, with standard errors presented in parentheses.}
\label{app-tab:vgg-cifar100}
\centering
\resizebox{\textwidth}{!}{
\begin{tabular}{l|ll|ll|ll}
\hline
& \multicolumn{6}{c}{Contamination rate} \\
\hline
 & \multicolumn{2}{c|}{1\%} & \multicolumn{2}{c|}{3\%} & \multicolumn{2}{c}{5\%} \\ \hline
 Method      & Power & Type-I Error & Power & Type-I Error & Power & Type-I Error \\ \hline
Standard & \bfseries \cellcolor{Green!30} 1.0 ($\pm$ 0.0388) & \cellcolor{white} 0.009 ($\pm$ 0.0004)  & \bfseries \cellcolor{Green!30} 1.0 ($\pm$ 0.0466) & \cellcolor{white} 0.007 ($\pm$ 0.0003)  & \bfseries \cellcolor{Green!30} 1.0 ($\pm$ 0.0472) & \cellcolor{white} 0.005 ($\pm$ 0.0003) \\

Oracle (infeasible) & \bfseries \cellcolor{Green!100} 1.089 ($\pm$ 0.0409) & \cellcolor{white} 0.01 ($\pm$ 0.0004)  & \bfseries \cellcolor{Green!100} 1.34 ($\pm$ 0.0508) & \cellcolor{white} 0.01 ($\pm$ 0.0004)  & \bfseries \cellcolor{Green!60} 1.554 ($\pm$ 0.0539) & \cellcolor{white} 0.009 ($\pm$ 0.0003) \\

Naive-Trim (invalid) & \cellcolor{red!20} 1.558 ($\pm$ 0.0455) & \cellcolor{red!20} 0.018 ($\pm$ 0.0005)  & \cellcolor{red!20} 2.816 ($\pm$ 0.0647) & \cellcolor{red!20} 0.032 ($\pm$ 0.0007)  & \cellcolor{red!20} 3.58 ($\pm$ 0.0691) & \cellcolor{red!20} 0.042 ($\pm$ 0.0008) \\

Small-Clean & \cellcolor{white} 0.0 ($\pm$ 0.0) & \cellcolor{white} 0.0 ($\pm$ 0.0)  & \cellcolor{white} 0.0 ($\pm$ 0.0) & \cellcolor{white} 0.0 ($\pm$ 0.0)  & \cellcolor{white} 0.0 ($\pm$ 0.0) & \cellcolor{white} 0.0 ($\pm$ 0.0) \\

Label-Trim & \bfseries \cellcolor{Green!100} 1.089 ($\pm$ 0.0409) & \cellcolor{white} 0.01 ($\pm$ 0.0004)  & \bfseries \cellcolor{Green!100} 1.34 ($\pm$ 0.0508) & \cellcolor{white} 0.01 ($\pm$ 0.0004)  & \bfseries \cellcolor{Green!100} 1.572 ($\pm$ 0.053) & \cellcolor{white} 0.009 ($\pm$ 0.0003) \\
\hline
Standard Power & \cellcolor{white} 0.137 ($\pm$ 0.0053) & \cellcolor{white} 0.009 ($\pm$ 0.0004)  & \cellcolor{white} 0.111 ($\pm$ 0.0052) & \cellcolor{white} 0.007 ($\pm$ 0.0003)  & \cellcolor{white} 0.1 ($\pm$ 0.0047) & \cellcolor{white} 0.005 ($\pm$ 0.0003) \\
\end{tabular}
}
\subcaption{Target type-I error rate $\alpha=0.01$}

\resizebox{\textwidth}{!}{
\begin{tabular}{l|ll|ll|ll}
\hline
& \multicolumn{6}{c}{Contamination rate} \\
\hline
 & \multicolumn{2}{c|}{1\%} & \multicolumn{2}{c|}{3\%} & \multicolumn{2}{c}{5\%} \\ \hline
 Method      & Power & Type-I Error & Power & Type-I Error & Power & Type-I Error \\ \hline
Standard & \bfseries \cellcolor{Green!30} 1.0 ($\pm$ 0.0288) & \cellcolor{white} 0.019 ($\pm$ 0.0005)  & \bfseries \cellcolor{Green!30} 1.0 ($\pm$ 0.0308) & \cellcolor{white} 0.015 ($\pm$ 0.0005)  & \bfseries \cellcolor{Green!30} 1.0 ($\pm$ 0.0307) & \cellcolor{white} 0.012 ($\pm$ 0.0004) \\

Oracle (infeasible) & \bfseries \cellcolor{Green!100} 1.07 ($\pm$ 0.0289) & 0.021 ($\pm$ 0.0005)  & \bfseries \cellcolor{Green!100} 1.219 ($\pm$ 0.0332) & \cellcolor{white} 0.02 ($\pm$ 0.0005)  & \bfseries \cellcolor{Green!100} 1.316 ($\pm$ 0.0362) & \cellcolor{white} 0.019 ($\pm$ 0.0006) \\

Naive-Trim (invalid) & \cellcolor{red!20} 1.293 ($\pm$ 0.0301) & \cellcolor{red!20} 0.028 ($\pm$ 0.0006)  & \cellcolor{red!20} 1.811 ($\pm$ 0.0395) & \cellcolor{red!20} 0.04 ($\pm$ 0.0007)  & \cellcolor{red!20} 2.24 ($\pm$ 0.0396) & \cellcolor{red!20} 0.051 ($\pm$ 0.0009) \\

Small-Clean & \cellcolor{white} 0.831 ($\pm$ 0.0636) & \cellcolor{white} 0.018 ($\pm$ 0.002)  & \cellcolor{white} 0.646 ($\pm$ 0.0826) & \cellcolor{white} 0.013 ($\pm$ 0.0021)  & \cellcolor{white} 0.317 ($\pm$ 0.0665) & \cellcolor{white} 0.005 ($\pm$ 0.0016) \\

Label-Trim & \bfseries \cellcolor{Green!60} 1.057 ($\pm$ 0.0281) & \cellcolor{white} 0.02 ($\pm$ 0.0005)  & \bfseries \cellcolor{Green!60} 1.17 ($\pm$ 0.0327) & \cellcolor{white} 0.019 ($\pm$ 0.0005)  & \bfseries \cellcolor{Green!60} 1.233 ($\pm$ 0.0343) & \cellcolor{white} 0.017 ($\pm$ 0.0006) \\
\hline
Standard Power & \cellcolor{white} 0.217 ($\pm$ 0.0063) & \cellcolor{white} 0.019 ($\pm$ 0.0005)  & \cellcolor{white} 0.192 ($\pm$ 0.0059) & \cellcolor{white} 0.015 ($\pm$ 0.0005)  & \cellcolor{white} 0.175 ($\pm$ 0.0054) & \cellcolor{white} 0.012 ($\pm$ 0.0004) \\
\end{tabular}
}
\subcaption{Target type-I error rate $\alpha=0.02$}

\resizebox{\textwidth}{!}{
\begin{tabular}{l|ll|ll|ll}
\hline
& \multicolumn{6}{c}{Contamination rate} \\
\hline
 & \multicolumn{2}{c|}{1\%} & \multicolumn{2}{c|}{3\%} & \multicolumn{2}{c}{5\%} \\ \hline
 Method      & Power & Type-I Error & Power & Type-I Error & Power & Type-I Error \\ \hline
Standard & \bfseries \cellcolor{Green!30} 1.0 ($\pm$ 0.023) & \cellcolor{white} 0.028 ($\pm$ 0.0006)  & \bfseries \cellcolor{Green!30} 1.0 ($\pm$ 0.0272) & \cellcolor{white} 0.023 ($\pm$ 0.0006)  & \bfseries \cellcolor{Green!30} 1.0 ($\pm$ 0.0269) & \cellcolor{white} 0.019 ($\pm$ 0.0005) \\

Oracle (infeasible) & \bfseries \cellcolor{Green!100} 1.061 ($\pm$ 0.0233) & 0.031 ($\pm$ 0.0006)  & \bfseries \cellcolor{Green!100} 1.183 ($\pm$ 0.0278) & \cellcolor{white} 0.03 ($\pm$ 0.0007)  & \bfseries \cellcolor{Green!100} 1.292 ($\pm$ 0.0306) & \cellcolor{white} 0.029 ($\pm$ 0.0007) \\

Naive-Trim (invalid) & \cellcolor{red!20} 1.184 ($\pm$ 0.0249) & \cellcolor{red!20} 0.038 ($\pm$ 0.0007)  & \cellcolor{red!20} 1.512 ($\pm$ 0.0294) & \cellcolor{red!20} 0.05 ($\pm$ 0.0008)  & \cellcolor{red!20} 1.868 ($\pm$ 0.0324) & \cellcolor{red!20} 0.059 ($\pm$ 0.0009) \\

Small-Clean & \cellcolor{white} 0.706 ($\pm$ 0.0461) & \cellcolor{white} 0.02 ($\pm$ 0.002)  & \cellcolor{white} 0.816 ($\pm$ 0.0583) & \cellcolor{white} 0.021 ($\pm$ 0.0022)  & \cellcolor{white} 0.856 ($\pm$ 0.0579) & \cellcolor{white} 0.018 ($\pm$ 0.0021) \\

Label-Trim & \bfseries \cellcolor{Green!60} 1.039 ($\pm$ 0.0226) & \cellcolor{white} 0.03 ($\pm$ 0.0006)  & \bfseries \cellcolor{Green!60} 1.111 ($\pm$ 0.0275) & \cellcolor{white} 0.028 ($\pm$ 0.0006)  & \bfseries \cellcolor{Green!60} 1.154 ($\pm$ 0.0291) & \cellcolor{white} 0.024 ($\pm$ 0.0006) \\
\hline
Standard Power & \cellcolor{white} 0.282 ($\pm$ 0.0065) & \cellcolor{white} 0.028 ($\pm$ 0.0006)  & \cellcolor{white} 0.254 ($\pm$ 0.0069) & \cellcolor{white} 0.023 ($\pm$ 0.0006)  & \cellcolor{white} 0.228 ($\pm$ 0.0062) & \cellcolor{white} 0.019 ($\pm$ 0.0005) \\
\end{tabular}
}
\subcaption{Target type-I error rate $\alpha=0.03$}

\end{table}

\begin{table}[!tb]
\caption{Comparison of conformal outlier detection methods on TinyImageNet dataset (outliers) and CIFAR-10 dataset (inliers) for varying contamination rate $r$ and target type-I error level $\alpha$. All methods utilize the ReAct \citep{react} method with a pretrained VGG-19. The empirical power is presented relative to the \texttt{Standard} method (higher is better). Results are averaged across 100 random splits of the data, with standard errors presented in parentheses.}
\label{app-tab:vgg-tin}
\centering
\resizebox{\textwidth}{!}{
\begin{tabular}{l|ll|ll|ll}
\hline
& \multicolumn{6}{c}{Contamination rate} \\
\hline
 & \multicolumn{2}{c|}{1\%} & \multicolumn{2}{c|}{3\%} & \multicolumn{2}{c}{5\%} \\ \hline
 Method      & Power & Type-I Error & Power & Type-I Error & Power & Type-I Error \\ \hline
Standard & \bfseries \cellcolor{Green!30} 1.0 ($\pm$ 0.0353) & \cellcolor{white} 0.009 ($\pm$ 0.0004)  & \bfseries \cellcolor{Green!30} 1.0 ($\pm$ 0.037) & \cellcolor{white} 0.007 ($\pm$ 0.0003)  & \bfseries \cellcolor{Green!30} 1.0 ($\pm$ 0.0446) & \cellcolor{white} 0.005 ($\pm$ 0.0002) \\

Oracle (infeasible) & \bfseries \cellcolor{Green!100} 1.104 ($\pm$ 0.0383) & \cellcolor{white} 0.01 ($\pm$ 0.0004)  & \bfseries \cellcolor{Green!100} 1.37 ($\pm$ 0.0474) & \cellcolor{white} 0.01 ($\pm$ 0.0004)  & \bfseries \cellcolor{Green!100} 1.653 ($\pm$ 0.0523) & \cellcolor{white} 0.009 ($\pm$ 0.0003) \\

Naive-Trim (invalid) & \cellcolor{red!20} 1.556 ($\pm$ 0.0452) & \cellcolor{red!20} 0.018 ($\pm$ 0.0005)  & \cellcolor{red!20} 2.635 ($\pm$ 0.0555) & \cellcolor{red!20} 0.031 ($\pm$ 0.0006)  & \cellcolor{red!20} 3.796 ($\pm$ 0.0594) & \cellcolor{red!20} 0.04 ($\pm$ 0.0008) \\

Small-Clean & \cellcolor{white} 0.0 ($\pm$ 0.0) & \cellcolor{white} 0.0 ($\pm$ 0.0)  & \cellcolor{white} 0.0 ($\pm$ 0.0) & \cellcolor{white} 0.0 ($\pm$ 0.0)  & \cellcolor{white} 0.0 ($\pm$ 0.0) & \cellcolor{white} 0.0 ($\pm$ 0.0) \\

Label-Trim & \bfseries \cellcolor{Green!100} 1.104 ($\pm$ 0.0383) & \cellcolor{white} 0.01 ($\pm$ 0.0004)  & \bfseries \cellcolor{Green!100} 1.37 ($\pm$ 0.0474) & \cellcolor{white} 0.01 ($\pm$ 0.0004)  & \bfseries \cellcolor{Green!100} 1.651 ($\pm$ 0.0526) & \cellcolor{white} 0.009 ($\pm$ 0.0003) \\
\hline
Standard Power & \cellcolor{white} 0.151 ($\pm$ 0.0053) & \cellcolor{white} 0.009 ($\pm$ 0.0004)  & \cellcolor{white} 0.128 ($\pm$ 0.0047) & \cellcolor{white} 0.007 ($\pm$ 0.0003)  & \cellcolor{white} 0.105 ($\pm$ 0.0047) & \cellcolor{white} 0.005 ($\pm$ 0.0002) \\
\end{tabular}
}
\subcaption{Target type-I error rate $\alpha=0.01$}

\resizebox{\textwidth}{!}{
\begin{tabular}{l|ll|ll|ll}
\hline
& \multicolumn{6}{c}{Contamination rate} \\
\hline
 & \multicolumn{2}{c|}{1\%} & \multicolumn{2}{c|}{3\%} & \multicolumn{2}{c}{5\%} \\ \hline
 Method      & Power & Type-I Error & Power & Type-I Error & Power & Type-I Error \\ \hline
Standard & \bfseries \cellcolor{Green!30} 1.0 ($\pm$ 0.0284) & \cellcolor{white} 0.019 ($\pm$ 0.0005)  & \bfseries \cellcolor{Green!30} 1.0 ($\pm$ 0.0279) & \cellcolor{white} 0.014 ($\pm$ 0.0004)  & \bfseries \cellcolor{Green!30} 1.0 ($\pm$ 0.0313) & \cellcolor{white} 0.011 ($\pm$ 0.0004) \\

Oracle (infeasible) & \bfseries \cellcolor{Green!100} 1.074 ($\pm$ 0.0281) & 0.021 ($\pm$ 0.0005)  & \bfseries \cellcolor{Green!100} 1.221 ($\pm$ 0.0303) & \cellcolor{white} 0.02 ($\pm$ 0.0005)  & \bfseries \cellcolor{Green!100} 1.4 ($\pm$ 0.0317) & \cellcolor{white} 0.019 ($\pm$ 0.0006) \\

Naive-Trim (invalid) & \cellcolor{red!20} 1.295 ($\pm$ 0.0281) & \cellcolor{red!20} 0.028 ($\pm$ 0.0006)  & \cellcolor{red!20} 1.826 ($\pm$ 0.0314) & \cellcolor{red!20} 0.039 ($\pm$ 0.0007)  & \cellcolor{red!20} 2.351 ($\pm$ 0.0323) & \cellcolor{red!20} 0.049 ($\pm$ 0.0008) \\

Small-Clean & \cellcolor{white} 0.823 ($\pm$ 0.0624) & \cellcolor{white} 0.017 ($\pm$ 0.0018)  & \cellcolor{white} 0.471 ($\pm$ 0.0636) & \cellcolor{white} 0.008 ($\pm$ 0.0012)  & \cellcolor{white} 0.401 ($\pm$ 0.0691) & \cellcolor{white} 0.005 ($\pm$ 0.001) \\

Label-Trim & \bfseries \cellcolor{Green!60} 1.06 ($\pm$ 0.0275) & \cellcolor{white} 0.02 ($\pm$ 0.0005)  & \bfseries \cellcolor{Green!60} 1.173 ($\pm$ 0.0302) & \cellcolor{white} 0.019 ($\pm$ 0.0005)  & \bfseries \cellcolor{Green!60} 1.272 ($\pm$ 0.0311) & \cellcolor{white} 0.016 ($\pm$ 0.0005) \\
\hline
Standard Power & \cellcolor{white} 0.239 ($\pm$ 0.0068) & \cellcolor{white} 0.019 ($\pm$ 0.0005)  & \cellcolor{white} 0.213 ($\pm$ 0.0059) & \cellcolor{white} 0.014 ($\pm$ 0.0004)  & \cellcolor{white} 0.187 ($\pm$ 0.0059) & \cellcolor{white} 0.011 ($\pm$ 0.0004) \\
\end{tabular}
}
\subcaption{Target type-I error rate $\alpha=0.02$}

\resizebox{\textwidth}{!}{
\begin{tabular}{l|ll|ll|ll}
\hline
& \multicolumn{6}{c}{Contamination rate} \\
\hline
 & \multicolumn{2}{c|}{1\%} & \multicolumn{2}{c|}{3\%} & \multicolumn{2}{c}{5\%} \\ \hline
 Method      & Power & Type-I Error & Power & Type-I Error & Power & Type-I Error \\ \hline
Standard & \bfseries \cellcolor{Green!30} 1.0 ($\pm$ 0.0217) & \cellcolor{white} 0.028 ($\pm$ 0.0006)  & \bfseries \cellcolor{Green!30} 1.0 ($\pm$ 0.0239) & \cellcolor{white} 0.023 ($\pm$ 0.0005)  & \bfseries \cellcolor{Green!30} 1.0 ($\pm$ 0.0239) & \cellcolor{white} 0.018 ($\pm$ 0.0005) \\

Oracle (infeasible) & \bfseries \cellcolor{Green!100} 1.086 ($\pm$ 0.0207) & 0.031 ($\pm$ 0.0006)  & \bfseries \cellcolor{Green!100} 1.209 ($\pm$ 0.0264) & \cellcolor{white} 0.03 ($\pm$ 0.0007)  & \bfseries \cellcolor{Green!100} 1.344 ($\pm$ 0.0247) & \cellcolor{white} 0.029 ($\pm$ 0.0007) \\

Naive-Trim (invalid) & \cellcolor{red!20} 1.196 ($\pm$ 0.0201) & \cellcolor{red!20} 0.037 ($\pm$ 0.0007)  & \cellcolor{red!20} 1.512 ($\pm$ 0.0223) & \cellcolor{red!20} 0.048 ($\pm$ 0.0008)  & \cellcolor{red!20} 1.879 ($\pm$ 0.0245) & \cellcolor{red!20} 0.057 ($\pm$ 0.0009) \\

Small-Clean & \cellcolor{white} 0.662 ($\pm$ 0.0464) & \cellcolor{white} 0.018 ($\pm$ 0.0018)  & \cellcolor{white} 0.733 ($\pm$ 0.05) & \cellcolor{white} 0.017 ($\pm$ 0.0016)  & \cellcolor{white} 0.848 ($\pm$ 0.0556) & \cellcolor{white} 0.017 ($\pm$ 0.0018) \\

Label-Trim & \bfseries \cellcolor{Green!60} 1.054 ($\pm$ 0.0218) & \cellcolor{white} 0.03 ($\pm$ 0.0006)  & \bfseries \cellcolor{Green!60} 1.12 ($\pm$ 0.0256) & \cellcolor{white} 0.027 ($\pm$ 0.0006)  & \bfseries \cellcolor{Green!60} 1.167 ($\pm$ 0.0254) & \cellcolor{white} 0.023 ($\pm$ 0.0006) \\
\hline
Standard Power & \cellcolor{white} 0.313 ($\pm$ 0.0068) & \cellcolor{white} 0.028 ($\pm$ 0.0006)  & \cellcolor{white} 0.278 ($\pm$ 0.0066) & \cellcolor{white} 0.023 ($\pm$ 0.0005)  & \cellcolor{white} 0.252 ($\pm$ 0.006) & \cellcolor{white} 0.018 ($\pm$ 0.0005) \\
\end{tabular}
}
\subcaption{Target type-I error rate $\alpha=0.03$}

\end{table}

\begin{table}[!tb]
\caption{Comparison of conformal outlier detection methods on Texture dataset (outliers) and CIFAR-10 dataset (inliers) for varying contamination rate $r$ and target type-I error level $\alpha$. All methods utilize the SCALE \citep{scale} method with a pretrained ResNet-18. The empirical power is presented relative to the \texttt{Standard} method (higher is better). Results are averaged across 100 random splits of the data, with standard errors presented in parentheses.}
\label{app-tab:scale-texture}
\centering
\resizebox{\textwidth}{!}{
\begin{tabular}{l|ll|ll|ll}
\hline
& \multicolumn{6}{c}{Contamination rate} \\
\hline
 & \multicolumn{2}{c|}{1\%} & \multicolumn{2}{c|}{3\%} & \multicolumn{2}{c}{5\%} \\ \hline
 Method      & Power & Type-I Error & Power & Type-I Error & Power & Type-I Error \\ \hline
Standard & \bfseries \cellcolor{Green!30} 1.0 ($\pm$ 0.0416) & \cellcolor{white} 0.009 ($\pm$ 0.0003)  & \bfseries \cellcolor{Green!30} 1.0 ($\pm$ 0.0373) & \cellcolor{white} 0.006 ($\pm$ 0.0003)  & \bfseries \cellcolor{Green!30} 1.0 ($\pm$ 0.036) & \cellcolor{white} 0.005 ($\pm$ 0.0002) \\

Oracle (infeasible) & \bfseries \cellcolor{Green!100} 1.129 ($\pm$ 0.0436) & \cellcolor{white} 0.01 ($\pm$ 0.0003)  & \bfseries \cellcolor{Green!100} 1.399 ($\pm$ 0.0502) & \cellcolor{white} 0.01 ($\pm$ 0.0003)  & \bfseries \cellcolor{Green!100} 1.545 ($\pm$ 0.0506) & \cellcolor{white} 0.01 ($\pm$ 0.0004) \\

Naive-Trim (invalid) & \cellcolor{red!20} 1.757 ($\pm$ 0.0508) & \cellcolor{red!20} 0.018 ($\pm$ 0.0004)  & \cellcolor{red!20} 2.978 ($\pm$ 0.061) & \cellcolor{red!20} 0.029 ($\pm$ 0.0006)  & \cellcolor{red!20} 4.043 ($\pm$ 0.0598) & \cellcolor{red!20} 0.04 ($\pm$ 0.0007) \\

Small-Clean & \cellcolor{white} 0.0 ($\pm$ 0.0) & \cellcolor{white} 0.0 ($\pm$ 0.0)  & \cellcolor{white} 0.0 ($\pm$ 0.0) & \cellcolor{white} 0.0 ($\pm$ 0.0)  & \cellcolor{white} 0.0 ($\pm$ 0.0) & \cellcolor{white} 0.0 ($\pm$ 0.0) \\

Label-Trim & \bfseries \cellcolor{Green!100} 1.129 ($\pm$ 0.0436) & \cellcolor{white} 0.01 ($\pm$ 0.0003)  & \bfseries \cellcolor{Green!100} 1.399 ($\pm$ 0.0502) & \cellcolor{white} 0.01 ($\pm$ 0.0003)  & \bfseries \cellcolor{Green!60} 1.526 ($\pm$ 0.0482) & \cellcolor{white} 0.009 ($\pm$ 0.0003) \\
\hline
Standard Power & \cellcolor{white} 0.146 ($\pm$ 0.0061) & \cellcolor{white} 0.009 ($\pm$ 0.0003)  & \cellcolor{white} 0.125 ($\pm$ 0.0047) & \cellcolor{white} 0.006 ($\pm$ 0.0003)  & \cellcolor{white} 0.107 ($\pm$ 0.0038) & \cellcolor{white} 0.005 ($\pm$ 0.0002) \\
\end{tabular}
}
\subcaption{$\alpha=0.01$}

\resizebox{\textwidth}{!}{
\begin{tabular}{l|ll|ll|ll}
\hline
& \multicolumn{6}{c}{Contamination rate} \\
\hline
 & \multicolumn{2}{c|}{1\%} & \multicolumn{2}{c|}{3\%} & \multicolumn{2}{c}{5\%} \\ \hline
 Method      & Power & Type-I Error & Power & Type-I Error & Power & Type-I Error \\ \hline
Standard & \bfseries \cellcolor{Green!30} 1.0 ($\pm$ 0.029) & \cellcolor{white} 0.018 ($\pm$ 0.0004)  & \bfseries \cellcolor{Green!30} 1.0 ($\pm$ 0.0269) & \cellcolor{white} 0.014 ($\pm$ 0.0004)  & \bfseries \cellcolor{Green!30} 1.0 ($\pm$ 0.0286) & \cellcolor{white} 0.011 ($\pm$ 0.0003) \\

Oracle (infeasible) & \bfseries \cellcolor{Green!100} 1.089 ($\pm$ 0.0286) & 0.021 ($\pm$ 0.0005)  & \bfseries \cellcolor{Green!100} 1.292 ($\pm$ 0.0312) & \cellcolor{white} 0.02 ($\pm$ 0.0005)  & \bfseries \cellcolor{Green!100} 1.518 ($\pm$ 0.0278) & \cellcolor{white} 0.02 ($\pm$ 0.0005) \\

Naive-Trim (invalid) & \cellcolor{red!20} 1.295 ($\pm$ 0.0312) & \cellcolor{red!20} 0.027 ($\pm$ 0.0006)  & \cellcolor{red!20} 1.904 ($\pm$ 0.0338) & \cellcolor{red!20} 0.037 ($\pm$ 0.0006)  & \cellcolor{red!20} 2.51 ($\pm$ 0.0339) & \cellcolor{red!20} 0.048 ($\pm$ 0.0008) \\

Small-Clean & \cellcolor{white} 0.792 ($\pm$ 0.0585) & \cellcolor{white} 0.017 ($\pm$ 0.0018)  & \cellcolor{white} 0.636 ($\pm$ 0.0789) & \cellcolor{white} 0.012 ($\pm$ 0.0019)  & \cellcolor{white} 0.288 ($\pm$ 0.0601) & \cellcolor{white} 0.005 ($\pm$ 0.0013) \\

Label-Trim & \bfseries \cellcolor{Green!60} 1.069 ($\pm$ 0.0285) & \cellcolor{white} 0.02 ($\pm$ 0.0005)  & \bfseries \cellcolor{Green!60} 1.228 ($\pm$ 0.0309) & \cellcolor{white} 0.018 ($\pm$ 0.0005)  & \bfseries \cellcolor{Green!60} 1.333 ($\pm$ 0.0291) & \cellcolor{white} 0.015 ($\pm$ 0.0004) \\
\hline
Standard Power & \cellcolor{white} 0.259 ($\pm$ 0.0075) & \cellcolor{white} 0.018 ($\pm$ 0.0004)  & \cellcolor{white} 0.226 ($\pm$ 0.0061) & \cellcolor{white} 0.014 ($\pm$ 0.0004)  & \cellcolor{white} 0.188 ($\pm$ 0.0054) & \cellcolor{white} 0.011 ($\pm$ 0.0003) \\
\end{tabular}
}
\subcaption{$\alpha=0.02$}

\resizebox{\textwidth}{!}{
\begin{tabular}{l|ll|ll|ll}
\hline
& \multicolumn{6}{c}{Contamination rate} \\
\hline
 & \multicolumn{2}{c|}{1\%} & \multicolumn{2}{c|}{3\%} & \multicolumn{2}{c}{5\%} \\ \hline
 Method      & Power & Type-I Error & Power & Type-I Error & Power & Type-I Error \\ \hline
Standard & \bfseries \cellcolor{Green!30} 1.0 ($\pm$ 0.024) & \cellcolor{white} 0.028 ($\pm$ 0.0006)  & \bfseries \cellcolor{Green!30} 1.0 ($\pm$ 0.0222) & \cellcolor{white} 0.022 ($\pm$ 0.0005)  & \bfseries \cellcolor{Green!30} 1.0 ($\pm$ 0.0205) & \cellcolor{white} 0.017 ($\pm$ 0.0005) \\

Oracle (infeasible) & \bfseries \cellcolor{Green!100} 1.089 ($\pm$ 0.0227) & 0.031 ($\pm$ 0.0006)  & \bfseries \cellcolor{Green!100} 1.228 ($\pm$ 0.0248) & \cellcolor{white} 0.029 ($\pm$ 0.0006)  & \bfseries \cellcolor{Green!100} 1.394 ($\pm$ 0.0218) & \cellcolor{white} 0.03 ($\pm$ 0.0006) \\

Naive-Trim (invalid) & \cellcolor{red!20} 1.204 ($\pm$ 0.023) & \cellcolor{red!20} 0.036 ($\pm$ 0.0006)  & \cellcolor{red!20} 1.557 ($\pm$ 0.0251) & \cellcolor{red!20} 0.045 ($\pm$ 0.0007)  & \cellcolor{red!20} 1.919 ($\pm$ 0.0242) & \cellcolor{red!20} 0.056 ($\pm$ 0.0009) \\

Small-Clean & \cellcolor{white} 0.659 ($\pm$ 0.0441) & \cellcolor{white} 0.019 ($\pm$ 0.0018)  & \cellcolor{white} 0.809 ($\pm$ 0.0541) & \cellcolor{white} 0.019 ($\pm$ 0.002)  & \cellcolor{white} 0.837 ($\pm$ 0.0579) & \cellcolor{white} 0.017 ($\pm$ 0.002) \\

Label-Trim & \bfseries \cellcolor{Green!60} 1.057 ($\pm$ 0.0231) & \cellcolor{white} 0.03 ($\pm$ 0.0006)  & \bfseries \cellcolor{Green!60} 1.115 ($\pm$ 0.0238) & \cellcolor{white} 0.026 ($\pm$ 0.0006)  & \bfseries \cellcolor{Green!60} 1.173 ($\pm$ 0.021) & \cellcolor{white} 0.023 ($\pm$ 0.0005) \\
\hline
Standard Power & \cellcolor{white} 0.338 ($\pm$ 0.0081) & \cellcolor{white} 0.028 ($\pm$ 0.0006)  & \cellcolor{white} 0.307 ($\pm$ 0.0068) & \cellcolor{white} 0.022 ($\pm$ 0.0005)  & \cellcolor{white} 0.263 ($\pm$ 0.0054) & \cellcolor{white} 0.017 ($\pm$ 0.0005) \\
\end{tabular}
}
\subcaption{$\alpha=0.03$}

\end{table}

\begin{table}[!tb]
\caption{Comparison of conformal outlier detection methods on SVHN dataset (outliers) and CIFAR-10 dataset (inliers) for varying contamination rate $r$ and target type-I error level $\alpha$. All methods utilize the SCALE \citep{scale} method with a pretrained ResNet-18. The empirical power is presented relative to the \texttt{Standard} method (higher is better). Results are averaged across 100 random splits of the data, with standard errors presented in parentheses.}
\label{app-tab:scale-svhn}
\centering
\resizebox{\textwidth}{!}{
\begin{tabular}{l|ll|ll|ll}
\hline
& \multicolumn{6}{c}{Contamination rate} \\
\hline
 & \multicolumn{2}{c|}{1\%} & \multicolumn{2}{c|}{3\%} & \multicolumn{2}{c}{5\%} \\ \hline
 Method      & Power & Type-I Error & Power & Type-I Error & Power & Type-I Error \\ \hline
Standard & \bfseries \cellcolor{Green!30} 1.0 ($\pm$ 0.0308) & \cellcolor{white} 0.009 ($\pm$ 0.0003)  & \bfseries \cellcolor{Green!30} 1.0 ($\pm$ 0.0386) & \cellcolor{white} 0.006 ($\pm$ 0.0003)  & \bfseries \cellcolor{Green!30} 1.0 ($\pm$ 0.0401) & \cellcolor{white} 0.004 ($\pm$ 0.0002) \\

Oracle (infeasible) & \bfseries \cellcolor{Green!100} 1.144 ($\pm$ 0.0321) & \cellcolor{white} 0.01 ($\pm$ 0.0003)  & \bfseries \cellcolor{Green!100} 1.526 ($\pm$ 0.0466) & \cellcolor{white} 0.01 ($\pm$ 0.0003)  & \bfseries \cellcolor{Green!100} 1.868 ($\pm$ 0.058) & \cellcolor{white} 0.01 ($\pm$ 0.0004) \\

Naive-Trim (invalid) & \cellcolor{red!20} 1.743 ($\pm$ 0.033) & \cellcolor{red!20} 0.017 ($\pm$ 0.0004)  & \cellcolor{red!20} 2.838 ($\pm$ 0.0519) & \cellcolor{red!20} 0.027 ($\pm$ 0.0005)  & \cellcolor{red!20} 4.211 ($\pm$ 0.0669) & \cellcolor{red!20} 0.038 ($\pm$ 0.0007) \\

Small-Clean & \cellcolor{white} 0.0 ($\pm$ 0.0) & \cellcolor{white} 0.0 ($\pm$ 0.0)  & \cellcolor{white} 0.0 ($\pm$ 0.0) & \cellcolor{white} 0.0 ($\pm$ 0.0)  & \cellcolor{white} 0.0 ($\pm$ 0.0) & \cellcolor{white} 0.0 ($\pm$ 0.0) \\

Label-Trim & \bfseries \cellcolor{Green!100} 1.144 ($\pm$ 0.0321) & \cellcolor{white} 0.01 ($\pm$ 0.0003)  & \bfseries \cellcolor{Green!60} 1.513 ($\pm$ 0.0462) & \cellcolor{white} 0.01 ($\pm$ 0.0003)  & \bfseries \cellcolor{Green!60} 1.749 ($\pm$ 0.0529) & \cellcolor{white} 0.009 ($\pm$ 0.0003) \\
\hline
Standard Power & \cellcolor{white} 0.191 ($\pm$ 0.0059) & \cellcolor{white} 0.009 ($\pm$ 0.0003)  & \cellcolor{white} 0.143 ($\pm$ 0.0055) & \cellcolor{white} 0.006 ($\pm$ 0.0003)  & \cellcolor{white} 0.115 ($\pm$ 0.0046) & \cellcolor{white} 0.004 ($\pm$ 0.0002) \\
\end{tabular}
}
\subcaption{$\alpha=0.01$}

\resizebox{\textwidth}{!}{
\begin{tabular}{l|ll|ll|ll}
\hline
& \multicolumn{6}{c}{Contamination rate} \\
\hline
 & \multicolumn{2}{c|}{1\%} & \multicolumn{2}{c|}{3\%} & \multicolumn{2}{c}{5\%} \\ \hline
 Method      & Power & Type-I Error & Power & Type-I Error & Power & Type-I Error \\ \hline
Standard & \bfseries \cellcolor{Green!30} 1.0 ($\pm$ 0.0189) & \cellcolor{white} 0.017 ($\pm$ 0.0004)  & \bfseries \cellcolor{Green!30} 1.0 ($\pm$ 0.027) & \cellcolor{white} 0.012 ($\pm$ 0.0003)  & \bfseries \cellcolor{Green!30} 1.0 ($\pm$ 0.0299) & \cellcolor{white} 0.01 ($\pm$ 0.0003) \\

Oracle (infeasible) & \bfseries \cellcolor{Green!100} 1.082 ($\pm$ 0.0204) & 0.021 ($\pm$ 0.0005)  & \bfseries \cellcolor{Green!100} 1.342 ($\pm$ 0.0262) & \cellcolor{white} 0.02 ($\pm$ 0.0005)  & \bfseries \cellcolor{Green!100} 1.642 ($\pm$ 0.0331) & \cellcolor{white} 0.02 ($\pm$ 0.0005) \\

Naive-Trim (invalid) & \cellcolor{red!20} 1.214 ($\pm$ 0.021) & \cellcolor{red!20} 0.027 ($\pm$ 0.0006)  & \cellcolor{red!20} 1.858 ($\pm$ 0.0271) & \cellcolor{red!20} 0.036 ($\pm$ 0.0006)  & \cellcolor{red!20} 2.462 ($\pm$ 0.0359) & \cellcolor{red!20} 0.046 ($\pm$ 0.0008) \\

Small-Clean & \cellcolor{white} 0.788 ($\pm$ 0.0566) & \cellcolor{white} 0.018 ($\pm$ 0.002)  & \cellcolor{white} 0.662 ($\pm$ 0.0764) & \cellcolor{white} 0.011 ($\pm$ 0.0015)  & \cellcolor{white} 0.46 ($\pm$ 0.0819) & \cellcolor{white} 0.007 ($\pm$ 0.0016) \\

Label-Trim & \bfseries \cellcolor{Green!60} 1.068 ($\pm$ 0.0207) & \cellcolor{white} 0.02 ($\pm$ 0.0005)  & \bfseries \cellcolor{Green!60} 1.253 ($\pm$ 0.0275) & \cellcolor{white} 0.017 ($\pm$ 0.0005)  & \bfseries \cellcolor{Green!60} 1.403 ($\pm$ 0.0332) & \cellcolor{white} 0.014 ($\pm$ 0.0004) \\
\hline
Standard Power & \cellcolor{white}  0.336 ($\pm$ 0.0064) & \cellcolor{white} 0.017 ($\pm$ 0.0004)  & \cellcolor{white}  0.256 ($\pm$ 0.0069) & \cellcolor{white} 0.012 ($\pm$ 0.0003)  & \cellcolor{white}  0.216 ($\pm$ 0.0064) & \cellcolor{white} 0.01 ($\pm$ 0.0003) \\
\end{tabular}
}
\subcaption{$\alpha=0.02$}

\resizebox{\textwidth}{!}{
\begin{tabular}{l|ll|ll|ll}
\hline
& \multicolumn{6}{c}{Contamination rate} \\
\hline
 & \multicolumn{2}{c|}{1\%} & \multicolumn{2}{c|}{3\%} & \multicolumn{2}{c}{5\%} \\ \hline
 Method      & Power & Type-I Error & Power & Type-I Error & Power & Type-I Error \\ \hline
Standard & \bfseries \cellcolor{Green!30} 1.0 ($\pm$ 0.0173) & \cellcolor{white} 0.027 ($\pm$ 0.0006)  & \bfseries \cellcolor{Green!30} 1.0 ($\pm$ 0.0196) & \cellcolor{white} 0.02 ($\pm$ 0.0005)  & \bfseries \cellcolor{Green!30} 1.0 ($\pm$ 0.0229) & \cellcolor{white} 0.015 ($\pm$ 0.0004) \\

Oracle (infeasible) & \bfseries \cellcolor{Green!100} 1.075 ($\pm$ 0.017) & 0.031 ($\pm$ 0.0006)  & \bfseries \cellcolor{Green!100} 1.247 ($\pm$ 0.0215) & \cellcolor{white} 0.029 ($\pm$ 0.0006)  & \bfseries \cellcolor{Green!100} 1.39 ($\pm$ 0.0246) & \cellcolor{white} 0.03 ($\pm$ 0.0006) \\

Naive-Trim (invalid) & \cellcolor{red!20} 1.162 ($\pm$ 0.0163) & \cellcolor{red!20} 0.036 ($\pm$ 0.0006)  & \cellcolor{red!20} 1.519 ($\pm$ 0.0195) & \cellcolor{red!20} 0.044 ($\pm$ 0.0007)  & \cellcolor{red!20} 1.805 ($\pm$ 0.0245) & \cellcolor{red!20} 0.054 ($\pm$ 0.0008) \\

Small-Clean & \cellcolor{white} 0.686 ($\pm$ 0.0437) & \cellcolor{white} 0.019 ($\pm$ 0.002)  & \cellcolor{white} 0.857 ($\pm$ 0.0528) & \cellcolor{white} 0.019 ($\pm$ 0.0017)  & \cellcolor{white} 0.913 ($\pm$ 0.0586) & \cellcolor{white} 0.018 ($\pm$ 0.0018) \\

Label-Trim & \bfseries \cellcolor{Green!60} 1.046 ($\pm$ 0.0176) & \cellcolor{white} 0.029 ($\pm$ 0.0006)  & \bfseries \cellcolor{Green!60} 1.122 ($\pm$ 0.0212) & \cellcolor{white} 0.025 ($\pm$ 0.0005)  & \bfseries \cellcolor{Green!60} 1.179 ($\pm$ 0.0225) & \cellcolor{white} 0.022 ($\pm$ 0.0005) \\
\hline
Standard Power & \cellcolor{white}  0.411 ($\pm$ 0.0071) & \cellcolor{white} 0.027 ($\pm$ 0.0006)  & \cellcolor{white}  0.342 ($\pm$ 0.0067) & \cellcolor{white} 0.02 ($\pm$ 0.0005)  & \cellcolor{white}  0.311 ($\pm$ 0.0071) & \cellcolor{white} 0.015 ($\pm$ 0.0004) \\
\end{tabular}
}
\subcaption{$\alpha=0.03$}

\end{table}

\begin{table}[!tb]
\caption{Comparison of conformal outlier detection methods on Places365 dataset (outliers) and CIFAR-10 dataset (inliers) for varying contamination rate $r$ and target type-I error level $\alpha$. All methods utilize the SCALE \citep{scale} method with a pretrained ResNet-18. The empirical power is presented relative to the \texttt{Standard} method (higher is better). Results are averaged across 100 random splits of the data, with standard errors presented in parentheses.}
\label{app-tab:scale-places}
\centering
\resizebox{\textwidth}{!}{
\begin{tabular}{l|ll|ll|ll}
\hline
& \multicolumn{6}{c}{Contamination rate} \\
\hline
 & \multicolumn{2}{c|}{1\%} & \multicolumn{2}{c|}{3\%} & \multicolumn{2}{c}{5\%} \\ \hline
 Method      & Power & Type-I Error & Power & Type-I Error & Power & Type-I Error \\ \hline
Standard & \bfseries \cellcolor{Green!30} 1.0 ($\pm$ 0.0354) & \cellcolor{white} 0.009 ($\pm$ 0.0003)  & \bfseries \cellcolor{Green!30} 1.0 ($\pm$ 0.0381) & \cellcolor{white} 0.006 ($\pm$ 0.0003)  & \bfseries \cellcolor{Green!30} 1.0 ($\pm$ 0.0462) & \cellcolor{white} 0.004 ($\pm$ 0.0002) \\

Oracle (infeasible) & \bfseries \cellcolor{Green!100} 1.164 ($\pm$ 0.0361) & \cellcolor{white} 0.01 ($\pm$ 0.0003)  & \bfseries \cellcolor{Green!100} 1.489 ($\pm$ 0.0456) & \cellcolor{white} 0.01 ($\pm$ 0.0003)  & \bfseries \cellcolor{Green!100} 1.773 ($\pm$ 0.0555) & \cellcolor{white} 0.01 ($\pm$ 0.0004) \\

Naive-Trim (invalid) & \cellcolor{red!20} 1.729 ($\pm$ 0.0387) & \cellcolor{red!20} 0.017 ($\pm$ 0.0004)  & \cellcolor{red!20} 2.944 ($\pm$ 0.0532) & \cellcolor{red!20} 0.028 ($\pm$ 0.0006)  & \cellcolor{red!20} 4.136 ($\pm$ 0.0545) & \cellcolor{red!20} 0.038 ($\pm$ 0.0007) \\

Small-Clean & \cellcolor{white} 0.0 ($\pm$ 0.0) & \cellcolor{white} 0.0 ($\pm$ 0.0)  & \cellcolor{white} 0.0 ($\pm$ 0.0) & \cellcolor{white} 0.0 ($\pm$ 0.0)  & \cellcolor{white} 0.0 ($\pm$ 0.0) & \cellcolor{white} 0.0 ($\pm$ 0.0) \\

Label-Trim & \bfseries \cellcolor{Green!100} 1.164 ($\pm$ 0.0361) & \cellcolor{white} 0.01 ($\pm$ 0.0003)  & \bfseries \cellcolor{Green!60} 1.483 ($\pm$ 0.0452) & \cellcolor{white} 0.01 ($\pm$ 0.0003)  & \bfseries \cellcolor{Green!60} 1.704 ($\pm$ 0.0533) & \cellcolor{white} 0.009 ($\pm$ 0.0003) \\
\hline
Standard Power & \cellcolor{white}  0.174 ($\pm$ 0.0061) & \cellcolor{white} 0.009 ($\pm$ 0.0003)  & \cellcolor{white}  0.133 ($\pm$ 0.0051) & \cellcolor{white} 0.006 ($\pm$ 0.0003)  & \cellcolor{white}  0.112 ($\pm$ 0.0052) & \cellcolor{white} 0.004 ($\pm$ 0.0002) \\
\end{tabular}
}
\subcaption{$\alpha=0.01$}

\resizebox{\textwidth}{!}{
\begin{tabular}{l|ll|ll|ll}
\hline
& \multicolumn{6}{c}{Contamination rate} \\
\hline
 & \multicolumn{2}{c|}{1\%} & \multicolumn{2}{c|}{3\%} & \multicolumn{2}{c}{5\%} \\ \hline
 Method      & Power & Type-I Error & Power & Type-I Error & Power & Type-I Error \\ \hline
Standard & \bfseries \cellcolor{Green!30} 1.0 ($\pm$ 0.0221) & \cellcolor{white} 0.018 ($\pm$ 0.0004)  & \bfseries \cellcolor{Green!30} 1.0 ($\pm$ 0.0258) & \cellcolor{white} 0.013 ($\pm$ 0.0004)  & \bfseries \cellcolor{Green!30} 1.0 ($\pm$ 0.0296) & \cellcolor{white} 0.01 ($\pm$ 0.0003) \\

Oracle (infeasible) & \bfseries \cellcolor{Green!100} 1.068 ($\pm$ 0.0237) & 0.021 ($\pm$ 0.0005)  & \bfseries \cellcolor{Green!100} 1.291 ($\pm$ 0.0272) & \cellcolor{white} 0.02 ($\pm$ 0.0005)  & \bfseries \cellcolor{Green!100} 1.564 ($\pm$ 0.0325) & \cellcolor{white} 0.02 ($\pm$ 0.0005) \\

Naive-Trim (invalid) & \cellcolor{red!20} 1.233 ($\pm$ 0.0249) & \cellcolor{red!20} 0.027 ($\pm$ 0.0006)  & \cellcolor{red!20} 1.782 ($\pm$ 0.0278) & \cellcolor{red!20} 0.036 ($\pm$ 0.0006)  & \cellcolor{red!20} 2.483 ($\pm$ 0.0308) & \cellcolor{red!20} 0.046 ($\pm$ 0.0008) \\

Small-Clean & \cellcolor{white} 0.823 ($\pm$ 0.0588) & \cellcolor{white} 0.018 ($\pm$ 0.0019)  & \cellcolor{white} 0.595 ($\pm$ 0.074) & \cellcolor{white} 0.011 ($\pm$ 0.0016)  & \cellcolor{white} 0.331 ($\pm$ 0.0746) & \cellcolor{white} 0.005 ($\pm$ 0.0014) \\

Label-Trim & \bfseries \cellcolor{Green!60} 1.058 ($\pm$ 0.0236) & \cellcolor{white} 0.02 ($\pm$ 0.0005)  & \bfseries \cellcolor{Green!60} 1.212 ($\pm$ 0.0272) & \cellcolor{white} 0.018 ($\pm$ 0.0005)  & \bfseries \cellcolor{Green!60} 1.357 ($\pm$ 0.0312) & \cellcolor{white} 0.015 ($\pm$ 0.0004) \\
\hline
Standard Power & \cellcolor{white}  0.302 ($\pm$ 0.0067) & \cellcolor{white} 0.018 ($\pm$ 0.0004)  & \cellcolor{white}  0.25 ($\pm$ 0.0065) & \cellcolor{white} 0.013 ($\pm$ 0.0004)  & \cellcolor{white}  0.207 ($\pm$ 0.0061) & \cellcolor{white} 0.01 ($\pm$ 0.0003) \\
\end{tabular}
}
\subcaption{$\alpha=0.02$}

\resizebox{\textwidth}{!}{
\begin{tabular}{l|ll|ll|ll}
\hline
& \multicolumn{6}{c}{Contamination rate} \\
\hline
 & \multicolumn{2}{c|}{1\%} & \multicolumn{2}{c|}{3\%} & \multicolumn{2}{c}{5\%} \\ \hline
 Method      & Power & Type-I Error & Power & Type-I Error & Power & Type-I Error \\ \hline
Standard & \bfseries \cellcolor{Green!30} 1.0 ($\pm$ 0.0201) & \cellcolor{white} 0.028 ($\pm$ 0.0006)  & \bfseries \cellcolor{Green!30} 1.0 ($\pm$ 0.0203) & \cellcolor{white} 0.021 ($\pm$ 0.0005)  & \cellcolor{white} 1.0 ($\pm$ 0.0225) & \cellcolor{white} 0.016 ($\pm$ 0.0004) \\

Oracle (infeasible) & \bfseries \cellcolor{Green!100} 1.084 ($\pm$ 0.0218) & 0.031 ($\pm$ 0.0006)  & \bfseries \cellcolor{Green!100} 1.229 ($\pm$ 0.0211) & \cellcolor{white} 0.029 ($\pm$ 0.0006)  & \bfseries \cellcolor{Green!100} 1.383 ($\pm$ 0.0228) & \cellcolor{white} 0.03 ($\pm$ 0.0006) \\

Naive-Trim (invalid) & \cellcolor{red!20} 1.186 ($\pm$ 0.0217) & \cellcolor{red!20} 0.036 ($\pm$ 0.0006)  & \cellcolor{red!20} 1.534 ($\pm$ 0.0216) & \cellcolor{red!20} 0.044 ($\pm$ 0.0007)  & \cellcolor{red!20} 1.866 ($\pm$ 0.0213) & \cellcolor{red!20} 0.055 ($\pm$ 0.0009) \\

Small-Clean & \cellcolor{white} 0.712 ($\pm$ 0.0449) & \cellcolor{white} 0.019 ($\pm$ 0.0018)  & \cellcolor{white} 0.832 ($\pm$ 0.0525) & \cellcolor{white} 0.019 ($\pm$ 0.0018)  & \bfseries \cellcolor{Green!30} 1.025 ($\pm$ 0.0644) & \cellcolor{white} 0.023 ($\pm$ 0.0022) \\

Label-Trim & \bfseries \cellcolor{Green!60} 1.044 ($\pm$ 0.0206) & \cellcolor{white} 0.029 ($\pm$ 0.0006)  & \bfseries \cellcolor{Green!60} 1.121 ($\pm$ 0.0208) & \cellcolor{white} 0.026 ($\pm$ 0.0005)  & \bfseries \cellcolor{Green!60} 1.167 ($\pm$ 0.0219) & \cellcolor{white} 0.022 ($\pm$ 0.0005) \\
\hline
Standard Power & \cellcolor{white}  0.374 ($\pm$ 0.0075) & \cellcolor{white} 0.028 ($\pm$ 0.0006)  & \cellcolor{white}  0.327 ($\pm$ 0.0066) & \cellcolor{white} 0.021 ($\pm$ 0.0005)  & \cellcolor{white} 0.291 ($\pm$ 0.0065) & \cellcolor{white} 0.016 ($\pm$ 0.0004) \\
\end{tabular}
}
\subcaption{$\alpha=0.03$}

\end{table}

\begin{table}[!tb]
\caption{Comparison of conformal outlier detection methods on MNIST dataset (outliers) and CIFAR-10 dataset (inliers) for varying contamination rate $r$ and target type-I error level $\alpha$. All methods utilize the SCALE \citep{scale} method with a pretrained ResNet-18. The empirical power is presented relative to the \texttt{Standard} method (higher is better). Results are averaged across 100 random splits of the data, with standard errors presented in parentheses.}
\label{app-tab:scale-mnist}
\centering
\resizebox{\textwidth}{!}{
\begin{tabular}{l|ll|ll|ll}
\hline
& \multicolumn{6}{c}{Contamination rate} \\
\hline
 & \multicolumn{2}{c|}{1\%} & \multicolumn{2}{c|}{3\%} & \multicolumn{2}{c}{5\%} \\ \hline
 Method      & Power & Type-I Error & Power & Type-I Error & Power & Type-I Error \\ \hline
Standard & \bfseries \cellcolor{Green!30} 1.0 ($\pm$ 0.0286) & \cellcolor{white} 0.008 ($\pm$ 0.0003)  & \bfseries \cellcolor{Green!30} 1.0 ($\pm$ 0.0284) & \cellcolor{white} 0.004 ($\pm$ 0.0002)  & \bfseries \cellcolor{Green!30} 1.0 ($\pm$ 0.0366) & \cellcolor{white} 0.003 ($\pm$ 0.0002) \\

Oracle (infeasible) & \bfseries \cellcolor{Green!100} 1.174 ($\pm$ 0.0308) & \cellcolor{white} 0.01 ($\pm$ 0.0003)  & \bfseries \cellcolor{Green!100} 1.625 ($\pm$ 0.0389) & \cellcolor{white} 0.01 ($\pm$ 0.0003)  & \bfseries \cellcolor{Green!100} 1.961 ($\pm$ 0.045) & \cellcolor{white} 0.01 ($\pm$ 0.0004) \\

Naive-Trim (invalid) & \cellcolor{red!20} 1.641 ($\pm$ 0.0326) & \cellcolor{red!20} 0.016 ($\pm$ 0.0004)  & \cellcolor{red!20} 2.781 ($\pm$ 0.0382) & \cellcolor{red!20} 0.025 ($\pm$ 0.0006)  & \cellcolor{red!20} 3.78 ($\pm$ 0.0441) & \cellcolor{red!20} 0.032 ($\pm$ 0.0006) \\

Small-Clean & \cellcolor{white} 0.0 ($\pm$ 0.0) & \cellcolor{white} 0.0 ($\pm$ 0.0)  & \cellcolor{white} 0.0 ($\pm$ 0.0) & \cellcolor{white} 0.0 ($\pm$ 0.0)  & \cellcolor{white} 0.0 ($\pm$ 0.0) & \cellcolor{white} 0.0 ($\pm$ 0.0) \\

Label-Trim & \bfseries \cellcolor{Green!100} 1.174 ($\pm$ 0.0308) & \cellcolor{white} 0.01 ($\pm$ 0.0003)  & \bfseries \cellcolor{Green!60} 1.555 ($\pm$ 0.0366) & \cellcolor{white} 0.01 ($\pm$ 0.0003)  & \bfseries \cellcolor{Green!60} 1.701 ($\pm$ 0.041) & \cellcolor{white} 0.008 ($\pm$ 0.0003) \\
\hline
Standard Power & \cellcolor{white}  0.251 ($\pm$ 0.0072) & \cellcolor{white} 0.008 ($\pm$ 0.0003)  & \cellcolor{white}  0.181 ($\pm$ 0.0051) & \cellcolor{white} 0.004 ($\pm$ 0.0002)  & \cellcolor{white}  0.152 ($\pm$ 0.0056) & \cellcolor{white} 0.003 ($\pm$ 0.0002) \\
\end{tabular}
}
\subcaption{$\alpha=0.01$}

\resizebox{\textwidth}{!}{
\begin{tabular}{l|ll|ll|ll}
\hline
& \multicolumn{6}{c}{Contamination rate} \\
\hline
 & \multicolumn{2}{c|}{1\%} & \multicolumn{2}{c|}{3\%} & \multicolumn{2}{c}{5\%} \\ \hline
 Method      & Power & Type-I Error & Power & Type-I Error & Power & Type-I Error \\ \hline
Standard & \bfseries \cellcolor{Green!30} 1.0 ($\pm$ 0.0195) & \cellcolor{white} 0.017 ($\pm$ 0.0004)  & \bfseries \cellcolor{Green!30} 1.0 ($\pm$ 0.0216) & \cellcolor{white} 0.011 ($\pm$ 0.0003)  & \bfseries \cellcolor{Green!30} 1.0 ($\pm$ 0.0244) & \cellcolor{white} 0.008 ($\pm$ 0.0003) \\

Oracle (infeasible) & \bfseries \cellcolor{Green!100} 1.095 ($\pm$ 0.0198) & 0.021 ($\pm$ 0.0005)  & \bfseries \cellcolor{Green!100} 1.416 ($\pm$ 0.0218) & \cellcolor{white} 0.02 ($\pm$ 0.0005)  & \bfseries \cellcolor{Green!100} 1.776 ($\pm$ 0.0279) & \cellcolor{white} 0.02 ($\pm$ 0.0005) \\

Naive-Trim (invalid) & \cellcolor{red!20} 1.212 ($\pm$ 0.0195) & \cellcolor{red!20} 0.026 ($\pm$ 0.0006)  & \cellcolor{red!20} 1.84 ($\pm$ 0.0223) & \cellcolor{red!20} 0.032 ($\pm$ 0.0006)  & \cellcolor{red!20} 2.456 ($\pm$ 0.0243) & \cellcolor{red!20} 0.04 ($\pm$ 0.0007) \\

Small-Clean & \cellcolor{white} 0.849 ($\pm$ 0.0589) & \cellcolor{white} 0.02 ($\pm$ 0.0021)  & \cellcolor{white} 0.605 ($\pm$ 0.0715) & \cellcolor{white} 0.009 ($\pm$ 0.0015)  & \cellcolor{white} 0.366 ($\pm$ 0.0772) & \cellcolor{white} 0.005 ($\pm$ 0.0014) \\

Label-Trim & \bfseries \cellcolor{Green!60} 1.076 ($\pm$ 0.0197) & \cellcolor{white} 0.02 ($\pm$ 0.0005)  & \bfseries \cellcolor{Green!60} 1.3 ($\pm$ 0.0227) & \cellcolor{white} 0.016 ($\pm$ 0.0005)  & \bfseries \cellcolor{Green!60} 1.424 ($\pm$ 0.0272) & \cellcolor{white} 0.013 ($\pm$ 0.0004) \\
\hline
Standard Power & \cellcolor{white}  0.415 ($\pm$ 0.0081) & \cellcolor{white} 0.017 ($\pm$ 0.0004)  & \cellcolor{white}  0.317 ($\pm$ 0.0069) & \cellcolor{white} 0.011 ($\pm$ 0.0003)  & \cellcolor{white}  0.257 ($\pm$ 0.0063) & \cellcolor{white} 0.008 ($\pm$ 0.0003) \\
\end{tabular}
}
\subcaption{$\alpha=0.02$}

\resizebox{\textwidth}{!}{
\begin{tabular}{l|ll|ll|ll}
\hline
& \multicolumn{6}{c}{Contamination rate} \\
\hline
 & \multicolumn{2}{c|}{1\%} & \multicolumn{2}{c|}{3\%} & \multicolumn{2}{c}{5\%} \\ \hline
 Method      & Power & Type-I Error & Power & Type-I Error & Power & Type-I Error \\ \hline
Standard & \bfseries \cellcolor{Green!30} 1.0 ($\pm$ 0.0161) & \cellcolor{white} 0.026 ($\pm$ 0.0006)  & \bfseries \cellcolor{Green!30} 1.0 ($\pm$ 0.0164) & \cellcolor{white} 0.018 ($\pm$ 0.0005)  & \cellcolor{white} 1.0 ($\pm$ 0.0191) & \cellcolor{white} 0.012 ($\pm$ 0.0004) \\

Oracle (infeasible) & \bfseries \cellcolor{Green!100} 1.086 ($\pm$ 0.0154) & 0.031 ($\pm$ 0.0006)  & \bfseries \cellcolor{Green!100} 1.272 ($\pm$ 0.0167) & \cellcolor{white} 0.029 ($\pm$ 0.0006)  & \bfseries \cellcolor{Green!100} 1.518 ($\pm$ 0.02) & \cellcolor{white} 0.03 ($\pm$ 0.0006) \\

Naive-Trim (invalid) & \cellcolor{red!20} 1.161 ($\pm$ 0.0144) & \cellcolor{red!20} 0.035 ($\pm$ 0.0006)  & \cellcolor{red!20} 1.483 ($\pm$ 0.0168) & \cellcolor{red!20} 0.041 ($\pm$ 0.0007)  & \cellcolor{red!20} 1.852 ($\pm$ 0.0172) & \cellcolor{red!20} 0.048 ($\pm$ 0.0008) \\

Small-Clean & \cellcolor{white} 0.778 ($\pm$ 0.0434) & \cellcolor{white} 0.022 ($\pm$ 0.002)  & \cellcolor{white} 0.848 ($\pm$ 0.0471) & \cellcolor{white} 0.018 ($\pm$ 0.0018)  & \bfseries \cellcolor{Green!30} 1.075 ($\pm$ 0.0562) & \cellcolor{white} 0.02 ($\pm$ 0.0021) \\

Label-Trim & \bfseries \cellcolor{Green!60} 1.056 ($\pm$ 0.0161) & \cellcolor{white} 0.029 ($\pm$ 0.0006)  & \bfseries \cellcolor{Green!60} 1.138 ($\pm$ 0.0162) & \cellcolor{white} 0.024 ($\pm$ 0.0005)  & \bfseries \cellcolor{Green!60} 1.237 ($\pm$ 0.0195) & \cellcolor{white} 0.019 ($\pm$ 0.0005) \\
\hline
Standard Power & \cellcolor{white}  0.505 ($\pm$ 0.0081) & \cellcolor{white} 0.026 ($\pm$ 0.0006)  & \cellcolor{white}  0.43 ($\pm$ 0.007) & \cellcolor{white} 0.018 ($\pm$ 0.0005)  & \cellcolor{white} 0.362 ($\pm$ 0.0069) & \cellcolor{white} 0.012 ($\pm$ 0.0004) \\
\end{tabular}
}
\subcaption{$\alpha=0.03$}

\end{table}

\begin{table}[!tb]
\caption{Comparison of conformal outlier detection methods on CIFAR-100 dataset (outliers) and CIFAR-10 dataset (inliers) for varying contamination rate $r$ and target type-I error level $\alpha$. All methods utilize the SCALE \citep{scale} method with a pretrained ResNet-18. The empirical power is presented relative to the \texttt{Standard} method (higher is better). Results are averaged across 100 random splits of the data, with standard errors presented in parentheses.}
\label{app-tab:scale-cifar100}
\centering
\resizebox{\textwidth}{!}{
\begin{tabular}{l|ll|ll|ll}
\hline
& \multicolumn{6}{c}{Contamination rate} \\
\hline
 & \multicolumn{2}{c|}{1\%} & \multicolumn{2}{c|}{3\%} & \multicolumn{2}{c}{5\%} \\ \hline
 Method      & Power & Type-I Error & Power & Type-I Error & Power & Type-I Error \\ \hline
Standard & \bfseries \cellcolor{Green!30} 1.0 ($\pm$ 0.0399) & \cellcolor{white} 0.009 ($\pm$ 0.0003)  & \bfseries \cellcolor{Green!30} 1.0 ($\pm$ 0.0432) & \cellcolor{white} 0.007 ($\pm$ 0.0003)  & \bfseries \cellcolor{Green!30} 1.0 ($\pm$ 0.047) & \cellcolor{white} 0.006 ($\pm$ 0.0003) \\

Oracle (infeasible) & \bfseries \cellcolor{Green!100} 1.113 ($\pm$ 0.0396) & \cellcolor{white} 0.01 ($\pm$ 0.0003)  & \bfseries \cellcolor{Green!100} 1.355 ($\pm$ 0.0496) & \cellcolor{white} 0.01 ($\pm$ 0.0003)  & \bfseries \cellcolor{Green!60} 1.478 ($\pm$ 0.0566) & \cellcolor{white} 0.01 ($\pm$ 0.0004) \\

Naive-Trim (invalid) & \cellcolor{red!20} 1.848 ($\pm$ 0.0507) & \cellcolor{red!20} 0.018 ($\pm$ 0.0005)  & \cellcolor{red!20} 3.161 ($\pm$ 0.0652) & \cellcolor{red!20} 0.03 ($\pm$ 0.0006)  & \cellcolor{red!20} 4.447 ($\pm$ 0.0722) & \cellcolor{red!20} 0.042 ($\pm$ 0.0007) \\

Small-Clean & \cellcolor{white} 0.0 ($\pm$ 0.0) & \cellcolor{white} 0.0 ($\pm$ 0.0)  & \cellcolor{white} 0.0 ($\pm$ 0.0) & \cellcolor{white} 0.0 ($\pm$ 0.0)  & \cellcolor{white} 0.0 ($\pm$ 0.0) & \cellcolor{white} 0.0 ($\pm$ 0.0) \\

Label-Trim & \bfseries \cellcolor{Green!100} 1.116 ($\pm$ 0.04) & \cellcolor{white} 0.01 ($\pm$ 0.0003)  & \bfseries \cellcolor{Green!100} 1.355 ($\pm$ 0.0496) & \cellcolor{white} 0.01 ($\pm$ 0.0003)  & \bfseries \cellcolor{Green!100} 1.491 ($\pm$ 0.0582) & \cellcolor{white} 0.01 ($\pm$ 0.0004) \\
\hline
Standard Power & \cellcolor{white}  0.124 ($\pm$ 0.0049) & \cellcolor{white} 0.009 ($\pm$ 0.0003)  & \cellcolor{white}  0.103 ($\pm$ 0.0045) & \cellcolor{white} 0.007 ($\pm$ 0.0003)  & \cellcolor{white}  0.091 ($\pm$ 0.0043) & \cellcolor{white} 0.006 ($\pm$ 0.0003) \\
\end{tabular}
}
\subcaption{$\alpha=0.01$}

\resizebox{\textwidth}{!}{
\begin{tabular}{l|ll|ll|ll}
\hline
& \multicolumn{6}{c}{Contamination rate} \\
\hline
 & \multicolumn{2}{c|}{1\%} & \multicolumn{2}{c|}{3\%} & \multicolumn{2}{c}{5\%} \\ \hline
 Method      & Power & Type-I Error & Power & Type-I Error & Power & Type-I Error \\ \hline
Standard & \bfseries \cellcolor{Green!30} 1.0 ($\pm$ 0.027) & \cellcolor{white} 0.019 ($\pm$ 0.0005)  & \bfseries \cellcolor{Green!30} 1.0 ($\pm$ 0.027) & \cellcolor{white} 0.014 ($\pm$ 0.0004)  & \bfseries \cellcolor{Green!30} 1.0 ($\pm$ 0.0329) & \cellcolor{white} 0.012 ($\pm$ 0.0004) \\

Oracle (infeasible) & \bfseries \cellcolor{Green!100} 1.057 ($\pm$ 0.0281) & 0.021 ($\pm$ 0.0005)  & \bfseries \cellcolor{Green!100} 1.24 ($\pm$ 0.0302) & \cellcolor{white} 0.02 ($\pm$ 0.0005)  & \bfseries \cellcolor{Green!100} 1.507 ($\pm$ 0.0364) & \cellcolor{white} 0.02 ($\pm$ 0.0005) \\

Naive-Trim (invalid) & \cellcolor{red!20} 1.305 ($\pm$ 0.0289) & \cellcolor{red!20} 0.028 ($\pm$ 0.0006)  & \cellcolor{red!20} 1.957 ($\pm$ 0.0336) & \cellcolor{red!20} 0.038 ($\pm$ 0.0006)  & \cellcolor{red!20} 2.747 ($\pm$ 0.0395) & \cellcolor{red!20} 0.05 ($\pm$ 0.0008) \\

Small-Clean & \cellcolor{white} 0.932 ($\pm$ 0.0653) & \cellcolor{white} 0.02 ($\pm$ 0.0018)  & \cellcolor{white} 0.533 ($\pm$ 0.0766) & \cellcolor{white} 0.01 ($\pm$ 0.0017)  & \cellcolor{white} 0.38 ($\pm$ 0.0889) & \cellcolor{white} 0.007 ($\pm$ 0.0018) \\

Label-Trim & \bfseries \cellcolor{Green!60} 1.046 ($\pm$ 0.0276) & 0.021 ($\pm$ 0.0005)  & \bfseries \cellcolor{Green!60} 1.178 ($\pm$ 0.0299) & \cellcolor{white} 0.018 ($\pm$ 0.0005)  & \bfseries \cellcolor{Green!60} 1.344 ($\pm$ 0.0349) & \cellcolor{white} 0.016 ($\pm$ 0.0005) \\
\hline
Standard Power & \cellcolor{white}  0.232 ($\pm$ 0.0063) & \cellcolor{white} 0.019 ($\pm$ 0.0005)  & \cellcolor{white}  0.197 ($\pm$ 0.0053) & \cellcolor{white} 0.014 ($\pm$ 0.0004)  & \cellcolor{white}  0.163 ($\pm$ 0.0053) & \cellcolor{white} 0.012 ($\pm$ 0.0004) \\
\end{tabular}
}
\subcaption{$\alpha=0.02$}

\resizebox{\textwidth}{!}{
\begin{tabular}{l|ll|ll|ll}
\hline
& \multicolumn{6}{c}{Contamination rate} \\
\hline
 & \multicolumn{2}{c|}{1\%} & \multicolumn{2}{c|}{3\%} & \multicolumn{2}{c}{5\%} \\ \hline
 Method      & Power & Type-I Error & Power & Type-I Error & Power & Type-I Error \\ \hline
Standard & \bfseries \cellcolor{Green!30} 1.0 ($\pm$ 0.0218) & \cellcolor{white} 0.028 ($\pm$ 0.0006)  & \bfseries \cellcolor{Green!30} 1.0 ($\pm$ 0.0244) & \cellcolor{white} 0.023 ($\pm$ 0.0005)  & \bfseries \cellcolor{Green!30} 1.0 ($\pm$ 0.0251) & \cellcolor{white} 0.018 ($\pm$ 0.0005) \\

Oracle (infeasible) & \bfseries \cellcolor{Green!100} 1.088 ($\pm$ 0.0232) & 0.031 ($\pm$ 0.0006)  & \bfseries \cellcolor{Green!100} 1.231 ($\pm$ 0.0255) & \cellcolor{white} 0.029 ($\pm$ 0.0006)  & \bfseries \cellcolor{Green!100} 1.376 ($\pm$ 0.0272) & \cellcolor{white} 0.03 ($\pm$ 0.0006) \\

Naive-Trim (invalid) & \cellcolor{red!20} 1.224 ($\pm$ 0.0223) & \cellcolor{red!20} 0.037 ($\pm$ 0.0007)  & \cellcolor{red!20} 1.658 ($\pm$ 0.0276) & \cellcolor{red!20} 0.047 ($\pm$ 0.0007)  & \cellcolor{red!20} 2.043 ($\pm$ 0.0274) & \cellcolor{red!20} 0.058 ($\pm$ 0.0009) \\

Small-Clean & \cellcolor{white} 0.758 ($\pm$ 0.0461) & \cellcolor{white} 0.021 ($\pm$ 0.0017)  & \cellcolor{white} 0.897 ($\pm$ 0.0659) & \cellcolor{white} 0.023 ($\pm$ 0.0024)  & \cellcolor{white} 0.97 ($\pm$ 0.0695) & \cellcolor{white} 0.021 ($\pm$ 0.0022) \\

Label-Trim & \bfseries \cellcolor{Green!60} 1.051 ($\pm$ 0.0231) & \cellcolor{white} 0.03 ($\pm$ 0.0006)  & \bfseries \cellcolor{Green!60} 1.11 ($\pm$ 0.0244) & \cellcolor{white} 0.026 ($\pm$ 0.0006)  & \bfseries \cellcolor{Green!60} 1.173 ($\pm$ 0.0261) & \cellcolor{white} 0.024 ($\pm$ 0.0005) \\
\hline
Standard Power & \cellcolor{white} 0.304 ($\pm$ 0.0066) & \cellcolor{white} 0.028 ($\pm$ 0.0006)  & \cellcolor{white}  0.261 ($\pm$ 0.0064) & \cellcolor{white} 0.023 ($\pm$ 0.0005)  & \cellcolor{white}  0.237 ($\pm$ 0.0059) & \cellcolor{white} 0.018 ($\pm$ 0.0005) \\
\end{tabular}
}
\subcaption{$\alpha=0.03$}

\end{table}

\begin{table}[!tb]
\caption{Comparison of conformal outlier detection methods on TinyImageNet dataset (outliers) and CIFAR-10 dataset (inliers) for varying contamination rate $r$ and target type-I error level $\alpha$. All methods utilize the SCALE \citep{scale} method with a pretrained ResNet-18. The empirical power is presented relative to the \texttt{Standard} method (higher is better). Results are averaged across 100 random splits of the data, with standard errors presented in parentheses.}
\label{app-tab:scale-tin}
\centering
\resizebox{\textwidth}{!}{
\begin{tabular}{l|ll|ll|ll}
\hline
& \multicolumn{6}{c}{Contamination rate} \\
\hline
 & \multicolumn{2}{c|}{1\%} & \multicolumn{2}{c|}{3\%} & \multicolumn{2}{c}{5\%} \\ \hline
 Method      & Power & Type-I Error & Power & Type-I Error & Power & Type-I Error \\ \hline
Standard & \bfseries \cellcolor{Green!30} 1.0 ($\pm$ 0.0383) & \cellcolor{white} 0.009 ($\pm$ 0.0003)  & \bfseries \cellcolor{Green!30} 1.0 ($\pm$ 0.0394) & \cellcolor{white} 0.006 ($\pm$ 0.0003)  & \bfseries \cellcolor{Green!30} 1.0 ($\pm$ 0.0383) & \cellcolor{white} 0.005 ($\pm$ 0.0002) \\

Oracle (infeasible) & \bfseries \cellcolor{Green!100} 1.138 ($\pm$ 0.0399) & \cellcolor{white} 0.01 ($\pm$ 0.0003)  & \bfseries \cellcolor{Green!100} 1.436 ($\pm$ 0.0534) & \cellcolor{white} 0.01 ($\pm$ 0.0003)  & \bfseries \cellcolor{Green!100} 1.625 ($\pm$ 0.0475) & \cellcolor{white} 0.01 ($\pm$ 0.0004) \\

Naive-Trim (invalid) & \cellcolor{red!20} 1.696 ($\pm$ 0.0424) & \cellcolor{red!20} 0.018 ($\pm$ 0.0004)  & \cellcolor{red!20} 2.88 ($\pm$ 0.0621) & \cellcolor{red!20} 0.029 ($\pm$ 0.0006)  & \cellcolor{red!20} 4.248 ($\pm$ 0.0707) & \cellcolor{red!20} 0.041 ($\pm$ 0.0007) \\

Small-Clean & \cellcolor{white} 0.0 ($\pm$ 0.0) & \cellcolor{white} 0.0 ($\pm$ 0.0)  & \cellcolor{white} 0.0 ($\pm$ 0.0) & \cellcolor{white} 0.0 ($\pm$ 0.0)  & \cellcolor{white} 0.0 ($\pm$ 0.0) & \cellcolor{white} 0.0 ($\pm$ 0.0) \\

Label-Trim & \bfseries \cellcolor{Green!100} 1.138 ($\pm$ 0.0399) & \cellcolor{white} 0.01 ($\pm$ 0.0003)  & \bfseries \cellcolor{Green!100} 1.433 ($\pm$ 0.0535) & \cellcolor{white} 0.01 ($\pm$ 0.0003)  & \bfseries \cellcolor{Green!60} 1.591 ($\pm$ 0.0467) & \cellcolor{white} 0.009 ($\pm$ 0.0004) \\
\hline
Standard Power & \cellcolor{white}  0.159 ($\pm$ 0.0061) & \cellcolor{white} 0.009 ($\pm$ 0.0003)  & \cellcolor{white}  0.122 ($\pm$ 0.0048) & \cellcolor{white} 0.006 ($\pm$ 0.0003)  & \cellcolor{white}  0.102 ($\pm$ 0.0039) & \cellcolor{white} 0.005 ($\pm$ 0.0002) \\
\end{tabular}
}
\subcaption{$\alpha=0.01$}

\resizebox{\textwidth}{!}{
\begin{tabular}{l|ll|ll|ll}
\hline
& \multicolumn{6}{c}{Contamination rate} \\
\hline
 & \multicolumn{2}{c|}{1\%} & \multicolumn{2}{c|}{3\%} & \multicolumn{2}{c}{5\%} \\ \hline
 Method      & Power & Type-I Error & Power & Type-I Error & Power & Type-I Error \\ \hline
Standard & \bfseries \cellcolor{Green!30} 1.0 ($\pm$ 0.0248) & \cellcolor{white} 0.018 ($\pm$ 0.0004)  & \bfseries \cellcolor{Green!30} 1.0 ($\pm$ 0.0317) & \cellcolor{white} 0.014 ($\pm$ 0.0004)  & \bfseries \cellcolor{Green!30} 1.0 ($\pm$ 0.0271) & \cellcolor{white} 0.011 ($\pm$ 0.0004) \\

Oracle (infeasible) & \bfseries \cellcolor{Green!100} 1.063 ($\pm$ 0.0255) & 0.021 ($\pm$ 0.0005)  & \bfseries \cellcolor{Green!100} 1.254 ($\pm$ 0.0309) & \cellcolor{white} 0.02 ($\pm$ 0.0005)  & \bfseries \cellcolor{Green!100} 1.49 ($\pm$ 0.0318) & \cellcolor{white} 0.02 ($\pm$ 0.0005) \\

Naive-Trim (invalid) & \cellcolor{red!20} 1.223 ($\pm$ 0.0263) & \cellcolor{red!20} 0.028 ($\pm$ 0.0006)  & \cellcolor{red!20} 1.816 ($\pm$ 0.0321) & \cellcolor{red!20} 0.037 ($\pm$ 0.0006)  & \cellcolor{red!20} 2.536 ($\pm$ 0.0365) & \cellcolor{red!20} 0.049 ($\pm$ 0.0008) \\

Small-Clean & \cellcolor{white} 0.803 ($\pm$ 0.0609) & \cellcolor{white} 0.017 ($\pm$ 0.0018)  & \cellcolor{white} 0.625 ($\pm$ 0.0816) & \cellcolor{white} 0.013 ($\pm$ 0.0022)  & \cellcolor{white} 0.263 ($\pm$ 0.064) & \cellcolor{white} 0.003 ($\pm$ 0.001) \\

Label-Trim & \bfseries \cellcolor{Green!60} 1.053 ($\pm$ 0.0255) & \cellcolor{white} 0.02 ($\pm$ 0.0005)  & \bfseries \cellcolor{Green!60} 1.183 ($\pm$ 0.0325) & \cellcolor{white} 0.018 ($\pm$ 0.0005)  & \bfseries \cellcolor{Green!60} 1.344 ($\pm$ 0.0334) & \cellcolor{white} 0.016 ($\pm$ 0.0004) \\
\hline
Standard Power & \cellcolor{white}  0.273 ($\pm$ 0.0068) & \cellcolor{white} 0.018 ($\pm$ 0.0004)  & \cellcolor{white}  0.227 ($\pm$ 0.0072) & \cellcolor{white} 0.014 ($\pm$ 0.0004)  & \cellcolor{white}  0.187 ($\pm$ 0.0051) & \cellcolor{white} 0.011 ($\pm$ 0.0004) \\
\end{tabular}
}
\subcaption{$\alpha=0.02$}

\resizebox{\textwidth}{!}{
\begin{tabular}{l|ll|ll|ll}
\hline
& \multicolumn{6}{c}{Contamination rate} \\
\hline
 & \multicolumn{2}{c|}{1\%} & \multicolumn{2}{c|}{3\%} & \multicolumn{2}{c}{5\%} \\ \hline
 Method      & Power & Type-I Error & Power & Type-I Error & Power & Type-I Error \\ \hline
Standard & \bfseries \cellcolor{Green!30} 1.0 ($\pm$ 0.0215) & \cellcolor{white} 0.028 ($\pm$ 0.0006)  & \bfseries \cellcolor{Green!30} 1.0 ($\pm$ 0.024) & \cellcolor{white} 0.022 ($\pm$ 0.0005)  & \cellcolor{white} 1.0 ($\pm$ 0.0221) & \cellcolor{white} 0.017 ($\pm$ 0.0005) \\

Oracle (infeasible) & \bfseries \cellcolor{Green!100} 1.07 ($\pm$ 0.0222) & 0.031 ($\pm$ 0.0006)  & \bfseries \cellcolor{Green!100} 1.188 ($\pm$ 0.0264) & \cellcolor{white} 0.029 ($\pm$ 0.0006)  & \bfseries \cellcolor{Green!100} 1.328 ($\pm$ 0.0256) & \cellcolor{white} 0.03 ($\pm$ 0.0006) \\

Naive-Trim (invalid) & \cellcolor{red!20} 1.202 ($\pm$ 0.0203) & \cellcolor{red!20} 0.037 ($\pm$ 0.0006)  & \cellcolor{red!20} 1.533 ($\pm$ 0.0266) & \cellcolor{red!20} 0.046 ($\pm$ 0.0007)  & \cellcolor{red!20} 1.916 ($\pm$ 0.0288) & \cellcolor{red!20} 0.057 ($\pm$ 0.0009) \\

Small-Clean & \cellcolor{white} 0.678 ($\pm$ 0.0476) & \cellcolor{white} 0.017 ($\pm$ 0.0018)  & \cellcolor{white} 0.932 ($\pm$ 0.0576) & \cellcolor{white} 0.025 ($\pm$ 0.0025)  & \bfseries \cellcolor{Green!30} 1.005 ($\pm$ 0.0573) & \cellcolor{white} 0.024 ($\pm$ 0.0022) \\

Label-Trim & \bfseries \cellcolor{Green!60} 1.046 ($\pm$ 0.0222) & \cellcolor{white} 0.03 ($\pm$ 0.0006)  & \bfseries \cellcolor{Green!60} 1.09 ($\pm$ 0.0256) & \cellcolor{white} 0.026 ($\pm$ 0.0005)  & \bfseries \cellcolor{Green!60} 1.14 ($\pm$ 0.0234) & \cellcolor{white} 0.023 ($\pm$ 0.0005) \\
\hline
Standard Power & \cellcolor{white}  0.336 ($\pm$ 0.0072) & \cellcolor{white} 0.028 ($\pm$ 0.0006)  & \cellcolor{white}  0.295 ($\pm$ 0.0071) & \cellcolor{white} 0.022 ($\pm$ 0.0005)  & \cellcolor{white} 0.266 ($\pm$ 0.0059) & \cellcolor{white} 0.017 ($\pm$ 0.0005) \\
\end{tabular}
}
\subcaption{$\alpha=0.03$}

\end{table}

\end{document}